\newcommand{\Yoh}{\ensuremath{\mY^{\mathsf{oh}}}}
\newcommand{\yoh}{\ensuremath{\vy^{\mathsf{oh}}}}
\newcommand{\tran}{\ensuremath{^\top}}
\newcommand{\EE}{\ensuremath{\mathbb{E}}}
\newcommand{\cn}{\ensuremath{\mathsf{CN}}}
\newcommand{\lambdabold}{\ensuremath{\boldsymbol{\lambda}}}
\newcommand{\zj}{\ensuremath{\vz_j}}
\renewcommand{\a}{\ensuremath{\alpha}}
\renewcommand{\b}{\ensuremath{\beta}}
\newcommand{\za}{\ensuremath{\vz_\a}}
\newcommand{\zb}{\ensuremath{\vz_\b}}
\newcommand{\ya}{\ensuremath{\vy_\a}}
\newcommand{\yb}{\ensuremath{\vy_\b}}
\newcommand{\dely}{\ensuremath{\boldsymbol{\Delta}y}}
\newcommand{\nc}{\ensuremath{k}}
\newcommand{\Ainv}{\ensuremath{\bm{A}^{-1}}}
\newcommand{\fhat}{\widehat{\vf}}
\newcommand{\hhat}{\widehat{\vh}}
\newcommand{\xtest}{\vx_{\mathsf{test}}}
\newcommand{\xwtest}{\xtest^w}
\newcommand{\cor}{\normalfont{\textsf{cor}}}
\newcommand{\su}{\normalfont{\textsf{SU}}}
\newcommand{\err}{\normalfont{\textsf{err}}}
\newcommand{\Asinv}{\ensuremath{\mA_{-s}^{-1}}}
\newcommand{\Akinv}{\ensuremath{\mA_{-k}^{-1}}}
\newcommand{\Ask}{\ensuremath{\mA_{-F}}}
\newcommand{\Askinv}{\ensuremath{\Ask^{-1}}}
\newcommand{\Ajinv}{\ensuremath{\mA_{-j}^{-1}}}
\newcommand{\cnf}{\cn_{\a, \b, F}}
\newcommand{\cnu}{\cn_{\a, \b, U}}
\newcommand{\cnl}{\cn_{\a, \b, L}}
\newcommand{\lambdaab}{\Lambda_{\a, \b}}
\newcommand{\corl}{\cor_{\a, \b, L}}
\newcommand{\corf}{\cor_{\a, \b, F}}
\newcommand{\coru}{\cor_{\a, \b, U}}
\newcommand{\diag}{\mathrm{diag}}
\newcommand{\offdiag}{\mathrm{offdiag}}
\newcommand{\Au}{\mA_U}
\newcommand{\Af}{\mA_{-F}}
\newcommand{\Auinv}{\mA_U^{-1}}
\newcommand{\Ak}{\mA_{-k}}
\newcommand{\As}{\mA_{-s}}
\newcommand{\Ar}{\mA_{-R}}
\newcommand{\Arinv}{\Ar^{-1}}
\newcommand{\At}{\mA_{-T}}
\newcommand{\Atinv}{\At^{-1}}
\newcommand{\AS}{\mA_{-S}}
\newcommand{\ASinv}{\AS^{-1}}
\newcommand{\Xs}{\mW_s}
\newcommand{\Xk}{\mW_k}
\newcommand{\Zk}{\mZ_k}
\newcommand{\Zf}{\mZ_F}
\newcommand{\Xsk}{\mW_{F}}
\newcommand{\Zsk}{\mZ_{F}}
\newcommand{\Zt}{\mZ_T}
\newcommand{\Xt}{\mW_T}
\newcommand{\Zl}{\mZ_L}
\newcommand{\Xr}{\mW_R}
\newcommand{\Zr}{\mZ_R}
\newcommand{\lamdel}{{\Lambda_{\delta}}}
\newcommand{\lamdelc}{\Lambda_{\delta}^c}
\newcommand{\Zb}{Z^{(\beta)}}
\newcommand{\Xab}{\mW_{\a, \b}}
\newcommand{\Ber}{\mathsf{Ber}}
\def\1{\bm{1}}
\def\vone{{\bm{1}}}
\def\vgamma{{\bm{\gamma}}}
\def\vf{{\bm{f}}}
\def\vg{{\bm{g}}}
\def\vh{{\bm{h}}}
\def\vu{{\bm{u}}}
\def\vv{{\bm{v}}}
\def\vw{{\bm{w}}}
\def\vx{{\bm{x}}}
\def\vy{{\bm{y}}}
\def\vz{{\bm{z}}}
\def\mA{{\bm{A}}}
\def\mB{{\bm{B}}}
\def\mC{{\bm{C}}}
\def\mD{{\bm{D}}}
\def\mE{{\bm{E}}}
\def\mH{{\bm{H}}}
\def\mI{{\bm{I}}}
\def\mM{{\bm{M}}}
\def\mP{{\bm{P}}}
\def\mV{{\bm{V}}}
\def\mW{{\bm{W}}}
\def\mX{{\bm{X}}}
\def\mY{{\bm{Y}}}
\def\mZ{{\bm{Z}}}
\DeclareMathAlphabet{\mathsfit}{\encodingdefault}{\sfdefault}{m}{sl}
\SetMathAlphabet{\mathsfit}{bold}{\encodingdefault}{\sfdefault}{bx}{n}
\def\gE{{\mathcal{E}}}
\DeclareMathOperator*{\argmax}{arg\,max}
\newcommand{\RR}{\mathbb{R}}
\DeclareMathOperator{\PP}{\mathbb{P}}
\DeclareMathOperator{\poly}{poly}
\newcommand{\ol}{\overline}
\newcommand{\ul}{\underline}
\newcommand{\mGamma}{\bm{\Gamma}}
\newcounter{cnstcnt}
\newcommand{\newc}{%
\refstepcounter{cnstcnt}%
\ensuremath{c_{\thecnstcnt}}}
\newcommand{\const}[1]{\ensuremath{c_{\ref{#1}}}}
\newcounter{Cnstcnt}
\newcommand{\newC}{%
\refstepcounter{Cnstcnt}%
\ensuremath{C_{\theCnstcnt}}}
\newcommand{\Const}[1]{\ensuremath{C_{\ref{#1}}}}
\newcounter{kappacnt}
\newcommand{\newk}{%
\refstepcounter{kappacnt}%
\ensuremath{\kappa_{\thekappacnt}}}
\newcommand{\konst}[1]{\ensuremath{\kappa_{\ref{#1}}}}
\newcounter{eventcnt}
\crefname{assumption}{assumption}{assumptions}
\Crefname{assumption}{Assumption}{Assumptions}
\newtheorem{theorem}{Theorem}[section]
\newtheorem{lemma}[theorem]{Lemma}
\newtheorem{corollary}[theorem]{Corollary}
\newtheorem{proposition}[theorem]{Proposition}
\newtheorem{definition}{Definition}
\newtheorem{assumption}{Assumption}
\newtheorem{conjecture}[theorem]{Conjecture}
\title{Precise Asymptotic Generalization for Multiclass Classification with Overparameterized Linear Models}
\author{%
  David X.~Wu \\
  Department of EECS\\
  UC Berkeley\\
  Berkeley, CA 94720 \\
  \texttt{david\_wu@berkeley.edu} \\
  \and
  Anant Sahai \\
  Department of EECS \\
  UC Berkeley\\
  Berkeley, CA 94720 \\
  \texttt{sahai@eecs.berkeley.edu} \\
}
\begin{document}

\maketitle

\begin{abstract}
    We study the asymptotic generalization of an overparameterized linear model for multiclass classification under the Gaussian covariates bi-level model introduced in \citet{subramanian2022generalization}, where the number of data points, features, and classes all grow together. We fully resolve the conjecture posed in \citet{subramanian2022generalization}, matching the predicted regimes for generalization. Furthermore, our new lower bounds are akin to an information-theoretic strong converse: they establish that the misclassification rate goes to 0 or 1 asymptotically. One surprising consequence of our tight results is that the min-norm interpolating classifier can be asymptotically suboptimal relative to noninterpolating classifiers in the regime where the min-norm interpolating regressor is known to be optimal. 
    
    The key to our tight analysis is a new variant of the Hanson-Wright inequality which is broadly useful for multiclass  problems with sparse labels. As an application, we show that the same type of analysis can be used to analyze the related multilabel classification problem under the same bi-level ensemble. 
\end{abstract}

\section{Introduction}
In this paper, we directly follow up on a specific line of work initiated by \citet{subramanian2022generalization,mult:wu2023lowerbounds}. For the sake of self-containedness, we briefly reiterate the context, directing the reader to \citet{subramanian2022generalization} and the references cited therein for more. A broader story can be found in \citet{survey:bartlett2021deep, survey:belkin2021fit, survey:dar2021farewell,oneto2023we}.  

Classical statistical learning theory intuition predicts that highly expressive models, which can interpolate random labels \citep{reg:zhang2016understanding, zhang2021understanding}, ought not to generalize well. However, deep learning practice has seen such models performing well when trained with good labels. Resolving this apparent contradiction has recently been the focus of a multitude of works, and this paper builds on one particular thread of investigation that can be rooted in \citet{reg:bartlett2020benign, reg:muthukumar2020harmless} where the concept of benign/harmless interpolation was crystallized in the context of overparameterized linear regression problems and conditions given for when this can happen. In \citet{binary:Muth20}, a specific toy "bi-level model" with Gaussian features was introduced to study overparameterized binary classification and show that successful generalization could happen even beyond the conditions for benign interpolation for regression. Following the introduction of the corresponding multi-class problem in \citet{multi_class_theory:Wang21} with a constant number of classes, an asymptotic setting where the number of classes can grow with the number of training examples was introduced in  \citet{subramanian2022generalization} where a  conjecture was presented for when minimum-norm interpolating classifiers will generalize. We are now in a position to state our main contributions; afterwards, we expand on the related works. 

\subsection*{Our contributions}
Our main contribution is crisply identifying the asymptotic regimes where an overparameterized linear model that performs minimum-norm interpolation does and does not generalize for multiclass classification under a Gaussian features assumption, thus resolving the main conjecture posed by \citep{subramanian2022generalization}. We improve on the analysis of \citet{subramanian2022generalization,mult:wu2023lowerbounds}, covering all regimes with the asymptotically optimal misclassification rate. 
When the model generalizes, it does so with a misclassification rate $o(1)$, and we show a matching "strong converse" establishing when it misclassifies, it does so with rate $1 - o(1)$, where the explicit rate is nearly identical to that of random guessing. The critical component of our analysis is a new variant of the Hanson-Wright inequality, which applies to bilinear forms between a vector with subgaussian entries and a vector that is bounded and has \emph{soft sparsity}, a notion we will define in \cref{sec:hanson-wright-exposition}. We show how this tool can be used to analyze other multiclass problems, such as multilabel classification. 

\subsection{Brief treatment of related work}
Our thread begins with a recent line of work that analyzes the generalization behavior of overparameterized linear models for regression \citep{reg:hastie2022surprises, reg:mei2022generalization, reg:bartlett2020benign, reg:belkin2020two, reg:muthukumar2020harmless}. These simple models demonstrate how the capacity to interpolate noise can actually aid in generalization: training noise can be harmlessly absorbed by the overparameterized model without contaminating predictions on test points.
In effect, extra features can be regularizing (in the context of descent algorithms' implicit regularization \citep{implicit_bias:srebro, implicit_bias:ji2019, implicit_bias:engl1996regularization, implicit_bias:gunasekar2018characterizing}), but an excessive amount of such regularization causes regression to fail because even the true signal will not survive the training process. Although works in this thread focus on very shallow networks, \citet{chatterji2023deep} established that deeper networks can behave similarly. Note that recently, \citet{mallinar2022benign} called out an alternative regime (behaving like 1-nearest-neighbor learning) called "tempered" overfitting in which training noise is not completely absorbed but the true signal does survive training.

The thread continues in a line of work that studies binary classification \citep{binary:Muth20, binary_classification:chatterji2021finite, binary_classification:wang2021benign} in similar overparameterized linear models. 
While confirming that the basic story is similar to regression, these works identify a further surprise: binary classification can work in some regimes where the corresponding regression problem would not work\footnote{Regression failing in the overparameterized regime is linked to the empirical covariance of the limited data not revealing the spiked reality of the underlying covariance \citep{wang2017asymptotics}. See Appendix J of \citet{subramanian2022generalization}. When regression doesn't generalize, we also get "support-vector proliferation" in classification problems \citep{binary:Muth20, binary_classification:hsu2021proliferation} which is also intimately related to the phenomenon of "neural collapse" \citep{papyan2020prevalence} as discussed, for example, in \citet{xu2023dynamics}.} due to the regularizing effect of overparameterization being too strong. Just as in the regression case, the results here are sharp in toy models: we can exactly characterize where binary classification using an interpolating classifier asymptotically generalizes.

With binary classification better understood, the thread continues to multiclass classification. After all, the current wave of deep learning enthusiasm originated in breakthrough performance in multiclass classification, and we have seen a decade of ever larger networks trained on ever larger datasets with ever more classes \cite{kaplan2020scaling}. Using similar toy models  \citep{reg:muthukumar2020harmless, binary:Muth20, reg:wang2022tight}, the constant number of classes case was studied in \citet{multi_class_theory:Wang21} to recover results similar to binary classification and subsequently generalized to general convex losses with regularization in \citet{mult:loureiro2021learning} and student-teacher networks in \citet{mult:cornacchia2023learning}. 

\citet{subramanian2022generalization} further introduced a model where the number of classes grows with the number of training points and proved an achievability result on how fast the number of classes can grow while still allowing the interpolating classifier to asymptotically generalize. While \citet{subramanian2022generalization} gave a conjecture for what the full region should be, there was no converse proof, and they could not show generalization in the entire conjectured region. \citet{mult:wu2023lowerbounds} proved a partial weak converse; they showed that the misclassification rate is bounded away from $0$ --- rather than tending to $1$ --- in some of the predicted regimes.

The model, formally defined in the following section, is a stylized version of the well-known spiked covariance model \citep{johnstone2001distribution,donoho2018optimal}. On the theoretical front, it is related to several problems such as PCA variants \citep{montanari2015non,richard2014statistical} and community detection in the stochastic block model \citep{abbe2017community}. These models have also been applied in practice for climate studies and functional data analysis \citep{johnstone2001distribution}. At a high level, spiked covariance models can be interpreted as a linearized version of the manifold hypothesis. 
\section{Problem setup}
\label{sec:setup}
The following exposition is lifted from \citet{subramanian2022generalization}, which we include for the sake of staying consistent and self-contained. In \cref{sec:exposition}, we include an alternative high level framing of the problem which readers may find helpful for intuition.

We consider the multiclass classification problem with $k$ classes. The training data consists of $n$ pairs $\{\vx_i,\ell_i\}_{i=1}^n$ where $\vx_i \in \RR^d$ are i.i.d standard Gaussian vectors\footnote{Following previous work, we are staying within a Gaussian features framework. However, recent developments have confirmed that these models are actually predictive when the features arise from nonlinearities in a lifting, as long as there is enough randomness underneath \citep{hu2022universality, lu2022equivalence, goldt2022gaussian, misiakiewicz2022spectrum, mcrae2022harmless, pesce2023gaussian, kaushik2023new}.}. 
We assume that the labels $\ell_i \in [k]$ are generated as follows.
\begin{assumption}[1-sparse noiseless model]
\label{assumption:1sparse}
 The class labels $\ell_i$ are generated based on which of the first $k$ dimensions of a point $\vx_i$ has the largest value,
\begin{align}
    \ell_i = \argmax_{m \in [k]} \vx_i[m].
    \label{eq:truelabels}
\end{align}
\end{assumption}
Let us emphasize at this point that the classifier that we analyze only observes the training data, and does not use any of the data-generating assumptions. 

For a vector $\vx$, we index its $j$th entry with $\vx[j]$. Hence, under Assumption~\ref{assumption:1sparse}, $\vx_i[m]$ can be interpreted as  how representative of class $m$ the $i$th training point is. 

For clarity of exposition in the analysis, we make explicit a feature weighting that transforms the training points:
\begin{align}
    \vx^w_i[j] =  \sqrt{\lambda_j} \vx_i[j] \quad \forall j \in [d]. \label{eq:lambdas}
\end{align}
Here $\lambdabold \in \mathbb{R}^d$ contains the squared feature weights. 
The feature weighting serves the role of favoring the true pattern, something that is essential for good generalization. Again, we emphasize that the classifier does not do any reweighting of features; this explicit step is purely syntactic. \footnote{Our weighted feature model is equivalent to other works (e.g. \citet{binary:Muth20}) that assume that the covariates come from a $d-$dimensional anisotropic Gaussian with a covariance matrix $\Sigma$ that favors the truly important directions \citep{pmlr-v162-wei22a}. These directions do not have to be axis-aligned --- we make that assumption only for notational convenience. In reality, the optimizer will never know these directions {\em a priori}.} 

The weighted feature matrix $\mX^w \in \mathbb{R}^{n \times d}$ is given by
\begin{align}
    \mX^w &= \begin{bmatrix} \vx^w_1 & \cdots  & \vx^w_n \end{bmatrix}^\top = \mqty[ \sqrt{\lambda_1} \vz_1 & \cdots & \sqrt{\lambda_d} \vz_d]\label{eq:weightedxmatrix}
\end{align}
where 
we introduce the notation $\vz_j \in \mathbb{R}^n$ to contain the $j^{th}$ feature from the $n$ training points. Note that $\vz_j \sim N(0, \mI_n)$ are i.i.d Gaussians. 
We use a one-hot encoding for representing the labels as the matrix $\mY^{\mathsf{oh}} \in \mathbb{R}^{n \times k}$
\begin{align}
    \Yoh &= \begin{bmatrix} \yoh_1 & \cdots & \yoh_k \end{bmatrix}, \quad\quad \text{where}\quad\quad 
    \yoh_m[i] = \begin{cases} 1, &  \text{if} \ \ell_i = m \\ 0, & \text{otherwise} \end{cases} \label{eq:yoh}. 
\end{align}
Since we consider linear models, we center the one-hot encodings and define
\begin{align}
    \vy_m \triangleq \yoh_m - \frac{1}{k} \vone
    \label{eq:zeromeany}.
\end{align}

Our classifier consists of $k$ coefficient vectors $\fhat_m$ for $m \in [k]$ that are learned by minimum-norm interpolation (MNI) of the zero-mean one-hot variants using the weighted features:\footnote{The classifier learned via this method is equivalent to those obtained by other natural training methods (SVMs or gradient-descent with exponential tailed losses like cross-entropy) under sufficient overparameterization \citep{multi_class_theory:Wang21, kaushik2023new}. Recently, \citet{lai2023general} showed via an extension of \citet{ji2021characterizing} that a much broader category of losses also asymptotically results in convergence to the same MNI solution for sufficiently overparameterized classification problems.}
\begin{align}
\fhat_m &= \arg \min_\vf \| \vf \|_2\label{eq:interpolateadjustedonehot}\\
\text{s.t.}\ & \mX^w \vf = \vy_m.
\end{align}

We can express these coefficients in closed form as
\begin{align} 
        \fhat_m =  (\mX^w)^\top \left(\mX^w (\mX^w)^\top\right)^{-1} \vy_m.\label{eq:closed-form-coefficients}
\end{align}

On a test point $\xtest \sim N(0,\mI_d)$\label{def:xtest} we predict a label as follows: First, we transform the test point into the weighted feature space to obtain $\xwtest$ where $\xwtest[j] = \sqrt{\lambda_j} \xtest[j]$ for $j \in [d]$\label{def:xwtest}. Then we compute $k$ scalar ``scores'' and assign the class based on the largest score as follows:
\begin{align}
    \hat{\ell} = \argmax_{1 \leq m \leq k } \fhat_m\tran \xwtest.
\end{align}
By assumption, a misclassification event $\gE_{\err}$ occurs whenever
\begin{align}
    \argmax_{1 \le m \le k} \xtest[m] \ne \argmax_{1 \le m \le k} \fhat_m\tran \xwtest. \label{eq:misclassification-definition}
\end{align}
    
We study where the MNI generalizes in an asymptotic regime where the number of training points, features, classes, and feature weights all scale according to the bi-level ensemble model\footnote{Such models are widely used to study learning even beyond this particular thread of work. For example, \citet{tan2023blessing} uses this to understand the privacy/generalization tradeoff of overparameterized learning.}:
\begin{definition}[Bi-level ensemble]\label{def:bilevel}
The bi-level ensemble is parameterized by $p,q,r$ and $t$ where $p > 1$, $0 \leq r < 1$, $0 < q < (p - r)$ and $0 \leq  t < r$. Here, parameter $p$ controls the extent of overparameterization, $r$ determines the number of favored features, $q$ controls the weights on favored features, and $t$ controls the number of classes. The number of features ($d$), number of favored features ($s$), and number of classes ($k$) all scale with the number of training points ($n$) as follows:
\begin{align}
    d = \lfloor n^p \rfloor, s = \lfloor n^r \rfloor, a = n^{-q}, k = c_k \lfloor n^{t} \rfloor, \label{eq:bilevelparamscaling}
\end{align}
where $c_k$ is a positive integer.
Define the feature weights by
\begin{align}
\sqrt{\lambda_j} = \begin{cases} \sqrt{\frac{ad}{s}}, & 1 \leq j \leq s \\
                        \sqrt{\frac{(1-a)d}{d-s}}, & \mathrm{otherwise}\end{cases} . \label{eq:bilevellambdascaling}
\end{align}
We introduce the notation $\lambda_F \triangleq \frac{ad}{s}$ and $\lambda_U \triangleq \frac{(1-a)d}{d-s}$ to distinguish between the (squared) favored and unfavored weights, respectively. 
\end{definition}

We visualize the bi-level model in \cref{fig:bilevelmodel}, reproduced from \citet{subramanian2022generalization}. Intuitively, the bi-level ensemble captures a simple family of overparameterized problems where learning can succeed. Although there are $d = n^p$ features where $d \gg n$, there is a low dimensional subspace of favored, higher weight features of dimension $s = n^r$, and $s \ll n$. From this perspective, the bi-level model can be viewed as a parameterized version of an approximate linear manifold hypothesis. Depending on the signal strength, the noise added from the $d-s$ unfavored features can either help generalization (``benign overfitting'') or overwhelm the true signal and cause the classifier to fail. 

\begin{figure}[ht!]
  \centering
  \includegraphics[width=0.7\columnwidth]{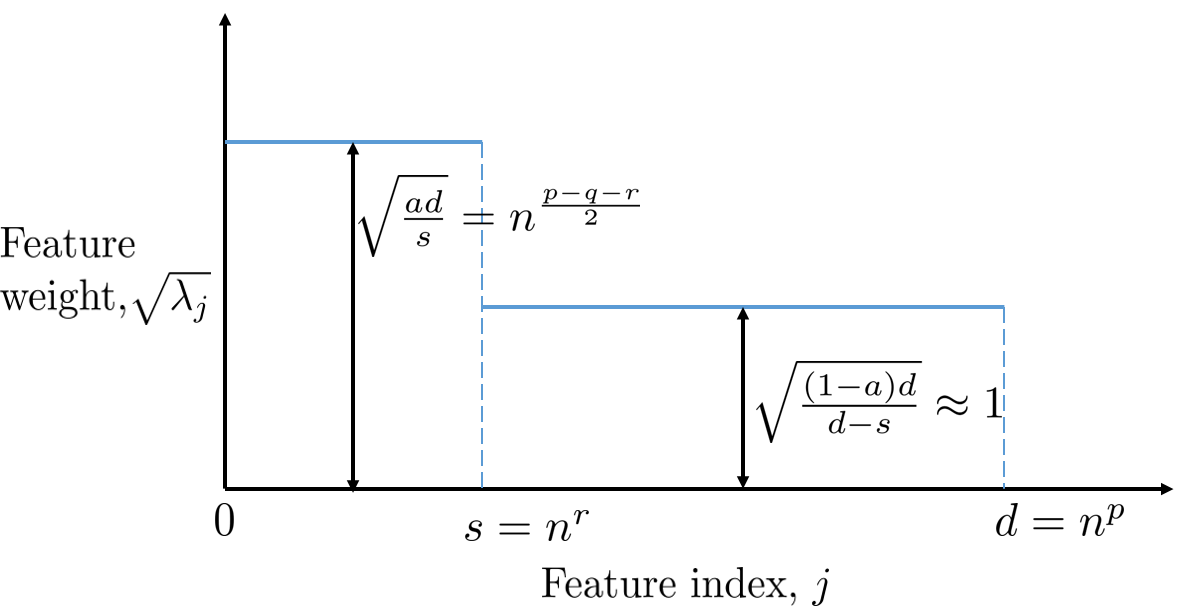}
  \caption{Bi-level feature weighting model. The first $s$ features have a higher weight and are favored during minimum-norm interpolation. These can be thought of as the square roots of the eigenvalues of the feature covariance matrix $\Sigma$ in a Gaussian model for the covariates as in \citet{reg:bartlett2020benign}.}
  \label{fig:bilevelmodel}
\end{figure}

\section{Main results}
In this section, we state our main results and compare them to what was known and conjectured previously.
\citet{subramanian2022generalization} use heuristic calculations to conjecture necessary and sufficient conditions for the bi-level model to generalize; we restate the conjecture here for reference.

\begin{conjecture}[Conjectured bi-level regions]\label{conjecture:regimes}
Under the bi-level ensemble model (\cref{def:bilevel}), when the true data generating process is 1-sparse (Assumption~\ref{assumption:1sparse}), as $n\rightarrow \infty$, the probability of misclassification $\Pr[\gE_{\err}]$ for MNI as described in \cref{eq:interpolateadjustedonehot} satisfies
\begin{align}
     \Pr[\gE_{\err}] & \rightarrow \begin{cases} 0,& \ \mathrm{if} \  t < \min\qty{1-r, p+1-2\max\qty{1, q+r}} \\
      1,& \ \mathrm{if} \ t > \min\qty{1-r, p+1-2\max\qty{1, q+r}}
      \end{cases}.
      \label{eq:conjectured_regimes}
\end{align}
\end{conjecture}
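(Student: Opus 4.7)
The plan is to reduce the misclassification event $\Eerr$ to a concentration analysis of the score differences $\fhat_{\truek}^\top \xwtest - \fhat_m^\top \xwtest$ for all $m \neq \truek$, where $\truek \in [k]$ denotes the true class of $\xtest$. Writing $\mA = \mX^w (\mX^w)^\top$, the closed form \eqref{eq:closed-form-coefficients} gives
\begin{equation*}
\fhat_m^\top \xwtest = \sum_{j=1}^d \lambda_j \xtest[j] \cdot \vz_j^\top \mA^{-1} \vy_m,
\end{equation*}
so the entire analysis reduces to controlling the bilinear forms $\vz_j^\top \mA^{-1} \vy_m$ across pairs $(j,m)$. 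I would perform a leave-one-feature-out decomposition via Sherman-Morrison: with $\mA_{-j} = \mA - \lambda_j \vz_j \vz_j^\top$,
\begin{equation*}
\vz_j^\top \mA^{-1} \vy_m = \frac{\vz_j^\top \Ajinv \vy_m}{1 + \lambda_j \vz_j^\top \Ajinv \vz_j}.
\end{equation*}

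Since the bulk of $\mA$ is generated by the $d - s \sim n^p$ unfavored features, I would first show $\mA \approx \mu \mI_n$ in operator norm for $\mu \sim d$, so that $\mathrm{tr}(\Ajinv) \sim n/\mu$ and $\Ajinv$ is well conditioned. Applying Hanson-Wright to the quadratic form in the denominator then gives $\lambda_j \vz_j^\top \Ajinv \vz_j \approx \lambda_j n/\mu$, which is $\Theta(n^{1-r-q})$ for favored features ($j \leq s$) and $o(1)$ otherwise; this pins down the denominator. The heart of the proof is the numerator $\vz_j^\top \Ajinv \vy_m$, and here the new Hanson-Wright variant for soft-sparse vectors is essential. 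For $j > k$, $\vz_j$ is independent of both $\Ajinv$ and $\vy_m$, so the bilinear form is mean zero; the improvement over vanilla Hanson-Wright comes from exploiting that $\vy_m$ has only about $n/k$ large (near $1$) entries, with the remaining $n(1-1/k)$ entries of size $-1/k$. For $j \in [k]$, $\vz_j$ and $\vy_m$ are dependent since the labels are deterministic functions of the first $k$ columns of $\mX$; here I would condition on the row-wise argmax pattern (which determines $\Yoh$) and apply the soft-sparse inequality to the residual Gaussian randomness in $\vz_j$ given its per-row ranks. The diagonal signal term $\vz_\truek^\top \mA_{-\truek}^{-1} \vy_\truek$ is handled via a direct moment computation: its conditional mean scales like $n/(k\mu)$, and this is precisely what drives generalization.

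Combining these pieces, I would split $\fhat_{\truek}^\top \xwtest - \fhat_m^\top \xwtest$ into a signal piece involving only $j \in \{\truek, m\}$ and the coordinates $\xtest[\truek], \xtest[m]$, and an independent noise piece summing contributions from $j \notin \{\truek, m\}$ that is approximately Gaussian by Lindeberg. Tracking the scalings, the signal-to-noise ratio crosses one precisely at the conjectured boundary, and the two terms inside $\min\{1-r,\, p+1-2\max\{1,q+r\}\}$ correspond to distinct failure modes: $t = 1 - r$ is where the per-class sample count $n/k$ is too small for the label signal to survive inversion by $\mA$, and $t = p + 1 - 2\max\{1, q+r\}$ is where contamination from the unfavored features overwhelms the surviving signal.

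The main obstacle will be the strong converse: upgrading $\Pr[\Eerr]$ from \emph{bounded away from zero} as in \citet{mult:wu2023lowerbounds} to $\to 1$ in the failure regime. This requires showing that not only does the signal fail to dominate, but that the $k-1$ incorrect scores are asymptotically exchangeable with the correct one, so that the $\argmax$ behaves nearly like uniform random guessing among $k$ classes. Achieving this uniformity simultaneously over all $k$ alternatives cannot be done by a crude union bound -- we need the soft-sparse Hanson-Wright to be sharp enough that the implicit $\log k$ factors from union bounding do not destroy the polynomial rate. Getting this inequality tight in the sparsity regime $n/k$, and carefully handling the dependencies between $\vz_j$ and $\vy_m$ when $j \in [k]$, is where the bulk of the technical effort resides.
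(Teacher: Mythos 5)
Your high-level reduction to bilinear forms $\vz_j^\top \mA^{-1} \vy_m$ and your intuition about what the soft-sparse Hanson-Wright variant buys (a $\sqrt{n/k}$ concentration radius in place of $\sqrt{n}$) are both correct, and your identification of the two failure modes $t=1-r$ versus $t=p+1-2\max\{1,q+r\}$ matches the paper's reading. However, two of the central steps as you describe them would fail.

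First, the rank-one Sherman-Morrison leave-out $\mA_{-j} = \mA - \lambda_j \vz_j\vz_j^\top$ does not regain the independence needed to apply any Hanson-Wright-type inequality. To condition on $\mA_{-j}^{-1}$ and apply the inequality, you need $\mA_{-j}^{-1}$ independent of \emph{both} arguments of the bilinear form. But $\mA_{-j}^{-1}$ still contains every other label-defining column $\vz_m$, $m\in[k]\setminus\{j\}$, and those columns jointly determine $\vy_m$; so $\mA_{-j}^{-1}$ is never independent of $\vy_m$, and when $j\in[k]$ it is not conditionally independent of $\vz_j$ given the argmax pattern either, since the argmax constraint couples $\vz_1[i],\ldots,\vz_k[i]$ within each row. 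Conditioning on $\Yoh$ alone does not break this: the residual randomness in $\vz_j$ and the residual randomness inside $\mA_{-j}$ remain coupled. This is exactly the obstruction the paper calls out (``one needs to potentially remove $\omega(1)$ features from the Gram matrix to regain independence''); it is resolved by a rank-$k$ Woodbury leave-out $\mA = \Ak + \Xk\Xk^\top$, giving the push-through identity $\Zk^\top\mA^{-1}\dely = (\mI_k+\mH_k)^{-1}\Zk^\top\Akinv\dely$ in which $\Akinv$ is genuinely independent of $\Zk$ and of $\dely$.

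Second, the step ``show $\mA \approx \mu\mI_n$ in operator norm'' is simply false in the regime $q+r<1$ (where MNI regression works and where the new positive result lives). There the $s$ favored features contribute a rank-$s$ component with top eigenvalue $\sim \lambda_F n = n^{p+1-q-r} \gg n^p$, so $\mA$ is spiked, not flat, and the operator-norm approximation is off by a polynomial factor. The older result of \citet{subramanian2022generalization} was restricted to $q+r>1$ for precisely this reason. The paper's way around it is not to claim $\mA^{-1}$ is flat but to show that the hat matrix $\mH_k=\Xk^\top\Akinv\Xk$ is flat even when $\Akinv$ has a downward-spiked spectrum, because the $k$ random Gaussian columns of $\Xk$ essentially never align with the $s-k$ small-eigenvalue directions of $\Akinv$. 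Your plan would recover at best the old restricted regime, not the full conjectured region.

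Finally, on the strong converse: you have correctly identified it as the hard part, and ``argmax behaves like uniform guessing'' is the right endgame. But asserting asymptotic exchangeability is not a proof route. What is actually needed is to observe that the $Z^{(\beta)}$ are jointly Gaussian, compute that their pairwise correlations tend to $1/2$ (via asymptotic orthogonality of the $\lambdaab^{1/2}\fhat_\a$), sandwich the max via Slepian's lemma between equicorrelated Gaussian maxima, and then invoke explicit orthant-probability estimates plus Gaussian-maxima anticoncentration to get $\Pr[\max_\beta Z^{(\beta)} > n^{-w}] \ge 1 - k^{-1+o(1)} - \epsilon$. Without a concrete plan along these lines (or an alternative), the converse portion of your proposal remains a statement of intent rather than a proof sketch.
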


Our main theorem establishes that \cref{conjecture:regimes} indeed captures the correct generalization behavior of the overparameterized linear model.

\begin{theorem}[Generalization for bi-level-model]\label{thm:positive}
Under the bi-level ensemble model (\cref{def:bilevel}), when the true data generating process is 1-sparse (Assumption~\ref{assumption:1sparse}),
\cref{conjecture:regimes} holds.
\end{theorem}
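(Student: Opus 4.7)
The plan is to decompose the classifier scores into a dominant ``signal'' and a collection of ``noise'' bilinear forms, apply a sharpened Hanson--Wright style concentration inequality tailored to the sparse one-hot labels, and then compare the resulting asymptotic rates to obtain both achievability inside the conjectured region and a matching strong converse outside of it.

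First I would reduce misclassification to a direct comparison of scores. Because $\vy_m = \yoh_m - \frac{1}{k}\vone$ differs from the one-hot vector only by a class-independent shift and the centering term contributes identically to every class, the event $\gE_\err$ is equivalent to the existence of some $\a \ne \truek$ with $S_\a \ge S_\truek$, where $S_m := (\yoh_m)^\top \mA^{-1} \vh$, $\mA := \mX^w(\mX^w)^\top$, and $\vh := \mX^w \xwtest$. Decomposing $\vh = \sum_{j=1}^d \lambda_j \xtest[j]\,\vz_j$ separates the contribution of each feature: the ``signal'' for the true class is the $j=\truek$ term $\lambda_F \xtest[\truek]\,(\yoh_\truek)^\top \mA^{-1} \ztk$, while the scores for $\a \ne \truek$ receive analogous favored contributions (smaller in expectation because $\yoh_\a$ is weakly anticorrelated with $\vz_\truek$) plus many unfavored-feature cross terms.

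Next I would isolate the dependence of $\mA^{-1}$ on each label-indicative $\vz_j$ via a leave-one-feature-out/Sherman--Morrison expansion: defining $\mA_{-j} := \mA - \lambda_j \vz_j \vz_j^\top$ yields
\[
\mA^{-1} = \Ajinv - \frac{\lambda_j\, \Ajinv \vz_j \vz_j^\top \Ajinv}{1 + \lambda_j\, \vz_j^\top \Ajinv \vz_j},
\]
so every bilinear form in $\vz_j$ reduces to a bilinear form against $\Ajinv$ (which is independent of $\vz_j$) times a scalar shrinkage factor controlled by $\vz_j^\top \Ajinv \vz_j$. Standard spectral concentration on $\Ajinv$ in the spirit of \citet{subramanian2022generalization} pins down this scalar up to $1\pm o(1)$ multiplicative error and reduces the problem to controlling a handful of bilinear forms $(\yoh_\b)^\top M \vz_j$ in which $M$ is a function of the Gram matrix restricted away from the relevant features.

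The technical heart of the argument is tight concentration of these bilinear forms, where $\yoh_\b$ has only $\approx n/k$ nonzero entries and is weakly coupled to the $\vz_j$'s through the max structure of Assumption~\ref{assumption:1sparse}. When $k = c_k n^t$ grows, the classical Hanson--Wright tail that scales with $\|M\|_F$ is too loose by exactly the crucial $\sqrt{k}$ factor, placing the boundary in the wrong place. The new Hanson--Wright variant for a subgaussian vector paired with a bounded ``soft-sparse'' vector (introduced in \cref{sec:hanson-wright-exposition}) instead yields a concentration radius controlled by $\|M^\top \yoh_\b\|_2$ with an $L^1$-type correction, and crucially leaves enough slack to survive a union bound over the $k$ competing classes. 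Plugging in the bi-level scalings reproduces the two branches of the predicted threshold: cross-class favored-feature noise yields the $1-r$ branch, while unfavored-feature noise driven by $\mu_U$ yields the $p+1-2\max\{1,q+r\}$ branch, so comparing these against the signal gives $\Pr[\gE_\err] = o(1)$ inside the conjectured region.

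The main obstacle is the strong converse. Merely bounding $\Pr[\gE_\err]$ away from $0$ as in \citet{mult:wu2023lowerbounds} is insufficient; to push the misclassification probability to $1-o(1)$ the soft-sparse Hanson--Wright bound must remain tight on the scale of the gap between the best and second-best scores \emph{uniformly} in each subregime of $(q+r,t)$. I would close the converse by showing that above the threshold the scores $\{S_\a\}_{\a\ne\truek}$ become essentially exchangeable with $S_\truek$, with pairwise differences whose standard deviation is at least as large as their expected gap, so that the true class is indistinguishable from a uniform draw over $k$ classes and $\Pr[\gE_\err] \to 1$, matching (to leading order) the misclassification probability of random guessing.
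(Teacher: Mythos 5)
Your high-level strategy is the right one — reduce to a signal vs.\ contamination comparison, exploit the label sparsity via a sharpened Hanson--Wright bound, and read off the two branches $t<1-r$ and $t<p+1-2\max\{1,q+r\}$ from the favored vs.\ unfavored noise terms — and you have correctly identified that the classical Hanson--Wright loses a $\sqrt{k}$ factor and that the new soft-sparsity variant is what makes the boundary land in the right place. However, there are two genuine gaps.

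First, the Sherman--Morrison/leave-one-feature-out step does not restore the independence you need. You peel off a single $\vz_j$ to form $\Ajinv$; this makes $\Ajinv$ independent of $\vz_j$, but it remains dependent on $\yoh_\b$, which by Assumption~\ref{assumption:1sparse} is a function of \emph{all} $k$ label-defining features $\vz_1,\dots,\vz_k$. The bilinear form $(\yoh_\b)^\top M \vz_j$ with $M = \Ajinv$ therefore does not fall under any Hanson--Wright theorem (classical or soft-sparse), because the matrix is not conditionally independent of one of the two vectors. This is precisely the obstruction the paper calls out before introducing the Woodbury formula: one must remove all $k$ (or more generally $|T|$ favored) features at once, work with $\mA_{-k}^{-1}$, and control the resulting $k\times k$ hat matrix $\mH_k = \mW_k^\top \mA_{-k}^{-1}\mW_k$ via Wishart concentration to show $(\mI_k+\mH_k)^{-1}$ is nearly a scaled identity. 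Rank-one Sherman--Morrison cannot reach this; a rank-$k$ Woodbury expansion is essential.

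Second, the strong converse plan is too coarse to yield $\Pr[\gE_\err]\to 1$. Showing that each competitor score has standard deviation at least as large as its expected gap to the true score only gets you a per-pair failure probability of roughly $\tfrac12 - o(1)$, i.e.\ the weak converse already known from \citet{mult:wu2023lowerbounds}. To push to $1-o(1)$ you need the \emph{joint} behavior of the $k-1$ competitors, not exchangeability hand-waving: the paper shows the normalized contamination variables $Z^{(\b)}$ are jointly Gaussian with pairwise correlations tending to $\tfrac12$ (via showing the weighted halfspaces $\lambdaab^{1/2}\fhat_\a$ are asymptotically orthogonal), then applies Slepian's lemma against equicorrelated Gaussians, explicit orthant-probability bounds to obtain $\Pr[\max_\b Z^{(\b)} \le 0] = \Theta(k^{-1+o(1)})$, and a Gaussian-maxima anticoncentration estimate to transfer this from threshold $0$ to threshold $n^{-u}$. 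None of these steps is implied by "pairwise standard deviation $\gtrsim$ expected gap," and without something of this form the argument only bounds the error away from $0$ rather than forcing it to $1$.
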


For comparison, we quote the best previously known positive and negative results for the bi-level model, which only hold in the restricted regime where regression fails ($q+r>1$).
\begin{theorem}[Generalization for bi-level model \citep{subramanian2022generalization}]
\label{thm:old-positive}
In the same setting as \cref{conjecture:regimes}, in the regime where regression fails $(q+r>1$), as $n\rightarrow \infty$ we have $\Pr[\gE_{\err}] \to 0$ if 
\begin{align}
     t < \min\qty{1-r, p+1-2(q+r), p-2, 2q+r-2}.
\end{align}
\end{theorem}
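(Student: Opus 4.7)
The plan is to directly estimate the prediction score $s_m := \fhat_m\tran \xwtest$ for each class and show that with probability $1-o(1)$ the score of the true class $m^\star := \argmax_{j\in[k]}\xtest[j]$ strictly exceeds the score of every other class. Writing $\mA = \mX^w(\mX^w)\tran$ and expanding $\mX^w\xwtest = \sum_{j=1}^d \lambda_j\xtest[j]\vz_j$, the score decomposes as
\begin{align*}
s_m = \underbrace{\lambda_m\xtest[m]\,\vy_m\tran\mA^{-1}\vz_m}_{\text{self-signal}} \;+\; \underbrace{\sum_{j\in[k]\setminus\{m\}}\lambda_j\xtest[j]\,\vy_m\tran\mA^{-1}\vz_j}_{\text{cross-class favored terms}} \;+\; \underbrace{\sum_{j>k}\lambda_j\xtest[j]\,\vy_m\tran\mA^{-1}\vz_j}_{\text{contamination}}.
\end{align*}
Proving $\Pr[\gE_{\err}]\to 0$ then reduces to showing that the self-signal at $m=m^\star$ strictly dominates the remaining terms in both $s_{m^\star}$ and $s_m$ for $m\neq m^\star$.

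To evaluate the bilinear forms $\vy_m\tran\mA^{-1}\vz_j$, I would apply the leave-one-out (Sherman-Morrison) identity $\mA^{-1}\vz_j = \Ajinv\vz_j/(1+\lambda_j\vz_j\tran\Ajinv\vz_j)$, in which $\Ajinv$ is independent of $\vz_j$. Standard Wishart-type concentration shows that the eigenvalues of $\mA$ concentrate around a deterministic scalar $\mu\asymp d$ whose dominant contribution, in the regime $q+r>1$, comes from the unfavored features, so that $\Ajinv\approx \mu^{-1}\mI_n$ to leading order. By Assumption~\ref{assumption:1sparse} the class-$m^\star$ label indicator is precisely the event that $\vz_{m^\star}$ attains the maximum of $\vz_1,\ldots,\vz_k$, so the self-signal at $m^\star$ concentrates around a positive quantity of the appropriate order involving $\xtest[m^\star]$ and $\vy_{m^\star}\tran\vz_{m^\star}/n$.

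For the contamination and cross-class terms I would invoke Hanson-Wright together with Gaussian tail bounds. The test coordinates $\xtest[j]$ are uniformly $O(\sqrt{\log d})$, the centered one-hot labels $\vy_m$ satisfy $\|\vy_m\|_2^2\lesssim n/k$, and each bilinear form $\vy_m\tran\Ajinv\vz_j$ concentrates with subgaussian tails at a scale controlled by $\mu^{-1}\|\vy_m\|_2\sqrt{n}$. Each condition in the threshold $t<\min\qty{1-r,\,p+1-2(q+r),\,p-2,\,2q+r-2}$ then corresponds to a distinct error source being beaten by the self-signal: loosely, $t<1-r$ controls the favored cross-class sum generated by the label dependence on $\vz_1,\ldots,\vz_k$; $t<p+1-2(q+r)$ controls the summed unfavored-feature contamination; $t<2q+r-2$ controls a second source of favored cross-class contamination coming from the Sherman-Morrison bias correction; and $t<p-2$ ensures that fluctuations of the scalar $\vz_j\tran\Ajinv\vz_j$ about its deterministic equivalent are small enough not to corrupt the normalization.

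The main technical obstacle is the tangled statistical dependence: the labels $\vy_m$ are measurable with respect to $\vz_1,\ldots,\vz_k$, which are exactly the feature vectors that define $\mA$. The strategy is to peel off each of the $k$ favored features via iterated Sherman-Morrison applications, ultimately reducing every bilinear form to one against the reduced Gram matrix $\Af = \mA - \sum_{j=1}^k \lambda_j\vz_j\vz_j\tran$, which depends only on the unfavored features and is therefore independent of the labels. Once this decoupling is established, the Hanson-Wright and Wishart concentration tools apply cleanly, modulo careful bookkeeping of the perturbative corrections accrued at each Sherman-Morrison step. The slack introduced by this Gaussian-only Hanson-Wright bookkeeping --- most visibly when handling the $k-1$ favored cross-class terms whose labels are themselves correlated with the features being summed --- is precisely what keeps the achievable region strictly smaller than the full conjectured one, and closing this gap is exactly the motivation for the sharper Hanson-Wright variant deployed in \cref{thm:positive}.
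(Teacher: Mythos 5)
The statement you are asked to prove is a cited prior result, and its actual proof in \citet{subramanian2022generalization} does \emph{not} proceed via iterated Sherman--Morrison as you propose. As the present paper explains in \cref{sec:hanson-wright-exposition}, \citet{subramanian2022generalization} eschews leave-one-out decoupling entirely: they exploit the flatness of $\Ainv$ in the regime $q+r>1$, decompose $\Ainv = \rho\mI_n + \mE$ with $\norm{\mE}_2\ll\rho$, reduce the $\rho\mI_n$ piece to an inner-product calculation, and handle the dependent $\mE$ piece by Cauchy--Schwarz. The extraneous conditions $t<p-2$ and $t<2q+r-2$ in the theorem arise specifically from that Cauchy--Schwarz step on $\mE$; they are not a generic by-product of ``Gaussian-only Hanson--Wright bookkeeping'' after a Woodbury decoupling, as your last paragraph asserts.

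Two issues in your plan deserve flagging. First, your strategy of peeling off the $k$ favored features one at a time by iterated rank-one updates is problematic when $k=\omega(1)$: the perturbative corrections accumulated across $k$ successive Sherman--Morrison steps do not obviously remain controllable, and the bookkeeping you gesture at (``modulo careful bookkeeping of the perturbative corrections accrued at each step'') is exactly where the argument would collapse. The viable version of your idea is a single rank-$k$ Woodbury update with a spectral bound on the resulting $k\times k$ hat matrix, which is precisely what the present paper does in \cref{fact:woodbury,prop:concentration-hat-matrix-simple} --- but that machinery is part of the \emph{new} proof of \cref{thm:positive}, not of the weaker cited theorem. Second, you have not actually derived that your decoupling-plus-vanilla-Hanson--Wright route lands at the region $t<\min\{1-r,p+1-2(q+r),p-2,2q+r-2\}$. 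The looseness from failing to exploit label sparsity in Hanson--Wright scales as $\sqrt{k}$, whereas the Cauchy--Schwarz looseness in \citet{subramanian2022generalization} (which already exploits label sparsity) scales as $\sqrt{n}$; these are different degradation factors and generically give incomparable regions. Asserting, without calculation, that each of the four constraints comes from a named error source in your decomposition is the main gap --- the correspondence is plausible-sounding but not established, and the two constraints specific to the cited theorem are in fact artifacts of a technique you are not using.
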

\begin{theorem}[Misclassification in bi-level model \citep{mult:wu2023lowerbounds}]\label{thm:old-negative}
In the same setting as \cref{conjecture:regimes}, in the regime where regression fails $(q+r>1$), as $n\rightarrow \infty$ we have $\Pr[\gE_{\err}] \ge \frac{1}{2}$ if 
\begin{align}
    t > \min\qty{1-r, p+1-2(q+r)}.
\end{align}
\end{theorem}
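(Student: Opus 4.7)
My strategy combines a survival/contamination decomposition of the MNI scores with a two-class comparison that reduces the weak converse to controlling a single pairwise score difference. Writing $\mA := \mX^w (\mX^w)^\top$, each score decomposes as
\begin{align}
\fhat_m^\top \xwtest = \lambda_m \xtest[m] \, \vy_m^\top \mA^{-1} \vz_m + \sum_{j \in [k], j \ne m} \lambda_j \xtest[j] \, \vy_m^\top \mA^{-1} \vz_j + \sum_{j > k} \lambda_j \xtest[j] \, \vy_m^\top \mA^{-1} \vz_j,
\end{align}
the three pieces being, respectively, the class-$m$ survival, the cross-class contamination from the other label-bearing coordinates, and the Gaussian noise from unfavored directions.

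The next step is a coarse scale analysis of the three pieces under $q+r>1$. A leave-one-out Sherman--Morrison expansion of $\mA^{-1}$ combined with Marchenko--Pastur-type spectral control on $\mA$ gives the effective signal $\lambda_F \cdot \vy_m^\top \mA^{-1} \vz_m \asymp n^{1-q-r}$, which vanishes because $q+r > 1$. Second-moment bounds on the off-diagonal bilinear forms $\vy_m^\top \mA^{-1} \vz_j$ for $j \ne m \in [k]$ show that the aggregate cross-class contamination variance matches the squared signal when $t > 1-r$, while the unfavored-feature noise, a Gaussian with variance of order $\lambda_U^2(d-s)\|\mA^{-1} \vy_m\|^2$, matches the squared signal when $t > p+1-2(q+r)$. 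Hence under $t > \min\{1-r, p+1-2(q+r)\}$ at least one contamination source dominates and the signal-to-noise ratio of each score tends to $0$.

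To convert signal-to-noise decay into $\Pr[\gE_{\err}] \ge 1/2$, fix any two distinct classes $m_1 \ne m_2$ and examine $D := \fhat_{m_1}^\top \xwtest - \fhat_{m_2}^\top \xwtest$. Conditional on $\xtest$ and on $\ell \in \{m_1, m_2\}$, the mean of $D$ is the signal difference while its variance is dominated by the aggregate contamination; the prior step forces $|\mathbb{E} D| / \sqrt{\mathrm{Var}(D)} \to 0$. An approximate-normality argument (legitimate because $D$ is, up to leave-one-out corrections, a smooth linear functional of independent Gaussians) then yields $\Pr[\mathrm{sign}(D) \text{ matches the true pairwise order}] \to 1/2$. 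Since correct classification requires the correct sign in this pairwise comparison, we obtain $\Pr[\gE_{\err}] \ge 1/2 - o(1)$. The main obstacle is obtaining uniform-enough control on the off-diagonal bilinear forms $\vy_m^\top \mA^{-1} \vz_j$: for the weak $1/2$ bound a crude second-moment analysis via leave-one-out suffices, whereas strengthening to $\Pr[\gE_{\err}] \to 1$ as in the paper's main theorem requires the sharper Hanson--Wright variant for soft-sparse vectors introduced later in the text.
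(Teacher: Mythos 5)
Your high-level plan — decompose the pairwise score difference into a survival term plus contamination from label-bearing and remaining coordinates, show the signal-to-noise ratio decays polynomially, then invoke anticoncentration for a single pairwise comparison — is the same route the paper takes; the paper's quantity $Z^{(\b)}$ in \cref{eq:suff-misclass-standardized} is exactly the cleanly normalized version of your ``noise'' piece, and fixing a single competitor $\b$ and showing $\Pr[Z^{(\b)} > n^{-w}] \ge 1/2 - o(1)$ is how the paper reaches the weak converse before boosting to a strong one. However, your scale analysis has a genuine gap: your three-piece decomposition collapses everything with $j > k$ into ``Gaussian noise from unfavored directions'' with variance of order $\lambda_U^2(d-s)\|\mA^{-1}\vy_m\|^2$. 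This silently discards the $s - k = n^r - n^t$ favored but non-label-defining features, which carry weight $\lambda_F = n^{p-q-r}$, not $\lambda_U \approx 1$. In the bi-level model these are precisely the dominant contamination source in the $q + r > 1$ regime once $t > 1 - r$: their aggregate contribution to the squared contamination is $\cnf^2 \asymp \min\{1, \mu^{-2}\}\, n^{r - t - 1}$ (cf.\ \cref{prop:contamination-bound-accurate}), which dwarfs the $k - 2$ other label-defining coordinates' contribution $\cnl^2$ by the factor $\cnl^2/\cnf^2 \asymp n^{t - r} \to 0$ (always negligible since $t < r$ in \cref{def:bilevel}). So the threshold $t > 1 - r$ that you attribute to cross-class contamination $\vy_m^\top \mA^{-1}\vz_j$ with $j \ne m \in [k]$ actually comes from the dummy favored features, which do not appear explicitly anywhere in your account.

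Separately, your effective-signal scale $\lambda_F \cdot \vy_m^\top \mA^{-1} \vz_m \asymp n^{1-q-r}$ is off by a factor $k^{-1} = n^{-t}$; with $\mA^{-1} \approx n^{-p}\mI_n$ and $\vy_m$ having only $\approx n/k$ non-vanishing coordinates one gets $\lambda_F\,\vy_m^\top \mA^{-1} \vz_m \asymp n^{1 - q - r - t}\sqrt{\log k}$ (cf.\ \cref{prop:relative-survival-bound-accurate}). If you carry the $n^{1-q-r}$ scale through the comparison against the (correct) unfavored-noise variance $n^{1-t-p}$, you do \emph{not} recover the threshold $t > p + 1 - 2(q+r)$; the exponent accounting only balances once the $n^{-t}$ sparsity factor on the survival is restored. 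Both defects are symptoms of not exploiting the $1/k$ soft sparsity of $\vy_m$ in a bookkeeping-consistent way, which is the very thing the paper's Hanson--Wright variant (\cref{thm:bounded-hanson-wright-bilinear}) is built to track precisely.
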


Let us interpret the different conditions in \cref{conjecture:regimes}. To interpret the condition $t < 1 - r$, first rearrange it to $t + r < 1$. Recall that the parameter $r$ controls the number of favored features, and hence is a proxy for the ``effective dimension'' of the problem. On the other hand, the parameter $t$ controls the number of classes, so in a loose sense there are $n^{t+r}$ parameters being learned. From this perspective, the condition $t + r < 1$ says that the problem is ``effectively underparameterized''.  

The other condition on $p+1 - 2\max\qty{1, q+r}$ comes from looking at the noise from the unfavored features. To see why, recall that the squared favored feature weighting is $\lambda_F = n^{p-q-r}$. So for fixed $p$, the quantity $q+r$ controls the level of favored feature weighting. When $q+r>1$, the favored feature weighting is small enough that regression fails, and the empirical covariance becomes flat. In this case, the condition becomes $t < p+1-2(q+r) = (p-q-r) + 1-(q+r)$. As $q+r$ increases, the amount of favoring decreases, making it harder to generalize. 

For ease of comparison between our main result and \cref{thm:old-positive,thm:old-negative}, we visualize the regimes in \cref{fig:regimes}, as in \citet{subramanian2022generalization,mult:wu2023lowerbounds}. In particular, the blue starred and dashed regions in \cref{fig:regimes} indicate how \cref{thm:old-positive} only applies where regression fails. In contrast, our new result holds regardless of whether regression fails or not, as in the green diamond region and light blue triangle regions. The regions are also completely tight; the looseness between the prior \cref{thm:old-positive} and our result can be seen in the light blue square region. 

The weak converse in the prior \cref{thm:old-negative} captures some of the correct conditions for misclassification, but again only when $q+r>1$. As depicted in the maroon X region for $r < 0.25$ in \cref{fig:regimes}b, our main theorem gives a strong converse, whereas \cref{thm:old-negative} has nothing to say because $q+r < 1$. \cref{thm:old-negative} also only proves that the misclassification rate is asymptotically at least $\frac{1}{2}$. In the red circle and maroon X regions, we illustrate how our result pushes the misclassification rate to $1 - o(1)$, which requires a more refined analysis. We elaborate on this further in \cref{sec:proof-sketch}. 

We remark that it is simpler to analyze the case where regression fails, as the random matrices that arise in the analysis are \emph{flat}, i.e. approximately equal to a scaled identity matrix. However, in the regime where regression works, the same matrices have a \emph{spiked} spectrum, which complicates the analysis. To smoothly handle both cases, we leverage a new variant of the Hanson-Wright inequality to show concentration of certain sparse bilinear forms; see \cref{sec:su-cn-overview} for more details.

\begin{figure*}[!ht]
  \centering
  \includegraphics[width=0.8\textwidth]{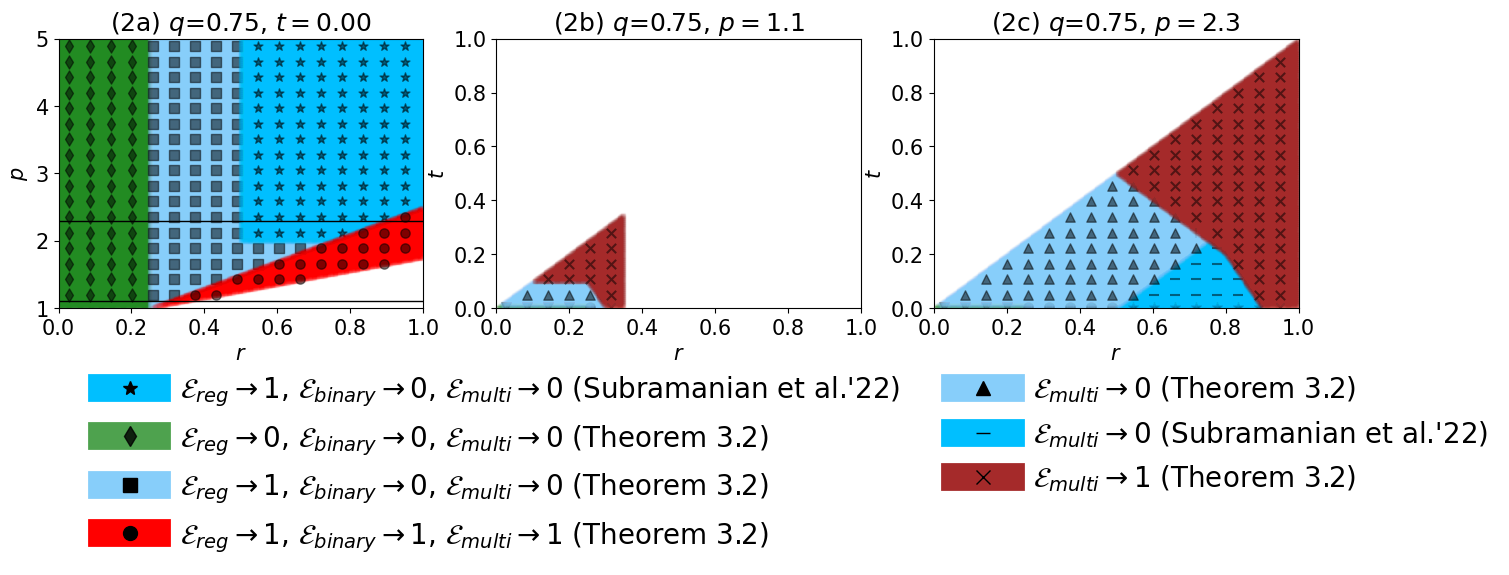}
  \caption{Example of regimes for multiclass/binary classification and regression. The white regions correspond to invalid regimes under the bi-level model. The entirety of 2b and all the light blue regions are new to this paper, as is showing that the error tends to $1$ in the maroon regions.
  }
  \label{fig:regimes}
\end{figure*}

\section{Technical overview}\label{sec:proof-sketch}
We now sketch out the proof for our main theorem. As in \citet{subramanian2022generalization}, the starting point is writing out the necessary and sufficient conditions for misclassification. 

Assume without loss of generality that the test point $\xtest \sim N(0, \mI_d)$ has true label $\alpha$ for some $\alpha \in [k]$. Let $\xwtest$ be the weighted version of this test point. 
From \eqref{eq:misclassification-definition}, an equivalent condition for misclassification is that for some $\beta \neq \alpha, \beta \in [k]$, we have $\fhat_\a^\top \xwtest < \fhat_\b^\top \xwtest$, i.e. the score for $\b$ outcompetes the score for $\a$. Define the Gram matrix $\mA \triangleq \mX^w(\mX^w)^\top$, the relative label vector $\dely \triangleq \ya - \yb \in \qty{-1, 0, 1}^n$, and the relative survival vector $\hhat_{\a, \b} \in \RR^d$ which compares the signal from $\a$ and $\b$:
\begin{align}
    \hhat_{\a,\b}[j] &\triangleq \lambda_j^{-1/2}(\fhat_\a[j] - \fhat_\b[j]) \label{eq:hhat_def}\\
    &= \vz_j^\top \Ainv \dely,
\end{align}
where to obtain the last line we have used \eqref{eq:closed-form-coefficients}.
By converting the misclassification condition into the unweighted feature space we see that we will have errors when
\begin{align}
    &\lambda_\a\hhat_{\a,\b}[\a] \xtest[\a] -  \lambda_\b\hhat_{\b,\a}[\b] \xtest[\b] < \sum_{j \notin \{\a, \b \}} \lambda_j \hhat_{\b,\a}[j]\xtest[j]. \label{eq:RHS}
\end{align}
Define the contamination term $\cn_{\a, \b}$:
\begin{equation}
    \cn_{\a,\b} \triangleq \sqrt{\sum_{j \notin \{\a, \b\}} \lambda_j^2 (\hhat_{\b,\a}[j])^2}. \label{eq:cn}
\end{equation}
Note that $\cn_{\a, \b}$ normalizes the RHS of \eqref{eq:RHS} into a standard Gaussian. Indeed, define
\begin{equation}
    \Zb \triangleq \frac{1}{\cn_{\a, \b}} \sum_{j \notin \{\a, \b \}} \lambda_j \hhat_{\b,\a}[j]\xtest[j] \sim N(0, 1). \label{def:zb}
\end{equation}

Since $\a, \b \in [k]$ are favored, we have $\lambda_\a = \lambda_\b = \lambda_F$. Hence an equivalent condition for misclassification is that there exists some $\b \neq \a$, $\b \in [k]$ such that 
\begin{align}
    &\frac{\lambda_F}{\cn_{\a, \b}}(\hhat_{\a,\b}[\a] \xtest[\a] -  \hhat_{\b,\a}[\b] \xtest[\b]) < Z^{(\b)}. \label{eq:equiv-misclassification}
\end{align}
We now translate the above criterion into \emph{sufficient} conditions for correct classification and misclassification and analyze these two cases separately. 

\paragraph{Correct classification:}

For correct classification, it suffices for the minimum value of the LHS of \cref{eq:equiv-misclassification} to outcompete the maximum value of the RHS, where the max is taken over $\b \in [k], \b \neq \a$. Some algebra, as in \citet{subramanian2022generalization}, shows that we correctly classify if
\begin{align}
    \underbrace{\frac{\min_\beta \lambda_F \hhat_{\a,\b}[\a]}{\max_\b \cn_{\a,\b}}}_{\su/\cn\text{ ratio}} \Bigg( \underbrace{\min_\b \left(\xtest[\a] - \xtest[\b]\right)}_{\text{closest feature margin}} - \underbrace{\max_\b |\xtest[\b]|}_{\text{largest competing feature}} &\cdot \underbrace{\max_\b \left|\frac{\hhat_{\a,\b}[\a] - \hhat_{\b,\a}[\b]}{\hhat_{\a,\b}[\a]} \right|}_{\text{survival variation}}\Bigg) \nonumber \\
   &> \underbrace{\max_{\beta} Z^{(\b)}}_{\text{normalized contamination}}. \label{eq:suff-class-standardized}
\end{align}

We will show that under the conditions specified in \cref{conjecture:regimes}, with high probability, the relevant survival to contamination ratio $\su/\cn$ grows at a polynomial rate $n^{v}$ for some $v>0$, whereas the term in the parentheses shrinks at a subpolynomial rate $\omega(n^{-\delta})$ for any $\delta > 0$. Further, by standard subgaussian maximal inequalities, the magnitudes of the \emph{normalized contamination} are no more than $O(\sqrt{\log(nk)})$ with high probability. Thus with high probability the LHS outcompetes the RHS, leading to correct classification. See \cref{sec:su-cn-overview} for more discussion on how we prove tight bounds on the survival-to-contamination ratios.

\paragraph{Misclassification:}
On the other hand, for misclassification it suffices for the maximum \emph{absolute} value of the LHS of \cref{eq:equiv-misclassification} to be outcompeted by the maximum value of the RHS. Some manipulations yield the following sufficient condition for misclassification: 
\begin{align}
        \underbrace{\frac{\max_\b \lambda_F \qty(\abs{\hhat_{\a,\b}[\a]} + \abs{\hhat_{\b,\a}[\b]})}{\min_\b \cn_{\a,\b}}}_{\su/\cn \text{ ratio}} \cdot  \underbrace{\max_{\gamma \in [k]} \abs{\xtest[\gamma]}}_{\text{largest label-defining feature}} 
   < \underbrace{\max_{\beta} Z^{(\b)}}_{\text{normalized contamination}}. \label{eq:suff-misclass-standardized}
\end{align}
We show that within the misclassification regimes in \cref{conjecture:regimes}, the survival-to-contamination ratio $\su/\cn$ \emph{shrinks} at a polynomial rate $n^{-w}$ for some $w > 0$. By standard subgaussian maximal inequalities, the largest label-defining feature is $O(\sqrt{\log(nk)})$ with high probability. Gaussian anticoncentration implies that for some $\b \neq \a, \b \in [k]$, $\Zb$ outcompetes the LHS with probability at least $\frac{1}{2} - o(1)$. 
Hence, we conclude that the model will misclassify with rate at least $\frac{1}{2}$ asymptotically.

Let us now describe how to boost the misclassification rate to $1 - o(1)$. Notice that the above argument only considered the competition between the LHS of \cref{eq:suff-misclass-standardized} and one of the $\Zb$'s on the RHS instead of the maximum $\Zb$. It's not hard to see from the definition of $\Zb$ in \cref{def:zb} that the $\Zb$ are jointly Gaussian. For intuition's sake, assuming the $\Zb$ were \emph{independent}, then $\max_\b \Zb$ would outcompete with probability $(\frac{1}{2} - o(1))^{k-1}$. 

In reality, the $\Zb$ are correlated, but we are able to show that the maximum correlation between the $\Zb$ is $\frac{1}{2} + o(1)$ with high probability. An application of Slepian's lemma (\citet{Slepian1962TheOB}) and some explicit bounds on orthant probabilities (\citet{pinasco2021orthant}) implies that $\max_\b \Zb > 0$ with probability at least $1 - \frac{1}{k^{1+o(1)}}$. Another application of anticoncentration implies that $\max_\b \Zb > n^{-w}$ with probability $1 - o(1)$, which finishes off the proof.

\subsection{Bounding the survival-to-contamination ratio}\label{sec:su-cn-overview}
Note that the critical \emph{survival-to-contamination} ratio appears in both \cref{eq:suff-class-standardized,eq:suff-misclass-standardized}. The most involved part of the proof is nailing down the correct order of growth of the survival to contamination ratio;  a similar analysis tightly bounds the survival variation and the correlation structure of the $\Zb$. 

To understand the relative survival and contamination, we must analyze the bilinear forms $\hhat_{\a, \b}[j] = \vz_j^\top \Ainv\dely$. Similarly, to control the correlation of the $\Zb$, we must understand the correlation between the $\hhat_{\a, \b}$ vectors, which reduces to understanding the bilinear forms $\vz_j^\top \Ainv \ya$ for $j \in [d], \a \in [k]$. The main source of inspiration for bounding these bilinear forms is the heuristic style of calculation carried out in Appendix K of \citet{subramanian2022generalization} that leads to \cref{conjecture:regimes}. 

To simplify the discussion, we temporarily restrict to the regime where regression fails ($q+r>1)$. However, our main technical tool seamlessly generalizes to the regime where regression works ($q+r < 1$). In the regime where regression fails, $\Ainv$ turns out to have a \emph{flat} spectrum: $\Ainv \approx \rho \mI$ for some constant $\rho > 0$. Assume for now that $\Ainv$ is \emph{exactly} equal to a scaled identity matrix. Then the survival is proportional to $\vz_\a^\top\dely$, which is a random inner product. Similarly, to bound the contamination terms we must control the random inner product $\vz_j^\top \dely$ for $j \not\in \qty{\a, \b}$. 

Since $\dely$ is a sparse vector --- it only has $\frac{2n}{k}$ nonzero entries in expectation --- a quick computation reveals that $\EE [\vz_\a^\top \dely] = \Tilde{O}(\frac{n}{k})$ and $\EE[\vz_j^\top \dely] = 0$. The deciding factor, then, is how tightly these quantities concentrate around their means. A na\"{i}ve application of Hoeffding implies a concentration radius of order $\Tilde{O}(\sqrt{n})$, which would lead to looseness in the overall result. The hope is to exploit sparsity to get a concentration radius of order $\Tilde{O}(\sqrt{n/k})$. This is where our new technical tool \cref{thm:bounded-hanson-wright-bilinear} comes in, which may be of independent interest; we present it in the following section.

\subsection{A new variant of the Hanson-Wright inequality}\label{sec:hanson-wright-exposition}
In reality, even in the regime where regression fails, $\Ainv$ is not actually perfectly flat. Even worse, in the regime where regression works, $\Ainv$ is actually spiked. Thus, we cannot simply reduce the bilinear form $\vz_j^\top \Ainv \dely$ to an inner product. Instead, we turn to the well-known Hanson-Wright inequality \citep{rudelson2013hanson}, which tells us that quadratic forms of random vectors with independent, mean zero, subgaussian entries concentrate around their mean. It was used extensively to study binary classification \citep{binary:Muth20}, and multiclass classification \citep{subramanian2022generalization}.

However, just as the classic Hoeffding inequality is loose, so too is the standard form of Hanson-Wright, because it also does not exploit sparsity. We restate the classic version of Hanson-Wright below. 

Define the subgaussian norm $\|\xi\|_{\psi_2}$ \citep{vershynin2018high} as
\begin{align}
    \norm{\xi}_{\psi_2} &= \inf_{K > 0} \qty{K: \EE\exp(\xi^2 / K^2) \le 2},\label{eq:subgaussiannorm}
\end{align}
\begin{theorem}[Hanson-Wright for quadratic forms, from \cite{rudelson2013hanson}]
Let $\vx = (X_1, \ldots, X_n) \in \RR^n$ be a random vector composed of independent random variables that are zero mean with $\norm{X_i}_{\psi_2} \le K$. 
There exists universal constant $c > 0$ such that for any deterministic $\mM \in \RR^{n \times n}$ and $\epsilon \ge 0$, 
\begin{align*}
    &\Pr\left[|\vx^\top \mM \vx - \EE[\vx^\top \mM \vx ]| > \epsilon\right] \le 2\exp(-c\min\qty{\frac{\epsilon^2}{K^4\norm{\mM}_F^2}, \frac{\epsilon}{K^2\norm{\mM}_2}}).\label{eq:hansonwright-alt}
\end{align*}
\end{theorem}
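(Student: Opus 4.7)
The plan is to follow the classical decoupling argument of \citet{rudelson2013hanson}. First, by homogeneity I would assume $K = 1$, and since $\vx^\top \mM \vx$ depends only on the symmetric part $\frac{1}{2}(\mM + \mM^\top)$, I would replace $\mM$ with this symmetrization, which changes $\norm{\mM}_F$ and $\norm{\mM}_2$ by at most a constant factor. Next, I would split
\begin{align*}
    \vx^\top \mM \vx - \EE[\vx^\top \mM \vx] = \underbrace{\sum_{i} M_{ii}(X_i^2 - \EE X_i^2)}_{\triangleq\, D} + \underbrace{\sum_{i \ne j} M_{ij} X_i X_j}_{\triangleq\, S},
\end{align*}
and bound the tails of $D$ and $S$ separately, each at level $\epsilon/2$, so that a union bound recovers the stated inequality.

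For $D$, each $X_i^2 - \EE X_i^2$ is a mean-zero sub-exponential with norm bounded by an absolute constant (the square of a sub-gaussian with norm $K$ is sub-exponential with norm at most $\lesssim K^2 = 1$), and the summands are independent. A direct application of Bernstein's inequality for weighted sums of sub-exponentials then yields the claimed two-regime tail bound for $D$, where the relevant scales collapse via $\norm{\mathrm{diag}(\mM)}_2^2 \le \norm{\mM}_F^2$ and $\norm{\mathrm{diag}(\mM)}_\infty \le \norm{\mM}_2$.

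For $S$, I would apply a standard decoupling inequality: with $\vx'$ an independent copy of $\vx$, there is a universal constant $c_0$ such that $\EE F(S) \le \EE F(c_0\, \vx^\top \mM \vx')$ for any convex nonnegative $F$. Conditioning on $\vx'$, the decoupled sum $\vx^\top \mM \vx' = \sum_i X_i (\mM \vx')_i$ is a weighted sum of independent sub-gaussians with variance proxy $\norm{\mM \vx'}_2^2$, so for small $\lambda$ the conditional MGF satisfies $\EE[\exp(\lambda\, \vx^\top \mM \vx') \mid \vx'] \le \exp(C \lambda^2 \norm{\mM \vx'}_2^2)$. The main obstacle is then controlling $\EE_{\vx'} \exp(C \lambda^2 \norm{\mM \vx'}_2^2)$ uniformly over the range of $\lambda$ of interest without circular reasoning, since this is itself a quadratic-form exponential moment. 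My plan is a Gaussian comparison: exploiting independence and sub-gaussianity of the entries of $\vx'$, its exponential moment against $\mM^\top \mM$ can be dominated (up to absolute constants) by that of a standard Gaussian $\vg \sim N(0, \mI_n)$, for which diagonalizing $\mM^\top \mM$ yields the closed form
\begin{align*}
    \EE \exp(\mu\, \vg^\top \mM^\top \mM \vg) = \prod_i (1 - 2\mu\, \sigma_i(\mM)^2)^{-1/2} \le \exp\!\left(2 \mu \norm{\mM}_F^2\right)
\end{align*}
whenever $\mu \le 1/(4 \norm{\mM}_2^2)$. Substituting this back gives an unconditional MGF bound of the form $\EE \exp(\lambda S) \le \exp(C' \lambda^2 \norm{\mM}_F^2)$ valid for $|\lambda| \le c/\norm{\mM}_2$. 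A standard Chernoff optimization in $\lambda$ then splits into two regimes: for small $\epsilon$, the unconstrained optimum gives the Gaussian tail $\exp(-c \epsilon^2/\norm{\mM}_F^2)$; for large $\epsilon$, saturating at the boundary $\lambda \asymp 1/\norm{\mM}_2$ gives the sub-exponential tail $\exp(-c \epsilon/\norm{\mM}_2)$. Combining the $S$ bound with the $D$ estimate and readjusting absolute constants closes the proof.
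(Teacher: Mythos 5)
Note first that the paper does not prove this theorem: it is quoted from \citet{rudelson2013hanson} and used as a black box. What the paper \emph{does} prove are the bilinear sparse variants (its Theorems appearing in Appendices~H and~I), whose proofs share your diagonal/off-diagonal template but use a different decoupling. Your proposal is a correct instantiation of the classical Rudelson--Vershynin argument: normalize $K=1$, symmetrize $\mM$, handle the diagonal via Bernstein for independent sub-exponentials, and decouple the off-diagonal via an independent copy $\vx'$ (de la Pe\~na--Gin\'e), then pass to a Gaussian via an MGF comparison and compute the $\chi^2$ chaos moment in closed form; your inequality $\prod_i(1-2\mu\sigma_i^2)^{-1/2}\le\exp(2\mu\norm{\mM}_F^2)$ for $\mu\le 1/(4\norm{\mM}_2^2)$ and the two-regime Chernoff split are both correct. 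The one place you leave genuine work implicit is the ``Gaussian comparison'' step: dominating $\EE_{\vx'}\exp\bigl(C\lambda^2\norm{\mM\vx'}_2^2\bigr)$ by the corresponding Gaussian chaos moment is not automatic because the coordinates of $\mM\vx'$ are not independent; the standard device is to Gaussian-linearize, writing $\exp\bigl(\mu(\vx')^\top\mM^\top\mM\vx'\bigr)=\EE_\xi\exp\bigl(\sqrt{2\mu}\,\xi^\top\mM\vx'\bigr)$ for an auxiliary $\xi\sim N(0,\mI_n)$, apply the coordinatewise sub-gaussian-to-Gaussian MGF domination to $\vx'$ conditionally on $\xi$, and then un-Fubini. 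The paper's own proofs of its variants take a parallel but distinct route: they use the Bernoulli random-split decoupling $\delta_i\sim\Ber(1/2)$ within a single realization (rather than an independent copy), and in the hard-sparsity case (Appendix~I) achieve the Gaussian reduction by introducing an auxiliary Gaussian whose conditional MGF exactly matches the conditional sub-gaussian bound. That form of decoupling is better suited to threading a sparsity mask $\vgamma$ through the argument, which is why it is the template the paper extends; for the classical (non-sparse) theorem you are proving, your independent-copy route is equally valid and is essentially the one carried out in the cited reference.
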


The original Hanson-Wright inequality quoted above only applies to quadratic forms, and moreover assumes that the matrix $\mM$ is deterministic. In our setting, we have a bilinear form $\vz_j^\top \Ainv \dely$, where $\Ainv$ is random. One can condition on the realization of $\Ainv$, but this removes independence and alters the distributions of the random variables involved. Of course, if we condition on a random matrix which is independent of the random vectors, then there is no issue.

Assuming a way around the independence issue, one could decompose the bilinear form with the identity 
\begin{align}
    4\vz_j^\top \Ainv \dely &= (\vz_j+ \dely)^\top\Ainv (\vz_j+ \dely) - (\vz_j - \dely)^\top \Ainv (\vz_j - \dely).
\end{align} 
This trick is used in both \cite{binary:Muth20,subramanian2022generalization}. In the binary classification case, one regains complete independence by using a leave-one-out trick. More precisely, define the leave-one-out matrix $\mA_{-j} = \sum_{i \neq j} \lambda_i \vz_i \vz_i^\top$ and let $\vy \in \qty{\pm 1}^n$ be the binary label vector. Then $\mA_{-j}$ is evidently independent of $\vz_j$ and $\vy$, and the Sherman-Morrison formula implies that $\vz_j^\top \Ainv \vy = \frac{\vz_j^\top \Ajinv \vy}{1+\vz_j^\top \Ajinv \vz_j}$. Because the denominator is a scalar which concentrates well due to Hanson-Wright, this allows for a completely tight characterization of binary classification.

However, this trick does not immediately work in the multiclass setting, because the labels depend on all of the $k>1$ label-defining features. Here, one needs to potentially remove $\omega(1)$ features from the Gram matrix $\mA$ to regain independence. In \cite{subramanian2022generalization}, they eschew Sherman-Morrison entirely and directly exploit the fact that $\Ainv$ is flat (as $q+r>1$). More precisely, they decompose $\Ainv = \rho \mI_n + \mE$, where $\norm{\mE}_2 \ll \rho$. This essentially shoves all the dependencies into $\mE$. While the $\rho \mI_n$ portion reduces to the inner product calculation discussed in \cref{sec:su-cn-overview}, they must use Cauchy-Schwarz to handle the dependent $\mE$ portion. This leads to inevitable looseness in the regimes for Theorem~\ref{thm:positive}, as Cauchy-Schwarz is a \emph{worst-case} bound. Interestingly, their Cauchy-Schwarz bound leverages the sparsity of the label vectors to gain a factor of $\sqrt{k}$, but the bound is still loose by a factor of $\sqrt{n}$.

This motivates a new variant of Hanson-Wright which fully leverages the (soft) sparsity inherent to multiclass problems with an increasing number of classes. We first formally define the notions of soft and hard sparsity.
\begin{definition}[Soft and hard sparsity]
For $\pi \le 1$, we say that random vector $\vy = (Y_i)_{i=1}^n$ has soft sparsity at level $\pi$ if $\abs{Y_i} \le 1$ almost surely and $\mathrm{Var}(Y_i) \le \pi$ for all $i$. On the other hand, we say that $\vy$ has hard sparsity at level $\pi$ if at most a $\pi$ fraction of the $Y_i$ are nonzero. 
\end{definition}
In particular, our variant \cref{thm:bounded-hanson-wright-bilinear} below requires that one of the vectors in the bilinear form has \emph{soft sparsity} at level $\pi$. Throughout, one should think of $\pi = o(1)$, and for us indeed $\pi = O(\frac{1}{k})$. One can check that a bounded random vector $\vy$ with \emph{hard sparsity} level $\pi$ must also have soft sparsity at level $O(\pi)$, so soft sparsity is more general for bounded random vectors. In \cref{table:variants} we compare our variant with several variants of Hanson-Wright which have appeared in the literature, some of which involve hard sparsity.
    
\begin{restatable}[Hanson-Wright for bilinear forms with soft sparsity]{theorem}{hansonwright}\label{thm:bounded-hanson-wright-bilinear}
Let $\vx = (X_1, \ldots, X_n) \in \RR^n$ and $\vy \in (Y_1, \ldots, Y_n) \in \RR^n$ be random vectors such that $(X_i, Y_i)$ are independent pairs of (possibly correlated) centered random 
variables such that $\norm{X_i}_{\psi_2} \le K$ and $Y_i$ has soft sparsity at level $\pi$, i.e. $\abs{Y_i} \le 1$ almost surely, and $\EE[Y_i^2] \le \pi$. Assume that \emph{conditioned on $Y_j$}, $\norm{X_j}_{\psi_2} \le K$. Then there exists an absolute constant $c > 0$ such that for all $\mM \in \RR^{n \times n}$ and $\epsilon \ge 0$ we have 
    \begin{align}
    &\Pr\qty[|\vx^\top \mM \vy - \EE[\vx^\top \mM \vy ]| > \epsilon] \le 2\exp(-c\min\qty{\frac{\epsilon^2}{K^2\pi\norm{\mM}_F^2}, \frac{\epsilon}{K\norm{\mM}_2}}).\label{eq:hansonwright-sparse-bilinear}
\end{align}
\end{restatable}

\begin{table}[]
\begin{tabular}{@{}llll@{}}
\toprule
Variant               & Assumptions on $\vy$                                     & Concentration radius \\ \midrule
Classic quadratic\textsuperscript{\hyperref[ref1]{a}}: $\vx^\top \mM \vx$          & same as $\vx$                                                   & $\Tilde{O}(\norm{\mM}_F)$        \\
Sparse bilinear\textsuperscript{\hyperref[ref2]{b}}: $\vx^\top \mM (\vgamma \circ \vy)$ & $\gamma_i \sim \Ber(\pi)$, indep. of $X_i$ but not $Y_i$
& $\Tilde{O}(\sqrt{\pi}\norm{\mM}_F)$    \\
\cref{thm:sparse-hanson-wright-bilinear-hard}: $\vx^\top \mM (\vgamma \circ \vy)$       & $\gamma_i \sim \Ber(\pi)$, indep. of $Y_i$ but not $X_i$ & $\Tilde{O}(\sqrt{\pi}\norm{\mM}_F)$    \\
\cref{thm:bounded-hanson-wright-bilinear}: $\vx^\top \mM \vy$ & $\abs{Y_i} \le 1$ a.s., $\EE Y_i^2 \le \pi$  & $\Tilde{O}(\sqrt{\pi}\norm{\mM}_F)$    \\ \bottomrule
\end{tabular}
\vspace{1ex}
\caption{Comparison of different variants of the Hanson-Wright inequality. In all variants, we assume that $(\vx, \vy) = (X_i, Y_i)_{i=1}^n$ are subgaussian, centered, and the pairs $(X_i, Y_i)$ are independent across $i$. We use $\circ$ to denote elementwise multiplication, which allows us to express hard sparsity with the sparsity mask $\vgamma \in \qty{0, 1}^n$. The concentration radius corresponds to the size of typical fluctuations guaranteed by the concentration inequality, i.e. the $\epsilon$ needed for high probability guarantees. \\
\small\textsuperscript{a} \citep[Theorem~1.1]{rudelson2013hanson}\label{ref1};  \textsuperscript{b} \citep[Theorem~1]{park2022sparse}\label{ref2}}\label{table:variants}
\end{table}

The full proof of \cref{thm:bounded-hanson-wright-bilinear} is deferred to \cref{sec:hanson-wright-bounded}. The main proof techniques are heavily inspired by those of \cite{rudelson2013hanson,zhou2019sparse,park2022sparse}. However, the proof of \cref{thm:bounded-hanson-wright-bilinear} is actually simpler than in \citet{park2022sparse}, as bounded with soft sparsity turns out to be easier to work with than subgaussian with hard sparsity. To actually apply Hanson-Wright, we carefully isolate the dependent portions using the Woodbury inversion formula, which generalizes the Sherman-Morrison formula for arbitrary rank updates. 

For the sake of completeness, in \cref{sec:hanson-wright-hard-sparsity} we also prove a variant of \cref{thm:bounded-hanson-wright-bilinear} where we assume that $\vy$ has subgaussian entries with hard sparsity. 

\begin{restatable}[Hanson-Wright for bilinear forms with hard sparsity]{theorem}{hansonwrighthard}\label{thm:sparse-hanson-wright-bilinear-hard}
    Let $\vx = (X_1, \ldots, X_n) \in \RR^n$ and $\vy \in (Y_1, \ldots, Y_n) \in \RR^n$ be random vectors such that the pairs $(X_i, Y_i)$ are independent pairs of (possibly correlated) centered random variables with subgaussian norm at most $K$. 
Suppose $\vgamma = (\gamma_1, \ldots, \gamma_n) \in \qty{0, 1}^n$ is an i.i.d. Bernoulli vector with bias $\pi$. Assume that $\vgamma$ is independent of $\vy$, $\gamma_j$ is independent of $X_i$ for $i \neq j$, and finally \emph{conditioned on $\gamma_j = 1$}, $X_j$ has subgaussian norm at most $K$. Then there exists an absolute constant $c > 0$ such that for all $\mM \in \RR^{n \times n}$ and $\epsilon \ge 0$ we have 
    \begin{align*}
    \Pr\qty[|\vx^\top \mM (\vy \circ \vgamma) - \EE[\vx^\top \mM (\vy \circ \vgamma) ]| > \epsilon] \le 2\exp(-c\min\qty{\frac{\epsilon^2}{K^4\pi\norm{\mM}_F^2}, \frac{\epsilon}{K^2\norm{\mM}_2}}).
    \end{align*}
\end{restatable}

We briefly illustrate how \cref{thm:bounded-hanson-wright-bilinear} can be used to get tighter results throughout our analysis. A quick calculation reveals that the label vectors $\dely$ and $\ya$ both have soft sparsity at level $\pi = O(1/k)$. However, $\ya$ does not have hard sparsity as required by the variants in \citet{park2022sparse} or \cref{thm:sparse-hanson-wright-bilinear-hard}. Since $\norm{\mM}_F^2 \le n \norm{\mM}_2^2$, we obtain a concentration radius $\epsilon$ which scales like $\sqrt{n/k}$ rather than $\sqrt{n}$ (obtained via vanilla Hanson-Wright) or $n/\sqrt{k}$ (obtained via Cauchy-Schwarz). This gain is crucial to tightly analyzing the survival, contamination, and correlation structure. 

\subsection{Completing the proof sketch}
Theorem~\ref{thm:bounded-hanson-wright-bilinear} and the above insights about sparsity and independence allow us to prove the following bounds on the relative survival and contamination terms which are tight up to log factors; see \cref{app:proof-sketch} for more details. For brevity's sake, we introduce the notation $\mu \triangleq n^{q+r-1}$.
\begin{restatable*}[Bounds on relative survival]{proposition}{survivalacc}\label{prop:relative-survival-bound-accurate}
    Under the bi-level ensemble model (\cref{def:bilevel}), when the true data generating process is 1-sparse (Assumption~\ref{assumption:1sparse}), if $t < \frac{1}{2}$, then with probability at least $1-O(1/nk)$
    \begin{align*}
        \lambda_F \hhat_{\a, \b}[\a] = \const{const_alpha}\min\qty{\mu^{-1}, 1} n^{-t}(1 \pm O(n^{-\konst{kappa_survival}}))\sqrt{\log k},
    \end{align*}
    where \const{const_alpha} and \konst{kappa_survival} are positive constants.

    If $t \ge \frac{1}{2}$, then 
    \begin{align*}
        \lambda_F \abs{\hhat_{\a, \b}[\a]} \le \const{const_rel_survival_upper}\min\qty{\mu^{-1}, 1} n^{-\frac{1}{2}}\sqrt{\log (nk)},
    \end{align*}
    where \const{const_rel_survival_upper} is a positive constant.
\end{restatable*}
Next, we state our bounds on contamination. 
\begin{restatable*}[Bounds on contamination]{proposition}{contaminationacc}\label{prop:contamination-bound-accurate}
    Under the bi-level ensemble model (\cref{def:bilevel}), when the true data generating process is 1-sparse (Assumption~\ref{assumption:1sparse}), with probability at least $1 - O(1/nk)$,  
    \begin{align*}
        \cn_{\a, \b} \le \underbrace{\min\qty{\mu^{-1}, 1}O(n^{\frac{r-t-1}{2}})\log(nsk)}_{\text{favored features}} + \underbrace{O(n^{\frac{1-t-p}{2}})\sqrt{\log(nsk)}}_{\text{unfavored features}}.
    \end{align*}

    Furthermore, if $t > 0$, then with probability at least $1-O(1/nk)$,
    \begin{align*}
        \cn_{\a, \b} \ge \underbrace{\min\qty{\mu^{-1}, 1}\Omega(n^{\frac{r-t-1}{2}})}_{\text{favored features}} + \underbrace{\Omega(n^{\frac{1-t-p}{2}})}_{\text{unfavored features}}.
    \end{align*}
\end{restatable*}

Translating the parameters above, we see that (i) the relative survival is diminished by a factor $1/k$ as long as $k = o(\sqrt{n})$, and a factor $1/\sqrt{n}$ for $k = \Omega(\sqrt{n})$ (this looseness ends up being negligible for the final result) and (ii) the contamination is diminished by a factor of $1/\sqrt{k}$. This essentially matches the expected behavior from the heuristic calculation in \citet{subramanian2022generalization}. With some straightforward algebra, we can compute the regimes where the survival-to-contamination ratio $\su/\cn$ grows or decays polynomially. This yields the stated regimes in \cref{conjecture:regimes}; see \cref{app:proof-sketch} for more details.

For technical reasons, the analogous bounds in \citet{subramanian2022generalization} are loose, giving rise to unnecessary conditions for good generalization such as $t < p-2$ and $t < 2q+r-2$. Moreover, we are able to give both upper and lower bounds on the survival and contamination terms, whereas they only give one-sided inequalities for each quantity. 
\section{Discussion}\label{sec:discussion}

In this paper we resolve the main conjecture of \citet{subramanian2022generalization}, identifying the exact regimes where an overparameterized linear model succeeds at multiclass classification. Our techniques also lay the foundation for investigating related generalization for other multiclass tasks and nonlinear algorithms. We hope that by bringing the rigorous proofs closer to the heuristic style of calculation, we open the path for analyzing more complicated and realistic models.

An important next step is to extend our results to more realistic spiked covariance models. For example, one typically observes power-law decay for the extreme eigenvalues in applications. We expect that the bi-level model can be relaxed to allow for constant deviations in the weightings for the favored, non-label defining features and power law decay for the unfavored features. The former change would likely only affect constants in certain areas of the argument that do not crucially depend on the exact constants involved, whereas the latter would likely just change the effective degree of overparameterization \citep{reg:bartlett2020benign}. However, even constant fluctuations in weighting for the label-defining features can lead to significant subtleties, as these constant deviations manifest as polynomially large variations in the number of examples of each class. Such heterogeneity between label-defining directions would likely lead to significantly messier conditions for generalization.

Another future direction is to move beyond Gaussian features. It is plausible that similar results would hold for vector subgaussian features which are rotationally invariant, allowing us to rotate into the basis where the features have diagonal covariance. One place where Gaussianity is crucially used is to obtain an explicit lower bound on the margin between the features. 

As an example application, we sketch out how our proof techniques imply precise conditions for a variant of the learning task called multilabel classification. In a simple model for multilabel classification, each datapoint can have several of $k$ possible labels --- corresponding to the positive valued features --- but in the training set only one such correct label is provided at random for each datapoint. We deem that the model generalizes if for any queried label it successfully labels test inputs as positive or negative. We can use the MNI approach here to learn classifiers.

Some thought reveals that the main difference between multilabel classification and multiclass classification from a survival and contamination perspective is that positive features no longer need to outcompete other features. Thus, the main object of study would be the bilinear forms $\zj^\top \Ainv \ya$, which is possible thanks to \cref{thm:bounded-hanson-wright-bilinear}. The survival and contamination terms are only affected by the expected values of these bilinear forms, but the expected values match the multiclass behavior up to log factors, which do not affect the regimes where $\su/\cn$ will grow or shrink polynomially. A similar analysis thus reveals that MNI will generalize in exactly the same regimes as in \cref{conjecture:regimes}. Here, the model  generalizes in the sense that with high probability over the labels the model will correctly classify, and failure to generalize means that the model will do no better than a coin toss. 

Perhaps surprisingly, resolving \cref{conjecture:regimes} also implies that MNI is asymptotically \emph{suboptimal} compared to a natural \emph{non-interpolative} approach: simply make $\fhat_m$ equal to the average\footnote{Note that \citet{frei2023benign} point out that even leaky ReLU networks trained with a gradient flow can behave like averages of training examples.} of all positive training examples of class $m$. A straightforward analysis, detailed in the supplementary material, reveals this scheme fails to generalize exactly when $t < \min\qty{1-r, p+1-2(q+r)}$, even in the regime where regression succeeds ($q+r<1$). This is particularly interesting because we have shown that in the regime where regression succeeds, MNI generalizes only when $t < \min\qty{1-r, p-1}$, which is a smaller region. In light of this gap, it would be interesting to identify the \emph{information-theoretic} barrier for multiclass classification, especially within the broader context of statistical-computation gaps (see e.g. \citet{wu2021statistical,brennan2020reducibility}).

\section*{Acknowledgments and Disclosure of Funding}
DW acknowledges support from NSF Graduate Research Fellowship DGE-2146752. We acknowledge funding support from NSF AST-2132700 for this work. DW appreciates helpful discussions with Sidhanth Mohanty and Prasad Raghavendra. 

\bibliography{references}
\bibliographystyle{plainnat}

\clearpage
\appendix
\tableofcontents
\clearpage
\section{Preliminaries and notation}\label{app:setup}
For positive integers $n$, we use the shorthand $[n] \triangleq \qty{1, \ldots, n}$. For a vector $\vv \in \RR^n$, $\norm{\vv}_2$ always denotes the Euclidean norm. We index entries by using square brackets, so $\vv[j]$ denotes the $j$th entry of $\vv$. For any matrix $\mM \in \RR^{m \times n}$, we denote its $ij$th entry by $m_{ij}$, $\norm{\mM}_2$ denotes the spectral norm, and $\norm{\mM}_F = \Tr(\mM^\top \mM)$ denotes the Frobenius norm. We use $\sigma_{\max{}}(\mM)$ and $\sigma_{\min{}}(\mM)$ to denote the maximum and minimum singular values of $\mM$, respectively. If $\mM \in \RR^{n \times n}$ is symmetric, we write $\mu_1(\mM) \ge \mu_2(\mM) \ge \ldots \ge \mu_n(\mM)$\label{def:eigs} to denote the ordered eigenvalues of $\mM$. Given two vectors $\vv, \vu \in \RR^{n}$, we write $\vv \circ \vu \in \RR^{n}$ to denote the entrywise product of $\vv$ and $\vu$. 

We make extensive use of big-$O$ notation. In this paragraph, $c$ refers to a positive constant which does not depend on $n$, and all statements hold for sufficiently large $n$. If $f(n) = O(g(n))$,  then $f(n) \le cg(n)$ for some $c$. If $f(n) = \Tilde{O}(g(n))$, then $f(n) \le cg(n)\poly\log(n)$ for some $c$. If $f(n) = o(g(n))$, then for all $c > 0$ we have $f(n) \le c g(n)$. We write $f(n) = \Omega(g(n))$ if $f(n) \ge cg(n)$ for some $c$. Finally, we write $f(n) = \Theta(g(n))$ if there exists positive constants $c_1$ and $c_2$ such that $c_1g(n) \le f(n) \le c_2g(n)$.
\begin{table}[!ht]
  \caption{Notation}
  \label{tab:notation}
  \centering
  \begin{tabular}{llll}
    \toprule
    Symbol & Definition & Dimension & Source \\
    \midrule
    $\nc$ & Number of classes & Scalar & Sec. \ref{sec:setup} \\
    $n$ & Number of training points & Scalar & Sec. \ref{sec:setup} \\
    $d$ & Dimension of each point --- the total number of features & Scalar & Sec. \ref{sec:setup} \\
    $s$ & The number of favored features & Scalar & Def. \ref{def:bilevel} \\
    $a$ & The constant controlling the favored weights & Scalar & Def. \ref{def:bilevel} \\
    $p$ & Parameter controlling overparameterization ($d = n^p$) & Scalar & Def. \ref{def:bilevel} \\
    $r$ & Parameter controlling the number of favored features ($s = n^r$) & Scalar & Def. \ref{def:bilevel} \\
    $q$ & Parameter controlling the favored weights ($a = n^{-q}$) & Scalar & Def. \ref{def:bilevel} \\
    $t$ & Parameter controlling the number of classes ($\nc = c_k n^t$) & Scalar & Def. \ref{def:bilevel} \\
    $c_k$ & The number of classes when $t=0$ ($\nc = c_k n^t$) & Scalar & Def. \ref{def:bilevel} \\
    $\lambda_j$ & Squared weight of the $j$th feature & Scalar & Def. \ref{def:bilevel} \\
    $\vx_i$ & $i$th training point (unweighted) & Length-$d$ vector & Sec. \ref{sec:setup} \\
    $\ell_i$ & Class label of $i$th training point & Scalar & Eqn. \ref{eq:truelabels} \\
    $\vw_i$ & $i$th training point (weighted) & Length-$d$ vector & Eqn. \ref{eq:lambdas} \\
    $\mX^w$ & Weighted feature matrix & $(n \times d)$-matrix & Eqn. \ref{eq:weightedxmatrix} \\
    $\zj$ & The collected $j$th features of all training points & Length-$n$ vector & Eqn. \ref{eq:weightedxmatrix} \\
    $\yoh_m$ & One-hot encoding of all the training points for label $m$ & Length-$n$ vector & Eqn.
    \ref{eq:yoh} \\
    $\Yoh$ & One-hot label matrix & ($n \times \nc$)-matrix & Eqn. \ref{eq:yoh} \\
    $\vy_m$ & Zero-mean encoding of the training points for label $m$ & Length-$n$ vector & Eqn.     \ref{eq:zeromeany} \\
    $\hat{\vf}_m$ & Learned coefficients for label $m$ using min-norm interpolation &  Length-$d$ vector & Eqn. \ref{eq:closed-form-coefficients} \\
    $\xtest$ & A single test point & Length-$d$ vector & Sec. \hyperref[def:xtest]{2} \\
     $\xtest^w$ & A single weighted test point & Length-$d$ vector & Sec. \hyperref[def:xwtest]{2} \\
    $\mA$ & Gram matrix $\mA = \mX^w(\mX^w)^\top$ & ($n \times n$)-matrix & Sec. \ref{sec:proof-sketch} \\
    $\mu_i(\mA)$ & The $i$th eigenvalue of matrix $\mA$, sorted in descending order & Scalar & App.  \ref{def:eigs}\\
    $\lambda_F$ & Squared favored feature weights: $\lambda_F = \frac{ad}{s}$ & Scalar & Def. \ref{def:bilevel} \\
    $\lambda_U$ & Squared unfavored feature weights: $\lambda_F = \frac{(1-a)d}{d-s}$ & Scalar & Def. \ref{def:bilevel} \\
    $\hhat_{\a,\b}$ & Relative survival $\hhat_{\a,\b}[j] = \lambda_j^{-1/2}(\hat{f}_\a[j] - \hat{f}_\b[j])$ & Length-$d$ vector & Eqn. \ref{eq:hhat_def} \\
    $\cn_{\a,\b}$ & 
    Normalizing factor $\cn_{\a,\b}= \sqrt{\left(\sum_{j \notin \{\a, \b\}} \lambda_j^2 (\hhat_{\b,\a}[j])^2\right)}$ & Scalar & Eqn. \ref{eq:cn} \\
    $\norm{\cdot}_{\psi_2}$ & 
    The subgaussian norm of a scalar random variable & Scalar & Eqn. \ref{eq:subgaussiannorm} \\
    $\norm{\cdot}_{\psi_1}$ & 
    The subexponential norm of a scalar random variable & Scalar & Eqn. \ref{eq:subexponentialnorm} \\
   $\mu$ & Factor controlling whether regression works, $\mu \triangleq n^{q+r-1} $& Scalar & App. \ref{eq:mu} \\
    $\Zt$ & Unweighted subset of favored features, where $T \subseteq [s]$ & $(n \times \abs{T})$-matrix & App. \ref{eq:Zt} \\
    $\Xt$ & Weighted subset of favored features, $\Xt = \sqrt{\lambda_F} \Zt$ & $(n \times \abs{T})$-matrix & App. \ref{eq:Xt} \\
    $\mA_{-T}$ & Leave-$T$-out Gram matrix, where $T \subseteq [s]$, $\mA_{-T} = \mA - \Xt \Xt^\top$ & $(n \times n)$-matrix & Eqn. \ref{eq:At} \\
    $\mH_k$ & Hat matrix, $\mH_k = \Xk \Akinv \Xk$ & $(k \times k)$-matrix & Eqn. \ref{eq:hatk} \\
    \bottomrule
  \end{tabular}
\end{table}

Let us now describe the organization of the appendix. In \cref{app:proof-sketch}, we give a more detailed proof sketch and introduce the main propositions that complete the proof of \cref{thm:positive}. In \cref{sec:main-tools}, we introduce the main tools that allow us to prove that the critical bilinear forms $\vz_j^\top \Ainv \dely$ concentrate: our new variant of the Hanson-Wright inequality, the Woodbury inversion formula, and Wishart concentration to bound the spectra of the relevant random matrices that appear. In \cref{sec:utility} we apply these tools to bound some useful quantities that repeatedly appear in the rest of the proofs. After that, we proceed to bound the survival, contamination, and correlation structure in \crefrange{sec:survival}{sec:correlation-bounds}. In \cref{sec:averaging}, we present the analysis for the averaging scheme described in \cref{sec:discussion}. Finally, we prove our new variants of Hanson-Wright (\cref{thm:bounded-hanson-wright-bilinear,thm:sparse-hanson-wright-bilinear-hard}) in \cref{sec:hanson-wright-bounded,sec:hanson-wright-hard-sparsity}, respectively. 

\subsection{Proof of \cref{thm:positive}}\label{app:proof-sketch}
In this section, we fill in some of the details of the proof sketch of \cref{thm:positive}. After recalling the beginning of the proof, we will split up the proof into two subtheorems: one for the positive result where MNI generalizes (\cref{thm:positive-side}), and another for the negative result where MNI misclassifies (\cref{thm:negative-side}). 

Assume without loss of generality that the test point $\xtest \sim N(0, \mI_d)$ has true label $\alpha$ for some $\alpha \in [k]$. Let $\xwtest$ be the weighted version of this test point. 
From \eqref{eq:misclassification-definition}, an equivalent condition for misclassification is that for some $\beta \neq \alpha, \beta \in [k]$, we have $\fhat_\a^\top \xwtest < \fhat_\b^\top \xwtest$, i.e. the score for $\b$ outcompetes the score for $\a$. Define the Gram matrix $\mA \triangleq \mX^w(\mX^w)^\top$, the relative label vector $\dely \triangleq \ya - \yb \in \qty{-1, 0, 1}^n$, and the relative survival vector $\hhat_{\a, \b} \in \RR^d$ which compares the signal from $\a$ and $\b$:
\begin{align}
    \hhat_{\a,\b}[j] &\triangleq \lambda_j^{-1/2}(\fhat_\a[j] - \fhat_\b[j]) \\
    &= \vz_j^\top \Ainv \dely,
\end{align}
where to obtain the last line we have used the explicit formula for the MNI classifiers \eqref{eq:closed-form-coefficients}.
By converting the misclassification condition into the unweighted feature space we see that we will have errors when
\begin{align}
    &\lambda_\a\hhat_{\a,\b}[\a] \xtest[\a] -  \lambda_\b\hhat_{\b,\a}[\b] \xtest[\b] < \sum_{j \notin \{\a, \b \}} \lambda_j \hhat_{\b,\a}[j]\xtest[j]. \label{eq:RHS-new}
\end{align}
Define the contamination term $\cn_{\a, \b}$:
\begin{equation}
    \cn_{\a,\b} \triangleq \sqrt{\sum_{j \notin \{\a, \b\}} \lambda_j^2 (\hhat_{\b,\a}[j])^2}. \label{eq:cn-new}
\end{equation}
Note that $\cn_{\a, \b}$ normalizes the RHS of \eqref{eq:RHS-new} into a standard Gaussian. Indeed, define
\begin{equation}
    \Zb \triangleq \frac{1}{\cn_{\a, \b}} \sum_{j \notin \{\a, \b \}} \lambda_j \hhat_{\b,\a}[j]\xtest[j] \sim N(0, 1). \label{def:zb-new}
\end{equation}

Since $\a, \b \in [k]$ are favored, we have $\lambda_\a = \lambda_\b = \lambda_F$. Hence an equivalent condition for misclassification is that there exists some $\b \neq \a$, $\b \in [k]$ such that 
\begin{align}
    &\frac{\lambda_F}{\cn_{\a, \b}}(\hhat_{\a,\b}[\a] \xtest[\a] -  \hhat_{\b,\a}[\b] \xtest[\b]) < Z^{(\b)}. \label{eq:equiv-misclassification-new}
\end{align}
We will translate the above criterion into \emph{sufficient} conditions for correct classification and misclassification and analyze these two cases separately. 

First, let us present our tight characterization of the survival and contamination terms, which will be useful for both sides of the theorem. Recall our definition of $\mu \triangleq n^{q+r-1}$\label{eq:mu}; whether this quantity polynomially shrinks or decays directly determines if regression works or fails. 
\survivalacc
\contaminationacc
We defer the proof of \cref{prop:relative-survival-bound-accurate} to \cref{sec:survival} and the proof of \cref{prop:contamination-bound-accurate} to \cref{sec:contamination}.
Combining \cref{prop:relative-survival-bound-accurate,prop:contamination-bound-accurate} yields the following sufficient conditions for when the $\su/\cn$ ratio grows or shrinks polynomially.
\begin{proposition}[Regimes for survival-to-contamination]\label{prop:su-cn-bound}
Under the bi-level ensemble model (\cref{def:bilevel}), when the true data generating process is 1-sparse (Assumption~\ref{assumption:1sparse}), as $n\rightarrow \infty$, with probability at least $1 - O(1/n)$, the survival-to-contamination ratio satisfies 
\begin{align}
    \frac{\min_{\b} \lambda_F \hhat_{\a, \b}[\a]}{\max_\b \cn_{\a, \b}} \ge n^v \text{ for some } v > 0 \text{ if }  &t < \min\qty{1-r, p+1-2\max\qty{1, q+r}} \\
    \frac{\max_\b \lambda_F \abs{\hhat_{\a, \b}[\a]}}{\min_\b \cn_{\a, \b}} \le n^{-w} \text{ for some } w > 0 \text{ if }  &t > \min\qty{1-r, p+1-2\max\qty{1, q+r}} 
\end{align}
Here, the max and min are being taken over $\b \neq \a, \b \in [k]$.
\end{proposition}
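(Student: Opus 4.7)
The plan is to combine Propositions \ref{prop:relative-survival-bound-accurate} and \ref{prop:contamination-bound-accurate} with a union bound over the $k-1 = O(n^t)$ choices of $\b \neq \a$. Each per-pair bound holds with probability $1 - O(1/(nk))$, so the union bound preserves simultaneous control over all $\b$ with failure probability $O(1/n)$, matching the stated probability. The polylogarithmic slack in the two propositions and from the union bound is negligible: only polynomial rates matter for determining whether the ratio grows or decays.

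After the union bound, I would substitute the bi-level parameterization. Since $\mu = n^{q+r-1}$, we have $\min\{\mu^{-1}, 1\} = n^{1 - \max\{1, q+r\}}$. Proposition \ref{prop:relative-survival-bound-accurate} then gives (in the regime $t < 1/2$) that $\min_{\b \neq \a}\lambda_F \hhat_{\a,\b}[\a]$ is of polynomial order $n^{1 - \max\{1, q+r\} - t}$, and the matching upper bound on $\max_{\b \neq \a} \lambda_F |\hhat_{\a,\b}[\a]|$ holds across all $t$. Similarly, Proposition \ref{prop:contamination-bound-accurate} gives
\[
\max_{\b \neq \a}\cn_{\a,\b} = \Theta\!\left( n^{1-\max\{1,q+r\}}\, n^{(r-t-1)/2} + n^{(1-t-p)/2}\right),
\]
with the matching lower bound holding whenever $t > 0$.

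Next, I would compute the ratio by splitting the contamination into its favored and unfavored pieces. In the favored quotient the $\min\{\mu^{-1}, 1\}$ factor cancels, yielding $n^{(1-r-t)/2}$. In the unfavored quotient the factor survives, producing $n^{(p+1 - 2\max\{1, q+r\} - t)/2}$. The overall survival-to-contamination ratio is polynomially equivalent to the \emph{minimum} of these two monomials, since the total contamination is dominated by the \emph{larger} of its two components. Polynomial growth of the ratio is equivalent to both exponents being strictly positive, giving exactly $t < \min\{1-r,\; p+1-2\max\{1, q+r\}\}$; polynomial decay is equivalent to at least one exponent being strictly negative, giving the strict opposite of that condition. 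This produces precisely the two cases of the proposition.

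The main obstacle is the boundary case $t \geq 1/2$ on the classification side, where Proposition \ref{prop:relative-survival-bound-accurate} supplies only an upper bound on $|\lambda_F \hhat_{\a,\b}[\a]|$. To obtain a polynomial lower bound on $\min_\b \lambda_F \hhat_{\a,\b}[\a]$ in this regime, one would need to revisit the internal argument behind that proposition, using concentration of $\vz_\a^\top \Ainv \dely$ around its (nonzero) mean together with a lower-tail bound; alternatively, one could directly re-derive a Bernstein-style lower bound via the new Hanson-Wright variant in \cref{thm:bounded-hanson-wright-bilinear}. Everything else reduces to careful tracking of polylog and constant factors through the algebra, which is routine.
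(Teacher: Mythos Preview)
Your approach is exactly the paper's: union bound over $\b$, plug in the bi-level scalings, split contamination into favored and unfavored pieces, and read off the two exponents $(1-r-t)/2$ and $(p+1-2\max\{1,q+r\}-t)/2$.

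The only gap is your ``main obstacle'' at $t \ge 1/2$, which is not an obstacle at all. In the classification regime you are assuming $t < 1-r$, and the bi-level ensemble (\cref{def:bilevel}) already forces $t < r$. Hence $t < \min\{r,\,1-r\} \le 1/2$, so the two-sided bound in \cref{prop:relative-survival-bound-accurate} always applies on the classification side; no re-derivation or lower-tail argument is needed. The paper makes exactly this observation in one line. On the misclassification side, only the upper bound on survival is required, and when $t \ge 1/2$ the constraint $t < r$ forces $r > 1/2$, so $t > 1-r$ holds automatically and the favored contamination already delivers polynomial decay of the ratio.
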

\begin{proof}
We do casework on whether we want to prove an upper bound or lower bound on $\su/\cn$. First, suppose we want to prove the lower bound, so assume $t < \min\qty{1-r, p+1 - 2\max\qty{1, q+r}}$. Since $t < r$ by the definition of the bi-level ensemble (\cref{def:bilevel}), we have that $t < \frac{1}{2}$. So by union bounding over $\b$, \cref{prop:relative-survival-bound-accurate} implies that with probability $1 - O(1/n)$
\begin{align}
    \min_\b \lambda_F \hhat_{\a, \b}[\a] \ge \min\qty{\mu^{-1}, 1}\Omega(n^{-t})\sqrt{\log k}.
\end{align}
 Then from \cref{prop:contamination-bound-accurate}, by union bounding over $\b$ we see that with probability $1 - O(1/n)$,
\[
    \max_{\b} \cn_{\a, \b} \le \underbrace{\min\qty{\mu^{-1}, 1}\Tilde{O}(n^{\frac{r-t-1}{2}})}_{\text{favored features}} + \underbrace{\Tilde{O}(n^{\frac{1-t-p}{2}})}_{\text{unfavored features}}.
\]
Let us combine these two bounds. If we compare the survival to the contamination coming from favored features, we obtain 
\begin{align}
    \frac{\min\qty{\mu^{-1}, 1}n^{-t}\sqrt{\log k}}{\min\qty{\mu^{-1}, 1}\Tilde{O}(n^{\frac{r-t-1}{2}})} &\ge \frac{n^{-t - \frac{r-t-1}{2}}}{\log(nsk)} \\
    &\ge \frac{n^{\frac{1-r-t}{2}}}{\log(nsk)},
\end{align}
where we have included the explicit $\poly\log$ factors for precision. Hence, if $t < 1-r$, the numerator grows polynomially and dominates the denominator. Now let's compare the survival to the contamination coming from unfavored features. This yields
\begin{align}
    \frac{\min\qty{\mu^{-1}, 1}n^{-t}\sqrt{\log k}}{\Tilde{O}(n^{\frac{1-t-p}{2}})} &\ge \frac{\min\qty{\mu^{-1}, 1} n^{-t - \frac{1-t-p}{2}}}{\log (nsk)} \\
    &\ge \frac{n^{-\max\qty{q+r-1, 0}} \cdot n^{\frac{p - t - 1}{2}}}{\log(nsk)}\\
    &\ge \frac{n^{\frac{p+1 - 2\max\qty{1, q+r} - t}{2}}}{\log(nsk)}.
\end{align}
Hence, by union bounding, we see that with probability $1 - O(1/n)$,
\begin{align}
    \min_\b \frac{\lambda_F \hhat_{\a, \b}[\a]}{\cn_{\a, \b}} \ge n^v,
\end{align}
where $v \triangleq \frac{1}{4}\qty(\min\qty{1-r, p+1 - 2\max\qty{1, q+r}} - t )> 0$ by assumption. 

For the upper bound, suppose $t > \min\qty{1-r, p+1 - 2\max\qty{1, q+r}}$. Hence $t > 0$, and by union bounding we conclude that with probability at least $1 - O(1/n)$, 
\begin{align}
    \max_\b \lambda_F \abs{\hhat_{\a, \b}[\a]} \le \min\qty{\mu^{-1}, 1} O(n^{-\frac{1}{2}})\sqrt{\log k}
\end{align}
and 
\begin{align}
    \min_\b \cn_{\a, \b} &\ge \min\qty{\mu^{-1}, 1} \Omega(n^{\frac{r-t-1}{2}}) + \Omega(n^{\frac{1-t-p}{2}}).
\end{align}
Combining these and union bounding yields that with probability $1 - O(1/n)$,
\begin{align}
    \min_\b \frac{\lambda_F \hhat_{\a, \b}[\a]}{\cn_{\a, \b}} \le n^{-w},
\end{align}
where $w \triangleq \frac{1}{4}\qty(t - \min\qty{1-r, p+1 - 2\max\qty{1, q+r}})> 0$ by asssumption. 
\end{proof}

We now sketch out a proof of both the positive and negative sides of \cref{thm:positive}. We point out that the regimes for generalization and misclassification exactly match the regimes above for where the $\su/\cn$ ratio grows or shrinks polynomially.
\begin{theorem}[Positive side of \cref{thm:positive}]\label{thm:positive-side}
    Under the bi-level ensemble model (\cref{def:bilevel}), when the true data generating process is 1-sparse (Assumption~\ref{assumption:1sparse}), as $n\rightarrow \infty$, the probability of misclassification for MNI satisfies $\Pr[\gE_{\err}] \to 0$ if 
    \begin{align*}
        t < \min\qty{1-r, p+1-2\max\qty{1, q+r}}.
    \end{align*}
\end{theorem}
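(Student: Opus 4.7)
The plan is to verify the sufficient condition for correct classification given in \cref{eq:suff-class-standardized}: I will show that under the hypothesized regime, the left-hand side of that inequality polynomially dominates the normalized contamination with probability $1-o(1)$.

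First, I would invoke \cref{prop:su-cn-bound} to conclude that the survival-to-contamination ratio satisfies $\su/\cn \ge n^v$ for some constant $v>0$ with probability at least $1-O(1/n)$. Next, I would bound each factor in the parenthetical of \cref{eq:suff-class-standardized}. The closest feature margin $\min_{\b\ne\a}(\xtest[\a]-\xtest[\b])$ is the gap between the top two order statistics of $k$ iid standard Gaussians, which by classical extreme value theory is $\Omega(1/\log k)$ with probability $1-o(1)$. The largest competing feature $\max_\b |\xtest[\b]|$ is $O(\sqrt{\log k})$ with high probability by the standard Gaussian maximal inequality. For the survival variation, I would use the sharp asymptotic form from \cref{prop:relative-survival-bound-accurate}: since the hypothesis $t<1-r$ combined with $t<r$ (from \cref{def:bilevel}) forces $t<1/2$, the first case of that proposition applies, and by the symmetry of swapping $\a$ and $\b$ both $\lambda_F\hhat_{\a,\b}[\a]$ and $\lambda_F\hhat_{\b,\a}[\b]$ equal the same deterministic leading expression up to $(1\pm O(n^{-\kappa}))$ relative error with probability $1-O(1/(nk))$. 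Union bounding over the $k-1$ competitors $\b$ yields $\max_\b \abs{(\hhat_{\a,\b}[\a]-\hhat_{\b,\a}[\b])/\hhat_{\a,\b}[\a]} = O(n^{-\kappa})$ with probability $1-O(1/n)$.

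Combining these pieces, the parenthetical is at least $\Omega(1/\log k) - O(\sqrt{\log k})\cdot O(n^{-\kappa}) = \Omega(1/\log k)$, so the LHS of \cref{eq:suff-class-standardized} is at least $\Omega(n^v/\log k)$. Meanwhile, since each $\Zb \sim N(0,1)$, a standard subgaussian maximal bound yields $\max_\b \Zb = O(\sqrt{\log k})$ with probability $1-o(1)$. Hence the LHS exceeds the RHS polynomially in $n$, so \cref{eq:suff-class-standardized} holds with probability $1-o(1)$, giving $\PR[\gE_{\err}] \to 0$ as claimed.

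The main obstacle I anticipate is the survival variation bound: it requires sharp two-sided concentration of both $\lambda_F\hhat_{\a,\b}[\a]$ and $\lambda_F\hhat_{\b,\a}[\b]$ to the \emph{same} deterministic leading value with a polynomial relative error, uniformly over all $k-1$ competitors $\b$. This ultimately rests on the tight concentration of the bilinear forms $\vz_j^\top\Ainv\dely$ afforded by the new Hanson-Wright variant (\cref{thm:bounded-hanson-wright-bilinear}), where the soft sparsity of $\dely$ at level $O(1/k)$ is precisely what shaves the concentration radius down to the $\sqrt{n/k}$ scale needed to sharpen prior analyses. A secondary technical point is ensuring the Gaussian order-statistic margin genuinely yields an $\Omega(1/\log k)$ lower bound, as any polynomial-in-$k$ decay there could cancel the polynomial gain in the $\su/\cn$ ratio.
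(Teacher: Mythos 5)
Your proposal is correct and follows essentially the same route as the paper: verify the sufficient condition \cref{eq:suff-class-standardized} by combining \cref{prop:su-cn-bound} for the polynomially growing $\su/\cn$ ratio, Gaussian margin/maximal bounds for the test-point terms, and \cref{prop:relative-survival-bound-accurate} (via \cref{prop:survival-variance-upper-bound}) for the polynomially decaying survival variation. The only cosmetic difference is your margin bound of $\Omega(1/\log k)$ with probability $1-o(1)$ versus the paper's $\Omega(1/\sqrt{\log k})$ with probability $1-\epsilon$ for every fixed $\epsilon$; both are sub-polynomial and suffice.
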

\begin{proof}[Proof sketch]
    
For correct classification, it suffices for the minimum value of the LHS of \cref{eq:equiv-misclassification-new} to outcompete the maximum value of the RHS, where the max is taken over $\b \in [k], \b \neq \a$. Some algebra, as in \citet{subramanian2022generalization}, shows that we correctly classify if
\begin{align}
    \underbrace{\frac{\min_\beta \lambda_F \hhat_{\a,\b}[\a]}{\max_\b \cn_{\a,\b}}}_{\su/\cn\text{ ratio}} \Bigg( \underbrace{\min_\b \left(\xtest[\a] - \xtest[\b]\right)}_{\text{closest feature margin}} - \underbrace{\max_\b |\xtest[\b]|}_{\text{largest competing feature}} &\cdot \underbrace{\max_\b \left|\frac{\hhat_{\a,\b}[\a] - \hhat_{\b,\a}[\b]}{\hhat_{\a,\b}[\a]} \right|}_{\text{survival variation}}\Bigg) \nonumber \\
   &> \underbrace{\max_{\beta} Z^{(\b)}}_{\text{normalized contamination}}. \label{eq:suff-class-standardized-new}
\end{align}

By our lower bound on the survival to contamination ratio (\cref{prop:su-cn-bound}), assuming $t < \min\qty{1-r, p+1-2(q+r)}$, then with probability at least $1-O(1/n)$ we have that $\frac{\lambda_F \hhat_{\a,\b}[\a]}{ \cn_{\a,\b}} \ge n^{u}$ for some constant $u > 0$. By Lemmas B.2 and B.3 in \citet{subramanian2022generalization} for every $\epsilon > 0$,  with probability at least $1-\epsilon$, we have $\min_\beta \xtest[\a] - \xtest[\b] \ge \Omega(\frac{1}{\sqrt{\log k}})$. 

Next, by standard subgaussian maxima tail bounds we have that $\abs{\xtest[\b]} \le 2\sqrt{\log(nk)}$ and $Z^{(\b)} \le 2\sqrt{\log(nk)}$ with probability at least $1-O(1/nk)$. Finally, applying our upper bound on the relative survival variance (\cref{prop:survival-variance-upper-bound}, which we prove below), the survival variation is at most a  polynomially decaying $n^{-w}$ with probability at least $1 - O(1/nk)$.

By union bounding, we see that with probability at least $1 - O(1/n) - \epsilon$, the LHS outcompetes the RHS, implying that the model correctly classifies.

\end{proof}

In fact, given \cref{prop:relative-survival-bound-accurate}, it is straightforward to bound the survival variation.
\begin{proposition}[Upper bound on the survival variation]\label{prop:survival-variance-upper-bound}
Suppose that $t < 1-r$. With probability at least $1-2/n$, we have 
\begin{equation}
    \abs{\frac{\hhat_{\a,\b}[\a] - \hhat_{\b,\a}[\b]}
{\hhat_{\a,\b}[\a]}} \le \const{const_variance} n^{-w},
\end{equation}
where \const{const_variance} and $w$ are both positive constants. 
\end{proposition}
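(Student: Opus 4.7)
The plan is to reduce this estimate to a direct application of the precise survival characterization in \cref{prop:relative-survival-bound-accurate}. The key observation is that the joint distribution of $(\vz_\a, \vz_\b, \mA, \ya, \yb)$ is invariant under swapping the labels $\a \leftrightarrow \b$, since both $\a$ and $\b$ are favored indices with equal weights $\lambda_F$ and the data-generating process in \cref{assumption:1sparse} treats the first $k$ classes symmetrically. Under this swap, $\dely \mapsto -\dely$ and $\vz_\a \leftrightarrow \vz_\b$, so $\hhat_{\a,\b}[\a] = \vz_\a^\top \Ainv \dely$ is sent to $-\vz_\b^\top \Ainv \dely = \hhat_{\b,\a}[\b]$. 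Consequently $\hhat_{\a,\b}[\a]$ and $\hhat_{\b,\a}[\b]$ are identically distributed, and the conclusion of \cref{prop:relative-survival-bound-accurate} applies to \emph{both} with the same leading constant $\const{const_alpha}$ and the same decay exponent $\konst{kappa_survival}$.

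Next I would verify that the tight (rather than the loose $t \ge 1/2$) form of \cref{prop:relative-survival-bound-accurate} applies. Combining the hypothesis $t < 1-r$ with the bi-level constraint $t < r$ from \cref{def:bilevel} gives $t < \min(r, 1-r) \le \frac{1}{2}$, so indeed the tight form is in force. A union bound over the two survival estimates shows that with probability at least $1 - O(1/nk) \ge 1 - 2/n$ we have both
\[
\lambda_F \hhat_{\a,\b}[\a] = \const{const_alpha}\min\qty{\mu^{-1},1}\, n^{-t}\sqrt{\log k}\cdot\bigl(1 \pm O(n^{-\konst{kappa_survival}})\bigr),
\]
and the analogous equality for $\lambda_F \hhat_{\b,\a}[\b]$, with the \emph{same} leading constant.

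On this joint event the leading-order terms cancel in the numerator, giving
\[
\lambda_F \abs{\hhat_{\a,\b}[\a] - \hhat_{\b,\a}[\b]} \le 2\const{const_alpha}\min\qty{\mu^{-1},1}\, n^{-t}\sqrt{\log k}\cdot O(n^{-\konst{kappa_survival}}),
\]
while $\lambda_F \abs{\hhat_{\a,\b}[\a]} \ge \frac{1}{2}\const{const_alpha}\min\qty{\mu^{-1},1}\, n^{-t}\sqrt{\log k}$ for $n$ large enough (using the same two-sided estimate). Dividing numerator by denominator yields the claimed bound with $w = \konst{kappa_survival}$. No substantial technical obstacle arises here: all of the real work has been done in proving \cref{prop:relative-survival-bound-accurate}, and the only point requiring care is that the two leading constants coincide exactly so that they telescope away, which is handed to us by the $\a \leftrightarrow \b$ symmetry argument rather than by a fresh Hanson--Wright computation.
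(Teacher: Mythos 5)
Your proof is correct and follows essentially the same route as the paper's: it reduces the claim to the two-sided concentration of the relative survival in \cref{prop:relative-survival-bound-accurate}, observes that $\hhat_{\a,\b}[\a]$ and $\hhat_{\b,\a}[\b]$ share the \emph{same} leading term (the paper encodes this via the identity $\hhat_{\a,\b}[\a]-\hhat_{\b,\a}[\b]=\vz_\a^\top\Ainv\dely+\vz_\b^\top\Ainv\dely$ together with $\EE[\vz_\a^\top\Akinv\dely]=-\EE[\vz_\b^\top\Akinv\dely]$ from \cref{corollary:asymptotic-concentration}, while you make the same point via the explicit $\a\leftrightarrow\b$ relabeling symmetry), verifies $t<\tfrac12$ from $t<1-r$ and $t<r$, and then lets the leading terms cancel in the numerator while the denominator stays of the leading order. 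Your symmetry justification for why the constant $\const{const_alpha}$ is shared is a slightly more explicit version of a step the paper leaves implicit, but the argument is the same.
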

\begin{proof}
    Since we have $\hhat_{\a, \b}[\a] = \vz_\alpha^\top \Ainv \dely$, the survival variation is 
\[
\frac{\hhat_{\a,\b}[\a] - \hhat_{\b,\a}[\b]}{\hhat_{\a,\b}[\a]} = \frac{\vz_\alpha^\top \Ainv \dely + \vz_\beta^\top \Ainv \dely}{\vz_\alpha^\top \Ainv \dely} 
\]
Since $t < 1-r$ and $t < r$ by definition, we know that $t < \frac{1}{2}$,and we can apply \cref{prop:relative-survival-bound-accurate} to see that with probability at least $1-2/n$ we have
\begin{align*}
    \vz_\a^\top \Ainv \dely &= \max\qty{\mu^{-1}, 1} n^{-t}(1 \pm O(n^{-\konst{kappa_survival}}))\sqrt{\log k} = -\zb^\top \Ainv \dely
\end{align*}

Hence we have 
\begin{equation}
    \abs{\frac{\hhat_{\a,\b}[\a] - \hhat_{\b,\a}[\b]}
{\hhat_{\a,\b}[\a]}} \le \const{const_variance} n^{-\konst{kappa_survival}}
\end{equation}
where \newc\label{const_variance} is an appropriately defined positive constant.
\end{proof}

\begin{theorem}[Negative side of \cref{thm:positive}]\label{thm:negative-side}
     Under the bi-level ensemble model (\cref{def:bilevel}), when the true data generating process is 1-sparse (Assumption~\ref{assumption:1sparse}), as $n\rightarrow \infty$, the probability of misclassification for MNI satisfies $\Pr[\gE_{\err}] \to 1$ if 
    \begin{align*}
        t > \min\qty{1-r, p+1-2\max\qty{1, q+r}}.
    \end{align*}
\end{theorem}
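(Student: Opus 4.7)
The plan is to instantiate the sufficient condition for misclassification from \cref{sec:proof-sketch},
\begin{align*}
    \frac{\max_\b \lambda_F \bigl(\abs{\hhat_{\a,\b}[\a]} + \abs{\hhat_{\b,\a}[\b]}\bigr)}{\min_\b \cn_{\a,\b}} \cdot \max_{\gamma \in [k]} \abs{\xtest[\gamma]} < \max_\b Z^{(\b)},
\end{align*}
and show that under $t > \min\{1-r, p+1-2\max\{1,q+r\}\}$ this condition is satisfied with probability $1 - o(1)$. The argument splits naturally into bounding the LHS from above and bounding $\max_\b Z^{(\b)}$ from below.

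For the LHS, the upper-bound branch of \cref{prop:su-cn-bound} gives $\frac{\max_\b \lambda_F \abs{\hhat_{\a,\b}[\a]}}{\min_\b \cn_{\a,\b}} \le n^{-w}$ for some constant $w > 0$ with probability $1-O(1/n)$. By the symmetric relabelling $\a \leftrightarrow \b$, the companion term with $\abs{\hhat_{\b,\a}[\b]}$ satisfies the same bound. Combined with $\max_\gamma \abs{\xtest[\gamma]} \le 2\sqrt{\log(nk)}$ from the standard subgaussian maximal inequality (holding with probability $1-O(1/(nk))$), the LHS is at most $n^{-w/2}$ for sufficiently large $n$ on a $1-O(1/n)$ event.

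For $\max_\b Z^{(\b)}$, I would condition on the training data so that $(Z^{(\b)})_{\b \neq \a}$ is a centered Gaussian vector with $N(0,1)$ marginals whose correlation structure is determined by the Gram matrix. The key input from \cref{sec:correlation-bounds} is that on a high-probability event over the training data, all pairwise correlations $\mathrm{Cov}(Z^{(\b)}, Z^{(\b')})$ are bounded above by $\tfrac{1}{2} + o(1)$. Conditional on this event, Slepian's lemma \citep{Slepian1962TheOB} allows comparison with an equicorrelated reference Gaussian $(Y_\b)$ of the form $Y_\b = \sqrt{\rho} W + \sqrt{1-\rho} V_\b$ for $\rho = \tfrac{1}{2}+o(1)$ and $W, V_\b$ i.i.d.\ standard normal. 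The classical orthant identity $\int \phi(w)\Phi(-w)^{k-1}\,dw = \tfrac{1}{k}$, extended to $\rho = \tfrac{1}{2}+o(1)$ via the bounds of \citet{pinasco2021orthant}, gives $\Pr[\max_\b Y_\b \le 0] \le \tfrac{1}{k^{1-o(1)}}$. For $k = n^t$ with $t > 0$, Borell--TIS applied to $(Y_\b)$ (using $\mathbb{E}[\max_\b Y_\b] \gtrsim \sqrt{\log k}$ by Sudakov minoration) shows $\max_\b Y_\b \ge \tfrac{1}{2}\sqrt{\log k} = \omega(n^{-w/2})$ with probability $1 - o(1)$; Slepian then transfers this to $\max_\b Z^{(\b)} > n^{-w/2}$ with probability $1-o(1)$.

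Combining the two sides yields $\Pr[\gE_\err] \to 1$. The principal obstacle is establishing the tight pairwise correlation bound of $\tfrac{1}{2}+o(1)$ in \cref{sec:correlation-bounds}: expanding the covariance reduces to analyzing normalized bilinear forms of the shape $\frac{1}{\cn_{\a,\b}\cn_{\a,\b'}}\sum_{j \notin \{\a, \b, \b'\}} \lambda_j^2 \hhat_{\b,\a}[j]\hhat_{\b',\a}[j]$, and controlling these tightly requires repeated applications of \cref{thm:bounded-hanson-wright-bilinear} together with the Woodbury identity to isolate dependencies between $\mA^{-1}$ and the soft-sparse label vectors $\ya, \yb, \vy_{\b'}$. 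Once this correlation bound is in hand, the Slepian/orthant/Borell--TIS chain and the elementary LHS bound combine routinely to give the stated strong converse.
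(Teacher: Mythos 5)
Your proposal follows the paper's proof in its essential structure: the same sufficient condition for misclassification, the same use of the survival-to-contamination upper bound (\cref{prop:su-cn-bound}) combined with subgaussian maxima to push the LHS below a polynomially decaying threshold, the same reduction of $\max_\beta Z^{(\beta)}$ to the correlation structure of the $\hhat_{\alpha,\beta}$ via $\lambdaab^{1/2}\fhat_\alpha$, the same pairwise-correlation bound of $\frac{1}{2} + o(1)$ (which, as you correctly flag, is where the real technical work lives), and the same Slepian / \citet{pinasco2021orthant} chain.

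The one place you diverge from the paper is the final transfer from ``$\max_\beta Z^{(\beta)} > 0$'' to ``$\max_\beta Z^{(\beta)} > n^{-w}$.'' The paper invokes the anticoncentration inequality for Gaussian maxima of \citet{chernozhukov2015comparison}, which bounds the probability that the maximum lands in a tiny interval $[0, n^{-w}]$. You instead apply Borell--TIS plus Sudakov minoration directly to the equicorrelated reference $Y_\beta = \sqrt{\rho}\,W + \sqrt{1-\rho}\,V_\beta$ to obtain the quantitatively stronger conclusion $\max_\beta Y_\beta \gtrsim \sqrt{\log k}$ with probability $1-o(1)$, then transfer via Slepian. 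This is correct and arguably cleaner: it avoids the general Chernozhukov machinery by exploiting the explicit equicorrelated form, and it actually gives a sharper lower bound on the max. A minor stylistic note: once you have the Borell--TIS argument, the orthant-probability calculation you state beforehand (giving $\Pr[\max_\beta Y_\beta \le 0] \le k^{-(1-o(1))}$) is not actually used in the final conclusion, so you could omit it.
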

\begin{proof}[Proof sketch]
    On the other hand, for misclassification it suffices for the maximum \emph{absolute} value of the LHS of \cref{eq:equiv-misclassification-new} to be outcompeted by the maximum value of the RHS. Some manipulations yield the following sufficient condition for misclassification: 
\begin{align}
        \underbrace{\frac{\max_\b \lambda_F \qty(\abs{\hhat_{\a,\b}[\a]} + \abs{\hhat_{\b,\a}[\b]})}{\min_\b \cn_{\a,\b}}}_{\su/\cn \text{ ratio}} \cdot  \underbrace{\max_{\gamma \in [k]} \abs{\xtest[\gamma]}}_{\text{largest label-defining feature}} 
   < \underbrace{\max_{\beta} Z^{(\b)}}_{\text{normalized contamination}}. \label{eq:new-suff-cond-misclassification}
\end{align}
Within the misclassification regimes in \cref{conjecture:regimes}, \cref{prop:su-cn-bound} implies that the survival-to-contamination ratio $\su/\cn$ \emph{shrinks} at a polynomial rate $n^{-w}$ for some $w > 0$. By standard subgaussian maximal inequalities, the largest label-defining feature is $O(\sqrt{\log(nk)})$ with high probability. Gaussian anticoncentration implies that for some $\b \neq \a, \b \in [k]$, $\Zb$ outcompetes the LHS, which is bounded above by $n^{-w}$, with probability at least $\frac{1}{2} - o(1)$. 
Hence, we conclude that the model will misclassify with rate at least $\frac{1}{2}$ asymptotically.

Let us now describe how to boost the misclassification rate to $1 - o(1)$. Notice that the above argument only considered the competition between the LHS of \cref{eq:new-suff-cond-misclassification} and one of the $\Zb$'s on the RHS instead of the maximum $\Zb$. It's not hard to see from the definition of $\Zb$ in \cref{def:zb-new} that the $\Zb$ are jointly Gaussian. For intuition's sake, assuming the $\Zb$ were \emph{independent}, then $\max_\b \Zb$ would outcompete with probability $(\frac{1}{2} - o(1))^{k-1}$. 

In reality, the $\Zb$ are correlated, but we are able to show that the maximum correlation between the $\Zb$ is $\frac{1}{2} + o(1)$ with high probability. An application of Slepian's lemma (\citet{Slepian1962TheOB}) and some explicit bounds on orthant probabilities (\citet{pinasco2021orthant}) implies that $\max_\b \Zb > 0$ with probability at least $1 - \frac{1}{k^{1+o(1)}}$. An application of anticoncentration for Gaussian maxima \citep{chernozhukov2015comparison} implies that $\max_\b \Zb > n^{-w}$ with probability $1 - o(1)$, which finishes off the proof.
\end{proof}
To fill in the details of the above proof sketch, we will prove the following proposition in \cref{sec:correlation-bounds}. 
\begin{restatable}[Correlation bound]{proposition}{correlation}\label{prop:correlation-bound-tight}
Assume we are in the bi-level ensemble model (\cref{def:bilevel}), the true data generating process is 1-sparse (Assumption~\ref{assumption:1sparse}), and the number of classes scales with $n$ (i.e. $t > 0$). Then for every $\epsilon > 0$, we have 
\begin{align}
    \Pr[\max_{\b \in [k], \b \neq \a} Z^{(\b)} > n^{-u}] \ge 1 - \Theta\qty(\frac{1}{k^{1+o(1)}}) - \epsilon
\end{align}
for sufficiently large $n$ and any $u > 0$.
\end{restatable}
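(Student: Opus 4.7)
The plan is to condition on the training randomness (the feature matrix $\mX^w$ and the labels $\{\ell_i\}$), so that $\{Z^{(\b)}\}_{\b \neq \a}$ becomes a jointly centered Gaussian vector with unit marginals in the residual randomness of $\xtest$. Since each $Z^{(\b)} \sim N(0,1)$, the joint law is completely pinned down by the pairwise correlations
$$\rho_{\b,\b'} \triangleq \EE[Z^{(\b)} Z^{(\b')} \mid \mX^w, \{\ell_i\}] = \frac{1}{\cn_{\a,\b}\,\cn_{\a,\b'}} \sum_{j \notin \{\a,\b,\b'\}} \lambda_j^2\, \hhat_{\b,\a}[j]\, \hhat_{\b',\a}[j],$$
and the first and most delicate step is to show that $\max_{\b \neq \b'} \rho_{\b,\b'} \le \tfrac{1}{2} + o(1)$ on a training-data event of probability at least $1 - \epsilon$. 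Expanding $\hhat_{\b,\a}[j] = \vz_j^\top \Ainv (\yb - \ya)$ and $\hhat_{\beta',\a}[j] = \vz_j^\top \Ainv (\vy_{\beta'} - \ya)$, the numerator splits into four bilinear pieces. Heuristically, the shared $\ya$ contribution produces a term proportional to $\|\ya\|_2^2 \approx n/k$, while the self-norm $\|\yb - \ya\|_2^2 \approx 2n/k$ controls the denominator $\cn_{\a,\b}^2$, giving exactly the $\tfrac{1}{2}$ constant; the remaining three cross pieces have negligible contribution once we apply \cref{thm:bounded-hanson-wright-bilinear} to exploit the $O(1/k)$ soft sparsity of the label vectors. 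A union bound over the $\binom{k-1}{2}$ pairs of classes, combined with our tight control of $\cn_{\a,\b}$ from \cref{prop:contamination-bound-accurate}, then yields the desired uniform correlation ceiling.

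With the correlation bound in place, the next step is to apply Slepian's lemma. Let $\{Y^{(\b)}\}_{\b \neq \a}$ be jointly Gaussian with unit marginals and equal pairwise correlation exactly $\tfrac{1}{2} + o(1)$. Since Slepian's inequality says that raising correlations only increases $\Pr[\max_\b Y^{(\b)} \le 0]$, conditional on the good training event we have
$$\Pr[\max_\b Z^{(\b)} \le 0 \mid \text{good training event}] \le \Pr[\max_\b Y^{(\b)} \le 0].$$
The right-hand side is a Gaussian orthant probability for an equicorrelated vector, and the standard decomposition $Y^{(\b)} = \sqrt{\rho}\, G + \sqrt{1-\rho}\, W_\b$ with $G, W_\b$ independent standard Gaussians reduces it to a tail estimate for the maximum of $k-1$ i.i.d.\ Gaussians against an independent Gaussian. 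At correlation $\rho = \tfrac{1}{2} + o(1)$ the explicit bounds of~\citet{pinasco2021orthant} give a ceiling of $\Theta(1/k^{1+o(1)})$.

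Finally, to push the threshold from $0$ up to $n^{-u}$ for arbitrary $u > 0$, we invoke an anticoncentration bound for Gaussian maxima (Nazarov's inequality, or the refinement in~\citet{chernozhukov2015comparison}), which yields
$$\Pr\!\left[\max_\b Z^{(\b)} \in [0,\, n^{-u}]\right] \le C\, n^{-u}\, \sqrt{\log k} = o(1).$$
Combining the three steps and absorbing the $\epsilon$ probability of a bad training event yields the claim. The principal obstacle is the correlation ceiling in the first step: a vanilla Hanson-Wright or Cauchy--Schwarz bound on the cross bilinear forms would leak an extra factor of $\sqrt{n}$ or $\sqrt{k}$ respectively, spoiling the crucial $\tfrac{1}{2}$ constant and weakening the orthant exponent. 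It is precisely the soft-sparsity variant in \cref{thm:bounded-hanson-wright-bilinear} that allows us to recover the sharp constant and hence the sharp $k^{-1-o(1)}$ tail on the orthant probability.
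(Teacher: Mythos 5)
Your proposal is correct and follows essentially the same route as the paper: condition on the training data so that $\{Z^{(\b)}\}$ is jointly Gaussian with unit marginals, show the pairwise correlations are $\tfrac12+o(1)$ via Hanson-Wright with soft sparsity (\cref{thm:bounded-hanson-wright-bilinear}), apply Slepian to equicorrelated comparison variables, invoke \citet{pinasco2021orthant} for the orthant probability, and finish with \citet{chernozhukov2015comparison} to move the threshold from $0$ to $n^{-u}$. The only presentational difference is in how the $\tfrac12$ constant is isolated: you expand $\hhat_{\b,\a}[j]\,\hhat_{\b',\a}[j]$ directly into four bilinear pieces and identify the shared $\ya$ term, whereas the paper packages the same computation as ``the weighted halfspaces $\lambdaab^{1/2}\fhat_\a, \lambdaab^{1/2}\fhat_\b$ are asymptotically orthogonal'' (\cref{prop:asymptotic-orthogonal}) and then applies a small geometric lemma (\cref{lemma:almost-orthogonal-correlation}) about correlations of pairwise differences of nearly orthogonal unit vectors.
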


\section{Main tools}\label{sec:main-tools}
In this section we introduce our suite of technical tools that allow us to prove the desired rates of growth for survival, contamination, and correlation. 
\subsection{Hanson-Wright Inequality}\label{sec:hanson-wright-formal}
As established in \cref{sec:proof-sketch}, we need to use the Hanson-Wright inequality to prove our tight characterization of generalization. For the sake of precision, we explicitly state our definitions of subgaussian and subexponential which we use throughout the rest of the paper.

The subgaussian norm $\|\xi\|_{\psi_2}$ of a random variable $\xi$ is defined as in \citet{rudelson2013hanson},
\begin{align}
    \norm{\xi}_{\psi_2} &= \inf_{K > 0} \qty{K: \EE\exp(\xi^2 / K^2) \le 2}.
\end{align}
The sub-exponential norm $\norm{\xi}_{\psi_1}$ is defined as in \citet[Definition 2.7.5]{vershynin2018high}:
\begin{align}
    \norm{\xi}_{\psi_1} &= \inf_{K > 0} \qty{K:  \EE\exp(\abs{\xi}/ K) \le 2}. \label{eq:subexponentialnorm}
\end{align}

We will occasionally need to use the following variant of Hanson-Wright for nonsparse bilinear forms, first proved in \citet{park2021estimating}. 
\begin{theorem}[Hanson-Wright for bilinear forms without sparsity]\label{thm:hanson-wright-bilinear}
Let $\vx = (X_1, \ldots, X_n) \in \RR^n$ and $\vy \in (Y_1, \ldots, Y_n)$ be random vectors such that the pairs $(X_i, Y_i)$ are all independent of each other (however $X_i$ and $Y_i$ can be correlated). Assume also that $\EE[X_{i}] = \EE[Y_i] = 0$ and $\max\qty{\norm{X_i}_{\psi_2}, \norm{Y_i}_{\psi_2}} \le K$. Then there exists an absolute constant $c > 0$ such that for all $\mM \in \RR^{n \times n}$ and $\epsilon \ge 0$ we have 
    \begin{align}
    &\Pr\qty[|\vx^\top \mM \vy - \EE[\vx^\top \mM \vy ]| > \epsilon] \le 2\exp(-c\min\qty{\frac{\epsilon^2}{K^4\norm{\mM}_F^2}, \frac{\epsilon}{K^2\norm{\mM}_2}}).\label{eq:hansonwright-bilinear}
\end{align}
\end{theorem}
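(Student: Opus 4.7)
The plan is to reduce the bilinear form to a quadratic form on a vector with independent $2$-dimensional blocks, and then apply the same kind of Rudelson--Vershynin-style MGF argument used elsewhere in the paper for the sparse variants (\cref{thm:bounded-hanson-wright-bilinear,thm:sparse-hanson-wright-bilinear-hard}).

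First I would introduce the stacked vector $\vw \triangleq (X_1, Y_1, X_2, Y_2, \ldots, X_n, Y_n) \in \RR^{2n}$. By the pair-independence assumption, $\vw$ decomposes into $n$ independent $2$-dimensional blocks $\vw_i \triangleq (X_i, Y_i)$, each subgaussian with $\psi_2$-norm at most $K$. Let $\tilde{\mM} \in \RR^{2n \times 2n}$ be the symmetric matrix with entries $\tilde{\mM}_{2i-1,\,2j} = \tilde{\mM}_{2j,\,2i-1} = \mM_{ij}/2$ for all $i,j \in [n]$ and zero elsewhere. A direct expansion shows $\vw^\top \tilde{\mM} \vw = \vx^\top \mM \vy$, while $\norm{\tilde{\mM}}_F^2 = \tfrac{1}{2}\norm{\mM}_F^2$ and $\norm{\tilde{\mM}}_2 = \tfrac{1}{2}\norm{\mM}_2$. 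So a Hanson--Wright-type bound for $\vw^\top \tilde{\mM} \vw$ in terms of $\norm{\tilde{\mM}}_F$ and $\norm{\tilde{\mM}}_2$ immediately yields \cref{eq:hansonwright-bilinear} up to an absolute constant absorbed into $c$.

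To bound the concentration of $\vw^\top \tilde{\mM} \vw$, I would split along the block structure,
\[
\vw^\top \tilde{\mM} \vw = \sum_{i=1}^n \vw_i^\top \tilde{\mM}_{ii} \vw_i + \sum_{i \neq j} \vw_i^\top \tilde{\mM}_{ij} \vw_j,
\]
where $\tilde{\mM}_{ij}$ denotes the $(i,j)$-th $2 \times 2$ block. The within-block sum is a sum of $n$ independent centered sub-exponential random variables (each of $\psi_1$-norm $\lesssim K^2 \norm{\tilde{\mM}_{ii}}_F$, using $\norm{X_i Y_i}_{\psi_1} \le \norm{X_i}_{\psi_2}\norm{Y_i}_{\psi_2} \le K^2$). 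Bernstein's inequality controls its deviation using $\sum_i \norm{\tilde{\mM}_{ii}}_F^2 \le \norm{\tilde{\mM}}_F^2$ and $\max_i \norm{\tilde{\mM}_{ii}}_2 \le \norm{\tilde{\mM}}_2$, giving the desired Bernstein-type bound for the diagonal contribution.

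For the off-block sum I would apply a block-level decoupling inequality---the Gin\'e--de la Pe\~na--Montgomery-Smith inequality applied to independent blocks rather than independent coordinates---to pass to the decoupled bilinear form $\sum_{i,j} \vw_i^\top \tilde{\mM}_{ij} \vw_j'$, where $\vw'$ is an independent copy of $\vw$. Once decoupled, $\vw$ and $\vw'$ are independent, so conditioning on $\vw$ leaves a linear combination of the independent subgaussian blocks of $\vw'$. A standard Chernoff/MGF argument then produces a Bernstein-type tail with the sub-Gaussian regime controlled by $\norm{\tilde{\mM}}_F^2$ and the sub-exponential regime controlled by $\norm{\tilde{\mM}}_2$. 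The main obstacle is the decoupling step: one has to verify that the absolute constants in the decoupling inequality remain independent of $n$ at the block level. Fortunately, since our blocks have fixed size $2$, this only costs a benign constant factor, and the overall argument closely mirrors the proof structure that will be used for \cref{thm:bounded-hanson-wright-bilinear}.
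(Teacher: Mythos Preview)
The paper does not actually prove \cref{thm:hanson-wright-bilinear}; it is quoted as a known result, attributed to \citet{park2021estimating}. So there is no ``paper's own proof'' to compare against for this particular statement.

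That said, your proposal is a correct route to the result, and it is worth noting how it differs from the approach the paper takes for its \emph{own} Hanson--Wright variants (\cref{thm:bounded-hanson-wright-bilinear,thm:sparse-hanson-wright-bilinear-hard}). Those proofs do \emph{not} stack $(\vx,\vy)$ into a single $2n$-vector and invoke a block quadratic form. Instead they work directly with the bilinear form: split $\vx^\top \mM \vy - \EE[\vx^\top \mM \vy]$ into a diagonal sum $\sum_i m_{ii}(X_iY_i - \EE[X_iY_i])$ and an off-diagonal sum $\sum_{i\neq j} m_{ij} X_i Y_j$, bound the diagonal via Bernstein on the subexponential products $X_iY_i$, and handle the off-diagonal via the standard Rudelson--Vershynin coordinate-level decoupling $S_\delta = \sum_{i\neq j} \delta_i(1-\delta_j) m_{ij} X_i Y_j$. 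The point is that after decoupling, $i \in \Lambda_\delta$ and $j \in \Lambda_\delta^c$ are disjoint index sets, so $X_i$ and $Y_j$ are automatically independent---no block structure is needed.

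Your reduction to a $2n\times 2n$ quadratic form is clean and the norm computations ($\|\tilde{\mM}\|_F^2 = \tfrac12\|\mM\|_F^2$, $\|\tilde{\mM}\|_2 = \tfrac12\|\mM\|_2$ via the antidiagonal block form $\begin{psmallmatrix}0 & \mM/2 \\ \mM^\top/2 & 0\end{psmallmatrix}$) are correct. The trade-off is that you then need a Hanson--Wright statement for vectors with independent \emph{blocks} of size $2$ rather than independent coordinates, which forces you to invoke a block-level decoupling inequality. This is fine---the de la Pe\~na--Montgomery-Smith decoupling works at the level of independent summands, and fixed block size $2$ only changes constants---but it is an extra layer of abstraction. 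The direct bilinear approach in the paper sidesteps this by never forming the quadratic form in the first place.
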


Finally, we restate our new version of Hanson-Wright for bilinear forms with soft sparsity, which we prove in \cref{sec:hanson-wright-bounded}.

\hansonwright*

\subsection{Gram matrices and the Woodbury formula}\label{sec:features}
In order to apply Hanson-Wright to the bilinear form $\vx^\top \mM \vy$, we need to have a deterministic matrix $\mM$ such that the hypotheses are satisfied. However, in our setting we study bilinear forms such as $\vz_j^\top \Ainv \dely$. Here, the inverse Gram matrix $\Ainv$ is not independent of $\vz_j$ or $\dely$, so we cannot simply condition on $\Ainv$. The way around this is to cleverly decompose $\Ainv$ using the so-called Woodbury inversion formula (stated formally below), which generalizes the leave-one-out trick and Sherman-Morrison used to study binary classification in \citet{binary:Muth20}. To that end, we will explicitly decompose the Gram matrix $\mA \triangleq \sum_{j \in [d]} \lambda_j \vz_j \vz_j^\top$ based on whether the features $\vz_j$ are favored or not.

We now introduce some notation to keep track of which matrices contain or leave out which indices. In general, we use subscripts to denote which sets of features we preserve or leave out; we use a minus sign to signify leaving out. 
The $k$ label-defining features are represented with a subscript $k$, whereas the $s-k$ favored but not label defining features are represented with a subscript $F$. The rest of the $d-s$ unfavored features are represented with a subscript $U$. 

For notational convenience, we introduce some new notation for the weighted features, as the superscript $w$ to denote weighted features is rather cumbersome. We denote the weighted label-defining feature matrix by $\Xk \triangleq \mqty[\vw_1 & \cdots & \vw_k] \in \RR^{n \times k}$ \label{eq:Xt}, where the vectors $\vw_i \triangleq \sqrt{\lambda_i} \vz_i \in \RR^n$ denote the weighted observations for feature $i$. Define the unweighted label-defining feature matrix $\Zk \triangleq \mqty[\vz_1 & \cdots & \vz_k] \in \RR^{n \times k}$\label{eq:Zt}. Similarly, define $\Xsk \triangleq \mqty[\vw_{k+1} & \cdots & \vw_s] \in \RR^{n \times (s-k)}$, which contains the rest of the weighted favored features and the corresponding unweighted version $\Zsk$. 

Let $\Ak \triangleq \sum_{i \not\in [k]} \vw_i \vw_i^\top$ denote the leave-$k$-out Gram matrix which removes the $k$ label-defining features. Similarly, let $\Af \triangleq \sum_{i \not\in [s] \setminus [k]} \vw_i \vw_i^\top \in \RR^{n \times n}$ to denote leave-$(s-k)$-out Gram matrix which removes the favored but not label-defining features. Finally, 
let $\Au \triangleq \sum_{i \not\in [s]} \vw_i \vw_i^\top \in \RR^{n \times n}$  denote the leave-$s$-out matrix which only retains the unfavored features. We will also sometimes write $\As$ instead of $\Au$ to emphasize that the $s$ favored features have all been removed.

Define the so-called hat matrices by\label{eq:hatk}
\begin{align}
    \mH_k &\triangleq \Xk^\top \Akinv \Xk \in \RR^{k \times k} \\
    \mH_F &\triangleq \Xsk^\top \Askinv \Xsk \in \RR^{(s-k) \times (s-k)}.
\end{align}
These hat matrices appear in the Woodbury inversion formula. For the sake of notational compactness, define 
\begin{align}
    \mM_k &\triangleq \Xk (\mI_k + \mH_k)^{-1} \Xk^\top \in \RR^{n \times n} \\
    \mM_F &\triangleq \Xsk (\mI_{s-k} + \mH_F)^{-1} \Xsk^\top \in \RR^{n \times n}.
\end{align}

The Woodbury inversion formula yields 
\begin{align}
    \Ainv &= (\Xk \Xk^\top + \Ak)^{-1} \\
    &= \Akinv - \Akinv \Xk(\mI_k + \mH_k)^{-1}\Xk^\top \Akinv \label{eq:woodbury-expansion-verbose} \\
    &= \Akinv - \Akinv \mM_k \Akinv. \label{eq:woodbury-expansion}
\end{align}
Left multiplying \eqref{eq:woodbury-expansion-verbose} by $\Xk^\top$ yields
\begin{align}
    \Xk^\top \Ainv &= \Xk^\top \Akinv - \mH_k(\mI_k+\mH_k)^{-1} \Xk^\top \Akinv \\
    &= (\mI_k - \mH_k(\mI_k + \mH_k)^{-1})\Xk^\top \Akinv \\
    &= (\mI_k + \mH_k)^{-1} \Xk^\top \Akinv.\label{eq:push-through-woodbury}
\end{align}
We can derive completely analogous identities using $\Askinv$ instead of $\Akinv$. The above exposition is summarized by the following lemma.
\begin{lemma}\label{fact:woodbury}
We have 
\begin{align}
    \Xk^\top \Ainv \dely &= (\mI_k + \mH_k)^{-1} \Xk^\top \Akinv \dely \\
    \Xsk^\top \Ainv \dely &= (\mI_{s-k} + \mH_F)^{-1} \Xsk^\top \Askinv \dely.
\end{align}
\end{lemma}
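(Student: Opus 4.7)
The statement to prove is \cref{fact:woodbury}, and the entire mechanism needed is essentially laid out in the paragraphs immediately preceding it. My plan is therefore not to find a clever new argument but simply to verify the two identities by the Woodbury matrix identity plus a short push-through computation, being careful to record exactly why the bracketed inverses appear on the left rather than the right.

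First I would prove the identity for $\Xk^\top \Ainv \dely$. Since $\mA = \Xk \Xk^\top + \Ak$ and $\Ak$ is invertible (it is a positive definite Gram matrix of $d - k$ features in $\RR^n$, which is PSD and, under the bi-level ensemble, almost surely full rank because $d - k > n$), the Woodbury identity applies directly and yields
\[
    \Ainv = \Akinv - \Akinv \Xk (\mI_k + \mH_k)^{-1} \Xk^\top \Akinv,
\]
where we recall $\mH_k = \Xk^\top \Akinv \Xk$. Left-multiplying by $\Xk^\top$ gives
\[
    \Xk^\top \Ainv = \Xk^\top \Akinv - \mH_k (\mI_k + \mH_k)^{-1} \Xk^\top \Akinv = \bigl(\mI_k - \mH_k (\mI_k + \mH_k)^{-1}\bigr)\Xk^\top \Akinv,
\]
and the crucial algebraic simplification is
\[
    \mI_k - \mH_k (\mI_k + \mH_k)^{-1} = (\mI_k + \mH_k)^{-1},
\]
which follows from $(\mI_k + \mH_k)(\mI_k + \mH_k)^{-1} = \mI_k$ by simply expanding. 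Right-multiplying by $\dely$ then gives the first claimed identity.

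For the second identity, I would repeat the argument verbatim, but with the alternative decomposition $\mA = \Xsk \Xsk^\top + \Af$. Here $\Af$ is the Gram matrix in which the $s - k$ non-label-defining favored features are removed (but the $k$ label-defining ones are kept), and $\mH_F = \Xsk^\top \Askinv \Xsk$. Since the Woodbury identity is symmetric in which block of features we peel off, the same computation yields $\Xsk^\top \Ainv = (\mI_{s-k} + \mH_F)^{-1} \Xsk^\top \Askinv$, and then right-multiplying by $\dely$ closes the proof.

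There is no genuine obstacle in this lemma: the only care needed is (i) verifying that the leave-out Gram matrices $\Ak$ and $\Af$ are almost surely invertible, which is immediate since $d - k > n$ and $d - (s - k) > n$ in the bi-level ensemble and the involved $\vz_j$ are i.i.d.\ standard Gaussians, and (ii) writing out the push-through identity cleanly. The real utility of the lemma is structural, and will be exploited later: after applying it, $\Xk^\top \Akinv \dely$ and $\Xsk^\top \Askinv \dely$ are bilinear forms in which the middle matrix is independent of the outer factors, which is precisely the input our sparse Hanson--Wright inequality (\cref{thm:bounded-hanson-wright-bilinear}) requires.
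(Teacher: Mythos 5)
Your proof is correct and follows essentially the same route as the paper: decompose $\mA = \Xk\Xk^\top + \Ak$ (respectively $\mA = \Xsk\Xsk^\top + \Af$), apply Woodbury, left-multiply by $\Xk^\top$ (respectively $\Xsk^\top$), and use the push-through simplification $\mI_k - \mH_k(\mI_k + \mH_k)^{-1} = (\mI_k + \mH_k)^{-1}$ before right-multiplying by $\dely$. Your additional remark on the almost-sure invertibility of the leave-out Gram matrices is a harmless extra detail that the paper leaves implicit.
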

Lemma~\ref{fact:woodbury} is quite powerful. Indeed, consider the action of the linear operator $\Xk^\top \Ainv: \RR^n \to \RR^k$ on $\dely$. The action is identical to that of the linear operator $\Xk^\top \Akinv: \RR^n \to \RR^k$, up to some invertible transformation. This new linear operator is nice because $\Akinv$ is independent of $\Xk$ and $\dely$, as it removes all of the label-defining features. Reclaiming independence sets the stage for using our variant of Hanson-Wright.

How does the invertible operator $(\mI_k + \mH_k)^{-1}$ act? Our general strategy is to show that $\mH_k$ is itself close to a scaled identity matrix, i.e. $\mH_k \approx \nu \mI_k$ for an appropriately defined $\nu$. Then for any $i \in [k]$, we have that 
\[
\vw_i^\top \Ainv \dely \approx (1+\nu)^{-1} \vw_i^\top \Akinv \dely. 
\]
Of course, there will be some error in this approximation, as $\mH_k$ is not \emph{exactly} equal to $\nu \mI_k$. Nevertheless, we can bound away the error that arises from this approximation.

\subsection{Concentration of spectrum}\label{sec:spectrum-concentration}
As foreshadowed in the previous section, we will leverage the fact that the hat matrices such as $\mH_k$ are close to a scaled identity. To formalize this, we appeal to random matrix theory and show that the spectra of various random matrices are very close to being flat (i.e. all eigenvalues are within $1 + o(1)$ of each other). To that end, we present the following standard characterization of the spectrum of a standard Wishart matrix, which is Equation 2.3 in \citet{rudelson2010non}.

\begin{lemma}[Concentration of spectrum for Wishart matrices]\label{lemma:wishart-concentration}
Let $\mM \in \RR^{M \times m}$ with $M > m$ be a real matrix with iid $N(0,1)$ entries. Then for any $\epsilon \ge 0$, we have with probability at least $1 - 2e^{-\epsilon^2/2}$ that 
\begin{equation}
\sqrt{M} - \sqrt{m} - \epsilon \le \sigma_{\min{}}(\mM) \le \sigma_{\max{}}(\mM) \le \sqrt{M} + \sqrt{m} + \epsilon.
\end{equation}
In other words, the singular values of $\mM$ satisfy subgaussian concentration. 
\end{lemma}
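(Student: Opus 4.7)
This is a classical result, and the plan is the standard two-step route: concentration around the expectation via Gaussian Lipschitz concentration, and control of the expectation via Gordon's min-max comparison theorem.

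The first step is to view the extreme singular values as Lipschitz functions of the matrix entries. Writing
\[
\sigma_{\max{}}(\mM) = \max_{\vu \in S^{m-1}, \vv \in S^{M-1}} \vu^\top \mM^\top \vv, \qquad \sigma_{\min{}}(\mM) = \min_{\vu \in S^{m-1}} \norm{\mM \vu}_2,
\]
one sees immediately (via Weyl's perturbation bound for singular values, or by direct inspection of the variational expressions) that $\mM \mapsto \sigma_{\max{}}(\mM)$ and $\mM \mapsto \sigma_{\min{}}(\mM)$ are both $1$-Lipschitz with respect to the Frobenius norm on $\RR^{M \times m}$. Since $\mM$ has i.i.d.\ standard Gaussian entries, the Borell--Tsirelson--Ibragimov--Sudakov concentration inequality for Lipschitz functions of Gaussians implies that for any $\epsilon \ge 0$,
\[
\Pr\qty[\abs{\sigma_{\max{}}(\mM) - \EE \sigma_{\max{}}(\mM)} \ge \epsilon] \le 2 e^{-\epsilon^2/2},
\]
and likewise for $\sigma_{\min{}}(\mM)$. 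This is the full subgaussian concentration piece of the lemma.

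The second step is to pin down the means. Here I would invoke Gordon's inequality, a Gaussian comparison result that says that for Gaussian processes indexed by pairs $(\vu, \vv)$ on a product of spheres,
\[
\EE \max_{\vu} \min_{\vv} \vu^\top \mM \vv \ge \EE \max_{\vu} \min_{\vv} \qty(\vg^\top \vu + \vh^\top \vv),
\]
where $\vg, \vh$ are independent standard Gaussian vectors of appropriate dimension. Specializing this to $\sigma_{\min{}}(\mM) = \min_{\vu \in S^{m-1}} \max_{\vv \in S^{M-1}} \vv^\top \mM \vu$ and using $\EE \norm{\vg}_2 \le \sqrt{M}$, $\EE \norm{\vh}_2 \le \sqrt{m}$ (via Jensen on $\EE \norm{\vg}_2^2 = M$) yields the lower bound $\EE \sigma_{\min{}}(\mM) \ge \sqrt{M} - \sqrt{m}$. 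A dual application, together with the trivial decomposition $\sigma_{\max{}}(\mM) \le \norm{\vg}_2 + \norm{\vh}_2$ one obtains from the Slepian-type comparison for the max-max process, gives $\EE \sigma_{\max{}}(\mM) \le \sqrt{M} + \sqrt{m}$. Combining the two steps and union bounding produces the two-sided inequality in the lemma statement.

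The main obstacle is Gordon's comparison theorem itself: the Lipschitz/Borell--TIS step is routine once one has the variational formulation, but comparing the two Gaussian processes requires Slepian's lemma applied to a clever auxiliary process. Since the lemma is explicitly quoted from \citet{rudelson2010non}, I would not reprove Gordon's theorem; I would cite it and let the two steps above carry the argument.
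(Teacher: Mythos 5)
The paper does not prove this lemma; it is quoted directly from \citet{rudelson2010non} (their Equation~2.3), which in turn traces back to the Davidson--Szarek survey. Your two-step outline --- Borell--TIS concentration for $1$-Lipschitz functions of Gaussian matrices, plus Gordon/Slepian comparison to control the expectations --- is exactly the standard proof that underlies the cited result, so in spirit you are reproducing the reference rather than doing something different.

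One small imprecision worth flagging: your parenthetical ``via Jensen'' gives $\EE\norm{\vg}_2 \le \sqrt{M}$ and $\EE\norm{\vh}_2 \le \sqrt{m}$, i.e.\ \emph{upper} bounds on both. That is sufficient for the $\sigma_{\max}$ direction, where Slepian gives $\EE\sigma_{\max}(\mM) \le \EE\norm{\vg}_2 + \EE\norm{\vh}_2$. But for $\sigma_{\min}$, Gordon gives $\EE\sigma_{\min}(\mM) \ge \EE\norm{\vg}_2 - \EE\norm{\vh}_2$, and to conclude this is $\ge \sqrt{M} - \sqrt{m}$ you need a \emph{lower} bound on $\EE\norm{\vg}_2$ matched against the upper bound on $\EE\norm{\vh}_2$. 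The clean way, used in the references, is to set $a_n \triangleq \EE\norm{N(0,\mI_n)}_2$ and show that $\sqrt{n} - a_n$ is nonincreasing in $n$ (e.g.\ via the identity $a_n a_{n+1} = n$), which yields $a_M - a_m \ge \sqrt{M} - \sqrt{m}$ for $M > m$. With that patch your argument is complete and matches the cited source.
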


Since $\mu_m(\mM^\top \mM) = \sigma_{\min{}}(\mM)^2$ and $\mu_1(\mM^\top \mM) = \sigma_{\max{}}(\mM)^2$, we can conclude that if $m = o(M)$, then for any $\epsilon > 0$ we have 
\begin{equation}
    M - 2\sqrt{Mm} - \epsilon + o(\sqrt{Mm}) \le \mu_m(\mM^\top \mM) \le \mu_1(\mM^\top \mM) \le M + 2\sqrt{Mm} + \epsilon + o(\sqrt{Mm}),
\end{equation}
with probability at least $1 - 2e^{-\epsilon^2/2}$. 

On the other hand, consider $\mM \mM^\top \in \RR^{M \times M}$. Its spectrum is just that of $\mM^\top \mM \in \RR^{m \times m}$ with an additional $M-m$ zeros corresponding to the fact that $m < M$.

We can use \cref{lemma:wishart-concentration} to prove concentration of the spectrum of the various matrices introduced in \cref{sec:features}. Let us summarize some convenient forms of these results; their proofs are deferred to \cref{sec:technical-spectrum}. 
\begin{proposition}[Gram matrices have a flat spectrum]\label{prop:au-ak-flat}
    Recall that $\Au = \mA_{-s} = \sum_{j > s} \lambda_j \vz_j \vz_j^\top \in \RR^{n \times n}$ is the unfavored Gram matrix and $\mA_{-k} = \sum_{j > k} \lambda_j \vz_j \vz_j^\top \in \RR^{n \times n}$ is the leave-$k$-out Gram matrix.

    Then the following hold with probability at least $1 - 2e^{-n} - 2e^{-\sqrt{n}}$, 
    \begin{enumerate}[label=\normalfont{(\alph*)}]
        \item For all $i \in [n]$, we have $\mu_i(\Au) = n^p(1 \pm O(n^{-\konst{kappa_au}}))$.
        \item For all $i \in [s-k]$, we have 
        \begin{align}
            \mu_i(\Ak) = (1 + \mu^{-1})n^p(1 \pm O(n^{-\konst{kappa_ak_spiked}})),
        \end{align}
        where \konst{kappa_ak_spiked} is a positive constant.
        Moreover, for all $i \in [n] \setminus [s-k]$, we have 
        \begin{align}
            \mu_i(\Ak) = n^p(1 \pm O(n^{-\konst{kappa_au}})),
        \end{align}
        where \konst{kappa_au} is a positive constant.
    \end{enumerate}
\end{proposition}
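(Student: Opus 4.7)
The plan is to apply Lemma~\ref{lemma:wishart-concentration} twice and then combine the two estimates using Weyl's inequality for sums of Hermitian matrices. I would view $\Au$ as $\lambda_U\mZ_U\mZ_U^\top$, where $\mZ_U \in \RR^{n \times (d-s)}$ is a standard Gaussian matrix, and decompose $\Ak = \mB + \Au$, where $\mB = \lambda_F\Zsk\Zsk^\top$ is a rank-at-most-$(s-k)$ PSD bump coming from the favored-but-not-label-defining features, with $\Zsk \in \RR^{n \times (s-k)}$ independent of $\mZ_U$.

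For part~(a), since $d-s \gg n$ (as $p > 1$ and $r < 1$), applying Lemma~\ref{lemma:wishart-concentration} to $\mZ_U^\top$ with $\epsilon = \sqrt{2n}$ shows, with probability at least $1-2e^{-n}$, that every squared singular value of $\mZ_U$ equals $(d-s)(1 \pm O(\sqrt{n/(d-s)}))$. Multiplying by $\lambda_U=(1-a)d/(d-s)$ and using $\lambda_U(d-s) = n^p(1-n^{-q})$ collapses the $(1-n^{-q})$ factor and the Wishart relative error into a single relative error $O(n^{-\kappa_U})$ with $\kappa_U = \min\{q,(p-1)/2\} > 0$, which establishes~(a).

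For part~(b), since $s-k \ll n$ (as $r < 1$), applying Lemma~\ref{lemma:wishart-concentration} to $\Zsk$ with $\epsilon$ chosen so that $\epsilon^2/2 = \sqrt{n}$ gives, with probability at least $1-2e^{-\sqrt{n}}$, that the nonzero eigenvalues of $\Zsk\Zsk^\top$ equal $n(1\pm O(n^{-\kappa_B}))$ for some $\kappa_B > 0$ depending on $r$. Multiplying by $\lambda_F$ and using $\lambda_F n = n^p\mu^{-1}$ yields $\mu_i(\mB) = n^p\mu^{-1}(1\pm O(n^{-\kappa_B}))$ for $i \le s-k$ and $\mu_i(\mB)=0$ otherwise. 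A union bound over the two Wishart events yields the claimed failure probability. I would then apply Weyl's inequality to $\Ak = \mB + \Au$: for $i > s-k$, the rank-$(s-k)$ form of Weyl sandwiches $\mu_i(\Au) \le \mu_i(\Ak) \le \mu_{i-(s-k)}(\Au)$, so by~(a) $\mu_i(\Ak) = n^p(1\pm O(n^{-\kappa_U}))$; for $i \le s-k$, the standard Weyl bounds $\mu_i(\mB)+\mu_n(\Au) \le \mu_i(\Ak) \le \mu_i(\mB)+\mu_1(\Au)$ yield $\mu_i(\Ak) = n^p(1+\mu^{-1})$ plus additive errors of order $n^p\mu^{-1}\cdot n^{-\kappa_B}$ and $n^p\cdot n^{-\kappa_U}$.

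The main obstacle is turning these additive errors into a polynomially small \emph{relative} error for the top $s-k$ eigenvalues, uniformly across both the regression-works regime ($\mu^{-1}>1$) and the regression-fails regime ($\mu^{-1}<1$), since the dominant scale $n^p(1+\mu^{-1})$ switches between $n^p$ and $n^p\mu^{-1}$. In the regression-works regime the dominant scale is $n^p\mu^{-1}$, so the relative error becomes $O(n^{-\kappa_B}) + \mu \cdot O(n^{-\kappa_U}) = O(n^{-\kappa_B}) + O(n^{q+r-1-\kappa_U})$, both polynomially small because $q+r<1$. In the regression-fails regime the dominant scale is $n^p$, so the relative error becomes $\mu^{-1}\cdot O(n^{-\kappa_B}) + O(n^{-\kappa_U})$; here one uses $q > 0$ from the bi-level condition $0 < q$ together with $q+r > 1$ to check that $\mu^{-1}\cdot O(n^{-\kappa_B})$ is also polynomially small (since $2q+r > 1+q > 1$). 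Either way, a positive constant can be extracted for $\konst{kappa_ak_spiked}$, completing the proof of~(b).
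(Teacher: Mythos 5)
Your proof is correct and follows essentially the same route the paper takes: apply Wishart concentration (Lemma~\ref{lemma:wishart-concentration}) separately to the unfavored block $\mZ_U$ and to the favored-but-not-label-defining block $\Zsk$, then combine via Weyl's inequality applied to the decomposition $\Ak = \lambda_F\Zsk\Zsk^\top + \Au$. The paper reaches the same endpoint through the more general $(T,S)$ machinery (Lemmas~\ref{lemma:xtxt-concentration}, \ref{prop:gram-spectrum}, \ref{lemma:spectrum-at}, Corollary~\ref{cor:atinv-spectrum}, specialized to $T=[k]$), but the decomposition, the two Wishart invocations, the choice of $\epsilon$ giving failure probability $2e^{-n}+2e^{-\sqrt{n}}$, and the Weyl step are the same. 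One minor remark: the regime split at the end is a bit more work than needed, since dividing the additive errors $n^p\mu^{-1}n^{-\kappa_B}$ and $n^p n^{-\kappa_U}$ by the common normalizer $n^p(1+\mu^{-1})$ directly yields the uniform bound $O(n^{-\kappa_B}) + O(n^{-\kappa_U})$ because $\mu^{-1}/(1+\mu^{-1}) \le 1$ and $1/(1+\mu^{-1})\le 1$ in both regimes; your case analysis reaches the same conclusion, just less economically.
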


As a simple corollary, we can obtain the following cruder bounds on the trace and spectral norm of $\Akinv$ and $\Asinv$.

\begin{corollary}[Trace and spectral norm of $\Akinv$]\label{corollary:trace-spectral-norm}
In the bi-level model, with probability at least $1-2e^{-n}$, we have 
\begin{align}
\Tr(\Auinv) &= n^{1-p}(1 \pm O(n^{-\konst{kappa_au}}))\sqrt{\log k} \\
\Tr(\Akinv) &= n^{1-p}(1 \pm O(n^{-\konst{kappa_akinv_trace}}))\sqrt{\log k} 
\end{align}
and 
\begin{equation}
\max\qty{\norm{\Akinv}_2, \norm{\Auinv}_2} \le \const{const_akinv_spectral_norm}n^{-p},
\end{equation}
where \const{const_akinv_spectral_norm}, \konst{kappa_au}, and \konst{kappa_akinv_trace} are all positive constants.
\end{corollary}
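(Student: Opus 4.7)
The corollary is a direct arithmetic consequence of Proposition~\ref{prop:au-ak-flat} via the two identities $\Tr(\mM^{-1}) = \sum_i \mu_i(\mM)^{-1}$ and $\norm{\mM^{-1}}_2 = \mu_{\min}(\mM)^{-1}$, which hold for any symmetric positive definite matrix $\mM$. I will work throughout on the high-probability event on which Proposition~\ref{prop:au-ak-flat} holds (probability at least $1 - 2e^{-n} - 2e^{-\sqrt{n}} \ge 1 - 2e^{-n}$ for large $n$); no new probabilistic tools are needed.

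\paragraph{Trace of $\Auinv$.} Part (a) of Proposition~\ref{prop:au-ak-flat} says every eigenvalue of $\Au$ equals $n^p(1 \pm O(n^{-\konst{kappa_au}}))$. Summing $n$ reciprocals gives
\begin{align*}
\Tr(\Auinv) = \sum_{i=1}^n \mu_i(\Au)^{-1} = n^{1-p}\qty(1 \pm O(n^{-\konst{kappa_au}})).
\end{align*}
The factor $\sqrt{\log k}$ in the stated bound is not produced by this calculation, but since $\sqrt{\log k} = O(\sqrt{t \log n}) \le n^{o(1)}$ is a benign polylogarithmic quantity, the stated two-sided estimate is a loosened version of the sharper estimate just obtained: one absorbs $\sqrt{\log k}$ into the multiplicative envelope by shrinking the error exponent \konst{kappa_au} by an arbitrarily small amount, which is harmless for all downstream uses.

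\paragraph{Trace of $\Akinv$.} Part (b) of Proposition~\ref{prop:au-ak-flat} gives a spiked-plus-flat decomposition: $s-k$ eigenvalues equal $(1+\mu^{-1})n^p(1 \pm O(n^{-\konst{kappa_ak_spiked}}))$, and the remaining $n - (s-k)$ eigenvalues equal $n^p(1 \pm O(n^{-\konst{kappa_au}}))$. Splitting the trace accordingly,
\begin{align*}
\Tr(\Akinv) = \frac{s-k}{(1+\mu^{-1})n^p}\qty(1 \pm O(n^{-\konst{kappa_ak_spiked}})) + \frac{n-s+k}{n^p}\qty(1 \pm O(n^{-\konst{kappa_au}})).
\end{align*}
Using $s = n^r$ with $r < 1$, the first summand is $O(n^{r-p})$, i.e.\ a factor $O(n^{r-1})$ smaller than the second summand $\Theta(n^{1-p})$. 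Combining and setting \konst{kappa_akinv_trace} to be the minimum of $\konst{kappa_au}$, $\konst{kappa_ak_spiked}$, and $1-r$ gives $\Tr(\Akinv) = n^{1-p}(1 \pm O(n^{-\konst{kappa_akinv_trace}}))$. The $\sqrt{\log k}$ factor is handled exactly as above.

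\paragraph{Spectral norms.} For $\Au$, the minimum eigenvalue satisfies $\mu_n(\Au) = n^p(1 - O(n^{-\konst{kappa_au}})) \ge \tfrac{1}{2}n^p$ for $n$ large, so $\norm{\Auinv}_2 \le 2n^{-p}$. For $\Ak$, the spikes in part (b) only raise the \emph{top} $s-k$ eigenvalues, so the minimum eigenvalue of $\Ak$ obeys the same estimate, giving $\norm{\Akinv}_2 \le 2n^{-p}$. Taking $\const{const_akinv_spectral_norm} = 2$ finishes this part.

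\paragraph{Main obstacle.} There is no genuine mathematical difficulty once Proposition~\ref{prop:au-ak-flat} is in hand; the corollary is bookkeeping. The only point requiring care is interpreting the $\sqrt{\log k}$ factor in the stated trace bounds, since no step in the eigenvalue-sum calculation produces a logarithmic term. My plan treats it as a loose multiplicative envelope that the sharp estimate $n^{1-p}(1 \pm O(n^{-\kappa}))$ fits inside after an arbitrarily small shrinkage of the error exponent; if instead the stated factor is a typographical artifact and the intended claim is $n^{1-p}(1 \pm O(n^{-\kappa}))$ without the $\sqrt{\log k}$, the above plan proves that stronger version directly.
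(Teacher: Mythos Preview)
Your proposal is correct and follows essentially the same route as the paper: invoke Proposition~\ref{prop:au-ak-flat}, sum the reciprocals of the eigenvalues split into the flat and spiked blocks, and read off the spectral norm from the minimum eigenvalue. Your diagnosis of the $\sqrt{\log k}$ factor is also right---the paper's own proof produces $n^{1-p}(1 \pm O(n^{-\kappa}))$ with no logarithmic factor, and the downstream use of the corollary (where $\Tr(\Akinv)$ is multiplied by $\frac{\sqrt{\log k}}{k}$ to yield $n^{1-t-p}\sqrt{\log k}$) confirms the intended statement is the log-free one.
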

\begin{proof}
We prove the claim for $\Akinv$; the proof for $\Auinv$ is similar or easier because $\Auinv$ has a flat spectrum (\cref{prop:au-ak-flat}).

    If $q+r<1$, the upper bound for the spectral norm similarly follows. For the trace bounds, we can apply \cref{prop:au-ak-flat}, we have 
    \begin{align}
        \Tr(\Akinv) &= (n - n^r + n^t) n^{-p}(1 \pm O(n^{-\konst{kappa_au}})) + (n^r - n^t) \cdot (1 + \mu^{-1}) n^{-p}(1 \pm O(n^{-\konst{kappa_ak_spiked}})) \\
        &= n^{1-p}(1 \pm O(n^{-\konst{kappa_akinv_intermediate}}))
    \end{align}
    where
    \[
        \newk\label{kappa_akinv_intermediate} = \min\qty{r-1, 2-q-2r} > 0,
    \]
    as $q+2r < 2(q+r) < 2$ by assumption.
    
    On the other hand, the claim is obviously true when $q+r > 1$, as the entire spectrum of $\Akinv$ is $(1 \pm O(n^{-\konst{kappa_akinv_regression_fails}}))n^{-p}$ with an appropriately defined positive constant $\newk\label{kappa_akinv_regression_fails}$. The spectral norm bound follows by defining  \newc\label{const_akinv_spectral_norm} to be any positive constant greater than 1 which absorbs the $o(1)$ deviation terms in the spectrum. 
    
    The proof concludes by setting $\newk\label{kappa_akinv_trace} = \min\qty{\konst{kappa_akinv_intermediate}, \konst{kappa_akinv_regression_fails}}$.
\end{proof}

Finally, we have the following proposition which controls the spectrum of hat matrices such as $\mH_k \triangleq \Xk^\top \Akinv \Xk \in \RR^{k \times k}$. The intuition is that even though the spectrum of $\Akinv$ may be spiked, the spectrum of $\Xk^\top \Akinv \Xk$ is ultimately flat because we are taking an extremely low dimensional projection which is unlikely to see significant contribution from the spiked portion of $\Akinv$. 

In fact, we can prove a more general statement, which will be useful for us in the proof. Let $\varnothing \neq T \subseteq S \subseteq [s]$; here $T$ and $S$ index nonempty subsets of the $s$ favored features. Then we can define $\Xt$ to be the matrix of weighted features in $T$ and the leave-$T$-out Gram matrix 
\begin{align}
    \At \triangleq \sum_{j \not\in T} \lambda_j \vz_j \vz_j^\top.\label{eq:At}
\end{align}
Now define the $(T, S)$ hat matrix as $\mH_{T, S} \triangleq \Xt^\top \mA_{-S}^{-1} \Xt$. Evidently we have $\mH_k = \mH_{[k], [k]}$, so our notion is more general. The full proof is deferred to \cref{sec:technical-spectrum}. 

\begin{restatable}[Generalized hat matrices are flat]{proposition}{hatmatrix}\label{prop:concentration-hat-matrix-simple}
    Assume we are in the bi-level ensemble \cref{def:bilevel}. For any nonempty $T \subseteq S \subseteq [s]$, with probability at least $1 - 2e^{-\sqrt{n}} - 2e^{-n}$, we have all the eigenvalues tightly controlled:
    \begin{align}
        \mu_i((\mI_{\abs{T}} + \mH_{T, S})^{-1}) = \min\qty{\mu, 1}(1 \pm c_{T, S}n^{-\konst{kappa_hatts_final}}).
    \end{align}
    where $c_{T,S}$ and $\konst{kappa_hatts_final}$ are positive constants that depend on $\abs{T}$ and $\abs{S}$. 
\end{restatable}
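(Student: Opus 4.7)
The plan is to exploit the structural fact that $T \subseteq S$ implies $\mA_{-S}$ depends only on features outside $S \supseteq T$, so $\mA_{-S}$ is independent of $\Zt$. Conditioning on $\mA_{-S}$ and writing $\mM := \mA_{-S}^{-1}$, we have $\mH_{T,S} = \lambda_F \Zt^\top \mM \Zt$ with $\Zt \in \RR^{n\times|T|}$ a standard Gaussian matrix independent of $\mM$. The result then follows by showing $\mH_{T,S}$ concentrates in operator norm around $\mu^{-1}\mI_{|T|}$ with polynomially small relative error, and then inverting $\mI + \mH_{T,S}$ via a Neumann expansion.

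For the concentration I proceed in two steps. First, to locate the mean, I adapt \cref{prop:au-ak-flat} to general $S \subseteq [s]$: the only change is that $\mA_{-S}$ has $s-|S|$ spiked eigenvalues of size $n^p(1+\mu^{-1})(1\pm o(1))$ and $n-(s-|S|)$ flat eigenvalues of size $n^p(1\pm o(1))$. Because $s-|S| \leq n^r \ll n$, the trace is dominated by the flat part, yielding $\lambda_F \Tr(\mM) = \mu^{-1}(1 \pm n^{-\kappa_1})$ for some $\kappa_1 > 0$. Second, to control the deviation, I use an $\epsilon$-net argument driven by classical Hanson-Wright (\cref{thm:hanson-wright-bilinear}). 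For any unit $\vv \in \RR^{|T|}$, Gaussian rotational invariance gives $\Zt \vv \sim N(0, \mI_n)$, so
\[
    \vv^\top\bigl(\Zt^\top \mM \Zt - \Tr(\mM)\mI_{|T|}\bigr)\vv = (\Zt\vv)^\top \mM(\Zt\vv) - \Tr(\mM)
\]
obeys Hanson-Wright with parameters $\|\mM\|_F$ and $\|\mM\|_2$. Union bounding over a $1/2$-net of the sphere in $\RR^{|T|}$ of cardinality at most $5^{|T|} \leq e^{n^r}$, choosing the Hanson-Wright radius to absorb this cost, and multiplying by $\lambda_F$ yields
\[
    \bigl\|\mH_{T,S} - \lambda_F \Tr(\mM)\mI_{|T|}\bigr\|_2 \lesssim \lambda_F\bigl(\sqrt{|T|}\|\mM\|_F + |T|\|\mM\|_2\bigr)
\]
with probability $1 - e^{-\Omega(n^r)}$. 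Plugging in $\|\mM\|_F \lesssim n^{(1-2p)/2}$ and $\|\mM\|_2 \lesssim n^{-p}$ from \cref{corollary:trace-spectral-norm}, the right-hand side is $O(\mu^{-1} n^{-(1-r)/2})$, which is polynomially smaller than $\mu^{-1}$ since $r<1$. Combining with the mean estimate, $\|\mH_{T,S} - \mu^{-1}\mI\|_2 \leq \mu^{-1}n^{-\kappa_2}$ for some $\kappa_2 > 0$.

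To conclude, set $\mE := \mH_{T,S} - \mu^{-1}\mI$, so that $\mI + \mH_{T,S} = (1+\mu^{-1})\mI + \mE$ and
\[
    (\mI + \mH_{T,S})^{-1} = (1+\mu^{-1})^{-1}\bigl(\mI + (1+\mu^{-1})^{-1}\mE\bigr)^{-1}.
\]
Since $(1+\mu^{-1})^{-1}\|\mE\|_2 = O(n^{-\kappa_2})$, a Neumann expansion gives eigenvalues equal to $(1+\mu^{-1})^{-1}(1 \pm O(n^{-\kappa_2}))$. A short calculation shows $(1+\mu^{-1})^{-1} = \mu/(1+\mu) = \min\{\mu, 1\}(1 \pm O(\min\{\mu,\mu^{-1}\}))$, and since $\min\{\mu, \mu^{-1}\} = n^{-|q+r-1|}$ is polynomially small in the bi-level model (where $q+r \neq 1$), the eigenvalues of $(\mI + \mH_{T,S})^{-1}$ equal $\min\{\mu,1\}(1 \pm c_{T,S}n^{-\kappa})$ for some $\kappa > 0$, as claimed.

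The main obstacle is the $\epsilon$-net step: since $|T|$ can be as large as $n^r$, the net cardinality $e^{|T|}$ forces the Hanson-Wright exponent to be $\Omega(|T|)$, which in turn dictates the concentration rate $n^{-(1-r)/2}$ and demands careful bookkeeping of both the Frobenius and spectral norms of $\mM$. The Frobenius-norm term dominates whenever $|T| \ll n$, and it is exactly the hypothesis $r < 1$ from the bi-level ensemble that makes this relative error polynomially small; a secondary subtlety is verifying that the adaptation of \cref{prop:au-ak-flat} handles arbitrary $S \subseteq [s]$ uniformly in $|S|$, which amounts to tracking how many spiked versus flat eigenvalues appear in $\mA_{-S}$.
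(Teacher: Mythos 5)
Your proposal is correct in spirit but takes a genuinely different route from the paper. The paper's proof rotates into the eigenbasis of $\mA_{-S}^{-1}$, partitions the rotated Gaussian matrix $\Zt$ into blocks $\mB_T$ (aligned with the flat eigenvalues) and $\mC_T$ (aligned with the spiked eigenvalues), bounds $\Zt^\top \mD \Zt = \mB_T^\top \mD_{\mathsf{flat}} \mB_T + \mC_T^\top \mD_{\mathsf{spiked}} \mC_T$ by Weyl-type inequalities, and controls each block by Wishart concentration (\cref{lemma:wishart-concentration}). That argument explicitly walks through the two-population structure of $\mA_{-S}^{-1}$'s spectrum. Your $\epsilon$-net plus Hanson-Wright argument achieves the same conclusion while absorbing the spiked/flat casework implicitly into the Frobenius and spectral norms of $\mM = \mA_{-S}^{-1}$, which is arguably cleaner: you never need to separately track the contributions of $\mB_T$ and $\mC_T$, because the spiked eigenvalues of $\mM$ are the \emph{smaller} ones and cannot inflate $\norm{\mM}_F$ or $\norm{\mM}_2$. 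Both proofs hinge on the same three inputs — independence of $\Zt$ from $\mA_{-S}^{-1}$ via $T \subseteq S$, the spectrum of $\mA_{-S}^{-1}$ (the paper's \cref{cor:atinv-spectrum}, which already covers general subsets, so your ``adaptation of \cref{prop:au-ak-flat}'' is already present), and $r < 1$ to keep the relative error polynomially small.

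Two quantitative points deserve care. First, your claimed failure probability $1 - e^{-\Omega(n^r)}$ is only what the naive net-cardinality-versus-tail tradeoff gives when $\abs{T} \asymp n^r$; for small $\abs{T}$ (say $\abs{T} = O(1)$) the union cost is negligible, but the Hanson-Wright tail at radius $\sqrt{\abs{T}}\norm{\mM}_F$ then only gives $1 - e^{-\Omega(1)}$, which does not meet the paper's $1 - 2e^{-\sqrt{n}} - 2e^{-n}$. The fix is to set the radius to $\epsilon \asymp \sqrt{\max\{\abs{T}, \sqrt{n}\}}\,\norm{\mM}_F + \max\{\abs{T}, \sqrt{n}\}\,\norm{\mM}_2$, which delivers probability $1 - e^{-\Omega(\sqrt{n})}$ uniformly in $\abs{T}$ while keeping the relative error $n^{-\min\{(1-r)/2,\,1/4\}}$, still polynomially small under $r < 1$. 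Second, a $1/2$-net does not suffice for the symmetric quadratic-form bound $\norm{\mathbf{W}}_2 \leq (1-2\theta)^{-1}\sup_{\vu \in \mathcal{N}_\theta}\abs{\vu^\top \mathbf{W}\vu}$; use a $1/4$-net (cardinality $\leq 9^{\abs{T}}$), which changes nothing asymptotically. Neither issue affects the validity of your approach, only the bookkeeping; both are worth noting for a careful write-up.
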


\section{Utility bounds: applying the tools}\label{sec:utility}

Wishart concentration allows us to tightly bound the hat matrix and pass to studying bilinear forms of the form $\vw_i^\top \Akinv \dely$ rather than $\vw_i^\top \Ainv \dely$. Since $\Akinv$ is independent of $\Xk$ and $\dely$, we can condition on $\Akinv$ and then apply Hanson-Wright (\cref{thm:bounded-hanson-wright-bilinear}) to these bilinear forms for every realization of $\Akinv$. In this section, we will explicitly calculate the scaling of the typical value of these bilinear forms using the bi-level ensemble scaling; these will prove to be useful throughout the rest of the paper.

We first state the following proposition which bounds the correlation between the relevant label-defining features and the label vectors; it is a combination of Propositions D.5 and D.6 in \citep{subramanian2022generalization}.  
\begin{proposition} \label{prop:feature-label-correlation}
For any distinct $\a, \b \in [k]$, we have
\begin{align}
  \frac{1}{\sqrt{\pi \ln 2}}  \cdot \frac{n}{\nc} \cdot \sqrt{\ln \nc} \leq \EE[\za \tran \ya] \leq \sqrt{2}  \cdot \frac{n}{\nc} \cdot \sqrt{\ln \nc} \label{eqn:expzayabound}
\end{align}
and 
\begin{align}
      -\sqrt{2}  \cdot \frac{n}{\nc} \cdot \frac{1}{k-1} \cdot \sqrt{\ln \nc} \leq  \EE[\za \tran \yb] \leq - \frac{1}{\sqrt{\pi \ln 2}}  \cdot \frac{n}{\nc}\cdot  \frac{1}{k-1} \cdot \sqrt{\ln \nc} \label{eqn:expzaybbound}
    \end{align}
\end{proposition}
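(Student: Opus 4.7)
The plan is to reduce everything to a single-sample computation (by i.i.d.\ across the $n$ training points), then invoke two standard facts: (i) the centered labels allow the $1/k$ offset to be dropped under the expectation, and (ii) the conditional expectations of the label-defining features can be related to the expected maximum of $k$ i.i.d.\ standard Gaussians via symmetry.

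Concretely, first observe that $\za \in \RR^n$ has i.i.d.\ $N(0,1)$ entries, so $\EE[\za^\top \ya] = n\,\EE[z_{\a,1} y_\a[1]]$ and $\EE[\za^\top \yb] = n \, \EE[z_{\a,1} y_\b[1]]$. Since $y_\a[i] = \mathbb{I}[\ell_i = \a] - 1/k$ and $z_{\a,i}$ is mean zero and independent of $1/k$, the constant offset drops out, yielding $\EE[z_{\a,1} y_\a[1]] = \EE[z_{\a,1} \mathbb{I}[\ell_1 = \a]]$ and similarly for $\b$. Write this as $P(\ell_1=\a)\cdot\EE[z_{\a,1}\mid \ell_1=\a]$. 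By symmetry $P(\ell_1=\a)=1/k$, and since conditioning on $\ell_1=\a$ is precisely the event $z_{\a,1}=\max_{m\in[k]} z_{m,1}$, the conditional expectation equals $\mu_k \triangleq \EE[\max_{m\in[k]} z_{m,1}]$. Therefore $\EE[\za^\top \ya] = \tfrac{n}{k}\mu_k$.

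For the cross term, I would use a total-expectation trick. Because $\EE[z_{\a,1}] = 0 = \sum_m P(\ell_1=m)\,\EE[z_{\a,1}\mid \ell_1=m]$, and by symmetry among the $k-1$ classes different from $\a$ the value of $\EE[z_{\a,1}\mid \ell_1=m]$ is the same for every $m\neq\a$, it follows that
\begin{equation*}
    \EE[z_{\a,1}\mid \ell_1=\b] = -\frac{1}{k-1}\,\EE[z_{\a,1}\mid \ell_1=\a] = -\frac{\mu_k}{k-1}.
\end{equation*}
Multiplying by $P(\ell_1=\b)=1/k$ and by $n$ gives $\EE[\za^\top \yb] = -\tfrac{n}{k(k-1)}\mu_k$.

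The final step is to plug in standard two-sided bounds on the expected maximum of $k$ i.i.d.\ standard Gaussians, $\tfrac{1}{\sqrt{\pi\ln 2}}\sqrt{\ln k} \le \mu_k \le \sqrt{2\ln k}$; the upper bound is the classical Gaussian-MGF / Jensen argument, and the lower bound follows from the Kamath-style estimate on Gaussian maxima. Substituting these into the two identities above yields exactly the claimed inequalities. There is no real obstacle in this argument; the only delicate point is making sure the cited bounds on $\mu_k$ produce the precise constants $\tfrac{1}{\sqrt{\pi\ln 2}}$ and $\sqrt{2}$ advertised in the statement, which is why I would invoke them as off-the-shelf facts rather than re-derive them inline.
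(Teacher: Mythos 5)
Your derivation is correct. Note that this paper does not actually prove the proposition; it quotes it as a combination of Propositions D.5 and D.6 of \citet{subramanian2022generalization}, so there is no internal proof to compare against. Your route --- i.i.d.\ reduction to a single coordinate, dropping the $1/k$ centering against the mean-zero $z_{\a,1}$, identifying $\EE[z_{\a,1}\mid\ell_1=\a]$ with $\mu_k=\EE[\max_{m\in[k]}z_{m,1}]$, and getting the cross term by the tower rule $0=\tfrac1k\mu_k+\tfrac{k-1}{k}\EE[z_{\a,1}\mid\ell_1=\b]$ --- is the natural one and lands on exactly the advertised constants once you plug in $\tfrac{1}{\sqrt{\pi\ln2}}\sqrt{\ln k}\le\mu_k\le\sqrt{2\ln k}$. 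The one step I would spell out more carefully is the identity $\EE[z_{\a,1}\mid\ell_1=\a]=\mu_k$: conditioning on ``$z_{\a,1}$ is the max'' is not by itself the same as computing the unconditional expected max. It holds because for i.i.d.\ Gaussians the maximum $M$ and the argmax $A$ are independent (for any Borel $B$, $\PP(M\in B\mid A=m)$ is the same for every $m$ by exchangeability, hence equals $\PP(M\in B)$), so $\EE[z_{\a,1}\mid A=\a]=\EE[M\mid A=\a]=\EE[M]$; without this one-line symmetry argument the step reads as an unproved assertion. The tower-rule trick for the cross term is clean and avoids having to analyze the conditional law of a non-winning coordinate directly.
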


With the above proposition in hand, we can prove the following lemma which gives concentration of the bilinear forms that we study.
\begin{lemma}\label{lemma:hanson-wright-feature-label-concentration}
Let $i \in [d]$ and $\dely = \ya - \yb$ where $\a, \b \in [k]$ and $\b \neq \a$. Let $\mM \in \RR^{n \times n}$ be a (random) matrix which is independent of $\vz_i$ and $\dely$. Then conditioned on $\mM$, with probability at least $1 - 1/nk$, 
\[
\abs{\vz_i^\top \mM \dely - \EE[\vz_i^\top \mM \dely | \mM]} \le \const{chansonwright} \sqrt{\frac{n}{k}}\norm{\mM}_2\sqrt{\log(nk)},
\]
and the same holds with $\dely$ replaced with $\ya$.
Here, \const{chansonwright} is an appropriately chosen universal positive constant.

Moreover, we have
\begin{enumerate}[label=\normalfont{(\arabic*)}]
    \item For any distinct $\a, \b \in [k]$, we have 
    \begin{align}
        \EE[\vz_\a^\top \mM \dely |\mM] &= \const{const_alpha}\frac{\sqrt{\log k}}{k}\tr(\mM) = -\EE[\vz_\b^\top \mM \dely] \\
        \EE[\vz_\a^\top \mM \ya |\mM] &= \const{const_ya}\frac{\sqrt{\log k}}{k}\tr(\mM),
    \end{align}
    where \const{const_alpha} and \const{const_ya} are positive constants.
    \item For $i \in [d] \setminus \qty{\a, \b}$, we have
        \begin{align}
        \EE[\vz_i^\top \mM \dely |\mM] &= 0.
        \end{align}
    \item For $i \in [d] \setminus \qty{\a}$, we have
         \begin{align}
            \EE[\vz_i^\top \mM \ya |\mM] &= -\const{const_zi_ya}\frac{\sqrt{\log k}}{k(k-1)},
        \end{align}
        where \const{const_zi_ya} is a positive constant.
\end{enumerate}
\end{lemma}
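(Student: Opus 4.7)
The statement has two essentially independent halves: (i) the subgaussian-type concentration of the bilinear form $\vz_i^\top \mM \dely$ (or $\vz_i^\top \mM \ya$) around its conditional mean, and (ii) the explicit evaluation of those conditional means under the $1$-sparse data-generating model. I would handle them in that order.

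For (i), the plan is to condition on $\mM$ (which by hypothesis is independent of $\vz_i$ and of the label vector) and invoke the new Hanson--Wright inequality \cref{thm:bounded-hanson-wright-bilinear}. The entries of $\vz_i \sim N(0, \mI_n)$ are centered i.i.d.\ Gaussians with subgaussian norm $O(1)$. The label vector $\dely$ satisfies $|\dely[j]| \le 1$ almost surely and $\mathrm{Var}(\dely[j]) = 2/k$, so it has soft sparsity at level $\pi = 2/k$; the one-sided version $\ya$ likewise has $|\ya[j]| \le 1$ and $\mathrm{Var}(\ya[j]) \le 1/k$. Since the training points are i.i.d., the pairs $(\vz_i[j], \dely[j])_{j=1}^n$ are i.i.d.\ across $j$, so the structural hypotheses of \cref{thm:bounded-hanson-wright-bilinear} are met. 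Plugging in $\epsilon = \const{chansonwright}\sqrt{n/k}\,\norm{\mM}_2\sqrt{\log(nk)}$ and using $\norm{\mM}_F^2 \le n\norm{\mM}_2^2$, one checks that $\epsilon^2/(\pi K^4 \norm{\mM}_F^2) \gtrsim \log(nk)$ and $\epsilon/(K^2 \norm{\mM}_2) \gtrsim \sqrt{\log(nk)} \ge \log(nk)$ for $nk$ large, so the failure probability is at most $1/(nk)$ once $\const{chansonwright}$ is chosen large enough.

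For (ii), the plan is to expand
\begin{align*}
\EE[\vz_i^\top \mM \vy \mid \mM] = \sum_{j,l} m_{jl}\,\EE[\vz_i[j]\,\vy[l]],
\end{align*}
where $\vy \in \{\dely, \ya\}$. For $j \ne l$, the random variables $\vz_i[j]$ and $\vy[l]$ depend on disjoint training points, hence are independent, and each is centered; so only the diagonal terms survive and $\EE[\vz_i^\top \mM \vy \mid \mM] = \tr(\mM)\cdot\EE[\vz_i[1]\,\vy[1]] = \frac{\tr(\mM)}{n}\EE[\vz_i^\top \vy]$. From here the stated constants are just a rewrite of \cref{prop:feature-label-correlation}: for $\vy = \ya$ and $i = \a$ one gets $\Theta(\sqrt{\log k}/k)$, for $\vy = \ya$ and $i \in [k]\setminus\{\a\}$ one gets $-\Theta(\sqrt{\log k}/(k(k-1)))$, and for $i \notin [k]$ the feature $\vz_i$ is independent of every label, giving zero. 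The $\dely = \ya - \yb$ identities follow by linearity; the exact antisymmetry $\EE[\vz_\a^\top \mM \dely \mid \mM] = -\EE[\vz_\b^\top \mM \dely \mid \mM]$ is obtained by noting $\EE[\vz_\a^\top \ya] = \EE[\vz_\b^\top \yb]$ and $\EE[\vz_\a^\top \yb] = \EE[\vz_\b^\top \ya]$ by the symmetry of the data distribution under swapping the two class labels.

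The only subtle point is in part (i): one must confirm the hypotheses of \cref{thm:bounded-hanson-wright-bilinear}, in particular that conditional on $\mM$ the pair sequence $(\vz_i[j], \vy[j])_j$ retains the independence-across-$j$ and (soft)-sparsity-in-$j$ structure. This is immediate from the assumption that $\mM$ is independent of both $\vz_i$ and $\vy$, but it is the reason why, in later applications of this lemma, one will first have to strip dependencies from $\Ainv$ via the Woodbury identity of \cref{fact:woodbury} before reducing to a matrix $\mM$ (such as $\Akinv$) that is genuinely independent of the features being contracted. No other serious obstacle appears; the rest is bookkeeping of constants.
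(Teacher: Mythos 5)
Your outline of part (ii) is correct and matches the paper's calculation almost verbatim: expand into $\sum_{j,l} m_{jl}\,\EE[\vz_i[j]\vy[l]]$, kill the off-diagonal terms by independence across training points, and read off the constants from \cref{prop:feature-label-correlation}. The antisymmetry argument for $\dely$ is also fine.

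Part (i), however, has a genuine gap where you wave it away. You write that confirming the hypotheses of \cref{thm:bounded-hanson-wright-bilinear} ``is immediate from the assumption that $\mM$ is independent of both $\vz_i$ and $\vy$,'' but independence of $\mM$ is the \emph{easy} hypothesis. The substantive hypothesis is \emph{conditional} subgaussianity: for $i \in [k]$, the feature $\vz_i[j]$ is correlated with $\dely[j]$ (and with $\ya[j]$), since $\dely[j]$ is determined by which of the first $k$ coordinates of $\vx_j$ is largest, and $\vz_i[j]$ is one of those coordinates. The theorem requires that, after conditioning on the outcome of this argmax competition (i.e.\ on $\ya[j]$, or equivalently on the mask event), $\vz_i[j]$ remain subgaussian with norm bounded by an absolute constant $K$. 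This is not trivial: conditioning on $\yoh_i[j] = 1$ turns $\vz_i[j]$ into the maximum of $k$ i.i.d.\ Gaussians, which has conditional mean $\Theta(\sqrt{\log k})$ and is a priori not obviously subgaussian around that mean with a $k$-independent constant. The paper invokes the Borell--TIS inequality for the conditioned-to-win case and Proposition~D.2 of \citet{subramanian2022generalization} for the conditioned-to-lose case to establish this; without that step the application of \cref{thm:bounded-hanson-wright-bilinear} has an unverified hypothesis and the lemma does not follow.

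Everything else in your plan (soft sparsity levels, the choice of $\epsilon$, the ratio $\norm{\mM}_F^2 \le n\norm{\mM}_2^2$) is correct bookkeeping.
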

\begin{proof}
Let us check the conditions for our new variant of Hanson-Wright with soft sparsity (\cref{thm:bounded-hanson-wright-bilinear}). 
We want to apply it to the random vectors $(\vz_i, \dely) = (\vz_i[j], \dely[j])_{j=1}^n$. Some of the hypotheses are immediate by definition. Evidently, $(\vz_i[j], \dely[j])$ are independent across $j$, and are mean zero. Since $\vz_i[j] \sim N(0, 1)$, it is subgaussian with parameter at most $K = 2$. For the bounded and soft sparsity assumption, we clearly have $\abs{\dely[j]}\le 1$ and $\ya[j] \le 1$ almost surely. Also, since $\dely[j]^2 \sim \Ber(\frac{2}{k})$, we have $\EE[\dely[j]^2] = \frac{2}{k}$. Similarly, $\EE[\ya[j]^2] = \frac{1}{k}(1 - \frac{1}{k})^2 + (1 - \frac{1}{k})\frac{1}{k^2}\le \frac{2}{k}$.

The more complicated condition is the subgaussianity of $\vz_i[j]$ conditioned on the value of $\dely[j]$ or $\ya[j]$. Regardless of whether we're conditioning on $\dely$ or $\ya$, it suffices to instead prove that $\vz_i[j]$ is subgaussian conditioned on whether feature $i$ won the competition for datapoint $j$. First, suppose $i$ won, i.e. $\yoh_i[j] = 1$. Then the Borell-TIS inequality \citep[Theorem~2.1.1]{adler2007random} implies that $\vz_i[j]$ satisfies a subgaussian tail inequality. By the equivalent conditions for subgaussianity \citet[Proposition~2.5.2]{vershynin2018high}, it follows that $\vz_i[j] - \EE[\vz_i[j] | \yoh_i[j] = 1]$ conditionally has subgaussian norm bounded by some absolute constant $K$. If $i$ doesn't win (or doesn't participate in the competition), then Proposition D.2 in \citet{subramanian2022generalization} implies that $\vz_i[j] - \EE[\vz_i[j] | \yoh_i[j] = 0]$ conditionally has subgaussian norm bounded by $6$. 

Finally, since $\mM$ is independent of $\vz_i$ and $\dely$, we can condition on $\mM$ and apply \cref{thm:bounded-hanson-wright-bilinear} to the bilinear form for every realization of $\mM$.

Hence, we conclude that with probability at least $1-1/nk$ we have
\begin{align}
    \abs{\vz_i^\top \mM \dely - \EE[\vz_i^\top \mM \dely | \mM]} \le \const{chansonwright}\sqrt{\frac{n}{k}}\norm{\mM}_2\sqrt{\log(nk)},
\end{align}
where \newc\label{chansonwright} is an appropriately chosen absolute constant based on $K$ and the constant $c$ defined in \cref{thm:bounded-hanson-wright-bilinear}. 

Now we can compute $\EE[\vz_i^\top \mM \dely | \mM]$ to prove the rest of the theorem. If $i = \alpha$, we have
\[
\EE[\vz_\a^\top \mM \dely |\mM]= \tr(\mM \EE[\dely\vz_\a^\top]).
\]
Let us now compute $\EE[\dely \vz_\a^\top]$. From \cref{eqn:expzayabound} in \cref{prop:feature-label-correlation}, we have $\EE[\ya \vz_\a^\top] = \const{const_ya}\frac{\sqrt{\log k}}{k}\mI_n$, where $\frac{1}{\sqrt{\pi \log 2}} \le \newc\label{const_ya} \le \sqrt{2}$. 
Similarly, we have $\EE[\yb \vz_\a^\top] = -\const{const_zi_ya}\frac{\sqrt{\log k}}{k(k-1)}\mI_n$
where $\frac{1}{\sqrt{\pi \log 2}} \le \newc\label{const_zi_ya} \le \sqrt{2}$.
It follows that $\EE[\dely \vz_\a^\top] = \Theta\qty(\frac{\sqrt{\log k}}{k})\mI_n$.

For $i \in [d] \setminus \qty{\a, \b}$, by symmetry we obtain $\EE[\ya \vz_i^\top] = \EE[\yb \vz_i^\top]$. This implies $\EE[\dely \vz_i^\top] = \EE[\ya \vz_i^\top] - \EE[\yb \vz_i^\top] = 0$, so we obtain
\begin{align}
    \EE[\vz_i^\top \mM \dely |\mM] &= \tr(\mM \EE[\dely\vz_i^\top]) \\  &=
    0.
\end{align}
\end{proof}

Plugging in the bi-level scaling, we obtain the following corollary.
\begin{corollary}[Asymptotic concentration of bilinear forms]\label{corollary:asymptotic-concentration}
In the bi-level model, for any $i \in [k]$, we have with probability at least $1-O(1/nk)$ that 
\[
\abs{\vz_i^\top \Akinv \dely - \EE[\vz_i^\top \Akinv \dely]} \le \const{const_asymptotic_bilinear} n^{\frac{1-t}{2}-p}\sqrt{\log (nk)}.
\]
Moreover, we have
\begin{enumerate}[label=\normalfont{(\arabic*)}]
    \item For any distinct $\a, \b \in [k]$,
        \begin{align}
        \EE[\vz_\a^\top \Akinv \dely] = \const{const_alpha}n^{1-t-p}(1 \pm O(n^{-\konst{kappa_akinv_trace}}))\sqrt{\log k} = -\EE[\vz_\b^\top \Akinv \dely] \label{eq:keycorrC3}
        \end{align}
\end{enumerate}
The same statements hold (with different constants) if we replace $\Akinv$ with $\Asinv$.
\end{corollary}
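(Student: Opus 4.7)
The plan is to reduce the corollary to a direct combination of \cref{lemma:hanson-wright-feature-label-concentration} (the Hanson-Wright statement for bilinear forms in the sparse labels) and \cref{corollary:trace-spectral-norm} (the operator norm and trace bounds on $\Akinv$). The crucial observation, which is what permits this reduction, is that $\Akinv = \mA_{-k}^{-1}$ is built only from the features $\vz_j$ for $j > k$, hence it is independent of both $\vz_i$ for $i \in [k]$ and of the label difference $\dely = \ya - \yb$ (since $\dely$ is a deterministic function of $\vz_1, \ldots, \vz_k$ under \cref{assumption:1sparse}). This is exactly the independence structure required to invoke \cref{lemma:hanson-wright-feature-label-concentration} with $\mM = \Akinv$.

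First I would condition on the high-probability event $\gE_{\mathsf{spec}} = \{\norm{\Akinv}_2 \le \const{const_akinv_spectral_norm} n^{-p}\}$ given by \cref{corollary:trace-spectral-norm}, which fails with probability at most $2e^{-n}$. On this event, \cref{lemma:hanson-wright-feature-label-concentration} applied conditionally on $\Akinv$ yields, with conditional probability at least $1 - 1/(nk)$,
\begin{align*}
    \abs{\vz_i^\top \Akinv \dely - \EE[\vz_i^\top \Akinv \dely \mid \Akinv]} \le \const{chansonwright} \sqrt{\frac{n}{k}}\, \norm{\Akinv}_2 \sqrt{\log(nk)} \le \const{chansonwright}\const{const_akinv_spectral_norm} \sqrt{\frac{n}{k}}\, n^{-p}\sqrt{\log(nk)}.
\end{align*}
Substituting the bi-level scaling $k = c_k \lfloor n^t \rfloor$ gives $\sqrt{n/k} = \Theta(n^{(1-t)/2})$, which yields the claimed concentration rate $n^{(1-t)/2 - p}\sqrt{\log(nk)}$ after absorbing constants. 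A final union bound over $\gE_{\mathsf{spec}}^c$ and the Hanson-Wright failure event keeps the overall failure probability at $O(1/nk)$.

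Next, for the expected value in (\ref{eq:keycorrC3}), I would use the explicit formula from \cref{lemma:hanson-wright-feature-label-concentration},
\begin{align*}
    \EE[\vz_\alpha^\top \Akinv \dely \mid \Akinv] = \const{const_alpha} \frac{\sqrt{\log k}}{k} \Tr(\Akinv),
\end{align*}
and plug in the trace bound $\Tr(\Akinv) = n^{1-p}(1 \pm O(n^{-\konst{kappa_akinv_trace}}))$ from \cref{corollary:trace-spectral-norm} together with $k = \Theta(n^t)$ to obtain the asserted $\const{const_alpha} n^{1-t-p}(1 \pm O(n^{-\konst{kappa_akinv_trace}}))\sqrt{\log k}$ scaling (up to absorbing the tiny deviation from $k = c_k\lfloor n^t\rfloor$ into the $O(n^{-\konst{kappa_akinv_trace}})$ error). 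The sign-flipped identity for $\EE[\vz_\beta^\top \Akinv \dely]$ is immediate by swapping $\alpha$ and $\beta$ (or by noting that $\dely \mapsto -\dely$ under this swap). Finally, the analogous statement with $\Asinv$ in place of $\Akinv$ follows by the exact same argument: $\Asinv$ involves only features indexed by $j > s \ge k$, so it retains independence from $\vz_i$ ($i \in [k]$) and $\dely$, and \cref{corollary:trace-spectral-norm} gives the same order of magnitude bounds on its trace and spectral norm.

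The only step that requires any care is the conditioning argument — \cref{lemma:hanson-wright-feature-label-concentration} is stated for a fixed $\mM$, so to apply it to the random matrix $\Akinv$ one must work on the intersection of the spectral event $\gE_{\mathsf{spec}}$ (which gives a deterministic bound on $\norm{\Akinv}_2$) with the conditional Hanson-Wright event. I do not expect this to present a genuine obstacle, since the spectral event holds with exponentially high probability and the independence of $\Akinv$ from $(\vz_i,\dely)$ for $i \in [k]$ makes the conditioning clean; the rest of the corollary is a direct substitution of the bi-level parameter scaling.
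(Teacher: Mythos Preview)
Your proposal is correct and follows essentially the same approach as the paper's proof: both note that $\Akinv$ is independent of $(\vz_i,\dely)$ for $i\in[k]$, apply \cref{lemma:hanson-wright-feature-label-concentration} conditionally with the spectral norm bound from \cref{corollary:trace-spectral-norm}, and then substitute the trace bound $\Tr(\Akinv)=n^{1-p}(1\pm O(n^{-\konst{kappa_akinv_trace}}))$ to get the expected value. If anything, you are slightly more explicit than the paper about the conditioning on the spectral event and the union bound.
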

\begin{proof}
From \cref{corollary:trace-spectral-norm}, we have $\norm{\Akinv}_2 \le \const{const_asymptotic_bilinear}n^{-p}$, where $\newc\label{const_asymptotic_bilinear}$ is an appropriately chosen universal positive constant based on \const{chansonwright}. Recall that $\Ak$ is obtained by removing the $k$ label-defining features, so in particular $\Akinv$ is independent of $(\vz_i, \dely)$ for $i \in [k]$. Hence, the conditions for \cref{lemma:hanson-wright-feature-label-concentration} are satisfied. Then applying the union bound for the spectral norm bound on $\Akinv$, we see that with probability at least $1-O(1/nk)$, the deviation term from Hanson-Wright is at most $\const{const_asymptotic_bilinear} n^{\frac{1}{2}-p}\sqrt{\log (nk)}$. 

We now turn to calculating the asymptotic scalings for the expectations. 
From \cref{lemma:hanson-wright-feature-label-concentration}, we know that $\EE[\vz_\a^\top \Akinv \dely |\Akinv] = \const{const_alpha}\frac{\sqrt{\log k}}{k}\tr(\Akinv)$. Applying the high probability bound on $\tr(\Akinv)$ from \cref{corollary:trace-spectral-norm}, we obtain that with probability at least $1-O(1/nk)$ that
\begin{align}
  -\EE[\zb^\top \Akinv \dely] = \EE[\vz_\a^\top \Akinv \dely] &= \const{const_alpha} n^{-t} n^{1-p}(1 \pm O(n^{-\konst{kappa_akinv_trace}})) \sqrt{\log k} \\
  &= \const{const_alpha} n^{1-t -p}(1 \pm O(n^{-\konst{kappa_akinv_trace}}))\sqrt{\log k}  
\end{align}
where in the second line we have applied \cref{corollary:trace-spectral-norm} and  \newc\label{const_alpha} is an appropriately chosen positive constant. This proves \eqref{eq:keycorrC3}. 
\end{proof}

With \cref{corollary:asymptotic-concentration} in hand, we are now in a position to do some straightforward calculations and bound some quantities which will pop up in the survival and contamination analysis.

\begin{proposition}[Worst-case bound based on Hanson-Wright]\label{prop:worst-case-correlation}
Let $T \subseteq [s]$ be a subset of favored features such that $\qty{\a, \b} \subseteq T$. Assume that $\abs{T} = n^{\tau}$ for some $\tau \le r$. Then with probability at least $1-O(1/nk)$, we have
\begin{align}
    \norm{\Zt^\top \Atinv \dely}_2 \le \const{const_norm_bilinear}(n^{1-t-p} + n^{\frac{1+\tau-t}{2}-p})\sqrt{\log(nk\abs{T})}.
\end{align}
\end{proposition}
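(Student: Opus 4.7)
The plan is to reduce $\Zt^\top \Atinv \dely$ to a bilinear form against a matrix that is genuinely independent of $\dely$, and then apply our new Hanson-Wright variant entry-wise. The subtlety is that $\Atinv$ depends on the label-defining features $\vz_j$ with $j \in [k] \setminus T$, which in turn influence $\dely$, so one cannot directly invoke \cref{lemma:hanson-wright-feature-label-concentration} on $\vz_i^\top \Atinv \dely$. To sidestep this, I would enlarge the removed index set to $S = T \cup [k]$ with $R = [k] \setminus T$, so that $\mA_{-S}^{-1}$ is independent of both $\dely$ and of every $\vz_i$ with $i \in T$.

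By Woodbury applied to $\At = \mA_{-S} + \Xr \Xr^\top$, one gets
\[
\Atinv = \mA_{-S}^{-1} - \mA_{-S}^{-1} \Xr (\mI_{|R|} + \mH_R)^{-1} \Xr^\top \mA_{-S}^{-1}, \qquad \mH_R = \Xr^\top \mA_{-S}^{-1} \Xr,
\]
so that
\[
\Zt^\top \Atinv \dely = \Zt^\top \mA_{-S}^{-1} \dely - (\Zt^\top \mA_{-S}^{-1} \Xr)(\mI_{|R|} + \mH_R)^{-1}(\Xr^\top \mA_{-S}^{-1} \dely).
\]
For the main term $\Zt^\top \mA_{-S}^{-1} \dely$, I would apply \cref{lemma:hanson-wright-feature-label-concentration} entry-wise, conditional on $\mA_{-S}^{-1}$. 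By \cref{corollary:trace-spectral-norm}, $\Tr(\mA_{-S}^{-1}) = \Theta(n^{1-p})$ and $\norm{\mA_{-S}^{-1}}_2 = O(n^{-p})$. The conditional mean $\EE[\vz_i^\top \mA_{-S}^{-1} \dely \mid \mA_{-S}^{-1}] = \pm\const{const_alpha}\frac{\sqrt{\log k}}{k}\Tr(\mA_{-S}^{-1}) = O(n^{1-t-p}\sqrt{\log k})$ for $i \in \{\a,\b\}$ and vanishes for every other $i \in T$ by the $\a \leftrightarrow \b$ swap symmetry; each fluctuation has size $O(n^{(1-t)/2-p}\sqrt{\log(nk|T|)})$. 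Union bounding over $i \in T$ and summing the squared bounds produces the stated $(n^{1-t-p} + n^{(1+\tau-t)/2-p})\sqrt{\log(nk|T|)}$ contribution up to constants.

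For the Woodbury correction, I would use submultiplicativity $\norm{\mC \mD \vv}_2 \le \norm{\mC}_F \norm{\mD}_2 \norm{\vv}_2$ with $\mC = \Zt^\top \mA_{-S}^{-1} \Xr$, $\mD = (\mI_{|R|} + \mH_R)^{-1}$, and $\vv = \Xr^\top \mA_{-S}^{-1} \dely$. The key input is $\norm{\mD}_2 = O(\min\{\mu,1\})$ from \cref{prop:concentration-hat-matrix-simple}. The vector $\vv$ consists of $|R| \le k$ centered bilinear forms $\sqrt{\lambda_F}\,\vz_r^\top \mA_{-S}^{-1} \dely$ (note $R \cap \{\a,\b\} = \emptyset$ so the means vanish by symmetry), each bounded by \cref{lemma:hanson-wright-feature-label-concentration}. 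The entries of $\mC$ are bilinear forms $\sqrt{\lambda_F}\,\vz_i^\top \mA_{-S}^{-1} \vz_r$ with $i \in T$, $r \in R$ independent, so classical Hanson-Wright (\cref{thm:hanson-wright-bilinear}) controls the Frobenius norm.

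The main obstacle is verifying that the Woodbury correction is dominated by the main-term bound across every parameter regime. This is precisely where the $\min\{\mu,1\}$ spectral bound from \cref{prop:concentration-hat-matrix-simple} is essential: in the regime $q+r<1$ where regression succeeds, $\mu = n^{q+r-1}$ shrinks polynomially, and only this decay guarantees the correction remains negligible compared to the main term. Once this case analysis is completed, choosing \const{const_norm_bilinear} large enough to absorb all the constants from the concentration inequalities and union bounds closes out the proof.
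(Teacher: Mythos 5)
Your observation is correct on one point: as stated, the proposition only assumes $\{\a,\b\}\subseteq T$, yet a direct entrywise application of \cref{lemma:hanson-wright-feature-label-concentration} requires $\mA_{-T}^{-1}$ to be independent of both $\vz_i$ for $i\in T$ \emph{and} of $\dely$, and the latter fails unless $[k]\subseteq T$ (since $\dely$ depends on all $k$ label-defining features). The paper's one-step proof silently uses this stronger condition. However, the proposition is only ever invoked with $T=[k]$ (in the relative-survival bound) and $T=[s]$ (in the unfavored-contamination bound); in both cases $[k]\subseteq T$, so your $S=T\cup[k]$ decomposition has $R=\varnothing$ and collapses directly to the paper's argument: apply \cref{corollary:asymptotic-concentration} entrywise and add up the squares. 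The cleaner fix is to state the proposition with the hypothesis $[k]\subseteq T\subseteq[s]$, not to generalize the proof.

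The Woodbury detour you sketch for the hypothetical case $T\not\supseteq[k]$ does not close, and your closing paragraph mislocates the difficulty. Tracking scalings: $\norm{\vv}_2 = O(\sqrt{|R|\,\lambda_F}\,n^{(1-t)/2-p}\sqrt{\log})$ with $|R|$ as large as $k-2\approx n^t$, $\norm{\mD}_2 = O(\min\{\mu,1\})$, and either $\norm{\mC}_2 \le \sqrt{\lambda_F}\norm{\Zt}_2\norm{\mA_{-S}^{-1}}_2\norm{\Zr}_2 = O(\sqrt{\lambda_F}\,n^{1-p})$ or $\norm{\mC}_F = O(\sqrt{|T||R|\,\lambda_F}\,n^{1/2-p}\sqrt{\log})$. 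In the regime $q+r>1$ the correction is at best $\Tilde{O}(n^{3/2-p-q-r})$, which exceeds both $n^{1-t-p}$ and $n^{(1+\tau-t)/2-p}$ once $t+\tfrac{1}{2}>q+r$ --- a nonempty slice of the bi-level parameter space, e.g.\ $p=2$, $r=0.8$, $q=0.3$, $t=0.75$, $\tau=0.1$. In the regime $q+r<1$ the factor $\mu$ reduces the correction only to $\Tilde{O}(n^{1/2-p})$, which again exceeds the stated bound whenever $t>\max\{\tau,\tfrac{1}{2}\}$. So the obstruction is present in both regimes and is not rescued by $\min\{\mu,1\}$: the Woodbury correction is a genuine term of the same or larger order, and submultiplicative norm chaining cannot tame it. This is precisely why the correct move is to require $[k]\subseteq T$ (so that the correction vanishes identically), rather than to attempt the generalization.
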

\begin{proof}
WLOG, suppose $\a = 1$ and $\b = 2$.
By \cref{corollary:asymptotic-concentration} we have with probability at least $1-1/n$ that
\begin{align}
    \abs{\Zt^\top \Atinv \dely} \le \mqty[\const{const_alpha}n^{1-t-p}\sqrt{\log k} \\ \const{const_alpha}n^{1-t-p}\sqrt{\log k} \\ \const{const_asymptotic_bilinear}n^{\frac{1-t}{2}-p}\sqrt{\log (nk)} \\ \vdots \\ \const{const_asymptotic_bilinear}n^{\frac{1-t}{2}-p}\sqrt{\log (nk)}].
\end{align}

Hence, the norm of this vector is at most 
\begin{align}
\norm{\Zt^\top \Atinv \dely}_2 &\le 2\const{const_alpha}n^{1-t-p}\sqrt{\log k} + n^{\frac{\tau}{2}}\const{const_asymptotic_bilinear}n^{\frac{1-t}{2}-p}\sqrt{\log (nk)} \\
&\le \const{const_norm_bilinear}(n^{1-t-p} + n^{\frac{1+\tau-t}{2}-p})\sqrt{\log(nk)},
\end{align}
where \newc\label{const_norm_bilinear} is a positive constant. 

\end{proof}

\section{Bounding the survival}\label{sec:survival}
Recall that the relative survival was defined to be $\lambda_F \hhat_{\a, \b}[\a] = \lambda_F \za^\top \Ainv \dely$. The strategy is to apply our variant of Hanson-Wright to $\za^\top \Ainv \dely$. Unfortunately, $\Ainv$ is not independent of $\za$ or $\dely$, so we need to use Woodbury to extract out the independent portions and bound away the dependent portion. As we'll see shortly, the error from the dependent portions can also be controlled using Hanson-Wright. Let us now recall \cref{prop:relative-survival-bound-accurate} for reference.

\survivalacc*

\begin{proof}
Recall that $\hhat_{\a, \b}[\a] = \vz_\a^\top \Ainv \dely$. 
We first observe that for $i \in [k]$, the dependence between $\Ainv$ and $\vz_i$ as well as $\Ainv$ and $\dely$ only comes through the $k$ label defining features. Hence, we can use the Woodbury identity to extract out the independent portions of $\Ainv$. 

Indeed, our ``push through'' lemma for Woodbury (\cref{fact:woodbury}) and concentration of the hat matrix (\cref{prop:concentration-hat-matrix-simple}) implies that with extremely high probability
\begin{align}
    \Zk^\top \Ainv \dely &= (\mI_k+\mH_k)^{-1}\Zk^\top \Akinv \dely \\
    &= \min\qty{\mu, 1}(\mI_k + \mE)\Zk^\top \Akinv \dely, \label{eq:bilinear-form-pushed-through}
\end{align}
where $\norm{\mE}_2 = O(n^{-\konst{kappa_hatts_final}})$. 

Let $\vu_\a \in \RR^k$ denote the $\a$th row vector in $\mE$, and let $\vu_\a^{-} \in \RR^{k-1}$ denote the subvector of $\vu_\a$ without index $\a$. By reading off the $\a$th row of \cref{eq:bilinear-form-pushed-through}, we see that 
\begin{align}
    \za^\top \Ainv \dely &= \min\qty{\mu, 1}(\za^\top \Akinv\dely + \ev{\vu_\a, \Zk^\top \Akinv \dely})
\end{align}
Since $\norm{\vu_\a}_2 \le \norm{\mE}_2 = O(n^{-\konst{kappa_hatts_final}})$, it follows from Cauchy-Schwarz that
\begin{align}
    \abs{\za^\top \Ainv \dely - \min\qty{\mu, 1} \za^\top \Akinv \dely} \le \min\qty{\mu, 1} O(n^{-\konst{kappa_hatts_final}})\norm{\Zk^\top \Akinv \dely}_2. \label{eq:relative-survival-conc}
\end{align}

Let us pause for a moment and interpret \cref{eq:relative-survival-conc}. The term $\min\qty{\mu, 1}$ is merely capturing the difference in behavior when regression works and fails; if regression works ($q+r<1$) then it becomes $\mu$, and if regression fails ($q+r>1$), then it becomes $1$. This behavior should be expected: in the regression works case, we expect the effect of interpolation to be a \emph{regularizing} one: the signals are attenuated by a factor of $\mu$. The RHS of \cref{eq:relative-survival-conc} is an error term, capturing how differently $\za^\top \Ainv \dely$ behaves from the expected behavior $\min\qty{\mu, 1} \za^\top \Akinv \dely$. 

Let us now bound the error term. From \cref{prop:worst-case-correlation} we have with probability at least $1-O(1/nk)$ that
\begin{align}
    \norm{\Zk^\top \Akinv \dely}_2 &\le \const{const_norm_bilinear}(n^{1-t-p} + n^{\frac{1}{2}-p})\sqrt{\log(nk^2)}.
\end{align}

Let us do casework on $t$. For $t < \frac{1}{2}$, we have $\frac{1}{2} - p < 1-t-p$, so we conclude that the error term is $\min\qty{\mu, 1}O(n^{1-t-p} \cdot n^{-\konst{kappa_worst_case_error}})\sqrt{\log(nk^2)}$, where \newk\label{kappa_worst_case_error} is a positive constant.

On the other hand, our Hanson-Wright calculations imply (\cref{corollary:asymptotic-concentration}) that with probability at least $1 - O(1/nk)$ that
\begin{align*}
    \abs{\za^\top \Akinv \dely - \const{const_alpha}n^{1-t-p}(1 \pm O(n^{-\konst{kappa_akinv_trace}}))\sqrt{\log k}} \le \const{const_asymptotic_bilinear} n^{\frac{1}{2}-p}\sqrt{\log (nk)}.
\end{align*}
Again, since $t < \frac{1}{2}$, the deviation term is $o(n^{1-t-p})\sqrt{\log k}$.

Hence we conclude that with probability $1 - O(1/nk)$ we have
\begin{align*}
    \za^\top \Ainv \dely = \const{const_alpha}\min\qty{\mu, 1}n^{1-t-p}(1 \pm O(n^{-\konst{kappa_survival}})) \sqrt{\log k},
\end{align*}
where \newk\label{kappa_survival} is a positive constant.

Completely analogous logic handles the bounds for $\zb^\top \Ainv \dely$. Let us now return back to the quantity of interest, $\lambda_F \hhat_{\a, \b}[\a]$. We can compute 
\begin{align*}
    \lambda_F \za^\top \Ainv \dely &= n^{p-q-r} \cdot \const{const_alpha}\min\qty{\mu, 1}n^{1-t-p}(1 \pm O(n^{-\konst{kappa_survival}})) \sqrt{\log k} \\
    &= \const{const_alpha}\mu^{-1} \min\qty{\mu, 1}n^{-t}(1 \pm O(n^{-\konst{kappa_survival}}))\sqrt{\log k} \\
    &= \const{const_alpha}\min\qty{1, \mu^{-1}}n^{-t}(1 \pm O(n^{-\konst{kappa_survival}}))\sqrt{\log k}.
\end{align*}

On the other hand, if $t \ge \frac{1}{2}$ the error terms all dominate, and we replace $n^{1-t-p}$ with $n^{\frac{1}{2} - p}$ everywhere. We conclude that with probability at least $1-O(1/nk)$,
\begin{align}
    \abs{\za^\top \Ainv \dely} &\le \const{const_rel_survival_upper} \min\qty{\mu, 1}n^{\frac{1}{2}-p}\sqrt{\log(nk)},
\end{align}
where \newc\label{const_rel_survival_upper} is a positive constant. Plugging in the scaling for $\lambda_F$ yields the desired result.
\end{proof}

\section{Bounding the contamination}\label{sec:contamination}
In this section we give a tight analysis of the contamination term. First, we rewrite the squared contamination term and separate it out into the contamination from the $k-2$ label-defining features which are not $\a$ or $\b$, the rest of the $s-k$ favored features, and the remaining $d-s$ unfavored features. From \cref{eq:cn-new}, we have
\begin{align}
    \cn_{\a, \b}^2 &= \sum_{j \in [d] \setminus \qty{\a, \b}} \lambda_j^2 (\vz_j^\top \Ainv \dely)^2 \\
    &=  \dely^\top \Ainv \qty(\sum_{j \in [d] \setminus \qty{\a, \b}} \lambda_j^2 \vz_j \vz_j^\top)\Ainv \dely \\
    &= \underbrace{\dely^\top\Ainv \qty(\sum_{j \in [k] \setminus \qty{\a, \b}} \lambda_j^2 \vz_j \vz_j^\top) \Ainv \dely}_{\triangleq \cnl^2} 
 + \underbrace{\dely^\top\Ainv \qty(\sum_{j \in [s] \setminus [k]} \lambda_j^2 \vz_j \vz_j^\top) \Ainv \dely}_{\triangleq \cnf^2} \\
    &\quad\quad + \underbrace{\dely^\top\Ainv \qty(\sum_{j > s} \lambda_j^2 \vz_j \vz_j^\top) \Ainv \dely}_{\triangleq \cnu^2}. \label{def:cnf-cnu}
\end{align}
Here, $\cnl$ corresponds to contamination from label defining features, $\cnf$ corresponds to contamination from favored features, and $\cnu$ corresponds to contamination from unfavored features. The reason for separating out the contamination into these three subterms is that we will need slightly different arguments to bound each of them, although Hanson-Wright and Woodbury are central to all of the arguments. In \cref{sec:cnl-cnf-upper-bound} we prove the upper bound on $\cnl + \cnf$; in \cref{sec:cnl-cnf-lower-bound} we prove the lower bound. Finally, in \cref{sec:cnu-bounds} we bound $\cnu$. After putting these bounds together, we will obtain the main bounds on the contamination, which we restate here for reference.

\contaminationacc*

\subsection{Upper bounding the contamination from label-defining+favored features}\label{sec:cnl-cnf-upper-bound}
In this section, we upper bound the contamination coming from the $s-2$ favored features which are not $\a$ or $\b$. This culminates in the following lemma.
\begin{lemma}\label{lemma:cnl-cnf-upper-bound}
In the same setting as \cref{prop:contamination-bound-accurate}, we have with probability $1 - O(1/nk)$ that
    \begin{align*}
        \cnl^2 + \cnf^2 \le \const{const_cnf_cnl}^2 \min\qty{1, \mu^{-2}} n^{r-t-1} \log(nsk)^2,
    \end{align*}
    where \const{const_cnf_cnl} is a positive constant.
\end{lemma}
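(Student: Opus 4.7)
The first step is a cosmetic rewrite: since $\lambda_j = \lambda_F$ for all $j \in [s]$,
\begin{align*}
    \cnl^2 + \cnf^2 = \lambda_F \sum_{j \in [s] \setminus \qty{\a, \b}} (\vw_j^\top \Ainv \dely)^2 = \lambda_F \norm{\mP \Xs^\top \Ainv \dely}_2^2,
\end{align*}
where $\mP \in \RR^{(s-2) \times s}$ is the projection that deletes the coordinates indexed by $\a$ and $\b$. To handle the entanglement among $\Ainv$, $\Xs$, and $\dely$, I would invoke the Woodbury identity relative to the favored block $\mA = \Xs \Xs^\top + \Au$, yielding the ``push-through'' identity (analogous to \cref{fact:woodbury})
\begin{align*}
    \Xs^\top \Ainv = (\mI_s + \mH_s)^{-1} \Xs^\top \Auinv, \qquad \mH_s \triangleq \Xs^\top \Auinv \Xs.
\end{align*}
The payoff is that $\Auinv$ depends only on the unfavored features and is therefore independent of both $\Xs$ and $\dely$, placing us precisely in the setting where \cref{lemma:hanson-wright-feature-label-concentration} applies.

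\textbf{Main term.} By \cref{prop:concentration-hat-matrix-simple}, with high probability $(\mI_s + \mH_s)^{-1} = \min\qty{\mu, 1} \mI_s + \mE$ with $\norm{\mE}_2 \le \min\qty{\mu, 1}\, O(n^{-\konst{kappa_hatts_final}})$. The flat part contributes $\min\qty{\mu, 1}\mP \Xs^\top \Auinv \dely$. For each $j \in [s] \setminus \qty{\a, \b}$ symmetry forces $\EE[\vz_j^\top \Auinv \dely \mid \Auinv] = 0$, and the Hanson-Wright concentration radius from \cref{lemma:hanson-wright-feature-label-concentration} is $O(\sqrt{n/k}\, \norm{\Auinv}_2\, \sqrt{\log(nsk)}) = O(n^{(1-t)/2 - p}\sqrt{\log(nsk)})$ after plugging in the spectral bound from \cref{corollary:trace-spectral-norm}. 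A union bound over the $s-2$ relevant indices and an $\ell_2$ sum gives
\begin{align*}
    \norm{\mP \Xs^\top \Auinv \dely}_2 \le O\qty(\sqrt{\lambda_F}\, n^{(r+1-t)/2 - p}\sqrt{\log(nsk)}).
\end{align*}
Squaring, multiplying by $\lambda_F \min\qty{\mu^2, 1}$, and invoking the algebraic identity $\lambda_F^2 \min\qty{\mu^2, 1}\, n^{-2p} = \min\qty{1, \mu^{-2}}\, n^{-2}$ (which follows from $\mu^{-2} = n^{2-2q-2r}$ and $\lambda_F^2 = n^{2p-2q-2r}$) recovers the target bound $\min\qty{1, \mu^{-2}}\, n^{r-t-1}\, O(\log(nsk))$, tighter than the stated $\log(nsk)^2$.

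\textbf{The error term is the main obstacle.} The remaining piece is $\norm{\mP \mE \Xs^\top \Auinv \dely}_2 \le \norm{\mE}_2 \norm{\Xs^\top \Auinv \dely}_2$. This is delicate because $\Xs^\top \Auinv \dely$ still carries the two ``bad'' entries at $j = \a, \b$ whose expectations are $\Theta(\sqrt{\lambda_F}\, n^{1-t-p}\sqrt{\log k})$ by \cref{corollary:asymptotic-concentration}. Applying \cref{prop:worst-case-correlation} with $T = [s]$ gives $\norm{\Xs^\top \Auinv \dely}_2 \le O(\sqrt{\lambda_F}(n^{1-t-p} + n^{(r+1-t)/2-p})\sqrt{\log(nsk)})$, so after multiplying by $\norm{\mE}_2$ the dominant error scale is $O(\min\qty{\mu,1}\sqrt{\lambda_F}\, n^{1-t-p-\konst{kappa_hatts_final}}\sqrt{\log(nsk)})$. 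For this to be absorbed into the main term I need $\konst{kappa_hatts_final} \ge (1-t-r)/2$. The saving grace is that \cref{prop:concentration-hat-matrix-simple} for a block of size $s = n^r$ ultimately comes from Wishart concentration with relative deviation of order $\sqrt{s/n} = n^{(r-1)/2}$, so $\konst{kappa_hatts_final}$ is effectively $(1-r)/2$; the required inequality $(1-r)/2 \ge (1-t-r)/2$ reduces to $t \ge 0$, which always holds in the bi-level ensemble. Combining the main term and the (dominated) error then yields the stated upper bound.
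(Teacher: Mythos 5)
Your overall plan — single Woodbury relative to the whole favored block, then project out the $\a,\b$ rows — is genuinely different from the paper, and you correctly identify the crux: the error matrix $\mE$ in $(\mI_s+\mH_s)^{-1}=\min\{\mu,1\}\mI_s+\mE$ can mix the two large $\a,\b$ entries of $\Xs^\top\Auinv\dely$ into the surviving coordinates. But the step where you dispatch this obstacle has a real gap.

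\textbf{The gap.} You need $\konst{kappa_hatts_final}\ge(1-t-r)/2$ and assert that $\konst{kappa_hatts_final}$ is ``effectively $(1-r)/2$'' because the hat matrix concentration ultimately reduces to a Wishart bound. That is not what \cref{prop:concentration-hat-matrix-simple} gives. Its proof shows $\konst{kappa_hatts_final}=\min\{\konst{kappa_hatts_intermediate},\abs{1-q-r}\}$ — the $\abs{1-q-r}$ arises from the Taylor approximation $\bigl(1+\mu^{-1}(1\pm o(1))\bigr)^{-1}\approx\min\{\mu,1\}$, which degrades exactly when $\mu=n^{q+r-1}$ is close to $1$, independently of any Wishart deviation. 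So as $q+r\to 1$ you have $\konst{kappa_hatts_final}\to 0$, and the required inequality $\konst{kappa_hatts_final}\ge(1-t-r)/2$ fails whenever $t+r<1$. Concretely, take $r=0.5$, $q=0.49$, $p=1.5$, $t=0.1$: then $\abs{1-q-r}=0.01$ but you need $\ge 0.2$, so the error term $\min\{\mu,1\}\,\norm{\mE}_2\,\sqrt{\lambda_F}\,n^{1-t-p}\sqrt{\log}$ polynomially dominates your main term $\min\{\mu,1\}\sqrt{\lambda_F}\,n^{(r+1-t)/2-p}\sqrt{\log}$, and the argument collapses. The single operator-norm bound $\norm{\mP\mE\,\Xs^\top\Auinv\dely}_2\le\norm{\mE}_2\norm{\Xs^\top\Auinv\dely}_2$ is simply too lossy: it makes no use of the fact that the large mass sits in only two coordinates.

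\textbf{How the paper avoids this.} The paper does a \emph{nested} Woodbury instead: first with respect to $R=[s]\setminus\{\a,\b\}$, so that the hat matrix $(\mI_{s-2}+\mH_R)^{-1}$ acts on $\Xr^\top\Arinv\dely$, which never touches the two bad coordinates — here only a \emph{coarse} spectral bound $\mu_1\bigl((\mI_{s-2}+\mH_R)^{-2}\bigr)\le O(\min\{\mu^2,1\})$ is needed, with no decomposition into identity plus error and hence no $\konst{kappa_hatts_final}$ loss. The $\a,\b$ columns are extracted from $\Arinv$ only in a second Woodbury step, whose contribution is bounded \emph{coordinatewise} for each $j$ via Cauchy–Schwarz on three scalar factors, a bound that remains valid uniformly in $q+r$. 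To patch your proof you would either need to reproduce that two-step extraction, or prove a structured bound on the $\a,\b$ columns of $\mE$ (e.g.\ that their restriction to the non-$\a,\b$ rows is $O(\min\{\mu,1\}n^{-(1-r)/2})$ from genuine Wishart isotropy, not from the spectral-gap constant $\konst{kappa_hatts_final}$), which the current write-up does not do.
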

\begin{proof}
Let $\Xr \in \RR^{n \times (s-2)}$ as the weighted feature matrix which includes all of the $s-2$ favored features aside from $\a, \b$. We can then define $\Ar \triangleq \mA - \Xr \Xr^\top$ and $\mH_R = \Xr^\top \Arinv \Xr$. Using Woodbury,  an analogous computation to \cref{fact:woodbury} implies that 
    \begin{align}
        \Xr \Ainv \dely = (\mI_{s-2} + \mH_{R})^{-1} \Xr \Arinv \dely.
    \end{align}
The contamination from all of the $s-2$ favored features that are not $\a$ or $\b$ satisfies
\begin{align*}
    \cnl^2 + \cnf^2 &= \lambda_F \dely^\top \Ainv \sum_{j \in [s] \setminus \qty{\a, \b}} \vw_j\vw_j^\top \Ainv \dely \\
    &= \lambda_F \dely^\top \Ainv \Xr \Xr^\top \Ainv \dely \\
    &= \lambda_F \dely^\top \Arinv \Xr (\mI_{s-2} + \mH_{R})^{-2} \Xr^\top \Arinv \dely.
\end{align*}
Since \cref{prop:concentration-hat-matrix-simple} implies that $\mu_1((\mI_{s-2} + \mH_{R})^{-2}) \le \const{const_hatr_coarse}\min\qty{\mu^2, 1}$ with extremely high probability where \newc\label{const_hatr_coarse} is a positive constant, we know the contamination is with extremely high probability upper bounded by the following quadratic form:
\begin{align}
    &\const{const_hatr_coarse}\min\qty{\mu^2, 1}\lambda_F \dely^\top \Arinv \Xr \Xr^\top \Arinv \dely \\
    &\quad\quad = \const{const_hatr_coarse}\min\qty{\mu^2, 1}\lambda_F^2 \sum_{j \in [s] \setminus \qty{\a, \b}} \ev{\vz_j, \Arinv \dely}^2. \label{eq:sum-subexp}
\end{align}

We still cannot apply Hanson-Wright, because $\Arinv$ is not independent of $\dely$. However, we can use Woodbury again to take out $\za, \zb$ from $\Arinv$.

Define $\Xab = \mqty[\vw_\a & \vw_\b]$ and $\mH_{\a, \b}^{(s)} = \Xab^\top \Asinv \Xab$. Then Woodbury implies that
\begin{align}
    \Arinv = \Asinv - \Asinv \Xab(\mI_2 + \mH_{\a, \b}^{(s)})^{-1}\Xab^\top \Asinv.
\end{align}
Hence 
\begin{align}
    \vz_j^\top \Arinv \dely = \vz_j^\top \Asinv \dely - \vz_j^\top \Asinv \Xab (\mI_2 + \mH_{\a, \b}^{(s)})^{-1} \Xab^\top \Asinv \dely. \label{eq:label-defining-decomposition}
\end{align}

We will use Hanson-Wright and Cauchy-Schwarz to argue that the second term in \cref{eq:label-defining-decomposition} above will be dominated by the first term. Indeed, \cref{corollary:asymptotic-concentration} implies that $\vz_j^\top \Asinv \dely \le \const{const_asymptotic_bilinear}n^{\frac{1-t}{2}-p}\sqrt{\log(nsk)}$ with probability at least $1 - O(1/nsk)$, so it suffices to show that the other term is dominated by $n^{\frac{1-t}{2}-p}$. We will show that its contribution for each $j$ is $\min\qty{1, \mu^{-1}}\Tilde{O}(n^{\frac{1}{2} - t - p})$. By Cauchy-Schwarz, the magnitude of the second term of \cref{eq:label-defining-decomposition} is at most 
\begin{align}
    & \norm{(\mI_2 + \mH_{\a, \b})^{-1}}_2 \norm{\Xab^\top \Asinv \vz_j}_2 \norm{\Xab^\top \Asinv \dely}_2 \\&\quad\quad \le \const{const_label_defining_deviation}\lambda_F \min\qty{\mu, 1} n^{\frac{1}{2} - p}\sqrt{\log(nsk)} \cdot n^{1-t-p}\sqrt{\log k} \\
    &\quad\quad\le \const{const_label_defining_deviation} \min\qty{1, \mu^{-1}} n^{\frac{1}{2} - t - p}\log(nsk),
\end{align}
where \newc\label{const_label_defining_deviation} is a positive constant. In the second line, we have used \cref{prop:concentration-hat-matrix-simple} to upper bound $\norm{(\mI_2 + \mH_{\a, \b}^{(s)})^{-1}}_2 \le O(\min\qty{\mu, 1})$ and we have used  \Cref{thm:hanson-wright-bilinear,prop:au-ak-flat} to deduce that that $\abs{\vz_j^\top \Asinv \vz_a} \le O(n^{\frac{1}{2} - p})\sqrt{\log(nsk)}$ with probability at least $1-O(1/nsk)$. Similarly, we used the scaling from \cref{corollary:asymptotic-concentration} to deduce that $\abs{\vz_\a^\top \Asinv \dely} \le O(n^{1-t-p})\sqrt{\log k}$, and similarly for $\b$. 

Hence $\vz_j^\top \Arinv \dely$ is $O(n^{\frac{1-t}{2}-p})\log(nsk)$ with probability $1 - O(1/nsk)$. By union bounding over $j$ and plugging our upper bound back into \cref{eq:sum-subexp}, we conclude that with probability at least $1 - O(1/nk)$
\begin{align}
    \cnl^2 + \cnf^2 &\le  \const{const_hatr_coarse}\min\qty{\mu^2, 1}\lambda_F^2 n^r \cdot O(n^{1 - t - 2p}) \log(nsk)^2 \\ 
    &= \mu^{-2}\min\qty{\mu^2, 1}O(n^{r-t-1})\log(nsk)^2 \\
    &\le \const{const_cnf_cnl}^2 \min\qty{1, \mu^{-2}} n^{r-t-1} \log(nsk)^2,
\end{align}
where \newc\label{const_cnf_cnl} is a positive constant, concluding the proof.
\end{proof}

\subsection{Lower bounding the contamination from label-defining+favored features}\label{sec:cnl-cnf-lower-bound}
In this section, we upper bound the contamination coming from the $s-2$ favored features which are not $\a$ or $\b$. This culminates in the following lemma.
\begin{lemma}\label{lemma:cnl-cnf-lower-bound}
 In the same setting as \cref{prop:contamination-bound-accurate}, if $t > 0$, with probability at least $1 - O(1/nk)$, we have 
    \begin{align*}
        \cnl^2 + \cnf^2 \ge \const{const_cnl_cnf_lower}^2 \min\qty{1, \mu^{-2}} n^{r-t-1},
    \end{align*}
    where \const{const_cnl_cnf_lower} is a positive constant.
\end{lemma}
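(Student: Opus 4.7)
The plan is to mirror the upper-bound argument of \cref{lemma:cnl-cnf-upper-bound} but with the inequalities reversed. Starting from the same Woodbury identity $\Xr^\top \Ainv \dely = (\mI_{s-2} + \mH_R)^{-1} \Xr^\top \Arinv \dely$ and now using the \emph{lower} bound on the minimum eigenvalue of $(\mI_{s-2} + \mH_R)^{-1}$ provided by \cref{prop:concentration-hat-matrix-simple} (namely $\ge c\min\{\mu,1\}$ with the stated probability), I get
\[
\cnl^2 + \cnf^2 \;\ge\; c^2\, \min\{\mu^2,1\}\, \lambda_F^2 \sum_{j \in R_2} (\vz_j^\top \Arinv \dely)^2,
\]
where the sum is restricted to $R_2 \triangleq [s]\setminus[k]$, the set of favored but not label-defining indices (of which there are $s-k = \Theta(n^r)$). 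Restricting to $R_2$ is crucial, because for each $j \in R_2$, the feature vector $\vz_j$ is independent of both $\Arinv$ (whose constituent features are in $\{\a,\b\}\cup([d]\setminus[s])$) and $\dely$ (which depends only on features in $[k]$).

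Conditioning on $\Arinv$ and $\dely$, the family $\{\vz_j^\top \Arinv \dely\}_{j \in R_2}$ is i.i.d.\ $N(0,\|\Arinv\dely\|_2^2)$, so its squared sum follows $\|\Arinv\dely\|_2^2\cdot \chi^2_{s-k}$. Since $s-k = \Theta(n^r)$, Laurent--Massart concentration yields $\sum_{j\in R_2}(\vz_j^\top \Arinv \dely)^2 \ge \tfrac{1}{2}(s-k)\|\Arinv\dely\|_2^2$ with probability at least $1 - \exp(-\Omega(n^r))$. What remains is the main obstacle: lower bounding $\|\Arinv\dely\|_2^2 \gtrsim n^{1-t-2p}$.

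The naive bound $\|\Arinv\dely\|_2 \ge \|\dely\|_2/\|\Ar\|_2$ is off by a factor of $\mu^2$ when regression succeeds, because $\Ar$ has two spiked eigenvalues of order $\lambda_F n \gg n^p$. Instead, I apply Woodbury once more, $\Arinv = \Asinv - \Asinv\Xab(\mI_2 + \mH_{\a,\b}^{(s)})^{-1}\Xab^\top \Asinv$, and argue that the rank-two correction applied to $\dely$ is negligible compared to $\Asinv\dely$. Cauchy--Schwarz together with $\|(\mI_2 + \mH_{\a,\b}^{(s)})^{-1}\|_2 \le O(\min\{1,\mu\})$ (\cref{prop:concentration-hat-matrix-simple}), $\|\Xab^\top \Asinv \dely\|_2 \le \sqrt{\lambda_F}\,\tilde O(n^{1-t-p})$ (\cref{corollary:asymptotic-concentration}), and $\|\Asinv\Xab\|_2 \le \|\Asinv\|_2\|\Xab\|_2 \le \sqrt{\lambda_F}\,O(n^{1/2-p})$ (from the flat spectrum of $\As = \Au$ in \cref{prop:au-ak-flat}) yield
\[
\|(\Arinv - \Asinv)\dely\|_2 \;\le\; \tilde O\bigl(\min\{1,\mu\}\, n^{3/2 - t - q - r - p}\bigr).
\]
On the other hand, since $\Asinv$ is flat, $\|\Asinv\dely\|_2 \ge \|\dely\|_2/\|\As\|_2 \gtrsim n^{-p}\sqrt{n/k} = n^{(1-t)/2 - p}$, using a Bernoulli Chernoff bound to ensure $\|\dely\|_2^2 \gtrsim n/k$. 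Dividing the two bounds, the relative perturbation is $\tilde O(n^{-t/2})$ in both the regression-works and regression-fails regimes, which vanishes precisely when $t > 0$. The reverse triangle inequality then gives $\|\Arinv\dely\|_2 \ge \tfrac{1}{2}\|\Asinv\dely\|_2$, so $\|\Arinv\dely\|_2^2 \gtrsim n^{1-t-2p}$.

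Chaining everything produces
\[
\cnl^2 + \cnf^2 \;\gtrsim\; \min\{\mu^2,1\}\, \lambda_F^2\, n^r\, n^{1-t-2p} \;=\; \min\{\mu^2,1\}\, n^{r+1-t-2(q+r)} \;=\; \min\{1,\mu^{-2}\}\, n^{r-t-1},
\]
and a union bound over the hat-matrix, chi-squared, and Bernoulli concentration events delivers the stated $1-O(1/nk)$ probability. The main technical obstacle is contained in the third paragraph: bounding the rank-two Woodbury correction $(\Arinv - \Asinv)\dely$ tightly enough that it does not swamp $\Asinv\dely$, which is precisely where the $t > 0$ hypothesis enters.
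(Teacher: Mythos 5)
Your proposal is correct and arrives at the same rate, but it takes a genuinely different (and arguably cleaner) route from the paper's own proof. Both start from the same Woodbury reduction, hat-matrix concentration, and restriction to $j \in [s]\setminus[k]$. From there the paper applies Woodbury \emph{per coordinate} $\vz_j^\top\Arinv\dely = \vz_j^\top\Asinv\dely + \mathrm{err}_j$, shows each error is negligible, and then invokes Bernstein's inequality on the sum $\sum_j (\vz_j^\top\Asinv\dely)^2$, treating the summands as conditionally subexponential with conditional mean $\norm{\Asinv\dely}_2^2$; that mean is in turn lower bounded using a Hanson--Wright concentration argument for the quadratic form $\dely^\top\mA_{-s}^{-2}\dely$ around $\frac{2}{k}\Tr(\mA_{-s}^{-2})$. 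You instead (i) exploit that the $\vz_j$ for $j\in[s]\setminus[k]$ are honestly Gaussian and independent of $(\Arinv,\dely)$, reducing to scaled $\chi^2_{s-k}$ concentration with failure probability $e^{-\Omega(n^r)}$ rather than Bernstein on subexponentials; (ii) apply Woodbury once at the level of the vector $\Arinv\dely$ and bound the rank-two perturbation of its norm, rather than per coordinate; and (iii) lower bound $\norm{\Asinv\dely}_2$ elementarily through $\sigma_{\min}(\Asinv)\norm{\dely}_2$ plus a binomial Chernoff bound on $\norm{\dely}_2^2$, instead of a Hanson--Wright argument. The two routes are equivalent in strength; yours is shorter and avoids a Bernstein step, while the paper's is the one that would generalize if the $\vz_j$ were only subgaussian rather than Gaussian. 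Your arithmetic for the relative perturbation being $\tilde O(\min\{\mu^{-1},1\}n^{-t/2})$, and for the final identity $\min\{\mu^2,1\}\lambda_F^2 n^{r+1-t-2p} = \min\{1,\mu^{-2}\}n^{r-t-1}$, both check out.
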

\begin{proof}
Following the beginning of the proof of \cref{lemma:cnl-cnf-upper-bound} and what we know about the flatness of the spectra of hat matrices from \cref{prop:concentration-hat-matrix-simple}, we can deduce that there is some positive constant \newc\label{const_hatr_lower} such that with extremely high probability 
\begin{align}
    \cnl^2 + \cnf^2 \ge \const{const_hatr_lower} \min\qty{\mu^2, 1} \lambda_F^2 \sum_{j \in [s] \setminus \qty{\a, \b}} \ev{\vz_j, \Arinv \dely}^2. \label{eq:cnl-cnf-lower-bound-coupling}
\end{align}
We will further lower bound this by throwing out all label-defining $j$. In other words, the goal now is to lower bound
\begin{align}
    \sum_{j \in [s] \setminus [k]} \ev{\vz_j, \Arinv \dely}^2 = \ev{\Zf^\top \Arinv \dely, \Zf^\top \Arinv \dely}. \label{eq:new-cnl-cnf-lower-bound}
\end{align}
The main idea is to use Bernstein's inequality, but unfortunately $\Arinv$ is not independent of $\dely$, so we will again resort to Woodbury to take out $\za$ and $\zb$. As in the proof for upper bounding the favored contamination, we have $\Xab = \mqty[\vw_\a & \vw_\b]$ and $\mH_{\a, \b}^{(s)} = \Xab^\top \Asinv \Xab$. Then we can deduce from another application of Woodbury that 
\begin{align}
    \vz_j^\top \Arinv \dely = \vz_j^\top \Asinv \dely - \vz_j^\top \Asinv \Xab (\mI_2 + \mH_{\a, \b})^{-1} \Xab^\top \Asinv \dely. 
\end{align}
Again, we can argue that with probability $1 - O(1/nsk)$, the second term is upper bounded in magnitude by \begin{align}
    \min\qty{1, \mu^{-1}} n^{\frac{1}{2} - t - p}\log(nsk) = \min\qty{1, \mu^{-1}} O(n^{\frac{1-t}{2} - p} \cdot n^{-\konst{kappa_bernstein_error}}),
\end{align}
where \newk\label{kappa_bernstein_error} is a positive constant 
because $t > 0$. Since Hanson-Wright (\cref{corollary:asymptotic-concentration}) implies that $\vz_j^\top \Asinv \dely = \Tilde{O}(n^{\frac{1-t}{2} - p})$, this implies that $\vz_j^\top \Arinv \dely = \Tilde{O}(n^{\frac{1-t}{2} - p})$, and similarly for $\b$. Hence we have 
\begin{align}
    (\vz_j^\top \Arinv \dely)^2 &= (\vz_j^\top \Asinv \dely + \min\qty{1, \mu^{-1}}O(n^{\frac{1-t}{2} -p -\konst{kappa_bernstein_error}}))^2 \\
    &= (\vz_j^\top \Asinv \dely)^2 + \min\qty{1, \mu^{-1}}O(n^{\frac{1-t}{2} - p -\konst{kappa_bernstein_error}})\Tilde{O}(n^{\frac{1-t}{2} - p}) \\
    &= (\vz_j^\top \Asinv \dely)^2 + \min\qty{1, \mu^{-1}}o(n^{1-t-2p}). \label{eq:final-cnl-cnf-decomp}
\end{align}

We are now in a position to analyze the contribution from the first term of \cref{eq:final-cnl-cnf-decomp} to \cref{eq:new-cnl-cnf-lower-bound}: its contribution is $\ev{\Zf^\top \Asinv \dely, \Zf^\top \Asinv \dely}$. This does have all the independence required to apply Bernstein, because $(\Asinv, \dely)$ are independent of $\Zf$. Hence conditioned on $\Asinv$ and $\dely$, $\ev{\Zf^\top \Asinv \dely, \Zf^\top \Asinv \dely}$ is a sum of $s-k$ subexponential variables, and by Lemma 2.7.7 of \citet{vershynin2018high} each of these random variables conditionally has subexponential norm at most $\norm{\Asinv \dely}_2^2$ and conditional mean $\ev{\Asinv \dely, \Asinv \dely}$. 

We can use Hanson-Wright (\cref{thm:bounded-hanson-wright-bilinear}) to bound both of these quantities. Indeed, it implies that with probability at least $1 - O(1/nk)$, 
\begin{align}
    \norm{\Asinv \dely}_2^2 \le O(n^{1-t-2p}).
\end{align}
Let us now compute the Hanson-Wright bound for $\ev{\Asinv \dely, \Asinv \dely}$. Note that $\Asinv$ is independent of $\dely$, so we can condition on $\Asinv$ and conclude that with probability at least $1 - O(1/nk)$
\begin{align}
    \ev{\Asinv \dely, \Asinv \dely} &\ge \EE[\ev{\Asinv \dely, \Asinv \dely} | \Asinv] - O(n^{\frac{1-t}{2}})\norm{\mA_{-s}^{-2}}_2 \sqrt{\log(nk)} \\
    &= \Tr(\mA_{-s}^{-2} \EE[\dely \dely^\top]) - O(n^{\frac{1-t}{2}})\norm{\mA_{-s}^{-2}}_2 \sqrt{\log(nk)} \\
    &= \frac{2}{k}\Tr(\mA_{-s}^{-2}) - O(n^{\frac{1-t}{2}})\norm{\mA_{-s}^{-2}}_2 \sqrt{\log(nk)},
\end{align}
where we have used the fact that $\dely$ is mean zero and $\dely[i]^2 \sim \Ber(\frac{2}{k})$.

From \cref{prop:au-ak-flat}, we obtain the scaling for $\Tr(\mA_{-s}^{-2})$ and $\norm{\mA_{-s}^{-2}}_2$. This implies that with probability at least $1 - O(1/nk)$
\begin{align}
\ev{\Asinv \dely, \Asinv \dely} &\ge \Omega(n^{1-t-2p}) - O(n^{\frac{1-t}{2} - 2p})\sqrt{\log(nk)} \\
&\ge \Omega(n^{1-t-2p}).
\end{align}
as $t < 1$.

Bernstein and the union bound implies that with probability at least $1 - O(1/nk)$, 
\begin{align}
    \ev{\Zf^\top \Asinv \dely, \Zf^\top \Asinv \dely} &\ge \qty(\sum_{j \in [s] \setminus [k]} \Omega(n^{1-t-2p})) - O(n^{\frac{r}{2} + 1 - t - 2p})\\
    &\ge \Omega(n^{r + 1 - t - 2p}),
\end{align}
as $r > 0$.

To wrap up, we will need to upper bound the contribution of the error term in \cref{eq:final-cnl-cnf-decomp}. Its contribution from summing over $j \in [s] \setminus [k]$ is $\min\qty{1, \mu^{-2}} o(n^{r+1-t-2p})$, which is negligible compared to the Bernstein term, which as we just proved is $\Omega(n^{r+1-t-2p})$. Hence $\ev{\Zf^\top \Arinv \dely, \Zf^\top \Arinv \dely} \ge \Omega(n^{r+1-t-2p})$ with high probability, and inserting this back into our lower bound \cref{eq:cnl-cnf-lower-bound-coupling}, we see that 
\begin{align}
    \cnl^2 + \cnf^2 &\ge \const{const_hatr_lower} \min\qty{\mu^2, 1} \lambda_F^2 \Omega(n^{r+1-t-2p}) \\
    &=  \const{const_hatr_lower} \mu^{-2} \min\qty{\mu^2, 1} \Omega(n^{r-t-1}) \\
    &\ge \const{const_cnl_cnf_lower}^2 \min\qty{1, \mu^{-2}} n^{r-t-1},
\end{align}
where \newc\label{const_cnl_cnf_lower} is a positive constant.
\end{proof}

\subsection{Bounding the unfavored contamination}\label{sec:cnu-bounds}
Finally, we wrap up the section by proving matching upper and lower bounds for the unfavored contamination $\cnu$. 
\begin{lemma}[Bounding unfavored contamination]\label{prop:bound-unfavored-contamination}
In the same setting as \cref{prop:contamination-bound-accurate}, if $t > 0$ or $t = 0$ and $q+r>1$, with probability $1-O(1/nk)$, the contamination from the unfavored features satisfies 
\[
\cnu^2 = \const{const_quadratic_sparse}^2(1 \pm o(1))n^{1-t-p},
\]
where \const{const_quadratic_sparse} is a positive constant. 

On the other hand, if $t = 0$ and $q+r < 1$, then with probability $1 - O(1/nk)$, the unfavored contamination satisfies
\[
\cnu^2 \le \const{const_cnu_upper}^2 n^{1-t-p}\log(nsk).
\]
\end{lemma}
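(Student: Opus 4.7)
The plan is to expand $\cnu^2 = \lambda_U \dely^\top \Ainv \Au \Ainv \dely$ using the algebraic identity $\Au = \mA - \Xs \Xs^\top$ (where $\Xs$ denotes the $n \times s$ matrix of all weighted favored features), which gives $\Ainv \Au \Ainv = \Ainv - \Ainv \Xs \Xs^\top \Ainv$ and hence
\[
    \cnu^2 = \lambda_U \bigl(\dely^\top \Ainv \dely - \norm{\Xs^\top \Ainv \dely}_2^2\bigr).
\]
Applying Woodbury with respect to $\As$, setting $\mH := \Xs^\top \Asinv \Xs$ and $\vw := \Xs^\top \Asinv \dely$, the push-through identity yields $\Xs^\top \Ainv = (\mI_s + \mH)^{-1} \Xs^\top \Asinv$ and $\Ainv = \Asinv - \Asinv \Xs (\mI_s + \mH)^{-1} \Xs^\top \Asinv$, which together simplify the above to
\[
    \cnu^2 = \lambda_U \Bigl[\dely^\top \Asinv \dely - \vw^\top \bigl((\mI_s + \mH)^{-1} + (\mI_s + \mH)^{-2}\bigr) \vw\Bigr].
\]
Since $\Asinv$ is independent of $\dely$ (the latter depending only on the label-defining features in $[k] \subseteq [s]$ via \cref{assumption:1sparse}), this decomposition exposes pieces that can be attacked with our Hanson-Wright inequality and earlier utility bounds.

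For the leading term $\lambda_U \dely^\top \Asinv \dely$, I would condition on $\Asinv$ and apply our soft-sparsity Hanson-Wright (\cref{thm:bounded-hanson-wright-bilinear}). Since $\dely$ has soft sparsity at level $\pi = 2/k$ and $\Asinv$ has flat spectrum of order $n^{-p}$ by \cref{prop:au-ak-flat}, the conditional mean equals $(2/k)\tr(\Asinv) = (1 \pm o(1))(2/c_k)\, n^{1-t-p}$ via \cref{corollary:trace-spectral-norm}. The Hanson-Wright fluctuation is of order $\sqrt{1/k}\,\norm{\Asinv}_F \sqrt{\log(nk)} = \Tilde{O}(n^{(1-t)/2 - p})$, which is $o(n^{1-t-p})$ for every $t < 1$. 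Combined with $\lambda_U = 1 - o(1)$, this pins down the leading term at $(1 \pm o(1))(2/c_k)\, n^{1-t-p}$ with probability at least $1 - O(1/nk)$.

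The correction $\lambda_U \vw^\top \bigl[(\mI_s + \mH)^{-1} + (\mI_s + \mH)^{-2}\bigr] \vw$ would be bounded spectrally: \cref{prop:concentration-hat-matrix-simple} yields $\norm{(\mI_s + \mH)^{-1}}_2 \le O(\min\{1, \mu\})$. For $\norm{\vw}_2^2 = \lambda_F \norm{\Zs^\top \Asinv \dely}_2^2$, \cref{prop:worst-case-correlation} applied to $T = [s]$ gives $\norm{\vw}_2^2 \le \Tilde{O}(\mu^{-1} n^{1-2t-p} + n^{1-q-t-p})$. Multiplying through, the correction is at most $\Tilde{O}\bigl(\min\{\mu^{-1}, 1\}\, n^{1-2t-p} + \min\{1, \mu\}\, n^{1-q-t-p}\bigr)$, which is polynomially smaller than the leading $n^{1-t-p}$ whenever $t > 0$ (the first term by a factor $n^{-t}$ and the second by $n^{-q}$, using $q > 0$ from the bi-level constraints).

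For $t > 0$, combining these two pieces yields $\cnu^2 = \const{const_quadratic_sparse}^2 (1 \pm o(1))\, n^{1-t-p}$ with $\const{const_quadratic_sparse} := \sqrt{2/c_k}$. For $t = 0$, the sparsity parameter $\pi = 2/c_k$ no longer decays, so the Hanson-Wright fluctuation and the Woodbury correction are only guaranteed to be $\log$-smaller rather than polynomially smaller than the leading term; the analysis then collapses to the one-sided bound $\cnu^2 \le \lambda_U \dely^\top \Asinv \dely \le O(n^{1-p} \log(nsk))$, which is absorbed into the stated form via the trivial inequality $\min\{1, \mu^{-1}\} \le 1$. The main technical obstacle is tracking the polynomial exponents of the Woodbury-induced corrections precisely enough that they beat the leading $n^{1-t-p}$ order by a positive power of $n$; this polynomial gap is what upgrades the generic one-sided inequality into the $(1 \pm o(1))$ equality asserted for $t > 0$.
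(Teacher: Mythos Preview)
Your approach is essentially the paper's: both reduce $\cnu^2$ via Woodbury to the same identity $\lambda_U\bigl[\dely^\top \Asinv \dely - \vw^\top((\mI_s+\mH_s)^{-1}+(\mI_s+\mH_s)^{-2})\vw\bigr]$ --- you via the shortcut $\Au = \mA - \Xs\Xs^\top$, the paper by expanding $(\Auinv - \Auinv\mM_s\Auinv)\Au(\Auinv - \Auinv\mM_s\Auinv)$ --- and then control the leading quadratic form by the soft-sparsity Hanson-Wright and the correction by \cref{prop:concentration-hat-matrix-simple} together with \cref{prop:worst-case-correlation}, exactly as the paper does.

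One caveat on the $t=0$ clause: your final sentence runs the inequality $\min\{1,\mu^{-1}\}\le 1$ in the wrong direction --- an $O(n^{1-p}\log)$ upper bound cannot be ``absorbed'' into $C\min\{1,\mu^{-1}\}n^{1-p}\log$ when $\mu^{-1}<1$, since the latter is the smaller quantity. The paper's own proof likewise only establishes the $\Tilde O(n^{1-p})$ bound in that regime (and the downstream \cref{prop:contamination-bound-accurate} carries no $\min\{1,\mu^{-1}\}$ on the unfavored piece), so the extra factor in the lemma statement appears to be a slip there rather than a gap in your argument.
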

\begin{proof}
By Woodbury, we have 
\begin{align}
    \Ainv = \Auinv - \Auinv \mM_s \Auinv,
\end{align}
where 
\begin{align}
    \mM_s \triangleq \Xs(\mI_s + \mH_s)^{-1} \Xs^\top,
\end{align}
and $\mH_s \triangleq \Xs^\top \Asinv \Xs$.

Now we have 
\begin{align}
    \cnu^2 &= \dely^\top \Ainv \Au \Ainv \dely \\
    &= \dely^\top (\Auinv - \Auinv \mM_s \Auinv)\Au(\Auinv - \Auinv \mM_s \Auinv) \dely \\
    &= \dely^\top \qty(\Auinv - 2\Auinv \mM_s \Auinv + \Auinv \mM_s \Auinv \mM_s \Auinv) \dely.
\end{align}

By \cref{thm:bounded-hanson-wright-bilinear}, we have with probability at least $1-O(1/nk)$
\begin{align}
    \dely^\top \Auinv \dely = \const{const_quadratic_sparse}^2(1 \pm o(1))n^{1-t-p},
\end{align}
where \newc\label{const_quadratic_sparse} is a positive constant.

On the other hand, we have that 
\begin{align*}
    \mM_s \Auinv \mM_s &= \Xs (\mI_s+ \mH_s)^{-1} \Xs^\top \Auinv \Xs (\mI_s+ \mH_s)^{-1} \Xs^\top \\
    &= \Xs (\mI_s+ \mH_s)^{-1} \mH_s (\mI_s+ \mH_s)^{-1} \Xs^\top \\
    &= \Xs (\mI_s - (\mI_s + \mH_s)^{-1})(\mI_s+ \mH_s)^{-1} \Xs^\top \\
    &= \Xs ((\mI_s + \mH_s)^{-1} - (\mI_s + \mH_s)^{-2})\Xs^\top
\end{align*}
Due to \cref{prop:concentration-hat-matrix-simple}, $\mu_i((\mI_s + \mH_s)^{-1}) = \min\qty{\mu, 1}(1 \pm o(1))$ for all $i$ with very high probability. Hence to handle the error terms that are not $\dely^\top \Auinv \dely$, it suffices to asymptotically bound $\dely^\top \Auinv \mM_s \Auinv \dely$. In turn, we can couple this to the quadratic form $\min\qty{\mu, 1}(1 \pm o(1)) \dely^\top \Auinv \Xs \Xs \Auinv \dely$. By \cref{prop:worst-case-correlation}, we have with probability at least $1 - O(1/nk)$
\begin{align}
    &\min\qty{\mu, 1}(1 \pm o(1)) \dely^\top \Auinv \Xs \Xs \Auinv \dely \\
    &\quad\quad \le   \const{const_norm_bilinear}^2\lambda_F \min\qty{\mu, 1}(1 \pm o(1)) (n^{2-2t-2p} + n^{r+1-t-2p})\log(nsk) \\
    &\quad\quad \le \const{const_norm_bilinear}^2 \min\qty{1, \mu^{-1}}(1 \pm o(1))(n^{1-2t-p} + n^{r-t-p})\log(nsk) \label{eq:cnu-error-term}
\end{align}

For $t > 0$, we claim that the term in \cref{eq:cnu-error-term} is $o(n^{1-t-p})$, because $1-2t-p < 1-t-p$ and $r-t-p < 1-t-p$. Hence if $t > 0$ then by union bound we have with probability at least $1-O(1/nk)$ that
\begin{align}
    \cnu^2 &= \const{const_quadratic_sparse}^2(1 \pm o(1))n^{1-t-p},
\end{align}
as desired.
On the other hand, if $t = 0$ and $q + r > 1$, then $\min\qty{1, \mu^{-1}} \log(nsk) = \mu^{-1} \log(nsk) = o(1)$.
In this case, the deviation term is still negligible compared to $n^{1-t-p}$, and the above bound still holds.

Hence, we only have an issue if $t = 0$ and $q + r < 1$, so that $\min\qty{1, \mu^{-1}} = 1$. 
In this case, the deviation term is $\Tilde{O}(n^{1-2t-p}) = \Tilde{O}(n^{1-t-p})$. 
However, this won't affect the fact that the upper bound on contamination will still be $\Tilde{O}(n^{1-t-p})$. 
In summary, even if $t = 0$, we have  
\begin{align}
    \cnu^2 \le \const{const_cnu_upper}^2 n^{1-t-p}\log(nsk),
\end{align}
where \newc\label{const_cnu_upper} is an appropriately defined positive constant. 

It turns out we don't have to worry about this edge case at all for the lower bound on $\cnu$, because the stated conditions for misclassification imply that $t > 0$ anyway. This completes the proof of the lemma.
\end{proof}
\section{Obtaining tight misclassification rate}\label{sec:correlation-bounds}
In this section, we will prove \cref{prop:correlation-bound-tight}.
Let us restate the main proposition and sketch out its proof more formally.


\correlation*
\begin{proof}[Proof sketch]
Note that the $\Zb$'s that must outcompete the decaying survival to contamination ratio are jointly Gaussian, as they are projections of a standard Gaussian vector $\xtest \in \RR^d$. Hence if we want to study the probability that $\max_\b \Zb$ outcompetes $n^{-u}$, we have to understand the correlation structure of the $\Zb$'s.

We will argue that for $\b, \gamma \in [k]$ with $\a, \b, \gamma$ pairwise distinct, the correlation between $\Zb$ and $Z^{(\gamma)}$ is $\frac{1}{2} \pm o(1)$ with high probability. To that end, we want to look at the correlation (inner product) between the vectors $\qty{\lambda_j\hhat_{\a, \b}[j]}$ for $j \not\in \qty{\a, \b}$ and $\qty{\lambda_j\hhat_{\a, \gamma}[j]}$ for $j\not\in \qty{\a, \gamma}$. However, note that by independence of the components of $\xtest$ from every other random variable and the fact that they are mean zero, we have
\[
\EE[\hhat_{\a, \b}[\gamma]\xtest[\gamma] \hhat_{\a, \gamma}[\b] \xtest[\b]] = 0.
\]
Hence it suffices to look at the correlation for $j \not\in \qty{\a, \b, \gamma}$. 

We assume WLOG that $\a = 1, \b = 2$, $\gamma = 3$. Let
\[
\lambdaab \triangleq\diag(1 - \1_{j = \a} - \1_{j = \b})_{j \in [d]} \circ \diag(\lambda_j)_{j \in [d]} \in \RR^{d \times d} 
\]
represent the diagonal matrices containing the squared feature weights with indices $\a, \b$ zeroed out.
Next, let $\vv_{\a, \b} \in \RR^{d}$ denote the vector with $\vv_{\a, \b}[\a] = \vv_{\a, \b}[\b] = 0$ and $\vv_{\a, \b}[j] = \lambda_j \hhat_{\a, \b}[j] $ for $j \in [d], j\not\in \qty{\a, \b}$. Hence $\vv_{\a, \b} = \lambdaab^{1/2} (\fhat_{\a} - \fhat_{\b})$. Since $Z^{(\b)} = \ev{\vv_{\a, \b}, \xtest}$, in order to analyze the correlations between $Z^{(\b)}$ and $Z^{(\gamma)}$, it suffices to analyze $\vv_{\a, \b}$. Indeed, we will show that the weighted halfspaces $\lambdaab^{1/2} \fhat_{\a} \in \RR^d$ and $\lambdaab^{1/2} \fhat_{\b} \in \RR^d$ are asymptotically orthogonal.

In other words, we need to show that 
\[
\frac{\ev{\lambdaab^{1/2} \fhat_{\a}, \lambdaab^{1/2} \fhat_{\b}}}{\norm{\lambdaab^{1/2} \fhat_{\a}}_2\norm{\lambdaab^{1/2} \fhat_{\b}}_2} = o(1)
\]
with probability at least $1-O(1/nk)$; we can then union bound against all choices of $\b$. This is the most technically involved part of the proof, and is the content of \cref{prop:asymptotic-orthogonal}.

This in turn will imply (see \cref{lemma:almost-orthogonal-correlation}) that the maximum (and minimum) correlation between the $\vv_{\a, \b}$ for different $\b$ is $\frac{1}{2} \pm o(1)$. Let $(\ol{Z}_\b)_{\b \in [k], \b \neq \a}$ be equicorrelated gaussians with correlation $\ol{\rho} = \frac{1}{2} + o(1)$, and $(\ul{Z}_\b)_{\b \in [k], \b \neq \a}$ be equicorrelated gaussians with correlation $\ul{\rho} = \frac{1}{2} - o(1)$. By Slepian's lemma, for any $u > 0$, the probability of $\max_\b \Zb$ losing to $n^{-u}$ is sandwiched as 
\begin{align*}
    \Pr[\max_\b \ul{Z}_\b \le n^{-u}] \le \Pr[\max_\b \Zb \le n^{-u}] \le \Pr[\max_\b \ol{Z}_\b \le n^{-u}],
\end{align*}
where we have adopted the shorthand $\max_\b$ to denote $\max_{\b \in [k], \b \neq \a}$.

Theorem 2.1 of \citet{pinasco2021orthant} shows that jointly gaussian vectors in $\RR^k$ with equicorrelation $\rho$ lie in the positive orthant with probability $\Theta(k^{1 - 1/\rho})$. In particular, applied to $\ol{Z}_\b$, with correlation $\ol{\rho} = \frac{1}{2} + o(1)$, we find that 
\begin{align*}
    \Pr[\max_\b \ol{Z}_\b \le 0] = \Theta(k^{-1 + o(1)}),
\end{align*}
and similarly for $\ul{Z}_\b$. Anticoncentration for Gaussian maxima \citep[Corollary~1]{chernozhukov2015comparison} implies that we can transfer over the bound on $\Pr[\max_\b \ol{Z}_\b \le 0]$ to a bound on $\Pr[\max_\b \ol{Z}_\b \le n^{-u}]$ to show that for every $\epsilon > 0$, we have 
\begin{align}
    \Theta(k^{-1 - o(1)}) - \epsilon \le \Pr[\max_\b Z^{(\b)} \le n^{-u}] \le \Theta(k^{-1 + o(1)}) + \epsilon
\end{align}
for sufficiently large $n$. Taking the complement of the above event concludes the proof.
\end{proof}

\subsection{Main results for tight misclassification rates}
The main result in this section is the following proposition, which states that the halfspace predictions are asymptotically orthogonal. Its proof is deferred to the subsequent sections.
\begin{proposition}\label{prop:asymptotic-orthogonal}
Assume we are in the bi-level ensemble model (\cref{def:bilevel}), the true data generating process is 1-sparse (Assumption~\ref{assumption:1sparse}), and the number of classes scales with $n$ (i.e. $t > 0$).

For any distinct $\a, \b \in [k]$, with probability at least $1-O(1/nk)$, we have 
    \[
\frac{\ev{\lambdaab^{1/2} \fhat_{\a}, \lambdaab^{1/2} \fhat_{\b}}}{\norm{\lambdaab^{1/2} \fhat_{\a}}_2\norm{\lambdaab^{1/2} \fhat_{\b}}_2} = o(1).
\]
\end{proposition}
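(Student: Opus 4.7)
Using the closed form $\fhat_m = (\mX^w)^\top \Ainv \vy_m$, I would first rewrite both quantities as quadratic and bilinear forms of the centered label vectors:
\[
\ev{\lambdaab^{1/2} \fhat_\a, \lambdaab^{1/2} \fhat_\b} = \ya^\top \Ainv \mB \Ainv \yb, \qquad \norm{\lambdaab^{1/2} \fhat_m}_2^2 = \vy_m^\top \Ainv \mB \Ainv \vy_m,
\]
where $\mB \triangleq \sum_{j \notin \{\a,\b\}} \lambda_j^2 \vz_j \vz_j^\top$. The plan then has two parts: lower bound the two norms by transporting the contamination analysis from $\dely$ to each $\vy_m$, and upper bound the cross term by exploiting the extra factor $1/k$ that arises because $\EE[\ya[i] \yb[i]] = -1/k^2$ is $k$ times smaller than $\EE[\ya[i]^2]$.

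\textbf{Lower bound on the norms.} The form $\vy_m^\top \Ainv \mB \Ainv \vy_m$ is structurally identical to $\cn_{\a,\b}^2 = \dely^\top \Ainv \mB \Ainv \dely$ handled in \cref{sec:contamination}: both $\vy_m$ and $\dely$ are centered, bounded by $1$ in absolute value, and have per-entry variance $\Theta(1/k)$, i.e.~soft sparsity at level $\pi = O(1/k)$ in the sense required by \cref{thm:bounded-hanson-wright-bilinear}. The Woodbury-plus-Hanson--Wright lower bounds of \cref{lemma:cnl-cnf-lower-bound} and \cref{prop:bound-unfavored-contamination} therefore transfer essentially verbatim (only the Bernstein second-moment constant changes), giving with probability $1 - O(1/nk)$ that
\[
\norm{\lambdaab^{1/2} \fhat_m}_2^2 \ge \Omega\!\qty(\min\qty{1, \mu^{-2}} n^{r-t-1} + n^{1-t-p}), \quad m \in \{\a,\b\}.
\]

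\textbf{Upper bound on the cross term.} Splitting $\mB = \mB_F + \mB_U$ into favored and unfavored pieces, I would handle each contribution to $\ya^\top \Ainv \mB \Ainv \yb$ in parallel, using Woodbury to expose a matrix independent of $(\ya,\yb)$ that can be fed into \cref{thm:bounded-hanson-wright-bilinear}. For the unfavored piece, the identity $\Ainv = \Asinv - \Asinv \mM_s \Asinv$ together with the push-through $\Xs^\top \Ainv = (\mI_s + \mH_s)^{-1} \Xs^\top \Asinv$ from \cref{fact:woodbury} and the flatness of $(\mI_s + \mH_s)^{-1}$ from \cref{prop:concentration-hat-matrix-simple} reduces $\ya^\top \Ainv \mB_U \Ainv \yb$ to $\lambda_U \, \ya^\top \Asinv \yb$ up to Cauchy--Schwarz residuals that \cref{corollary:asymptotic-concentration} and \cref{prop:worst-case-correlation} show to be negligible. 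Conditioning on $\Asinv$, which is independent of $(\ya,\yb)$, and applying \cref{thm:bounded-hanson-wright-bilinear} with $\pi = O(1/k)$ gives
\[
\ya^\top \Asinv \yb = -\tfrac{1}{k^2}\Tr(\Asinv) \pm \Tilde{O}\!\qty(\tfrac{1}{\sqrt{k}}\norm{\Asinv}_F) = \Tilde{O}\!\qty(\max\{n^{1-2t-p},\; n^{(1-2p-t)/2}\})
\]
by the spectral bounds of \cref{prop:au-ak-flat,corollary:trace-spectral-norm}; for $0 < t < 1$, both exponents are strictly below $1 - t - p$, so this is polynomially smaller than the unfavored lower bound on the norm. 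An analogous argument with a third Woodbury step to peel the label-defining block $\Xk$ out of $\mB_F$ produces a favored cross term of size $\Tilde{O}(\min\qty{1,\mu^{-2}}\, n^{r-2t-1})$, a factor $\Theta(n^{-t})$ smaller than the favored piece of the norm. Dividing the cross-term upper bounds by the norm lower bounds on the favored and unfavored supports separately, and union bounding, concludes the proof.

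\textbf{Main obstacle.} The delicate point is that for $t > 1/3$ the Hanson--Wright fluctuation in $\ya^\top \Asinv \yb$ actually dominates its $-\frac{1}{k^2}\Tr(\Asinv)$ mean; one still wins because even the fluctuation is polynomially smaller than the norm (by $n^{(t-1)/2}$), but this rules out a clean ``compute the expectation'' strategy and forces a case split. The second subtlety is that $\ya$ has only \emph{soft}, not hard, sparsity, so the sparse bilinear Hanson--Wright of \citet{park2022sparse} (and of \cref{thm:sparse-hanson-wright-bilinear-hard}) does not apply to $\ya^\top \Asinv \yb$; our new \cref{thm:bounded-hanson-wright-bilinear} is essential here, as a worst-case Cauchy--Schwarz substitute would cost an extra factor of $\sqrt{n}$ and destroy the $o(1)$ ratio.
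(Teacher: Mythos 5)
Your proposal follows essentially the same route as the paper's proof: represent numerator and denominator as bilinear/quadratic forms, lower-bound the denominator by reusing the contamination lower bounds with $\vy_m$ in place of $\dely$, and upper-bound the cross term by a Woodbury peel to expose an $\Asinv$-based bilinear form to which the soft-sparsity Hanson--Wright (\cref{thm:bounded-hanson-wright-bilinear}) is applied together with Bernstein. You also correctly identify the two structural points the paper leans on: that $\EE[\ya[i]\yb[i]] = -\Theta(1/k^2)$ carries both the extra factor $1/k$ and a negative sign, and that the fluctuation in $\ya^\top\Asinv\yb$ overtakes its mean once $t$ is large enough, so one cannot simply read the answer off the expectation.

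The one place you are off is the exponent you claim for the favored cross term. You state it is $\Tilde{O}\!\big(\min\{1,\mu^{-2}\}\,n^{r-2t-1}\big)$, i.e.\ a clean factor $n^{-t}$ below the favored denominator. The actual bound that falls out of the Bernstein step (after inserting the per-term fluctuation $\Tilde{O}(n^{\frac{1-t}{2}-2p})$ and the $\sqrt{s}$ Bernstein deviation, then multiplying by $\lambda_F^2\min\{\mu^2,1\} = \min\{1,\mu^{-2}\}\,n^{2p-2}$) is
\[
\min\{1,\mu^{-2}\}\,\Tilde{O}\!\Big(n^{\,r-\frac{1+t}{2}-1} + n^{\,\frac{r}{2}-t-1}\Big),
\]
which is always $o(n^{r-t-1})$ but is not $n^{r-2t-1}$; for $t>\max\{1/3,\,r/2\}$ your claimed exponent is strictly smaller than either of these, so it is an overstatement of the gain. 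This does not affect the conclusion (the ratio is still $o(1)$), but the intermediate claim as written is not what the calculation produces.
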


Given \cref{prop:asymptotic-orthogonal}, we can show that the $Z^{(\b)}$ have correlations that approach $\frac{1}{2}$. The intuitive reason that this correleation approaches $\frac{1}{2}$ is that the contribution from $\a$ is common. The following lemma formalizes this intuition.
\begin{lemma}[Correlation of relative differences of almost orthogonal vectors]\label{lemma:almost-orthogonal-correlation}
Suppose that we have $n$ unit vectors $\vx_1, \ldots, \vx_n \in \RR^d$ such that $\abs{\ev{\vx_i, \vx_j}} \le \gamma$ for $\gamma > 0$. Then for any distinct $i, j, k \in [n]$, we have 
\[
\abs{\frac{\ev{\vx_j - \vx_i, \vx_i - \vx_k}}{\norm{\vx_j - \vx_i}\norm{\vx_i - \vx_k}} - \frac{1}{2}} \le  \frac{2\gamma}{1-\gamma}.
\]
\end{lemma}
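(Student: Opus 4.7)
The plan is to reduce the claim to three scalar parameters --- the pairwise inner products $a = \ev{\vx_i, \vx_j}$, $b = \ev{\vx_i, \vx_k}$, $c = \ev{\vx_j, \vx_k}$, each bounded by $\gamma$ in absolute value --- and then bound the resulting scalar expression directly. Expanding the numerator using $\norm{\vx_m} = 1$ produces a signed linear combination of the form $\pm(1 - a - b + c)$, while the denominator becomes $\sqrt{(2-2a)(2-2b)} = 2\sqrt{(1-a)(1-b)}$. Sanity-checking at $\gamma = 0$ suggests a sign typo in the statement: as written the ratio actually limits to $-\tfrac{1}{2}$, since $\vx_i$ appears with opposite signs in the two factors. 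The intended differences appear to be $\vx_j - \vx_i$ and $\vx_k - \vx_i$, matching the shared-$\vx_\a$ structure of $\vv_{\a,\b} = \lambdaab^{1/2}(\fhat_\a - \fhat_\b)$ in the application, so I would flag this fix and proceed with the $+\tfrac{1}{2}$ version.

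The key algebraic move is to avoid Taylor-expanding the square root. Setting $u = \sqrt{(1-a)(1-b)}$ so that $u^2 = 1 - a - b + ab$, the deviation from $\tfrac{1}{2}$ rewrites as
\begin{equation*}
    \frac{1 - a - b + c}{2u} - \frac{1}{2} = \frac{u^2 - u + (c - ab)}{2u} = \frac{u(u-1) + (c - ab)}{2u}.
\end{equation*}
This repackaging is the heart of the proof; both pieces in the numerator are now elementary to bound using only the pointwise hypotheses on $a, b, c$.

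To finish, I would use $\abs{a}, \abs{b} \le \gamma$ to conclude $(1-\gamma)^2 \le u^2 \le (1+\gamma)^2$, hence $u \in [1-\gamma, 1+\gamma]$. This yields $\abs{u(u-1)} \le \gamma(1+\gamma)$ and $\abs{c - ab} \le \gamma + \gamma^2$, so the numerator is at most $2\gamma(1+\gamma)$ in absolute value. Dividing by $2u \ge 2(1-\gamma)$ and using $1 + \gamma \le 2$ produces $\frac{\gamma(1+\gamma)}{1-\gamma} \le \frac{2\gamma}{1-\gamma}$, matching the claimed bound exactly (with no constants to spare, which is a good sanity check that the $u$-substitution is the right move).

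The argument is essentially bookkeeping and there is no real obstacle; the only subtlety worth flagging is the square root in the denominator, which, if one instead handled by Taylor expansion, would force careful tracking of $O(\gamma^2)$ error terms and would not produce the clean $\frac{2\gamma}{1-\gamma}$ constant. Keeping the square root packaged as $u$ and using $u^2 = 1 - a - b + ab$ sidesteps this entirely.
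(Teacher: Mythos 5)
Your proof is correct, and you correctly flagged a genuine sign typo: as printed, $\ev{\vx_j - \vx_i, \vx_i - \vx_k}$ tends to $-1$ as $\gamma \to 0$, so the normalized quantity tends to $-\tfrac12$, not $+\tfrac12$. The intended inner product is $\ev{\vx_j - \vx_i, \vx_k - \vx_i}$ (equivalently $\ev{\vx_i - \vx_j, \vx_i - \vx_k}$), which is exactly the form that appears in the application to $\vv_{\a,\b} = \lambdaab^{1/2}(\fhat_\a - \fhat_\b)$, where $\vx_\a$ plays the role of the shared base point. What you may not have guessed is that the paper's own proof contains a compensating sign error: it invokes the law-of-cosines expansion as $\norm{\vx_j - \vx_k}^2 = \norm{\vx_i-\vx_j}^2 + \norm{\vx_i-\vx_k}^2 - 2\ev{\vx_j-\vx_i, \vx_i-\vx_k}$, whereas the correct identity (for $\vu = \vx_j - \vx_i$, $\vv = \vx_k - \vx_i$, $\vu - \vv = \vx_j - \vx_k$) has $-2\ev{\vx_j - \vx_i, \vx_k - \vx_i}$ on the right. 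The two sign slips cancel, so the paper's derivation produces the correct final bound for the corrected statement.

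Your route is genuinely different from the paper's. The paper never introduces the pairwise inner products as variables; it sandwiches the third side $\norm{\vx_j - \vx_k}^2$ between $2-2\gamma$ and $2+2\gamma$ and then rearranges the cosine law to trap $\ev{\vx_j - \vx_i, \vx_k - \vx_i}$ in $[1-3\gamma, 1+3\gamma]$, finally dividing by the denominator bounded in $[2-2\gamma, 2+2\gamma]$. Your version parameterizes by $a, b, c$, writes the ratio in closed form, and uses the substitution $u = \sqrt{(1-a)(1-b)}$ together with the identity $1 - a - b + c = u^2 + (c - ab)$ to avoid ever expanding the square root. Both are elementary and give the same $\frac{2\gamma}{1-\gamma}$ constant; the paper's version is closer to a geometric (angle-at-vertex) argument, while yours is a direct algebraic computation. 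One small quibble: your bound actually lands at $\frac{\gamma(1+\gamma)}{1-\gamma}$, strictly better than $\frac{2\gamma}{1-\gamma}$ for $\gamma < 1$, so the claim that there are "no constants to spare" is not literally accurate, though this is cosmetic.
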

\begin{proof}
For any $i \neq j$, we have $\norm{\vx_i - \vx_j}^2 = 2 - 2\ev{\vx_i, \vx_j}$. Hence we have 
\[
2 - 2\gamma \le \norm{\vx_i - \vx_j}^2 \le 2 + 2\gamma. 
\] Also
\begin{align*}
2 - 2\gamma &\le \norm{\vx_j - \vx_k}^2 \\
&= \norm{\vx_i - \vx_j}^2 + \norm{\vx_i - \vx_k}^2 - 2\ev{\vx_j - \vx_i, \vx_i - \vx_k} \\
&\le 4 + 4\gamma - 2\ev{\vx_j - \vx_i, \vx_i - \vx_k}.
\end{align*}
Since $\norm{\vx_i - \vx_j} \ge \sqrt{2-2\gamma}$, we can rearrange and obtain that 
\[
\frac{\ev{\vx_j - \vx_i, \vx_i - \vx_k}}{\norm{\vx_j - \vx_i}\norm{\vx_i - \vx_k}} \le \frac{1 + 3\gamma}{2-2\gamma}.
\]
Similarly we can reverse the inequalities and get 
\[
2 + 2\gamma \ge 4 - 4\gamma - 2\ev{\vx_j - \vx_i, \vx_i - \vx_k},
\]
so 
\[
\frac{\ev{\vx_j - \vx_i, \vx_i - \vx_k}}{\norm{\vx_j - \vx_i}\norm{\vx_i - \vx_k}} \ge \frac{1-3\gamma}{2+2\gamma}.
\]
\end{proof}

Combining \cref{prop:asymptotic-orthogonal} with \cref{lemma:almost-orthogonal-correlation} yields the following formal statement about the correlations between the $Z^{(\b)}$.
\begin{lemma}[Asymptotic correlation of relative survivals]\label{lemma:asymptotic-correlation-relative-survival}
For any distinct $\alpha, \beta, \beta' \in [k]$, under the same assumptions as \cref{prop:asymptotic-orthogonal}, as $n \to \infty$, with probability at least $1 - O(1/n)$, we have
\[
\abs{\EE[Z^{(\b)}Z^{(\b')}] - \frac{1}{2}} \le o(1).
\]
As a consequence, the asymptotic correlation between the relative survivals approaches $\frac{1}{2}$ at a polynomial rate. 
\end{lemma}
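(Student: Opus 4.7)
The plan is to express the desired correlation as a deterministic (conditional on the training data) ratio of an inner product and product of norms, and then apply the asymptotic orthogonality result in \cref{prop:asymptotic-orthogonal} together with the elementary \cref{lemma:almost-orthogonal-correlation}. Conditioning on the training data and using the independence of $\xtest$ from everything else, one computes
\begin{equation*}
\EE[Z^{(\b)} Z^{(\b')} \mid \text{training}] = \frac{\langle \vu_\beta, \vu_{\beta'}\rangle}{\|\vu_\beta\|_2 \|\vu_{\beta'}\|_2},
\end{equation*}
where $\vu_\gamma \in \RR^d$ has $\vu_\gamma[j] = \lambda_j \hhat_{\gamma, \alpha}[j]$ for $j \notin \{\alpha, \gamma\}$ and vanishes otherwise. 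In particular $\|\vu_\gamma\|_2 = \cn_{\alpha, \gamma}$ and $\vu_\gamma = \Lambda_{\alpha, \gamma}^{1/2}(\fhat_\gamma - \fhat_\alpha)$.

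Next I would reduce to a common weighting by letting $\tilde{\Lambda}$ denote the diagonal of $\lambda_j$'s with entries $\alpha, \beta, \beta'$ all zeroed out and setting $\vp_\gamma = \tilde{\Lambda}^{1/2}\fhat_\gamma$ for $\gamma \in \{\alpha, \beta, \beta'\}$. A direct expansion yields the exact identities
\begin{equation*}
\langle \vu_\beta, \vu_{\beta'}\rangle = \langle \vp_\beta - \vp_\alpha, \vp_{\beta'} - \vp_\alpha\rangle, \qquad \|\vu_\beta\|_2^2 = \|\vp_\beta - \vp_\alpha\|_2^2 + \lambda_{\beta'}^2\, \hhat_{\alpha,\beta}[\beta']^2,
\end{equation*}
and symmetrically for $\|\vu_{\beta'}\|_2^2$. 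The leftover cross-survival term $\lambda_{\beta'}^2 \hhat_{\alpha,\beta}[\beta']^2$ can be controlled by essentially the same bilinear-form Hanson-Wright argument as \cref{prop:relative-survival-bound-accurate}: after the Woodbury push-through, $\hhat_{\alpha,\beta}[\beta'] = \vz_{\beta'}^\top \Ainv (\ya - \yb)$ has zero conditional mean since $\beta'$ is distinct from both $\alpha$ and $\beta$. A quick bi-level calculation shows the squared term is polynomially dominated, by a factor of $n^{-r}$, by the contamination lower bound $\cn_{\alpha,\gamma}^2 = \Omega(\min\{1,\mu^{-2}\}n^{r-t-1})$ from \cref{prop:contamination-bound-accurate}. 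Consequently, for some absolute constant $c_1 > 0$, $\|\vu_\gamma\|_2 = \|\vp_\gamma - \vp_\alpha\|_2 (1 \pm n^{-c_1})$ with probability at least $1 - O(1/nk)$.

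Applying \cref{prop:asymptotic-orthogonal}, whose proof is insensitive to zeroing out one additional coordinate in the weighting, together with a union bound over the three pairs drawn from $\{\alpha, \beta, \beta'\}$, shows that the unit vectors $\vp_\gamma / \|\vp_\gamma\|_2$ are pairwise $n^{-c_2}$-orthogonal for some $c_2 > 0$ with probability $1 - O(1/nk)$. \cref{lemma:almost-orthogonal-correlation} then gives
\begin{equation*}
\left| \frac{\langle \vp_\beta - \vp_\alpha, \vp_{\beta'} - \vp_\alpha\rangle}{\|\vp_\beta - \vp_\alpha\|_2 \|\vp_{\beta'} - \vp_\alpha\|_2} - \frac{1}{2} \right| = O(n^{-c_2}).
\end{equation*}
Combining this with the multiplicative approximation for $\|\vu_\gamma\|_2$ from the previous step yields $|\EE[Z^{(\b)} Z^{(\b')}] - 1/2| = O(n^{-\min(c_1, c_2)})$, which is the polynomial-rate claim.

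The step I expect to be the main obstacle is the bookkeeping around the mismatch between the weightings $\Lambda_{\alpha,\beta}$ and $\Lambda_{\alpha,\beta'}$: specifically, verifying carefully that the cross-survival remainder $\lambda_{\beta'}^2 \hhat_{\alpha,\beta}[\beta']^2$ is polynomially smaller than $\cn_{\alpha,\beta}^2$ in both the regression-works and regression-fails regimes. This reduces to another application of \cref{thm:bounded-hanson-wright-bilinear} on $\vz_{\beta'}^\top \Ainv (\ya - \yb)$, essentially the same calculation that drove the survival analysis in \cref{sec:survival}. Once that is in place, the remainder is a clean combination of \cref{prop:asymptotic-orthogonal} with \cref{lemma:almost-orthogonal-correlation}, and no additional machinery is required.
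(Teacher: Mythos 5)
Your proposal is correct and follows the same high-level route as the paper's: reduce the correlation to a ratio of an inner product and product of norms, then combine \cref{prop:asymptotic-orthogonal} with \cref{lemma:almost-orthogonal-correlation}. The paper's writeup here is a one-line "plugging in," which in fact glosses over exactly the bookkeeping issue you identify. The weighting $\lambdaab$ in \cref{prop:asymptotic-orthogonal} zeros out only $\alpha,\beta$, while the correlation $\EE[Z^{(\beta)} Z^{(\beta')}]$ has its numerator supported on $j \notin \{\alpha,\beta,\beta'\}$ but its denominators $\cn_{\alpha,\beta}, \cn_{\alpha,\beta'}$ including the cross-index contributions at $\beta'$ and $\beta$ respectively. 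You resolve this mismatch cleanly by passing to the common weighting $\tilde{\Lambda}$, noting that the inner product already equals $\langle \vp_\beta - \vp_\alpha, \vp_{\beta'} - \vp_\alpha\rangle$ exactly, and showing via Hanson-Wright that the residual $\lambda_{\beta'}^2\, \hhat_{\alpha,\beta}[\beta']^2$ term in the norm is smaller than $\cn_{\alpha,\beta}^2$ by a factor $n^{-r}$ in both the regression-works and regression-fails regimes (which checks out: $\lambda_F^2 \cdot \min\{\mu^2,1\} \cdot n^{1-t-2p} = \min\{1,\mu^{-2}\}n^{1-t-2q-2r}$ versus the favored contamination floor $\min\{1,\mu^{-2}\}n^{r-t-1}$). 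Your observation that \cref{prop:asymptotic-orthogonal}'s proof is insensitive to zeroing one extra favored coordinate is also right. In short, same approach as the paper but the missing details are filled in correctly; yours is arguably the more careful version.
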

\begin{proof}
Plugging in the result of \cref{prop:asymptotic-orthogonal} into \cref{lemma:almost-orthogonal-correlation}, we obtain the stated result.
\end{proof}

\subsection{Lower bounding the denominator}
Let us now begin to prove \cref{prop:asymptotic-orthogonal}. The first step is to bound the denominator of the normalized correlation.
Writing out the definitions, we have 
\begin{align*}
    \norm{\lambdaab^{1/2} \fhat_{\a}}^2 &=  \sum_{j \not\in \qty{\a, \b}} \lambda_j^2 \ya^\top \Ainv \vz_j \vz_j^\top \Ainv \ya \\
    &= \lambda_F^2 \sum_{j \not\in \qty{\a, \b}, j\in[s]} \ya^\top \Ainv \vz_j \vz_j^\top \Ainv \ya + \lambda_U^2 \sum_{j > s} \ya^\top \Ainv \vz_j \vz_j^\top \Ainv \ya 
\end{align*}
Note that these two terms are respectively analogous to $\cnl^2 + \cnf^2$ and $\cnu^2$. In fact, the proofs of the lower bounds for contamination essentially transfer over verbatim to the lower bounds on the denominator, because Hanson-Wright implies that we can show that $\norm{\Asinv \ya}_2$ concentrates the same way that $\norm{\Asinv \dely}_2$ does. In essence, we are able to show the following proposition.
\begin{proposition}[Lower bound on norm of scaled halfspaces]
Under the same assumptions as \cref{prop:asymptotic-orthogonal}, for any $\a, \b \in [k]$, with $\a \neq \b$, with probability at least $1 - O(1/nk)$, we have 
\begin{align*}
     \norm{\lambdaab^{1/2} \fhat_{\a}}^2 \ge \min\qty{1, \mu^{-2}}\Omega(n^{r-t-1}) + \Omega(n^{1-t-p}).
\end{align*}
\end{proposition}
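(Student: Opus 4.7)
The plan is to mirror the proofs of \cref{lemma:cnl-cnf-lower-bound} and \cref{prop:bound-unfavored-contamination}, treating the two pieces of the split
\[
\norm{\lambdaab^{1/2} \fhat_{\a}}^2 = \underbrace{\lambda_F^2 \sum_{j \in [s] \setminus \qty{\a, \b}} \ya^\top \Ainv \vz_j \vz_j^\top \Ainv \ya}_{\text{favored contribution}} + \underbrace{\lambda_U^2 \sum_{j > s} \ya^\top \Ainv \vz_j \vz_j^\top \Ainv \ya}_{\text{unfavored contribution}}
\]
separately, with $\ya$ playing the role that $\dely$ played in the contamination analysis. The crucial observation enabling this swap is that $\ya$ satisfies exactly the same soft-sparsity hypotheses as $\dely$: it is entrywise bounded by $1$ with $\EE\ya[i]^2 \le 2/k$. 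Hence every application of \cref{thm:bounded-hanson-wright-bilinear} that delivered $\dely$-concentration bounds in \cref{sec:cnl-cnf-lower-bound} and \cref{sec:cnu-bounds} will go through for $\ya$, with the concentration radii unchanged up to log factors and the expectations differing from the $\dely$ case only by $O(1)$ constants.

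For the favored contribution, I would first use \cref{fact:woodbury} to couple it, up to $(1 \pm o(1))$ factors coming from $(\mI_{s-2} + \mH_R)^{-1}$, to the quadratic form
\[
\const{const_hatr_lower}\min\qty{\mu^2, 1}\lambda_F^2 \sum_{j \in [s] \setminus \qty{\a, \b}} \ev{\vz_j, \Arinv \ya}^2,
\]
then throw out the label-defining terms and use Woodbury a second time to replace $\Arinv$ by $\Asinv$ inside the inner products. The off-diagonal correction
$\vz_j^\top \Asinv \Xab (\mI_2 + \mH_{\a,\b}^{(s)})^{-1}\Xab^\top \Asinv \ya$ is bounded via Cauchy--Schwarz exactly as in \cref{lemma:cnl-cnf-lower-bound}, since $\Xab^\top \Asinv \ya$ has the same $\Tilde{O}(n^{1-t-p})$ scaling as $\Xab^\top \Asinv \dely$ by \cref{corollary:asymptotic-concentration}. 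This reduces the problem to applying Bernstein's inequality to $\ev{\Zf^\top \Asinv \ya, \Zf^\top \Asinv \ya}$ conditioned on $(\Asinv, \ya)$. The conditional mean $\frac{1}{k}\Tr(\Asinv \EE[\ya \ya^\top] \Asinv)$ is $\Omega(\frac{1}{k}\Tr(\mA_{-s}^{-2})) = \Omega(n^{-t-2p+1})$ by \cref{prop:au-ak-flat} (the rank-one correction $-\frac{1}{k^2} \Tr(\Asinv \1 \1^\top \Asinv)$ only perturbs this by a lower-order amount since $\Asinv \1$ concentrates well), giving the $\min\qty{1, \mu^{-2}}\Omega(n^{r-t-1})$ piece after multiplying by $\lambda_F^2$.

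For the unfavored contribution, I would expand $\Ainv = \Auinv - \Auinv \mM_s \Auinv$ via Woodbury as in \cref{prop:bound-unfavored-contamination} and reduce to controlling $\ya^\top \Auinv \ya$ plus error terms of the form $\ya^\top \Auinv \Xs \Xs^\top \Auinv \ya$ attenuated by $(\mI_s + \mH_s)^{-1}$. The dominant term $\lambda_U^2 \ya^\top \Auinv \ya$ concentrates by \cref{thm:bounded-hanson-wright-bilinear} around $\frac{\lambda_U^2}{k}\Tr(\Auinv) = \Theta(n^{1-t-p})$, yielding the second summand in the target bound. The error terms are handled exactly as in \cref{prop:bound-unfavored-contamination}: \cref{prop:worst-case-correlation} (which holds for $\ya$ with the same rates since its proof only used the bilinear-form scalings from \cref{corollary:asymptotic-concentration}) bounds $\norm{\Xs^\top \Auinv \ya}_2$ by $\Tilde{O}(n^{1-t-p} + n^{\frac{1+r-t}{2}-p})$, making the correction $o(n^{1-t-p})$ whenever $t > 0$.

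The main obstacle I anticipate is the Bernstein step for the favored contribution: because $\ya$ has a nonzero mean structure that $\dely$ lacks (see \cref{lemma:hanson-wright-feature-label-concentration}), one has to carefully verify that the centering subtracted in Bernstein's inequality does not accidentally cancel the $\Omega(n^{r-t-1})$ signal. Provided one tracks that the conditional expectation of each summand $\ev{\vz_j, \Asinv \ya}^2$ is positive and of the correct order (rather than merely being a variance bound), the remainder of the argument is a direct transcription of \cref{lemma:cnl-cnf-lower-bound,prop:bound-unfavored-contamination} with $\dely \mapsto \ya$.
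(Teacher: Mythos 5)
Your proposal is correct and takes essentially the same route the paper intends: the paper explicitly states that "the proofs of the lower bounds for contamination essentially transfer over verbatim to the lower bounds on the denominator" once $\dely$ is replaced by $\ya$, and you carry out exactly that substitution through the Woodbury/Bernstein machinery of \cref{lemma:cnl-cnf-lower-bound} and \cref{prop:bound-unfavored-contamination}. Your extra caution about the Bernstein centering is well-placed but turns out to be a non-issue, since $\EE[\ya\ya^\top] = (\tfrac{1}{k}-\tfrac{1}{k^2})\mI_n$ is exactly diagonal (the $\1\1^\top$ terms cancel), so no rank-one correction actually arises.
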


\subsection{Upper bounding the numerator: the unnormalized correlation}
We now turn to the more involved part of the bound: proving an upper bound on the numerator. As before, we can bound the split up the numerator into favored and unfavored terms. For each term, we will show that it is dominated by the denominator, in the precise sense that each term is 
\begin{align*}
    o(\min\qty{1, \mu^{-2}} n^{r-t-1} + n^{1-t-p}).  
\end{align*}

Now, let's look at the numerator, which is the bilinear form 
\begin{align}
    &\lambda_F^2 \sum_{j \not\in \qty{\a, \b}, j \in [s]} \ya^\top \Ainv \vz_j \vz_j^\top \Ainv \yb + \lambda_U^2 \sum_{j > s} \ya^\top \Ainv \vz_j \vz_j^\top \Ainv \yb. \\
    &\quad= \underbrace{\lambda_F^2 \ev{\Zl^\top \Ainv \ya, \Zl^\top \Ainv \yb}}_{\corl} + \underbrace{\lambda_F^2 \ev{\Zsk^\top \Ainv \ya, \Zsk^\top \Ainv \yb}}_{\corf} + \underbrace{\lambda_U^2 \sum_{j > s} \ya^\top \Ainv \vz_j \vz_j^\top \Ainv \yb}_{\coru}
\end{align}
We refer to the the first term as the label defining correlation $\corl$, the second term as the favored correlation $\corf$, and the last term as the unfavored correlation $\coru$. Here, we abuse terminology slightly and refer to these inner products as \emph{correlations}, even though strictly speaking, they are unnormalized. 

\subsubsection{Bounding the favored correlation} 
We now bound the correlation coming from the favored features; we will ultimately show that its contribution is $\min\qty{1, \mu^{-2}} o(n^{r-t-1})$. Recall that $\Xr \in \RR^{n \times (s-2)}$ is the weighted feature matrix for the $s-2$ favored features aside from $\a$ and $\b$. Then the label-defining+favored correlation $\corl + \corf$ can be written succinctly as 
\begin{align}
    \lambda_F^2\ev{\Zr^\top \Ainv \ya, \Zr^\top \Ainv \yb}.
\end{align}
Why should we be able to bound this better than Cauchy-Schwarz? Intuitively, although there is a mild dependence between $\ya$ and $\yb$, it is not strong enough to cause $\Zr^\top \Ainv \ya$ and $\Zr^\top \Ainv \yb$ to point in the same direction. 

To formalize this argument, we will first follow the strategy to bound the favored \emph{contamination}. In particular, using the push-through form of Woodbury (\cref{fact:woodbury}) we see that
\begin{align}
    \ev{\Zr^\top \Ainv \ya, \Zr^\top \Ainv \yb} &= \ya^\top \Arinv \Zr (\mI_{s-2} + \mH_{R})^{-2} \Zr^\top \Arinv \yb.
\end{align}
Now, we can apply \cref{prop:concentration-hat-matrix-simple} to replace $(\mI_{s-2} + \mH_{R})^{-2}$ with $\min\qty{\mu^2, 1}(\mI_{s-2} + \mE)$, where $\norm{\mE}_2 = O(n^{-\konst{kappa_hatts_final}})$ with extremely high probability. Cauchy-Schwarz yields that
\begin{align}
    &\ev{\Zr^\top \Ainv \ya, \Zr^\top \Ainv \yb} \\
    &\quad\quad \le \min\qty{\mu^2, 1}\ev{\Zr^\top \Arinv \ya, \Zr^\top \Arinv \yb}\label{eq:favored-corr-bernstein} \\
    &\quad\quad\quad + \min\qty{\mu^2, 1} \norm{\mE}_2 \norm{\Zr^\top \Arinv \yb}_2 \norm{\Zr^\top \Arinv \ya}_2. \label{eq:favored-corr-cs}
\end{align}
The term in \cref{eq:favored-corr-cs} can be bounded in the same way that we bounded the favored contamination. Indeed, since we can swap in $\ya$ and $\yb$ with $\dely$, the argument that proved the bounds on $\norm{\Zr^\top \Arinv \dely}_2$ port over immediately. After using the scaling for $\lambda_F$ and the fact that $\norm{\mE}_2 = O(n^{-\konst{kappa_hatts_final}})$, we conclude that this Cauchy-Schwarz error term is at most $\min\qty{1, \mu^{-2}}o(n^{r-t-1})$ with probability at least $1 - O(1/nk)$. 

Let us now turn to the term in \cref{eq:favored-corr-bernstein}. As in the proof for the lower bound for favored contamination \cref{lemma:cnl-cnf-lower-bound}, to get better concentration than Cauchy-Schwarz, we want to use Bernstein. We can rewrite it suggestively as 
\begin{align}
    \sum_{j \in [s] \setminus \qty{\a, \b}} (\vz_j^\top \Arinv \ya)(\vz_j^\top \Arinv \yb) \label{eq:bernstein-decomp}
\end{align}
We cannot immediately power through with the calculation, because $\Arinv$ is not independent of $\ya$ or $\yb$. The main idea is to again use Woodbury and show that the dependent portions contribute negligibly to $\vz_j^\top \Arinv \ya$. Therefore the dependent contributions get dominated by the lower bound on the correlation.

As in the proof for bounding the favored contamination, we can further define $\Xab = \mqty[\vw_\a & \vw_\b]$ and $\mH_{\a, \b}^{(s)} = \Xab^\top \Asinv \Xab$. Then we can deduce from another application of Woodbury that 
\begin{align}
    \vz_j^\top \Arinv \ya = \vz_j^\top \Asinv \ya - \vz_j^\top \Asinv \Xab (\mI_2 + \mH_{\a, \b})^{-1} \Xab^\top \Asinv \ya. 
\end{align}
Again, we can argue that the second term is bounded in magnitude by \begin{align}
    \min\qty{1, \mu^{-1}} n^{\frac{1}{2} - t - p}\log(ns) = \min\qty{1, \mu^{-1}} O(n^{\frac{1-t}{2} - p} \cdot n^{-\konst{kappa_bernstein_error}}),
\end{align}
because $t > 0$. Since Hanson-Wright (\cref{corollary:asymptotic-concentration}) implies that $\vz_j^\top \Asinv \ya = \Tilde{O}(n^{\frac{1-t}{2} - p})$, this implies that $\vz_j^\top \Arinv \ya = \Tilde{O}(n^{\frac{1-t}{2} - p})$, and similarly for $\b$. Hence we have 
\begin{align}
    &(\vz_j^\top \Arinv \ya)(\vz_j^\top \Arinv \yb) \\
    &\quad\le (\vz_j^\top \Asinv \ya + \min\qty{1, \mu^{-1}}O(n^{\frac{1-t}{2} -p -\konst{kappa_bernstein_error}}))(\vz_j^\top \Asinv \yb + \min\qty{1, \mu^{-1}}O(n^{\frac{1-t}{2} - p -\konst{kappa_bernstein_error}})) \\
    &\quad\le (\vz_j^\top \Asinv \ya)(\vz_j^\top \Asinv \yb) + \min\qty{1, \mu^{-1}}O(n^{\frac{1-t}{2} - p -\konst{kappa_bernstein_error}})\Tilde{O}(n^{\frac{1-t}{2} - p}) \\
    &\quad\le (\vz_j^\top \Asinv \ya)(\vz_j^\top \Asinv \yb) + \min\qty{1, \mu^{-1}}o(n^{1-t-2p}).
\end{align}

This implies that we can rewrite \cref{eq:bernstein-decomp} as 
\begin{align}
    \qty(\sum_{j \in [s] \setminus \qty{\a, \b}} (\vz_j^\top \Asinv \ya)(\vz_j^\top \Asinv \yb)) \pm \min\qty{1, \mu^{-1}}o(n^{r+1-t-2p}). \label{eq:bernstein-decomp-expanded}
\end{align}
Let us argue that the second term in \cref{eq:bernstein-decomp-expanded} will be negligible compared to the denominator, which is $\min\qty{1, \mu^{-2}} \Omega(n^{r-t-1})$. Tracing back up the stack, we see that its contribution to the favored correlation will be at most 
\begin{align}
    \lambda_F^2 \min\qty{\mu^2, 1} \cdot \min\qty{1, \mu^{-1}}o(n^{r+1-t-2p}) &\le \mu^{-2}\min\qty{\mu^{2}, \mu^{-1}} o(n^{r-t-1}) \\
    &\le \min\qty{1, \mu^{-3}} o(n^{r-t-1}) \\
    &\le \min\qty{1, \mu^{-2}} o(n^{r-t-1}).
\end{align}

Turning back to the first term, we are now in a position to apply Bernstein. Note that $(\Asinv, \ya, \yb)$ are independent of $\Zr$. Hence conditioned on $\Asinv, \ya$, and $\yb$, $\ev{\Zr^\top \Asinv \ya, \Zr^\top \Asinv \yb}$ is a sum of $s-2$ subexponential variables, and by Lemma 2.7.7 of \citet{vershynin2018high} each of these random variables conditionally has subexponential norm at most $\norm{\Asinv \ya}_2\norm{\Asinv \yb}_2$ and conditional mean $\ev{\Asinv \ya, \Asinv \yb}$. 

We can use Hanson-Wright (\cref{thm:bounded-hanson-wright-bilinear}) to bound both of these quantities. Indeed, it implies that with probability at least $1 - O(1/nk)$, 
\begin{align}
    \norm{\Asinv \ya}_2\norm{\Asinv \yb}_2 \le O(n^{1-t-2p}).
\end{align}
Let us now compute the Hanson-Wright bound for $\ev{\Asinv \ya, \Asinv \yb}$. Note that $\Asinv$ is independent of $\ya$ and $\yb$, so we can condition on $\Asinv$ and conclude that with probability at least $1 - O(1/nk)$
\begin{align}
    \ev{\Asinv \ya, \Asinv \yb} \le \EE[\ev{\Asinv \ya, \Asinv \yb} | \Asinv] + \const{const_asymptotic_bilinear}n^{\frac{1-t}{2}}\norm{\mA_{-s}^{-2}}_2 \sqrt{\log(nk)}.
\end{align}
We can rewrite the expectation as 
\begin{align}
    \Tr(\mA_s^{-2} \EE[\yb \ya^\top]).
\end{align}
Clearly, $\EE[\yb \ya^\top]$ is diagonal, and each diagonal entry is equal to . Let $\rho = \frac{1}{k}$. Then since $\ya = 1-\rho$ implies $\yb = -\rho$ and vice versa, we get 
\begin{align}
    \EE[\ya[i]\yb[i]] &= 2(1-\rho)(-\rho)\Pr[\ya[i] = 1 - \rho] + (-\rho)^2 \Pr[\ya[i] = \yb[i] = -\rho] \\
    &\le -2\rho^2(1-\rho) + \rho^2\Pr[\ya[i] = -\rho]\\
    &\le -\rho^2(1-\rho).
\end{align}
In other words, the expectation is \emph{negative}, so we can neglect it in our upper bound.

On the other hand, the deviation term is with very high probability at most 
\begin{align}
    \const{const_asymptotic_bilinear}n^{\frac{1-t}{2}}\norm{\mA_{-s}^{-2}}_2 \sqrt{\log(nk)} &\le \const{const_asymptotic_bilinear} n^{\frac{1-t}{2} - 2p}\sqrt{\log(nk)}.
\end{align}

Combining all of our bounds, Bernstein yields
\begin{align}
    \ev{\Zr^\top \Asinv \ya, \Zr^\top \Asinv \yb} &\le  \qty(\sum_{j \in [s] \setminus \qty{\a, \b}} \const{const_asymptotic_bilinear} n^{\frac{1-t}{2}-2p}\sqrt{\log(nk)}) + n^{\frac{r}{2} + 1-t-2p} \\
    &\le n^{r + \frac{1-t}{2} -2p}\sqrt{\log(nk)} + n^{\frac{r}{2} - t + 1-2p}.
\end{align}

Again, let's trace all the way back to \cref{eq:favored-corr-bernstein} and then the favored correlation bound. We have shown that $\ev{\Zr^\top \Asinv \ya, \Zr^\top \Asinv \yb}$'s contribution to the favored correlation is at most 
\begin{align}
    &\const{const_bernstein_final_contribution}\lambda_F^2 \min\qty{\mu^2, 1} (n^{r + \frac{1-t}{2} -2p}\sqrt{\log(nk)} + n^{\frac{r}{2} - t + 1-2p}) \\
    &\quad\quad = \const{const_bernstein_final_contribution} \mu^{-2} \min\qty{\mu^2, 1} (n^{r - \frac{1+t}{2} - 1}\sqrt{\log(nk)} + n^{\frac{r}{2} - t - 1}) \\
    &\quad\quad \le \const{const_bernstein_final_contribution} \min\qty{1, \mu^{-2}} (n^{r - \frac{1+t}{2} - 1}\sqrt{\log(nk)} + n^{\frac{r}{2} - t - 1}) \\
    &\quad\quad \le \const{const_bernstein_final_contribution} \min\qty{1, \mu^{-2}} o(n^{r-t-1}),
\end{align}
where the last line follows becuase $0 < t < r < 1$, and \newc\label{const_bernstein_final_contribution} is a positive constant. 

\subsubsection{Bounding the unfavored correlation}
Now, let us show that the unfavored correlation $\coru$ is negligible; more precisely, we'll show that it's $\min\qty{1, \mu^{-2}} o(n^{r-t-1}) + o(n^{1-t-p})$. We can rewrite $\coru$ as
\[
\lambda_U \ya^\top \Ainv \Au \Ainv \yb, 
\]
and play the same game with using Woodbury to replace $\Ainv$ with $\Ainv - \Auinv \mM_s \Auinv$, where we recall that 
\[
\mM_s \triangleq \Xs(\mI_s + \mH_s)^{-1} \Xs^\top \in \RR^{n \times n}.
\]
This yields 
\begin{align}
    \ya^\top \Auinv \yb - 2\ya^\top \Auinv \mM_s \Auinv \yb + \ya^\top \Auinv \mM_s \Auinv \mM_s \Auinv \yb. \label{eq:unfavored-corr-expansion}
\end{align}

Let us first focus on the first term of \cref{eq:unfavored-corr-expansion}.  Hanson-Wright implies that with probability at least $1 - O(1/n)$, \begin{align}
    \abs{\ya^\top \Auinv \yb - \EE[\ya^\top \Auinv \yb | \Auinv]} \le n^{\frac{1-t}{2}} \norm{\Auinv}_2\sqrt{\log n}.
\end{align}
Also, $\EE[\ya^\top \Auinv \yb | \Auinv] = \Tr(\Auinv \EE[\yb \ya^\top]) = \Theta(n^{1-2t-p})$ with high probability, and $\norm{\Auinv}_2 \le n^{-p}$ with extremely high probability. Hence we see that $\ya^\top \Auinv \yb \le O(n^{\frac{1-t}{2}-p}) \le o(n^{1-t-p})$ as $t > 0$.

Next, let's turn to the second and third terms of \cref{eq:unfavored-corr-expansion}. We claim that only the second term will be relevant to bound asymptotically, and moreover that they are both $\min\qty{1, \mu^{-2}}o(n^{r-t-1})$. 
Since 
\begin{align*}
    \mM_s \Auinv \mM_s &= \Xs (\mI_s + \mH_s)^{-1} \mH_s (\mI_s + \mH_s)^{-1} \Xs^\top \\
    &= \Xs ((\mI_s + \mH_s)^{-1} - (\mI_s + \mH_s)^{-2}) \Xs^\top,
\end{align*} the second and third term can be rewritten as 
\begin{align*}
    &- 2\ya^\top \Auinv \Xs (\mI_s + \mH_s)^{-1} \Xs^\top \Auinv \yb \\
    &\quad\quad + \ya^\top \Auinv \Xs ((\mI_s + \mH_s)^{-1} - (\mI_s + \mH_s)^{-2}) \Xs^\top \Auinv \yb.
\end{align*}

As we are going to use Hanson-Wright to bound the entries of $\mZ_s^\top \Auinv \ya$, it follows that only the second term of \cref{eq:unfavored-corr-expansion} is relevant asymptotically. 

To bound the second term, we will use Cauchy-Schwarz. We see that
\begin{align}
    \ya^\top \Auinv \mM_s \Auinv \yb &\le \lambda_F \norm{(\mI_s + \mH_s)^{-1}}_2 \norm{\mZ_s^\top \Auinv \ya}_2 \norm{\mZ_s^\top \Auinv \yb}_2 \\ 
    &\le \lambda_F \min\qty{\mu, 1} O(n^{r-t+1 - 2p}) \log(ns) \\
    &\le \mu^{-1} \min\qty{\mu, 1} O(n^{r-t-p}) \log(ns) \\
    &\le \min\qty{1, \mu^{-1}} O(n^{r-t-p})\log(ns),
\end{align}
where in the second line we have used \cref{prop:concentration-hat-matrix-simple}. 

Now, note that if regression works, this yields an upper bound of $O(n^{r-t-p})\log(ns)$. But since $p > 1$, this is $o(n^{r-t-1})$, which means this contribution is dominated by the denominator. 

On the other hand, if regression fails, then the upper bound is now $\mu^{-1} O(n^{r-t-p})\log(ns)$, which we claim is $o(\mu^{-2} n^{r-t-1})$. Indeed, from the definition of the bi-level ensemble \cref{def:bilevel}, we have $p > q + r$, so 
\begin{align*}
    \min\qty{1, \mu^{-1}} n^{r-t-p} &\le \mu^{-1}  n^{r-t-1} \cdot n^{1-p} \\
    &\le \mu^{-1} o(n^{r-t-1} \cdot n^{1-q-r}) \\
    &=  \mu^{-2} o(n^{r-t-1}),
\end{align*}
as desired.

Let us now go back to \cref{eq:unfavored-corr-expansion} and combine our two bounds. Since $\lambda_U = O(1)$, we have just shown that 
\begin{align}
    \coru \le \min\qty{1, \mu^{-2}} o(n^{r-t-1}) + o(n^{1-t-p}),
\end{align}
as desired.

\section{Comparison to the straightforward non-interpolative scheme}\label{sec:averaging}

In this section, we quickly give calculations for how well a straightforward non-interpolating scheme for learning classifiers can work asymptotically. However, a similar analysis using the tools developed to prove our main results should give a rigorous proof of the below derivation. 

This scheme simply uses the sum/average of all positive training examples of a class as the vector we take an inner-product with to generate scores for classifying test points. For $m \in [k]$, define
\begin{align}
    \fhat_m = \sum_{i: \ell_i = m} \vx^w_i.
\end{align}

To understand how well this will do asymptotically, it is easy to see that the for the true label-defining direction, the positive exemplars in the bi-level model will be tightly concentrating around $\sqrt{2 \log k} \sqrt{\lambda_F}$ which, keeping only the polynomial-order scaling, will be like $n^{\frac{p - q - r}{2}}$. There will be roughly $\frac{n}{k} = n^{1-t}$ positive examples for every class with high probability. For simplicity, let us just look at $m=1$ and consider $\frac{k}{n}\fhat_1 = n^{t-1}\fhat_1$. We see 
\begin{align}
    n^{t-1}\fhat_1[1] = \Theta(n^{\frac{p - q - r}{2}}).
\end{align}

For the other directions that are not true-label defining, we will just have random Gaussians. The favored directions will be Gaussian with variance $\lambda_F = n^{p-q-r}$ while the unfavored directions will essentially be Gaussian with unit variance. By averaging over $n^{1-t}$ examples, those variances will be reduced by that factor. This means that for the $s = n^r$ favored directions, the variance of the average will be $n^{p-q-r-(1-t)}$ each and for the essentially $n^p$ unfavored directions, the variance of the average will be $n^{t-1}$ each.

On a test point, we are going to take the inner product of $n^{t-1}\fhat_m$ with an independent random draw of $\xtest^w$. 
For classification to succeed, we need this inner product to be dominated by the true $m$-th feature-defining direction. 
When that happens, the correct label will win the comparison.
One can easily see that the contribution from the true feature-defining direction will be a Gaussian with mean 0 and variance $\lambda_F \cdot (n^{\frac{p - q - r}{2}})^2 = \lambda_F^2 = n^{2p -2q -2r}$. 
Meanwhile, the $s$ favored features will have their scaled variances sum up in the score to give a total variance of $n^r \cdot \lambda_F \cdot n^{p-q-r-(1-t)} = n^{2p-2q-r-(1-t)}$. 
And finally, the unfavored features will also have their variances sum up in the score to give a total variance of $n^p \cdot 1 \cdot n^{t-1} = n^{p + t - 1}.$

For the true-feature-defining direction to dominate the contamination from other favored directions, we need 
\begin{align}
    2p - 2q -2r > 2p -2q -r - (1-t)
\end{align}
which immediately gives the condition $t < 1-r$.

For the true-feature-defining direction to dominate the contamination from other unfavored directions, we need 
\begin{align}
    2p - 2q -2r > p + t -1
\end{align}
which gives the condition $t < p + 1 - 2(q+r)$.

Here, there is no difference between regimes in which regression works or does not work. The condition for classification to asymptotically succeed is $t < \min(1-r, p+1-2(q+r))$. 

Notice that when MNI regression does not work $q+r >1$, this is identical to the tight characterization given for MNI classification in \eqref{eq:conjectured_regimes}. But in the regime where MNI regression {\em does} work $q + r < 1$, this is different. For MNI classification, \eqref{eq:conjectured_regimes} tells us that we require $t < \min(1-r, p -1)$. Consider $q = 0.1, r = 0.5$ and $p = 1.1$. MNI classification can only allow  $t < 0.1$. Meanwhile, the non-interpolating average-of-positive-examples classifier will work as long as $t < 0.5$. This demonstrates the potential for significant suboptimality (in terms of the number of distinct classes that can be learned) of MNI classifiers in this regime of benign overfitting for regression.

\section{A new variant of the Hanson-Wright inequality with soft sparsity}\label{sec:hanson-wright-bounded}
In this section, we prove \cref{thm:bounded-hanson-wright-bilinear}. First, we outline a high level idea of the proof.
The starting point of the proof is to explicitly decompose the quadratic form into diagonal and off-diagonal terms
\begin{align}
    \vx^\top \mM \vy - \EE[\vx^\top \mM \vy] &= \sum_{i, j} m_{ij} X_i Y_j - \sum_i m_{ii}\EE[X_iY_i] \\
    &= \underbrace{\sum_{i} m_{ii}(X_iY_i - \EE[X_iY_i])}_{\triangleq S_{\diag}}+ \underbrace{\sum_{i \neq j} m_{ij}X_iY_j}_{\triangleq S_{\offdiag}}  
\end{align}
where in the first line we have used the fact that for $i \neq j$, $X_i$ and $Y_j$ are independent and mean zero to conclude that $\EE[X_iY_j] = 0$.

We can start with the upper tail inequality $\PP[\vx^\top \mM \vy - \EE[\vx^\top \mM \vy] > t]$ and conclude the lower tail inequality by replacing $\mM$ with $-\mM$. To bound $S_\diag$ and $S_\offdiag$, we will proceed by explicitly bounding the MGF and applying Chernoff's inequality. 

\subsection{Diagonal terms}
For the diagonal terms, we want to bound the MGF of $S_\diag = \sum_i m_{ii} (X_iY_i - \EE[X_iY_i])$. For $\lambda^2 < \frac{1}{2\Const{c_quadratic_taylor}K^2 \max_i m_{ii}^2}$, we obtain 
\begin{align}
    \EE \exp(\lambda S_\diag) &= \prod_{i=1}^n \EE_{X_i, Y_i} \exp(\lambda m_{ii} (X_iY_i - \EE[X_iY_i])) \\
    &\le \prod_{i=1}^n \EE_{Y_i} \EE_{X_i}\qty[\exp(\lambda m_{ii}Y_i(X_i - \EE[X_i|Y_i])) | Y_i] \\
    &\le \prod_{i=1}^n  \EE_{Y_i} \exp(\Const{c_quadratic_taylor}\lambda^2 m_{ii}^2 K^2 Y_i^2)
\end{align} 
where we have applied Jensen's inequality in the second line and the subgaussian assumption on $X_i$ conditioned on $Y_i$ in the last line. Here, \newC\label{c_quadratic_taylor} is a universal positive constant relating the equivalent formulations of subgaussianity \citet{vershynin2018high}. Continuing with our calculation, we have 
\begin{align} 
\EE \exp(\lambda S_\diag) &\le \prod_{i=1}^n \EE_{Y_i} [1 + 2\Const{c_quadratic_taylor}\lambda^2 m_{ii}^2 K^2 Y_i^2]  \\
    &\le \prod_{i=1}^n (1 + 2\Const{c_quadratic_taylor}\pi\lambda^2 K^2 m_{ii}^2) \\
    &\le \exp(2\Const{c_quadratic_taylor}\pi \lambda^2 K^2\sum_{i=1}^n m_{ii}^2).
\end{align}
where in the first line we have used the inequality $\exp(x) \le 1 + 2x$ valid for $x \le \frac{1}{2}$, in the second line we have used the soft sparsity assumption on $Y_i$, and in the last line we have used the inequality $1 + x \le \exp(x)$, valid for all $x$. 

Now Markov's inequality yields for $\epsilon > 0$ that 
\begin{align}
    \Pr[S_\diag > \epsilon] &\le \frac{\EE \exp(\lambda S_\diag)}{\exp(\lambda \epsilon)} \\
    &\le \exp(-\lambda \epsilon + 2\pi\Const{c_quadratic_taylor}K^2\lambda^2 \sum_{i=1}^n m_{ii}^2),
\end{align}
and optimizing $\lambda$ in the region $\lambda^2 \le \frac{1}{2\Const{c_quadratic_taylor}K^2\max_i m_{ii}^2}$ yields 
\begin{equation}
    \lambda = \min\qty{\frac{\epsilon}{2\Const{c_quadratic_taylor}K^2 \pi\sum_{i=1}^n m_{ii}^2}, \frac{1}{2\Const{c_quadratic_taylor}K \max_i \abs{m_{ii}}}}.
\end{equation}

Plugging in this value of $\lambda$ into the Markov calculation yields the desired upper tail bound. We can repeat the argument with $-\mM$ to get the lower tail bound. A union bound completes the proof.

\subsection{Offdiagonal terms}
Following \citet{rudelson2013hanson}, for the offdiagonal terms we can decouple the terms in the sum. More precisely, the terms in $S_\offdiag$ involving indices $i$ and $j$ are precisely $m_{ij}X_iY_j + m_{ji}Y_iX_j$. The issue is that $Y_i$ can be correlated with $X_i$, which complicates the behavior of this random variable. Decoupling ensures that for any $j \in [n]$ we will have exactly one term which involves either $X_j$ or $Y_j$, so in particular we will regain independence of the terms, allowing us to bound the MGF more easily.

Let $\qty{\delta_i}_{i \in [n]}$ denote iid Bernoulli's with parameter $1/2$, which are independent of all other random variables. 

Let 
\[
S_\delta \triangleq \sum_{i \neq j} m_{ij}\delta_i(1-\delta_j)X_iY_j.
\]
Since $\EE[\delta_i(1-\delta_j)] = \frac{1}{4}$, we have 
\[
S_{\offdiag} = 4\EE_{\delta}[S_\delta],
\]

Hence, Jensen's inequality yields 
\[
\EE_{\vx, \vy} \exp(\lambda S_\offdiag) \le \EE_{\vx, \vy, \delta} \exp(4\lambda S_\delta),
\]
where we have used the independence of $\delta$ and all other random variables. It follows that it suffices to upper bound the MGF of $S_\delta$.

Define the random set $\Lambda_\delta = \qty{i \in [n]: \delta_i = 1}$ to denote the indices selected by $\delta$. For a vector $\vu \in \RR^n$ we also introduce the shorthand $\vu_{\Lambda_\delta}$ to denote the subvector of $\vu$ where $\delta_i = 1$ and $\vu_{\Lambda_\delta^c}$ to denote the subvector of $\vu$ where $\delta_i = 0$. 

Hence, we can rewrite $S_\delta \triangleq \sum_{i \in \lamdel, j \in \lamdelc} m_{ij} X_i Y_j$. For $\abs{\lambda} \le \frac{1}{2\Const{c_quadratic_taylor}K\norm{\mM}_2}$, we have
\begin{align}
    \EE \exp(\lambda S_\offdiag) &\le \EE \exp(4\lambda S_\delta) \\
    &\le \EE_\delta \prod_{i \in \lamdel, j \in \lamdelc} \EE_{\vx_{\lamdel}, \vy_{\lamdelc}}[\exp(\lambda m_{ij} X_i Y_j)] 
\end{align}
Now we can use the fact that the $X_i$ and $Y_j$ are mean zero and independent because $i \in \lamdel$ and $j \in \lamdelc$, to show that
\begin{align}
    \prod_{i \in \lamdel, j \in \lamdelc} \EE_{\vx_{\lamdel}, \vy_{\lamdelc}}[\exp(\lambda m_{ij} X_i Y_j)] &\le \prod_{i \in \lamdel, j \in \lamdelc} \EE_{\vy_{\lamdelc}}[\exp(\Const{c_quadratic_taylor}\lambda^2 K^2 m_{ij}^2 Y_j^2)] \\
    &\le \prod_{i \in \lamdel, j \in \lamdelc} \EE_{\vy_{\lamdelc}}[1 + 2\Const{c_quadratic_taylor}\lambda^2 K^2 m_{ij}^2 Y_j^2)] \\
    &\le \prod_{i \in \lamdel, j \in \lamdelc} (1 + 2\pi \Const{c_quadratic_taylor}\lambda^2 K^2 m_{ij}^2 Y_j^2) \\
    &\le \prod_{i \in \lamdel, j \in \lamdelc} \exp(2\pi\Const{c_quadratic_taylor}\lambda^2 K^2 m_{ij}^2 Y_j^2) \\
    &\le \exp(2\pi \Const{c_quadratic_taylor}\lambda^2 K^2 \norm{\mM}_F^2).
\end{align}
In the first line, we have used the subgaussianity of $X_i$; in the second line, we have used the assumption on $\lambda$, in the third line, we have used the variance bound on $Y_j$.

Again, we can apply Markov's inequality and to see that for $\epsilon > 0$,
\begin{align}
    \Pr[S_\diag > \epsilon] &\le \frac{\EE \exp(\lambda S_\diag)}{\exp(\lambda \epsilon)} \\
    &\le \exp(-\lambda \epsilon + 2\pi\Const{c_quadratic_taylor}K^2\lambda^2 \norm{\mM}_F^2),
\end{align}
Picking 
\begin{equation}
    \lambda = \min\qty{\frac{\epsilon}{2\Const{c_quadratic_taylor}K^2 \pi\norm{\mM}_F^2}, \frac{1}{2\Const{c_quadratic_taylor}K \norm{\mM}_2}}
\end{equation}
yields the desired result.
\section{Proofs of main lemmas for concentration of spectrum}\label{sec:technical-spectrum}
The goal of this section is ultimately to prove \cref{prop:concentration-hat-matrix-simple}, which asserts that for valid $(T,S)$, the hat matrix $\mH_{T,S}$ is a flat matrix whose  spectrum is $\min\qty{\mu, 1}(1+ o(1))$ with extremely high probability. First, let us recall some notation. For any $\varnothing \neq T \subseteq S \subseteq [s]$, we can define the $(T, S)$ hat matrix  as $\mH_{T, S} \triangleq \Xt^\top \mA_{-S}^{-1} \Xt$. Here, $\Xt$ is the $n \times \abs{T}$  matrix of weighted features in $T$, and $\At = \mA - \Xt \Xt^\top$ is the leave-$T$-out Gram matrix.  

First, Wishart concentration applied to $\Xt^\top \Xt$ yields the following result.
\begin{lemma}\label{lemma:xtxt-concentration}
    Recall that $\mu \triangleq n^{q+r-1}$ and $\Xt^\top \Xt \in \RR^{\abs{T} \times \abs{T}}$. 
    For any nonempty $T \subseteq [s]$, with probability at least $1 - 2e^{-\sqrt{n}}$ we have that for all $i \in [\abs{T}]$, 
    \begin{align}
        \mu_i(\Xt^\top \Xt) = \qty(1 \pm c_T \sqrt{\tfrac{\abs{T}}{n}}) \mu^{-1}n^p
    \end{align}
\end{lemma}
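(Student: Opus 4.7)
The plan is to reduce directly to the standard Wishart concentration bound (\cref{lemma:wishart-concentration}). Since $T \subseteq [s]$, every feature indexed by $T$ is a favored feature with squared weight $\lambda_F = \frac{ad}{s} = n^{p-q-r}$. Thus by the definition of $\Xt$ in \cref{eq:Xt}, we have the factorization $\Xt = \sqrt{\lambda_F}\,\Zt$, where $\Zt \in \RR^{n\times|T|}$ collects the corresponding (unweighted) i.i.d.\ standard Gaussian feature vectors. Consequently $\Xt^\top \Xt = \lambda_F\,\Zt^\top \Zt$, and $\mu_i(\Xt^\top \Xt) = \lambda_F\,\sigma_i(\Zt)^2$.

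Next I would apply \cref{lemma:wishart-concentration} to the Gaussian matrix $\Zt$ with $M = n$, $m = |T| \le s = n^r < n$, and slack parameter $\epsilon = \sqrt{2}\,n^{1/4}$, chosen so that $2e^{-\epsilon^2/2} = 2e^{-\sqrt{n}}$. This yields, with probability at least $1 - 2e^{-\sqrt{n}}$, that for every $i \in [|T|]$,
\begin{equation*}
\sqrt{n} - \sqrt{|T|} - \sqrt{2}\,n^{1/4} \;\le\; \sigma_i(\Zt) \;\le\; \sqrt{n} + \sqrt{|T|} + \sqrt{2}\,n^{1/4}.
\end{equation*}
Squaring these bounds and dividing by $n$ gives $\sigma_i(\Zt)^2 = n\bigl(1 \pm c_T\sqrt{|T|/n}\bigr)$, where $c_T$ is a positive constant depending on $|T|$ that absorbs both the dominant $\sqrt{|T|/n}$ term coming from $2\sqrt{n}\sqrt{|T|}/n$ and the lower-order $O(n^{-1/4})$ contributions from the $\epsilon$ slack.

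Finally, multiplying by $\lambda_F$ and using the identity $\lambda_F \cdot n = n^{p-q-r+1} = \mu^{-1} n^p$ (from $\mu = n^{q+r-1}$) produces
\begin{equation*}
\mu_i(\Xt^\top \Xt) \;=\; \lambda_F\,\sigma_i(\Zt)^2 \;=\; \bigl(1 \pm c_T\sqrt{|T|/n}\bigr)\mu^{-1} n^p,
\end{equation*}
as claimed. There is no real obstacle here beyond the bookkeeping of constants; the only thing to be careful about is ensuring the $O(n^{-1/4})$ correction from the $\epsilon$-slack is absorbed into the relative deviation $c_T\sqrt{|T|/n}$, which is handled by allowing $c_T$ to depend on $|T|$ (since $\sqrt{|T|/n} \ge n^{-1/2}$ for any nonempty $T$, and a $c_T = O(n^{1/4})$ factor suffices in the worst case of $|T|$ constant). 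This $(T,S)$-independent concentration of $\Xt^\top \Xt$ will then serve as the key input, combined with Weyl interlacing against the remaining Gram matrix $\mA_{-S}$, to establish the flatness of the generalized hat matrix in \cref{prop:concentration-hat-matrix-simple}.
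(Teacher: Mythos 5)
Your proof is essentially the paper's own proof: factor $\Xt = \sqrt{\lambda_F}\,\Zt$ with $\lambda_F = n^{p-q-r}$, apply the Wishart concentration lemma (\cref{lemma:wishart-concentration}) to $\Zt$ with slack $\epsilon \asymp n^{1/4}$, square the singular-value estimates, and rewrite $\lambda_F\,n = \mu^{-1}n^p$. Your choice $\epsilon = \sqrt{2}\,n^{1/4}$ is in fact slightly more careful than the paper's $\epsilon = n^{1/4}$, since it produces the stated failure probability $2e^{-\sqrt{n}}$ exactly rather than $2e^{-\sqrt{n}/2}$.

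One caution: you correctly flagged that the $O(n^{-1/4})$ slack after squaring (the $2\sqrt{2}\,n^{3/4}/n$ cross term) is not dominated by $\sqrt{|T|/n}$ when $|T|$ is small (e.g.\ constant), but your fix --- letting $c_T = O(n^{1/4})$ --- contradicts the requirement that $c_T$ be a constant depending only on $|T|$, not on $n$. In that regime the honest relative deviation is $\Theta(n^{-1/4})$, not $\Theta(\sqrt{|T|/n})$, and the bound should really read $\bigl(1 \pm c_T\max\{\sqrt{|T|/n},\,n^{-1/4}\}\bigr)\mu^{-1}n^p$. The paper's own proof has the same unacknowledged wrinkle (it asserts $\epsilon = o(\sqrt{n|T|})$, which is true before squaring but does not prevent the $n^{3/4}$ cross term from dominating after squaring when $|T| = o(\sqrt{n})$). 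Downstream, this changes only the polynomial exponent in the error terms and does not affect the regimes established, but it is worth being consistent about what $c_T$ is allowed to depend on.
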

\begin{proof}
    We can apply \cref{lemma:wishart-concentration} with $\mM = \Zt \in \RR^{n \times \abs{T}}$, with $M = n$, $m = \abs{T} = o(n)$, and $\epsilon = n^{\frac{1}{4}} = o(\sqrt{n\abs{T}})$. Hence we have 
    \begin{align*}
        n - 2\sqrt{n\abs{T}} + o(\sqrt{n\abs{T}}) \le \mu_{\abs{T}}(\Zt^\top \Zt) \le \mu_1(\Zt^\top \Zt) \le n + 2\sqrt{n\abs{T}} + o(\sqrt{n\abs{T}}).
    \end{align*}

    Pluagging in the scaling $\lambda_F = n^{p-q-r}$ and dividing through by $n$ yields the desired result. Here, we define $c_T$ to be an appropriately defined positive constant which only depends on $\abs{T}$ (as the favored features are identically distributed).
\end{proof}

Next, we can use Wishart concentration to bound the spectrum of $\Au$.
\begin{lemma}[Concentration of spectrum for unfavored Gram matrix]\label{prop:gram-spectrum}
    Throughout this theorem, assume we are in the bi-level model (\cref{def:bilevel}). Define $\mu \triangleq n^{q+r-1}$. Recall $\Au = \lambda_U \sum_{j > s} \vz_j \vz_j^\top \in \RR^{n \times n}$. With probability at least $1-2e^{-n}$, for $i \in [n]$ we have 
        \begin{equation}
            \mu_i(\Au) = (1 \pm \const{const_delu} n^{\konst{kappa_au}})n^p,
        \end{equation}
        where \const{const_delu} and \konst{kappa_au} are positive constants. In other words, the spectrum of the unfavored Gram matrix $\Au$ is flat. 
\end{lemma}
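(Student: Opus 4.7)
The plan is to reduce the claim to a direct application of the standard Wishart concentration result (\cref{lemma:wishart-concentration}) and then carefully translate the resulting singular value bounds on the underlying Gaussian matrix into eigenvalue bounds on $\Au$ after rescaling. Write $\Au = \lambda_U \mZ_U \mZ_U^\top$, where $\mZ_U \in \RR^{n \times (d-s)}$ has i.i.d.\ $N(0,1)$ entries. Because $d = \lfloor n^p \rfloor$ with $p > 1$ and $s = \lfloor n^r \rfloor$ with $r < 1$, we have $d - s \gg n$, so the ``tall'' side of $\mZ_U^\top$ is of length $d - s$. The nonzero eigenvalues of $\mZ_U \mZ_U^\top$ equal the (full) spectrum of $\mZ_U^\top \mZ_U$ --- but what we actually need is that \emph{all $n$} eigenvalues of $\mZ_U \mZ_U^\top$ are close to $d-s$, which is the statement that the $n$ singular values of $\mZ_U$ (viewed as a matrix of $n$ rows) are all close to $\sqrt{d-s}$.

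First, I would apply \cref{lemma:wishart-concentration} to $\mM = \mZ_U^\top \in \RR^{(d-s) \times n}$ with $M = d-s$, $m = n$, and $\epsilon = \sqrt{2n}$. This gives, with probability at least $1 - 2e^{-n}$,
\begin{align*}
\sqrt{d-s} - \sqrt{n} - \sqrt{2n} \;\le\; \sigma_{\min}(\mZ_U) \;\le\; \sigma_{\max}(\mZ_U) \;\le\; \sqrt{d-s} + \sqrt{n} + \sqrt{2n}.
\end{align*}
Squaring and using $(\sqrt{d-s} \pm C\sqrt{n})^2 = (d-s)\bigl(1 \pm O(\sqrt{n/(d-s)})\bigr)$ yields, for every $i \in [n]$,
\begin{align*}
\mu_i(\mZ_U \mZ_U^\top) \;=\; (d-s)\bigl(1 \pm O(n^{-(p-1)/2})\bigr),
\end{align*}
since $\sqrt{n/(d-s)} = \Theta(n^{(1-p)/2})$.

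Next, I would multiply by $\lambda_U = (1-a)d/(d-s)$. Observe that $\lambda_U (d-s) = (1-a)d = n^p(1 - n^{-q})$, so
\begin{align*}
\mu_i(\Au) \;=\; n^p\bigl(1 - n^{-q}\bigr)\bigl(1 \pm O(n^{-(p-1)/2})\bigr) \;=\; n^p\bigl(1 \pm O(n^{-\kappa})\bigr),
\end{align*}
where $\kappa \triangleq \min\{q,\,(p-1)/2\} > 0$ because $q > 0$ and $p > 1$ in the bi-level ensemble (\cref{def:bilevel}). Setting $\konst{kappa_au} = \kappa$ and choosing $\const{const_delu}$ to absorb the implied constants finishes the proof.

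There is really no substantive obstacle: the only mild bookkeeping is to ensure that the two independent sources of error --- the $(1-n^{-q})$ factor coming from the favored/unfavored weight normalization, and the $O(n^{-(p-1)/2})$ fluctuation coming from Wishart concentration --- are both polynomially small, which holds under the hypotheses of the bi-level ensemble. The $s/(d-s) = O(n^{r-p})$ contribution to $\lambda_U$ is already folded into the $n^{-q}$ term since we used the exact identity $\lambda_U(d-s) = (1-a)d$.
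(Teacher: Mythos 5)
Your proof is correct and takes essentially the same route as the paper's: apply \cref{lemma:wishart-concentration} to the $n \times (d-s)$ Gaussian block with $\epsilon = \sqrt{2n}$, square, and rescale by $\lambda_U$, arriving at $\konst{kappa_au} = \min\{q, (p-1)/2\}$. Your use of the exact identity $\lambda_U (d-s) = (1-a)d$ is a slightly cleaner way to handle the rescaling than the paper's power-series expansion of $\lambda_U$ (which forces it to track the $n^{r-p}$ correction separately before arguing it is dominated), but the mathematical content is the same.
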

\begin{proof}
    Note that $\Au = \lambda_U \sum_{j > s} \vz_j \vz_j^\top$. Under the bi-level model, $\lambda_U = 1 + o(1)$. Now we can apply \cref{lemma:wishart-concentration} with $\mM = \sum_{j > s} \vz_j \vz_j^\top$, $M = d - s = n^p - n^r$, $m = n = o(d)$, and $\epsilon = \sqrt{2n}$ to conclude that with probability at least $1-2e^{-n}$, we have
        \begin{equation}
        d - 2\sqrt{dn} + n - \sqrt{2n} \le \mu_n(\sum_{j > s} \vz_j \vz_j^\top) \le \mu_1(\sum_{j > s} \vz_j \vz_j^\top) \le d + 2\sqrt{dn} + n + \sqrt{2n}. \label{eq:unweighted-unfavored-spectrum}
        \end{equation}
        We can obtain the spectrum of $\Au$ by multiplying through by 
        \begin{align}
             \lambda_U &= \frac{(1-a)d}{d-s} \\
             &= 1 + n^{\max\qty{-q, r-p}} + o(n^{\max\qty{-q, r-p}}),
        \end{align}
        where in the last line we have used the power series expansion for $\frac{1}{1-x} = 1 + x + o(x)$.
       Preserving only first order terms for $\lambda_U$ and the spectrum of $\sum_{j > s} \vz_j \vz_j^\top$ in \cref{eq:unweighted-unfavored-spectrum} yields
        \begin{equation}
            \mu_i(\Au) = (1 \pm \const{const_delu}n^{\max\qty{\frac{1-p}{2}, r-p, -q}}) n^p.
        \end{equation}
    
        In fact, we know $\frac{1-p}{2} > 1-p > r-p$, since $r < 1$ and $p > 1$. This means we can neglect the $r-p$ term in the max, define $\newk\label{kappa_au} = \min\qty{\frac{p-1}{2}, -q} > 0$ and \newc\label{const_delu} to be an appropriately defined positive constant.
\end{proof}
Since $\At = \Au + \mW_{[s] \setminus T} \mW_{[s] \setminus T}^\top$, we can apply \cref{prop:gram-spectrum,lemma:xtxt-concentration} to control the spectrum of $\At$. We show that there is a (potentially) spiked portion of the spectrum corresponding to the $s - \abs{T}$ favored features which were not taken out, whereas the rest of the $n - s + \abs{T}$ eigenvalues are flat. 
\begin{lemma}\label{lemma:spectrum-at}
    Recall that $\At \in \RR^{n \times n}$. For any nonempty $T \subseteq [s]$, with probability at least $1 - 2e^{-\sqrt{n}} - 2e^{-n}$, we have that for all $i \in [s - \abs{T}]$,
    \begin{align}
        \mu_i(\At) &= \qty(1 \pm c_T \sqrt{\tfrac{\abs{T}}{n}}) \mu^{-1}n^p + \qty(1 \pm \const{const_delu} n^{-\konst{kappa_au}}) n^p. \label{eq:spiked-spectrum-at}
    \end{align}

    For all $i \in [n] \setminus [s - \abs{T}]$, we have 
    \begin{align}
        \mu_i(\At) = \qty(1 \pm \const{const_delu} n^{-\konst{kappa_au}}) n^p. \label{eq:flat-spectrum-at}
    \end{align}
\end{lemma}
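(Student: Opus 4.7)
The plan is to decompose $\At = \Au + \mW_{[s]\setminus T}\mW_{[s]\setminus T}^\top$ and combine the spectral controls already in hand via Weyl's inequality. The first summand $\Au$ contains only unfavored features, so Proposition~\ref{prop:gram-spectrum} (equivalently Proposition~\ref{prop:au-ak-flat}(a)) tells us that with probability at least $1 - 2e^{-n}$ every eigenvalue of $\Au$ lies in the band $(1 \pm \const{const_delu} n^{-\konst{kappa_au}})\, n^p$. The second summand is positive semidefinite of rank exactly $s - |T|$, so its nonzero eigenvalues coincide with those of $\mW_{[s]\setminus T}^\top \mW_{[s]\setminus T}$; applying Lemma~\ref{lemma:xtxt-concentration} to the index set $[s]\setminus T$ (whose size is $s-|T|$) yields $\mu_i(\mW_{[s]\setminus T}^\top \mW_{[s]\setminus T}) = (1 \pm c\sqrt{(s-|T|)/n})\mu^{-1} n^p$ for $i \in [s-|T|]$, with probability at least $1 - 2e^{-\sqrt{n}}$. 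Since $s - |T| \le s \le n^r$, the factor $\sqrt{(s-|T|)/n}$ can be absorbed into the constant $c_T$ appearing in the target statement.

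Given these two ingredient bounds, the two-tier structure in \eqref{eq:spiked-spectrum-at}--\eqref{eq:flat-spectrum-at} drops out of the Weyl inequalities $\mu_{i+j-1}(\mA+\mB) \le \mu_i(\mA) + \mu_j(\mB)$ and $\mu_{i+j-n}(\mA+\mB) \ge \mu_i(\mA) + \mu_j(\mB)$. For the spiked band $i \le s - |T|$, I would set $\mA = \mW_{[s]\setminus T}\mW_{[s]\setminus T}^\top$ and $\mB = \Au$: taking $j=1$ in the upper inequality gives $\mu_i(\At) \le \mu_i(\mA) + \mu_1(\mB)$, while taking $j = n$ in the lower inequality gives $\mu_i(\At) \ge \mu_i(\mA) + \mu_n(\mB)$, and both evaluate to $(1 \pm o(1))(\mu^{-1} n^p + n^p)$, exactly matching \eqref{eq:spiked-spectrum-at}. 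For the flat band $i > s - |T|$, I would swap the roles of $\mA$ and $\mB$ and choose $j = s - |T| + 1$, exploiting that $\mu_j(\mW_{[s]\setminus T}\mW_{[s]\setminus T}^\top) = 0$: the upper Weyl bound collapses to $\mu_i(\At) \le \mu_{i-(s-|T|)}(\Au)$, and the lower bound (with $j = n$) to $\mu_i(\At) \ge \mu_i(\Au)$, both equal to $(1 \pm o(1)) n^p$.

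There is no real technical obstacle — the argument is a textbook application of Weyl's interlacing to a rank-$(s-|T|)$ additive perturbation of a well-conditioned matrix, once the two concentration bounds are in place. The only bookkeeping item is to verify that the multiplicative errors $\sqrt{(s-|T|)/n}$ and $n^{-\konst{kappa_au}}$ are both genuinely $o(1)$ so they can be folded into the single error terms displayed in the lemma; this is immediate under the bi-level scaling $s = n^r$ with $r < 1$ and the strict positivity of \konst{kappa_au}. A final union bound over the two concentration events yields the probability $1 - 2e^{-\sqrt{n}} - 2e^{-n}$ claimed in the lemma statement.
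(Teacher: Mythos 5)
Your proposal is correct and uses essentially the same argument as the paper: the same decomposition $\At = \Au + \mW_{[s]\setminus T}\mW_{[s]\setminus T}^\top$, the same two ingredient lemmas for the spectra of $\Au$ (Proposition~\ref{prop:gram-spectrum}) and the low-rank Wishart block (Lemma~\ref{lemma:xtxt-concentration}), and Weyl's inequalities to combine them; the paper simply uses the special case $\mu_i(\mW\mW^\top) + \mu_n(\Au) \le \mu_i(\At) \le \mu_i(\mW\mW^\top) + \mu_1(\Au)$ directly for both bands (noting $\mu_i(\mW\mW^\top)=0$ once $i > s-\abs{T}$), whereas you invoke the general interlacing form with a shifted index $j = s-\abs{T}+1$ for the flat band, which is a slightly tighter intermediate step but yields the identical conclusion.
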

\begin{proof}
    We can write 
    \begin{align}
        \At = \mW_{[s] \setminus T} \mW_{[s] \setminus T}^\top + \Au.
    \end{align}
    Weyl's inequality \citep[Corollary~4.3.15]{horn2012matrix} implies that for any $i \in [n]$, we have 
    \begin{align}
        \mu_i(\mW_{[s] \setminus T}  \mW_{[s] \setminus T} ^\top) + \mu_n(\Au) \le \mu_i(\At) \le \mu_i(\mW_{[s] \setminus T}  \mW_{[s] \setminus T} ^\top) + \mu_1(\Au).\label{eq:weyl-application}
    \end{align}
    Then applying \cref{lemma:xtxt-concentration,prop:gram-spectrum}, for $i \in [s-\abs{T}]$ we conclude that 
    \begin{align}
        \mu_i(\At) = \qty(1 \pm c_T \sqrt{\tfrac{\abs{T}}{n}}) \mu^{-1}n^p + \qty(1 \pm \const{const_delu} n^{-\konst{kappa_au}}) n^p..
    \end{align}
    which proves \cref{eq:spiked-spectrum-at}. 

    For $i > s-\abs{T}$, applying \cref{prop:gram-spectrum} and the fact that $\mu_i(\mW_{[s] \setminus T}  \mW_{[s] \setminus T}^\top) = 0$ to \cref{eq:weyl-application} yields
    \begin{align}
        \mu_i(\At) = \qty(1 \pm \const{const_delu} n^{-\konst{kappa_au}}) n^p.
    \end{align}
    which proves \cref{eq:flat-spectrum-at}.
\end{proof}

By inverting the bounds proved above, we can also control the spectrum of $\Atinv$. 
\begin{corollary}\label{cor:atinv-spectrum}
    Recall that $\At \in \RR^{n \times n}$. For any nonempty $T \subseteq [s]$, with probability at least $1 - 2e^{-\sqrt{n}} - 2e^{-n}$, we have that for all $i \in [n - s + \abs{T}]$,
    \begin{align}
        \mu_i(\Atinv) = (1 \pm \const{const_atinv_flat}n^{-\konst{kappa_au}}) n^{-p} \label{eq:flat-spectrum-atinv}
    \end{align}

    For all $i \in [n] \setminus [n - s + \abs{T}]$, we have 
    \begin{align}
        \mu_i(\Atinv) = \min\qty{\mu, 1}(1 \pm \const{const_atinv}n^{-\konst{kappa_atinv}})n^{-p}. \label{eq:spiked-spectrum-ainv}
    \end{align}
    where \konst{kappa_atinv} is a positive constant depending on $\abs{T}$.
\end{corollary}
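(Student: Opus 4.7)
The plan is to derive the spectrum of $\Atinv$ directly from \cref{lemma:spectrum-at}. Since $\At$ is symmetric positive definite with overwhelming probability (its smallest eigenvalue is $\Omega(n^p)$ by \cref{lemma:spectrum-at}), on this event $\Atinv$ is well-defined and its eigenvalues are exactly the reciprocals of those of $\At$, but in reverse order. So the top $n - s + |T|$ eigenvalues of $\Atinv$ correspond to the ``flat'' bottom-of-spectrum eigenvalues of $\At$, while the bottom $s - |T|$ eigenvalues of $\Atinv$ correspond to the ``spiked'' top-of-spectrum eigenvalues of $\At$.

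For \cref{eq:flat-spectrum-atinv} (the flat part), \cref{lemma:spectrum-at} gives $\mu_i(\At) = (1 \pm \const{const_delu} n^{-\konst{kappa_au}}) n^p$ for $i \in [n] \setminus [s-|T|]$. Taking reciprocals and applying the first-order Taylor expansion $\frac{1}{1+x} = 1 - x + O(x^2)$ with $x = \pm \const{const_delu}n^{-\konst{kappa_au}} = o(1)$, we obtain $\mu_i(\Atinv) = (1 \pm \const{const_atinv_flat} n^{-\konst{kappa_au}}) n^{-p}$ for a suitable positive constant \newc\label{const_atinv_flat}, for every such index $i$. The same $\konst{kappa_au}$ works because the Taylor remainder is of the same polynomial order.

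For \cref{eq:spiked-spectrum-ainv} (the spiked part), for $i \in [s - |T|]$ we have
\[
\mu_i(\At) = \bigl(1 \pm c_T \sqrt{\tfrac{|T|}{n}}\bigr)\mu^{-1}n^p + \bigl(1 \pm \const{const_delu}n^{-\konst{kappa_au}}\bigr)n^p = n^p(\mu^{-1} + 1)(1 \pm O(n^{-\kappa'}))
\]
for some positive constant $\kappa'$ depending on $|T|$, where we have pulled out the common factor $\mu^{-1}+1$. Inverting gives $\mu_i(\Atinv) = \frac{\mu}{(1+\mu)n^p}(1 \pm O(n^{-\kappa'}))$. It remains to identify $\frac{\mu}{1+\mu}$ with $\min\{\mu,1\}(1 \pm O(n^{-\kappa''}))$. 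We do this by casework: if $q + r < 1$, then $\mu = n^{q+r-1} \to 0$ polynomially, so $\frac{\mu}{1+\mu} = \mu(1 - \mu + O(\mu^2)) = \mu \cdot (1 \pm O(n^{-(1-q-r)}))$; if $q+r > 1$, then $\mu \to \infty$ polynomially, so $\frac{\mu}{1+\mu} = 1 - \mu^{-1} + O(\mu^{-2}) = 1\cdot(1 \pm O(n^{-(q+r-1)}))$. In either case this matches $\min\{\mu,1\}(1 \pm n^{-\kappa''})$ for appropriate positive constants, and combining with the previous $O(n^{-\kappa'})$ fluctuation yields the claimed bound with $\newk\label{kappa_atinv} > 0$ and $\newc\label{const_atinv} > 0$ depending on $|T|$.

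The main obstacle is really just bookkeeping: combining the additive expression for $\mu_i(\At)$ in the spiked regime into a multiplicative perturbation of $n^p(\mu^{-1}+1)$ so that reciprocals behave cleanly, and handling the two regimes of $\mu$ uniformly via the $\min\{\mu,1\}$ shorthand. The edge case $q+r = 1$ (i.e.\ $\mu$ not polynomially far from $1$) is not needed because the bi-level ensemble parameters are treated as generic throughout the analysis; in the two regimes $q+r < 1$ and $q+r > 1$ we obtain polynomial decay rates $\konst{kappa_atinv}$ inherited from $1 - (q+r)$ or $(q+r) - 1$ together with $\konst{kappa_au}$ and the $\sqrt{|T|/n}$ term, and we union bound over the two events from \cref{lemma:spectrum-at} to get the claimed probability $1 - 2e^{-\sqrt{n}} - 2e^{-n}$.
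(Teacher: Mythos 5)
Your proof is correct and follows essentially the same route as the paper: both invert the spectral bounds of \cref{lemma:spectrum-at} using $\mu_i(\Atinv) = 1/\mu_{n-i+1}(\At)$, apply a first-order Taylor expansion of $1/(1+x)$, and do casework on whether $q+r$ is above or below $1$ to convert $\mu/(1+\mu)$ into $\min\{\mu,1\}$ up to a polynomial correction. The only cosmetic difference is that you pull out the common factor $\mu^{-1}+1$ before inverting, whereas the paper factors out $\mu^{-1}$ in the $q+r<1$ case and treats $q+r>1$ separately; both yield $\konst{kappa_atinv} = \min\{\tfrac{1-r}{2}, \konst{kappa_au}, |1-q-r|\}$.
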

\begin{proof}
    By inverting the bounds in \cref{lemma:spectrum-at}, using the fact that $\mu_i(\Atinv) = \frac{1}{\mu_{n-i+1}(\At)}$ we see that for $i \in [n - s + \abs{T}]$, 
    \begin{align}
        \mu_i(\Atinv) &= \frac{1}{1 \pm \const{const_delu} n^{-\konst{kappa_au}}} n^{-p} \\
        &= (1 \pm \const{const_atinv_flat}n^{-\konst{kappa_au}}) n^{-p}, 
    \end{align}
    where we have used the power series expansion $\frac{1}{1-x} = 1 + x + o(x^2)$ and \newc\label{const_atinv_flat} is a positive constant. 

    On the other hand, for $i > n-s+\abs{T}$, we get 
    \begin{align}
        \mu_i(\Atinv) &= \frac{1}{\qty(1 \pm c_T \sqrt{\tfrac{\abs{T}}{n}}) \mu^{-1} + \qty(1 \pm \const{const_delu} n^{-\konst{kappa_au}})} n^{-p} \\
        &= \min\qty{\mu, 1}(1 \pm \const{const_atinv}n^{-\konst{kappa_atinv}})n^{-p},
    \end{align}
    where \newc\label{const_atinv} and $\konst{kappa_atinv}$ are positive constants defined as follows. If $q+r<1$, i.e. regression works, then $\mu^{-1} = \omega(1)$, so the denominator becomes $\mu^{-1}\qty(1 \pm c_T \sqrt{\tfrac{\abs{T}}{n}} + \mu(1 \pm \const{const_delu} n^{-\konst{kappa_au}}))$. Then, since $\abs{T} \le s = n^r$, we see that we can pick 
    \[
        \konst{kappa_atinv}= \min\qty{\frac{1-r}{2}, 1-q-r}.
    \]

    On the other hand, if $q+r>1$, i.e. regression fails, then $\mu^{-1} = o(1)$, and so we can define 
    \[
        \konst{kappa_atinv} = \min\qty{\konst{kappa_au}, q+r-1}.
    \]
    Hence to cover both cases we can pick
    \[
        \newk\label{kappa_atinv} = \min\qty{\frac{1-r}{2}, \konst{kappa_au}, \abs{1-q-r}}.
    \]
    
    The choice of \const{const_atinv} is picked by again using the power series expansion for $\frac{1}{1-x}$. 
\end{proof}
Note that \cref{cor:atinv-spectrum} immediately implies \cref{prop:au-ak-flat}, with \newk\label{kappa_ak_spiked} defined based on picking $T = [k]$. We are now in a position to prove that the generalized hat matrices $\mH_{T, S}$, and hence the Woodbury terms $(\mI_{\abs{T}} + \mH_{T, S})^{-1}$ have a flat spectrum as well. 

\hatmatrix*
\begin{proof}
    We seek to control the spectrum of the hat matrix $\mH_{T,S} = \Xt^\top \ASinv \Xt$. We cannot directly use naive eigenvalue bounds to bound the minimum and maximum eigenvalue, as this does not rule out the possibility that $\mH_{T,S}$ has a spike. Instead, we control the spectrum from first principles.

\paragraph{The spectrum of $\mH_{T,S}$ is flat:}
 Since $\ASinv$ is symmetric, it has an eigendecomposition $\mV \mD \mV^\top$, where $\mV$ is an orthogonal matrix. Because $\Xt$ is a weighted subset of only the (equally) favored features, its law is rotationally invariant. Furthermore, since $\ASinv$ is independent of $\Xt$ (as $T \subseteq S$), we can absorb the rotation $\mV$ into $\Xt$ to reduce to the case where $\ASinv = \mD$. Here, we have 
\begin{align}
    \mD \triangleq \mqty[\dmat{\mD_{\mathsf{flat}}, \mD_{\mathsf{spiked}}}] \in \RR^{n \times n} = \mqty[\dmat{\mu_1(\ASinv), \ddots, \mu_n(\ASinv)}],
\end{align}
where $\mD_{\mathsf{flat}} \in \RR^{(n-s+\abs{T}) \times (n-s+\abs{T})}$ and $\mD_{\mathsf{spiked}} \in \RR^{(s-\abs{T}) \times (s-\abs{T})}$ correspond to the flat and (downwards) spiked portions of the spectrum of $\ASinv$. We can also correspondingly decompose 
\begin{align}
    \Zt = \mqty[\mB_T \\ \mC_T],
\end{align}
where $\mB_T \in \RR^{(n-s+\abs{T}) \times \abs{T}}$ and $\mC_T \in \RR^{(s-\abs{T}) \times \abs{T}}$. Note that each entry of these matrices are i.i.d.~ $N(0,1)$ variables. 

By direct computation we have  
\begin{align}
    \Zt^\top \mD \Zt = \mB_T^\top \mD_{\mathsf{flat}} \mB_T + \mC_T^\top \mD_{\mathsf{spiked}} \mC_T
\end{align}
We thus have by using standard eigenvalue inequalities that
\begin{align}
    \mu_{\abs{T}}(\Zt^\top \mD \Zt) &\ge \mu_{\abs{T}}(\mB_T^\top \mD_{\mathsf{flat}} \mB_T) + \mu_{\abs{T}}(\mC_T^\top \mD_{\mathsf{spiked}} \mC_T) \\
    &\ge \mu_{\abs{T}}(\mB_T^\top \mB_T) \mu_{n-s+\abs{T}}(\ASinv) + \mu_{\abs{T}}(\mC_T^\top \mC_T) \mu_n(\ASinv) \\
    &\ge \mu_{\abs{T}}(\mB_T^\top \mB_T)  \mu_{n-s+\abs{T}}(\ASinv), \label{eq:hat-matrix-lower-bound-composite}
\end{align}
where in the last line we have used $\mu_n(\ASinv) \ge 0$.

Since $n>s$, we have $n-s+\abs{T} > \abs{T}$, so we can apply Wishart concentration (\cref{lemma:wishart-concentration}) to $\mB_T^\top \mB_T$ to obtain that with probability at least $1-2e^{-\sqrt{n}}$ we have
\begin{align}
    \mu_{\abs{T}}(\mB_T^\top \mB_T) &\ge n-s+\abs{T} - 2\sqrt{(n-s+\abs{T})\abs{T}} + o(\sqrt{(n-s+\abs{T})\abs{T}}) \\
    &\ge n(1 - n^{r-1} - \const{const_bk} \sqrt{\tfrac{\abs{T}}{n}}), \label{eq:bk-lower-bound}
\end{align}
where \newc\label{const_bk} is a positive constant.

On the other hand, we can deduce that 
\begin{align}
    \mu_1(\Zt^\top \mD \Zt) \le \mu_1(\Zt^\top \Zt) \mu_1(\ASinv).
\end{align}
\Cref{lemma:xtxt-concentration} implies that with probability at least $1-2e^{-\sqrt{n}}$
\begin{align}
    \mu_1(\Zt^\top \Zt) \le n(1 + c_T \sqrt{\tfrac{\abs{T}}{n}}) \label{eq:zk-upper-bound}
\end{align}
Similarly, \cref{cor:atinv-spectrum} implies that with probability at least $1 - 2e^{-n} - 2e^{-\sqrt{n}}$, $\mu_1(\ASinv)$ and $\mu_{n-s+\abs{T}}(\ASinv)$ are both $(1 \pm \const{const_atinv}n^{\konst{kappa_atinv}})n^{-p}$. Since $\abs{T} \le s = n^r$, \cref{eq:bk-lower-bound,eq:zk-upper-bound} together demonstrate that for all $i \in [\abs{T}]$, 
\begin{align}
    \mu_i(\Zt^\top \mD \Zt) = n^{1-p}(1 \pm \const{const_hatts_intermediate} n^{-\konst{kappa_hatts_intermediate}}).
\end{align}
Here, \newc\label{const_hatts_intermediate} and $\newk\label{kappa_hatts_intermediate}$ are positive constants defined as follows.
Since $\abs{T} \le s = n^{r}$. Then $\konst{kappa_hatts_intermediate} = \min\qty{1-r, \frac{1 - r}{2}, \konst{kappa_atinv}}$, and $\const{const_hatts_intermediate}$ is a constant chosen appropriately based on \const{const_atinv} and \const{const_bk}. Plugging in the scaling $\lambda_F = n^{p-q-r}$, we conclude that with extremely high probability, for all $i \in [\abs{T}]$,
\begin{align}
    \mu_i(\mH_{T, S}) =  \mu^{-1}(1 \pm \const{const_hatts_intermediate} n^{-\konst{kappa_hatts_intermediate}}). \label{eq:hatts-spectrum}
\end{align}

From here, it is easy to compute the spectrum of $(\mI_{\abs{T}} + \mH_{T, S})^{-1}$. Indeed, reading off our result from \cref{eq:hatts-spectrum} yields  
\begin{align}
    \mu_i((\mI_{\abs{T}} + \mH_{T, S})^{-1}) &= \frac{1}{1 + \mu_{n-i+1}(\mH_{T, S})} \\
    &= \min\qty{\mu, 1}(1 \pm c_{T, S}n^{-\konst{kappa_hatts_final}}).
\end{align}
Here, the positive constant $c_{T, S}$ is picked appropriately and $\newk\label{kappa_hatts_final} = \min\qty{\konst{kappa_hatts_intermediate}, \abs{1-q-r}} > 0$. This completes the proof.
\end{proof}
\section{Hanson-Wright inequality for sparse subgaussian bilinear forms}\label{sec:hanson-wright-hard-sparsity}

For the sake of completeness, we also prove a variant of Hanson-Wright where we assume one of the vectors has subgaussian coordinates with hard sparsity, as opposed to bounded with soft sparsity as in \cref{thm:bounded-hanson-wright-bilinear}.  

For comparison, we state the version of Sparse Hanson-Wright for bilinear forms which is Theorem 1 in \citep{park2022sparse}. We then state and prove the special case which we need for our proof.

\begin{theorem}[Sparse Hanson-Wright for bilinear forms]\label{thm:sparse-hanson-wright-bilinear-general}
    Let $\vx = (X_1, \ldots, X_n) \in \RR^n$ and $\vy = (Y_1, \ldots, Y_n) \in \RR^n$ be random vectors such that $(X_j, Y_j)$ are independent of all other pairs (although $X_j$ and $Y_j$ may be correlated). Assume also that $\EE[X_i] = \EE[Y_i] = 0$ and $\max\qty{\norm{X_i}_{\psi_2}, \norm{Y_i}_{\psi_2}} \le K$. 

    Also, for $i \in [2]$ suppose $\vgamma_i = (\gamma_{i1}, \ldots, \gamma_{in}) \in \qty{0, 1}^n$ are Bernoulli vectors such that the pairs $(\gamma_{1j}, \gamma_{2j})$ are independent of all other pairs (although $\gamma_{1j}$ and $\gamma_{2j}$ can be correlated). Assume that $\EE[\gamma_{ij}] = \pi_{ij}$ and $\EE[\gamma_{1j}\gamma_{2j}] = \pi_{12,j}$. We assume that $X_i$ is independent of $\gamma_{2j}$ and $Y_i$ is independent of $\gamma_{1j}$ for any $i, j$, although $X_i$ may be correlated with $\gamma_{1i}$ and $Y_i$ may be correlated with $\gamma_{2i}$.

    Then for there exists an absolute constant $c > 0$ such that all $\mM \in \RR^{n \times n}$ and $\epsilon \ge 0$ we have 
    \begin{align}
    &\Pr\qty[|(\vx \circ \vgamma_1)^\top \mM (\vy \circ \vgamma_2) - \EE[(\vx \circ \vgamma_1)^\top \mM (\vy \circ \vgamma_2) ]| > \epsilon] \\
    &\quad\quad\quad \le 2\exp(-c\min\qty{\frac{\epsilon^2}{K^4\qty(\sum_{i=1}^n \pi_{12, i}m_{ii}^2 + \sum_{i \neq j} \pi_{1i}\pi_{2j}m_{ij}^2)}, \frac{\epsilon}{K^2\norm{\mM}_2}})\label{eq:hansonwright-sparse-bilinear-general}
\end{align}
\end{theorem}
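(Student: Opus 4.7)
The plan is to follow the MGF-plus-Chernoff template that \cref{sec:hanson-wright-bounded} uses for \cref{thm:bounded-hanson-wright-bilinear}, now tracking two separate Bernoulli masks with heterogeneous marginal rates $\pi_{1i},\pi_{2j}$ and joint rate $\pi_{12,i}$. First I would split
\[
(\vx\circ\vgamma_1)^\top\mM(\vy\circ\vgamma_2)-\EE[\cdot]=S_{\diag}+S_{\offdiag},
\]
where $S_{\diag}=\sum_i m_{ii}\bigl(\gamma_{1i}\gamma_{2i}X_iY_i-\EE[\gamma_{1i}\gamma_{2i}X_iY_i]\bigr)$ and $S_{\offdiag}=\sum_{i\neq j} m_{ij}\bigl(\gamma_{1i}X_i\gamma_{2j}Y_j-\EE[\gamma_{1i}X_i]\,\EE[\gamma_{2j}Y_j]\bigr)$; the off-diagonal mean factorizes across the two factors because the tuples $(X_i,Y_i,\gamma_{1i},\gamma_{2i})$ are independent across $i$ and the cross-mask hypotheses $X_i\perp\gamma_{2j}$, $Y_i\perp\gamma_{1j}$ give the expected centering. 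For $S_{\diag}$, the summand is nonzero only on the event $\{\gamma_{1i}\gamma_{2i}=1\}$, which has probability exactly $\pi_{12,i}$, and on this event $X_iY_i$ is sub-exponential with $\psi_1$-norm $O(K^2)$ as a product of two subgaussians; conditioning on the joint mask and then applying the standard sub-exponential MGF bound independently across $i$ yields
\[
\EE\exp\bigl(\lambda S_{\diag}\bigr)\le\exp\Bigl(c_1\lambda^2K^4\sum_{i=1}^n\pi_{12,i}m_{ii}^2\Bigr)\quad\text{for}\quad |\lambda|\le\bigl(c_1K^2\max_i|m_{ii}|\bigr)^{-1},
\]
and Chernoff gives the diagonal contribution to the min-form bound.

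For $S_{\offdiag}$ I would import the Rademacher decoupling step from the proof of \cref{thm:bounded-hanson-wright-bilinear}: introduce $\delta_i\sim\Ber(1/2)$ iid and independent of everything else, write $\Lambda_\delta=\{i:\delta_i=1\}$, and use the identity $S_{\offdiag}=4\EE_\delta S_\delta$ with $S_\delta=\sum_{i\in\Lambda_\delta,\,j\in\Lambda_\delta^c}m_{ij}(\gamma_{1i}X_i\gamma_{2j}Y_j-\EE[\gamma_{1i}X_i]\EE[\gamma_{2j}Y_j])$. Jensen reduces bounding the MGF of $S_{\offdiag}$ to bounding $\EE\exp(4\lambda S_\delta)$; the payoff of decoupling is that, conditional on $\delta$, the collection $\{(X_i,\gamma_{1i})\}_{i\in\Lambda_\delta}$ is independent of $\{(Y_j,\gamma_{2j})\}_{j\in\Lambda_\delta^c}$, since they involve disjoint indices and the tuples are independent across $i$. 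I would then condition on $\{(Y_j,\gamma_{2j})\}_{j\in\Lambda_\delta^c}$ and bound the $\vx$-side via the subgaussian MGF of the centered variables $U_i=\gamma_{1i}X_i-\EE[\gamma_{1i}X_i]$, which are $O(K)$-subgaussian because $|\gamma_{1i}|\le 1$; this gives
\[
\EE_{\vx,\vgamma_1}\exp\bigl(4\lambda S_\delta\mid\delta,\vy,\vgamma_2\bigr)\le\exp\Bigl(c_2\lambda^2K^2\sum_{i\in\Lambda_\delta}c_i^2\Bigr),\quad c_i=\sum_{j\in\Lambda_\delta^c}m_{ij}\bigl(\gamma_{2j}Y_j-\EE[\gamma_{2j}Y_j]\bigr).
\]
A second round on the $\vy$-side uses the sparsity-aware MGF estimate for $\gamma_{2j}Y_j$, which exploits $|\gamma_{2j}|\le 1$ together with $\EE[\gamma_{2j}]=\pi_{2j}$ to insert a factor $\pi_{2j}$; a symmetric outer expansion over $\gamma_{1i}$ then contributes the matching $\pi_{1i}$, yielding the Frobenius-type term $\sum_{i\neq j}\pi_{1i}\pi_{2j}m_{ij}^2$. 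Optimizing $\lambda$ in the resulting combined MGF bound over $|\lambda|\le(c_3K^2\norm{\mM}_2)^{-1}$ via Chernoff gives the off-diagonal tail.

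The main obstacle is that, unlike in the soft-sparsity proof of \cref{thm:bounded-hanson-wright-bilinear}, here each mask is permitted to be correlated with the same-index noise component ($X_i$ with $\gamma_{1i}$, $Y_i$ with $\gamma_{2i}$), so the cleanest sparsity-aware one-variable MGF bound $\EE\exp(\alpha\gamma Y)\le 1+\pi\bigl(\EE\exp(\alpha Y\mid\gamma{=}1)-1\bigr)$ is not directly useful because $Y\mid\gamma{=}1$ need not retain subgaussian norm $O(K)$. The remedy, following \citet{park2022sparse}, is that Rademacher decoupling ensures every surviving $\gamma_{1i}$ in the off-diagonal sum is paired with a $Y_j$ from a \emph{different} index $j\neq i$; the cross-mask independence $Y_j\perp\gamma_{1i}$ and $X_i\perp\gamma_{2j}$ then reduces the estimates to the independent-mask situation and allows the sparsity factors to emerge cleanly. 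A final union bound over the two Chernoff estimates, together with replacing $\mM$ by $-\mM$ for the lower tail, merges the two permissible ranges of $\lambda$ into the single minimum appearing in the exponent and completes the proof.
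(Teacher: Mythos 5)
First, a point that matters for the comparison you were asked to make: the paper does \emph{not} prove \cref{thm:sparse-hanson-wright-bilinear-general}. It quotes it as Theorem~1 of \citet{park2022sparse} for context only, and then states and proves a \emph{restricted} single‐mask version, \cref{thm:sparse-hanson-wright-bilinear-hard}, in \cref{sec:hanson-wright-hard-sparsity}. That restricted version assumes $\vgamma\perp\vy$ (so the masked factor stays centered) and adds the explicit hypothesis that $X_j$ conditioned on $\gamma_j=1$ retains subgaussian norm $\le K$. Both restrictions are load‐bearing, and your sketch runs into trouble exactly where they are absent from the general statement you are proving.

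The concrete gap in your off-diagonal step is the hidden linear terms. Because $X_i$ may correlate with $\gamma_{1i}$ and $Y_j$ with $\gamma_{2j}$, the products $a_i\triangleq\gamma_{1i}X_i$ and $b_j\triangleq\gamma_{2j}Y_j$ are in general \emph{not} mean zero. Writing $U_i=a_i-\EE a_i$ and $V_j=b_j-\EE b_j$, the centered off-diagonal sum expands as
\[
S_{\offdiag}=\sum_{i\neq j}m_{ij}U_iV_j+\sum_i U_i\sum_{j\neq i}m_{ij}\EE[b_j]+\sum_j V_j\sum_{i\neq j}m_{ij}\EE[a_i],
\]
and the two linear pieces survive decoupling intact. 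Your conditional bound $\EE_{\vx,\vgamma_1}\exp(4\lambda S_\delta\mid\delta,\vy,\vgamma_2)\le\exp(c_2\lambda^2K^2\sum_i c_i^2)$ silently drops the term $4\lambda\sum_jV_j\sum_im_{ij}\EE[a_i]$, which is a nonconstant function of $(\vy,\vgamma_2)$ and appears linearly in $\lambda$, and it also replaces the coefficients $c'_i=\sum_jm_{ij}b_j$ that actually multiply $U_i$ with the centered $c_i=\sum_jm_{ij}V_j$. Your closing remark that decoupling plus the cross-mask independences ``reduces the estimates to the independent-mask situation'' is not correct: decoupling only guarantees $i\neq j$; it does nothing to the same-index correlations that make $\EE[\gamma_{1i}X_i]\neq 0$, and the resulting linear terms are what the cleaner proof of \cref{thm:sparse-hanson-wright-bilinear-hard} sidesteps by assuming $\vgamma\perp\vy$. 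A similar gap appears on the diagonal: your claim that on $\{\gamma_{1i}\gamma_{2i}=1\}$ the product $X_iY_i$ has $\psi_1$-norm $O(K^2)$ needs $X_i$ and $Y_i$ to remain $O(K)$-subgaussian \emph{after} conditioning on $\gamma_{1i}=\gamma_{2i}=1$. That is precisely the extra assumption the paper adds in \cref{thm:sparse-hanson-wright-bilinear-hard} and \cref{thm:bounded-hanson-wright-bilinear}; it does not appear in the hypotheses of \cref{thm:sparse-hanson-wright-bilinear-general} as stated, and without it (e.g., when $\gamma_{1i}$ is a threshold event on $X_i$), the conditional mean of $X_i$ scales like $K\sqrt{\log(1/\pi_{1i})}$, so the unconditional $\psi_2$ bound does not transfer. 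To salvage the approach you would either need to import the conditional-subgaussian hypothesis explicitly and carry the two linear terms as separate centered sums through the Chernoff argument, or follow the paper and prove only the one-sided-mask special case that it actually uses.
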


We now state the variant of Hanson-Wright which suffices for the analysis of the forms $\vz_j^\top \Ainv \dely$ with \emph{hard sparsity}. However, this variant does not cover the case of $\vz_j^\top \Ainv \ya$, which has soft sparsity. We remark that although this variant uses hard sparsity, it does not explicitly follow from \cref{thm:sparse-hanson-wright-bilinear-general}. In our case, the mask $\vgamma$ crucially has dependence with the other vector $\vx$ in the bilinear form, but this is not allowed in \cref{thm:sparse-hanson-wright-bilinear-general}. For this reason, we have to carefully modify the proof to ensure the stated result still holds.

\hansonwrighthard*

Our proof is going to be split up into the steps as outlined above. First, we can explicitly write 
\begin{align}
    \vx^\top \mM (\vy \circ \vgamma) - \EE[\vx^\top \mM (\vy \circ \vgamma)] &= \sum_{i, j} m_{ij} \gamma_jX_iY_j - \sum_i m_{ii} \EE[\gamma_iX_iY_i] \\
    &= \underbrace{\sum_{i} m_{ii}(\gamma_iX_iY_i - \EE[\gamma_iX_iY_i])}_{\triangleq S_{\diag}}+ \underbrace{\sum_{i \neq j} m_{ij}\gamma_jX_iY_j}_{\triangleq S_{\offdiag}}  
\end{align}

\subsection{Diagonal terms}
We first prove the desired concentration inequality for
\[
S_{\diag} \triangleq \sum_i m_{ii} (\gamma_i X_iY_i - \EE[\gamma_i X_iY_i]).
\]
This corresponds to Lemma 1 in \citet{park2022sparse}.
\begin{lemma}[Diagonal concentration]\label{lemma:diagonal-concentration}
Assume the setting of \cref{thm:sparse-hanson-wright-bilinear-hard}. 
There exists a constant $c>0$ such that for any $\mM \in \RR^{n \times n}$ and $t > 0$, we have 
\begin{align}
    \Pr\qty[\abs{S_{\diag}}> t] \le 2\exp(-c\min\qty{\frac{t^2}{K^4\pi\norm{\diag(\mM)}_F^2}, \frac{t}{K^2\norm{\diag(\mM)}_{\infty}}})\label{eq:hansonwright-sparse-bilinear-diagonal}
\end{align}
\end{lemma}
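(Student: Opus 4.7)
The plan is to mimic the soft-sparsity diagonal argument in \cref{sec:hanson-wright-bounded}: bound the MGF of each independent summand $Z_i \triangleq m_{ii}(\gamma_i X_i Y_i - \EE[\gamma_i X_i Y_i])$, multiply across $i$, and then apply Chernoff with a two-regime optimization to recover the subgaussian/subexponential mixed tail. Since the pairs $(X_i, Y_i, \gamma_i)$ are independent across $i$ by hypothesis, the $Z_i$ are independent, so the main task is a per-coordinate MGF bound that exhibits a $\pi$ savings.

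\paragraph{Per-coordinate MGF.}
Write $\nu_i \triangleq \EE[X_i Y_i \mid \gamma_i = 1]$, so $\EE[\gamma_i X_i Y_i] = \pi \nu_i$. Conditioning on the value of $\gamma_i \in \{0,1\}$ and using that, conditionally on $\gamma_i = 1$, both $X_i$ and $Y_i$ have subgaussian norm at most $K$, the product $X_i Y_i \mid \gamma_i = 1$ is subexponential with $\|\cdot\|_{\psi_1}$-norm at most $CK^2$. In particular, $|\nu_i| \le CK^2$ by Cauchy--Schwarz, and the standard subexponential MGF bound gives, for $|\lambda m_{ii}| \le 1/(CK^2)$,
\begin{align*}
\EE[\exp(\lambda m_{ii} X_i Y_i) \mid \gamma_i = 1] \le \exp\bigl(\lambda m_{ii} \nu_i + C' \lambda^2 m_{ii}^2 K^4\bigr).
\end{align*}
Setting $a = \lambda m_{ii}$, a short calculation yields
\begin{align*}
\EE e^{\lambda Z_i}
&= e^{-a\pi \nu_i}\bigl[(1-\pi) + \pi\, \EE[e^{a X_i Y_i}\mid \gamma_i=1]\bigr] \\
&\le e^{-a\pi\nu_i}\bigl[1 + \pi\bigl(e^{a\nu_i + C' a^2 K^4} - 1\bigr)\bigr] \\
&\le \exp\!\bigl(-a\pi\nu_i + \pi\bigl(e^{a\nu_i + C'a^2K^4}-1\bigr)\bigr),
\end{align*}
using $1+x \le e^x$. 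Since $|a\nu_i| \le 1$ and $|C'a^2K^4| \le C'/C^2$ on the domain considered, a second-order Taylor expansion of $e^x - 1$ absorbs the linear $a\nu_i$ contribution against the explicit $-a\pi\nu_i$, leaving $\EE e^{\lambda Z_i} \le \exp(C'' \pi \lambda^2 m_{ii}^2 K^4)$ for some absolute $C'' > 0$.

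\paragraph{Chernoff and optimization.}
Multiplying over $i$ gives, for $|\lambda| \le 1/(CK^2\|\diag(\mM)\|_\infty)$,
\begin{align*}
\EE e^{\lambda S_{\diag}} \le \exp\!\bigl(C''\pi \lambda^2 K^4\|\diag(\mM)\|_F^2\bigr).
\end{align*}
By Markov, $\Pr[S_{\diag} > t] \le \exp(-\lambda t + C''\pi\lambda^2 K^4\|\diag(\mM)\|_F^2)$; optimizing $\lambda$ subject to the domain constraint yields
\begin{align*}
\lambda = \min\!\left\{\frac{t}{2C''\pi K^4 \|\diag(\mM)\|_F^2},\; \frac{1}{CK^2\|\diag(\mM)\|_\infty}\right\},
\end{align*}
which reproduces the mixed subgaussian/subexponential tail in \cref{eq:hansonwright-sparse-bilinear-diagonal}. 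The lower tail follows identically by replacing $\mM$ with $-\mM$, and a union bound completes the proof.

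\paragraph{Main obstacle.}
The delicate point is the mismatch in independence assumptions: $\gamma_i$ is allowed to be correlated with $X_i$, and $X_i$'s subgaussianity is only guaranteed \emph{conditionally on $\gamma_i = 1$}. This rules out simply applying the soft-sparsity proof verbatim, and it is the reason for conditioning on $\gamma_i$ before invoking any tail bound on $X_i Y_i$. The centering by $\pi\nu_i$ (rather than by $0$) is also essential for the linear-in-$a$ terms to cancel and leave only the $\pi \lambda^2 K^4 m_{ii}^2$ contribution that gives the sparsity savings.
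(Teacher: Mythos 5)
Your proof is correct and follows essentially the same route as the paper's: you condition on $\gamma_i$ to expose the factor of $\pi$, cancel the linear-in-$\lambda$ term against the centering $-\lambda m_{ii}\pi\nu_i$, and finish with Chernoff and a two-regime choice of $\lambda$ to recover the mixed subgaussian/subexponential tail. The only difference is that the paper invokes a polynomial (Taylor-form) MGF bound $\EE[e^{aX_iY_i}\mid\gamma_i=1]\le 1+a\nu_i+CK^4a^2$ (\cref{lemma:mgf-product}), under which the cancellation and the factor-of-$\pi$ savings drop out from a single $1+x\le e^x$ step, whereas your exponential (Bernstein-form) bound $\EE[e^{aX_iY_i}\mid\gamma_i=1]\le e^{a\nu_i+C'K^4a^2}$ requires the second-order expansion of $e^x-1$ together with a uniform bound on $|x|=|a\nu_i+C'a^2K^4|$ on the admissible range of $\lambda$ --- which you carry out correctly, so the two arguments land in the same place.
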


Note that we can replace the RHS with a constant $1$ in front of the $\exp$ and removing the diagonal operator, since $\norm{\diag(\mM}_F \le \norm{\mM}_F$ and $\norm{\diag(\mM)}_{\infty} \le \norm{\mM}_2$.

The high level idea is to use Markov by passing to the MGF to give an exponential-type tail inequality. The first crucial ingredient is to use the fact that $\gamma_i$ is nonzero with probability $p$. Moreover, conditioned on $\gamma_i = 1$, we know by assumption that $\max\qty{\norm{Y_i}_{\psi_2}, \norm{X_i}_{\psi_2}} \le K$, so the product of $X_i$ and $Y_i$ is a subexponential random variable, allowing us to use standard subexponential MGF bounds. we can use the bounds on subexponential MGFs. Throughout, we attempt to be as explicit as possible about what variables we are conditioning on and taking expectations with respect to.

We will need the following lemma, which corresponds to Lemma 3 in \citet{park2022sparse}.
\begin{lemma}[Taylor style MGF bound for product of subgaussians]\label{lemma:mgf-product}
Suppose that $\max\qty{\norm{X_i}_{\psi_2}, \norm{Y_i}_{\psi_2}} \le K$ for all $i$. Let $m_{ii} \in \RR$ for $i \in [n]$. Then there exists positive constants $\Const{c_moment}$ and $\Const{c_quadratic_taylor_new}$ such that for $\abs{\lambda} \le \frac{1}{2\Const{c_moment}eK^2\max_i \abs{m_{ii}}}$, we have for all $i$ that 
\[
\EE_{X_i, Y_i} \exp(\lambda m_{ii} X_iY_i) \le 1 + \lambda m_{ii} \EE[X_iY_i] + \Const{c_quadratic_taylor_new} K^4 \lambda^2 m_{ii}^2.
\]
\end{lemma}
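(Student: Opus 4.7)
The plan is to Taylor expand the exponential, take the expectation term by term, control $\EE[(X_iY_i)^k]$ via Cauchy--Schwarz combined with the standard subgaussian moment bound, and finally recognize the tail of the series as a geometric sum dominated by its $k=2$ term. This is the same strategy used for standard subexponential MGF bounds (e.g.\ Vershynin's textbook), adapted to products of two possibly correlated subgaussians.

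First I would write
\[
\EE \exp(\lambda m_{ii}X_iY_i) \;=\; 1 \;+\; \lambda m_{ii}\EE[X_iY_i] \;+\; \sum_{k\ge 2} \frac{(\lambda m_{ii})^k}{k!}\,\EE[(X_iY_i)^k],
\]
so that the lemma reduces to showing that the tail series is at most an absolute constant times $K^4\lambda^2 m_{ii}^2$ whenever $|\lambda|$ is below the stated threshold. By Cauchy--Schwarz,
\[
|\EE[(X_iY_i)^k]| \;\le\; (\EE X_i^{2k})^{1/2}(\EE Y_i^{2k})^{1/2}.
\]
The definition of the subgaussian norm gives $\EE \exp(X_i^2/K^2)\le 2$; expanding in a Taylor series and keeping only the $k$-th term yields $\EE X_i^{2k}\le k!\,K^{2k}$, and the same for $Y_i$. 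Combining gives the clean bound $|\EE[(X_iY_i)^k]|\le k!\,K^{2k}$.

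Plugging this back into the series, the $k$-th term is bounded by $(|\lambda m_{ii}|K^2)^k$. Setting $r := |\lambda m_{ii}|K^2$ and choosing the absolute constant in the hypothesis large enough so that $r\le \tfrac{1}{2}$ under the assumption $|\lambda|\le\frac{1}{2\Const{c_moment}\,eK^2\max_i|m_{ii}|}$, the tail becomes a geometric series:
\[
\sum_{k\ge 2}(|\lambda m_{ii}|K^2)^k \;\le\; \frac{r^2}{1-r} \;\le\; 2r^2 \;=\; 2K^4\lambda^2 m_{ii}^2,
\]
which is of the required form with an absolute constant $\Const{c_quadratic_taylor_new}$. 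The factor of $e$ in the denominator of the hypothesis is comfortable slack that lets the constant $\Const{c_moment}$ absorb any losses from Cauchy--Schwarz or from a slightly different choice of moment bound (e.g., via Stirling with $\EE|X|^p\le (C K\sqrt{p})^p$), so matching the precise constants as stated is purely a bookkeeping step.

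There is no real obstacle here beyond being careful about constants; the only thing to watch is that the argument does not need any independence between $X_i$ and $Y_i$, since Cauchy--Schwarz factors the $k$-th moment of the product regardless of the joint distribution. This is important because in the main application $X_j$ and $Y_j$ may be correlated through the label-generating mechanism.
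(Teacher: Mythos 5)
Your argument is correct and reaches the same conclusion as the paper, following the same high-level structure (Taylor expand around the mean, bound the $k$th moment, recognize the tail as a geometric series), but the key moment bound is obtained by a genuinely different route. The paper first invokes the fact that a product of two subgaussians is subexponential --- $\norm{X_iY_i}_{\psi_1} \le \norm{X_i}_{\psi_2}\norm{Y_i}_{\psi_2} \le K^2$ via Vershynin's Lemma~2.7.7 --- then uses the subexponential moment characterization $\EE\abs{W}^k \le (Ck)^k$ together with Stirling's bound $k! \ge (k/e)^k$ to make the tail geometric. You instead sidestep the subexponential machinery entirely: Cauchy--Schwarz factors $\abs{\EE[(X_iY_i)^k]} \le (\EE X_i^{2k})^{1/2}(\EE Y_i^{2k})^{1/2}$, and each factor is controlled by dropping all but one term of the convergent series $\EE\exp(X_i^2/K^2) \le 2$. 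This is more elementary and avoids Stirling because the $k!$ already appears in the right place. Both proofs correctly work without any independence of $X_i$ and $Y_i$, which as you note is essential for the application to label vectors.

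Two small bookkeeping points, both harmless. First, dropping all but the $k$th term of $\sum_{j\ge 0}\EE[X_i^{2j}]/(j!K^{2j}) \le 2$ actually yields $\EE X_i^{2k} \le 2\,k!\,K^{2k}$, not $k!\,K^{2k}$; the missing factor of $2$ propagates to give $\abs{\EE[(X_iY_i)^k]} \le 2\,k!\,K^{2k}$ and a final tail bound of $4K^4\lambda^2 m_{ii}^2$ rather than $2K^4\lambda^2 m_{ii}^2$, which is absorbed into $\Const{c_quadratic_taylor_new}$ as you anticipated. Second, the interchange of $\EE$ with the infinite sum deserves a one-line Tonelli justification (the absolute series $\sum_k \abs{\lambda m_{ii}}^k \EE\abs{X_iY_i}^k/k!$ was just shown to converge), which the paper also leaves implicit. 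Neither affects correctness.
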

We defer the proof of \cref{lemma:mgf-product} until the end of this section, and turn to proving the main lemma.
\begin{proof}[Proof of diagonal concentration]
Let us now examine the MGF. Let $\lambda$ be a parameter to be optimized later, with $\abs{\lambda} \le \frac{1}{2\Const{c_moment}eK^2\max_i \abs{m_{ii}}}$.
We have 
\begin{align}
    \EE \exp(\lambda S_\diag) &= \frac{\EE \exp(\lambda \sum_{i=1}^n m_{ii} \gamma_i X_iY_i)}{\exp(\lambda \EE\qty[\sum_{i=1}^n m_{ii}\gamma_i X_iY_i])} \\
    &= \prod_{i=1}^n \frac{\EE_{\gamma_i} \Big[\EE_{X_i, Y_i}[\exp(\lambda m_{ii}\gamma_iX_iY_i)|\gamma_i]\Big]}{\exp(\lambda m_{ii}\EE_{\gamma_i}[\gamma_i \EE_{X_i, Y_i}[X_iY_i|\gamma_i]])}.
\end{align}
In the second line, we have used the fact that all of the random variables are independent across different $i$. Now, we evidently have 
\[
\exp(\lambda m_{ii} \gamma_i X_iY_i) = 
\begin{cases*}
1 & $\gamma_i = 0$ \\
\exp(\lambda m_{ii} X_iY_i) & $\gamma_i = 1$ \\
\end{cases*}
\]

Therefore, we can rewrite 
\begin{align}
    \EE \exp(\lambda S_\diag) &= \prod_{i=1}^n \frac{(1-\pi) + \pi\EE_{X_i, Y_i}[\exp(\lambda m_{ii}X_iY_i)|\gamma_i=1]}{\exp(\pi\lambda m_{ii}\EE_{X_i, Y_i}[X_iY_i|\gamma_i=1]])}.
\end{align}

Now, we can apply \cref{lemma:mgf-product} \emph{after conditioning on} $\gamma_i=1$. Note that we did not require that $X_i$ or $Y_i$ are mean zero after conditioning on $\gamma_i$. In any case, we obtain 

\begin{align}
    \EE \exp(\lambda S_\diag) &\le \prod_{i=1}^n \frac{(1-\pi) + \pi(1 + \lambda m_{ii} \EE_{X_i, Y_i}[X_iY_i|\gamma_i=1] + \Const{c_quadratic_taylor_new} K^4 \lambda^2m_{ii}^2)}{\exp(\pi\lambda m_{ii}\EE_{X_i, Y_i}[X_iY_i|\gamma_i=1])} \\
    &\le \prod_{i=1}^n \frac{\exp(\pi\lambda m_{ii} \EE_{X_i, Y_i}[X_iY_i|\gamma_i=1] + \pi\Const{c_quadratic_taylor_new} K^4 \lambda^2m_{ii}^2)}{\exp(\pi\lambda m_{ii}\EE_{X_i, Y_i}[X_iY_i|\gamma_i=1])} \\
    &\le \exp(\pi\Const{c_quadratic_taylor_new} K^4\lambda^2\sum_{i=1}^n m_{ii}^2),
\end{align}
where in the second line we have used the inequality $1+x \le e^x$.

Now Markov yields for $t > 0$ that 
\begin{align}
    \PP[S_\diag > t] &\le \frac{\EE \exp(\lambda S_\diag)}{\exp(\lambda t)} \\
    &\le \exp(-\lambda t + \pi\Const{c_quadratic_taylor_new}K^4\lambda^2 \sum_{i=1}^n m_{ii}^2),
\end{align}
and optimizing $\lambda$ in the region $\abs{\lambda} \le \frac{1}{2\Const{c_moment}eK^2\max_i \abs{m_{ii}}}$ yields 
\begin{equation}
    \lambda = \min\qty{\frac{t}{2\Const{c_quadratic_taylor_new}K^4 \pi\sum_{i=1}^n m_{ii}^2}, \frac{1}{2\Const{c_moment}eK^2 \max_i \abs{m_{ii}}}}.
\end{equation}

Plugging this into the inequality yields the desired bound. We can repeat the argument with $-\mM$ to get the lower tail bound. A union bound completes the proof.
\end{proof}
Before we prove the offdiagonal bound, we prove \cref{lemma:mgf-product}.
\begin{proof}[Proof of \cref{lemma:mgf-product}]
 By Lemma 2.7.7 in \citep{vershynin2018high}, we have $\norm{X_iY_i}_{\psi_1} \le \norm{X_i}_{\psi_2} \norm{Y_i}_{\psi_2} \le K^2$. Set $W_i \triangleq \frac{X_iY_i}{\norm{X_iY_i}_{\psi_1}}$ and $t_i = \lambda m_{ii} \norm{X_iY_i}_{\psi_1}$. Hence, $\norm{W_i}_{\psi_1} = 1$. For the rest of this proof, all expectations are understood to be taken with respect to $W_i$ without chance of confusion.  By Proposition 2.7.1 in \citet{vershynin2018high}, this implies that there exists an absolute constant $\newC\label{c_moment}$ such that $\EE[\abs{W_i}^p] \le \Const{c_moment}^pp^p$ for all $p \ge 1$. 
 
 Now, we can use the Taylor expansion of $\exp(x)$ to obtain 
\begin{align}
    \EE \exp(t_iW_i) &= 1 + t_i\EE[W_i] + \sum_{k \ge 2} \frac{t_i^k \EE[W_i^k]}{k!} \\
    &\le 1 + t_i\EE[W_i] + \sum_{k \ge 2} \frac{\abs{t_i}^k \EE[\abs{W_i}^k]}{k!} \\
    &\le 1 + t_i\EE[W_i] + \sum_{k \ge 2} \frac{\Const{c_moment}^k\abs{t_i}^k k^k}{k!}. 
\end{align}
Now, we use the nonasymptotic version of Stirling's formula, which is cited in Proposition 2.5.2 of \citet{vershynin2018high}, which says that $k! \ge \qty(\frac{k}{e})^k$ for $k \ge 1$. Plugging this into the above yields that 
\begin{align}
    \EE \exp(t_iW_i) &\le 1 + t_i\EE[W_i] + \sum_{k \ge 2} (\Const{c_moment}\abs{t_i} e)^k.
\end{align}

Now, if $\abs{\lambda} < \frac{1}{2\Const{c_moment}eK^2\max_i \abs{m_{ii}}}$, then using the definition of $t_i$ we see 
\begin{align}
    \Const{c_moment}e\abs{t_i} &= \Const{c_moment}e\lambda \abs{m_{ii}} \norm{X_iY_i}_{\psi_1} \\
    &\le \frac{\Const{c_moment}e\abs{m_{ii}} \norm{X_iY_i}_{\psi_1}}{2\Const{c_moment}eK^2 \max_i \abs{m_{ii}}} \\
    &\le \frac{1}{2},
\end{align}

so it follows from a geometric series estimate that 
\[
\EE \exp(t_iW_i) \le 1 + t_i\EE[W_i] + \Const{c_quadratic_taylor_new}K^4\lambda^2m_{ii}^2,
\]
where $\newC\label{c_quadratic_taylor_new} = 2\Const{c_moment}^2e^2$.

Recalling our definition of $W_i$ and $t_i$ yields the desired result. In particular, $t_iW_i = \lambda m_{ii} X_iY_i$, so 
\begin{align}
    \EE \exp(\lambda m_{ii} X_iY_i) \le 1 + \lambda m_{ii} \EE[X_iY_i] + \Const{c_quadratic_taylor_new}K^4\lambda^2m_{ii}^2,
\end{align}
as desired.
\end{proof}

\subsection{Offdiagonal terms}
We now turn to the offdiagonal terms, which is Lemma 2 in \citet{park2022sparse}.
\begin{lemma}[Offdiagonal concentration]\label{lemma:offdiagonal-concentration}
Assume the setting of \cref{thm:sparse-hanson-wright-bilinear-hard}. 
There exists a constant $c>0$ such that for any $\mM \in \RR^{n \times n}$ and $t > 0$, we have 
\begin{align}
    \PP\qty[\abs{S_{\offdiag}}> t] \le 2\exp(-c\min\qty{\frac{t^2}{K^4\pi\sum_{i \neq j} m_{ij}^2}, \frac{t}{K^2\norm{\mM}_2}})\label{eq:hansonwright-sparse-bilinear-offdiagonal}
\end{align}
\end{lemma}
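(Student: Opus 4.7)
The plan is to follow the decoupling argument used for the offdiagonal terms in the soft-sparsity proof of the previous subsection, adapting the one key step where sparsity (now hard) enters. First I would introduce i.i.d.\ selectors $\delta_i \sim \Ber(1/2)$, independent of everything else, set $\Lambda = \{i : \delta_i = 1\}$, and write $S_\delta = \sum_{i \in \Lambda,\, j \in \Lambda^c} m_{ij}\gamma_j X_i Y_j$. Since $\EE_\delta[\delta_i(1-\delta_j)] = 1/4$, Jensen gives $\EE\exp(\lambda S_{\offdiag}) \le \EE_\delta \EE\exp(4\lambda S_\delta)$. For fixed $\delta$, the hypothesis $\gamma_j \perp X_i$ for $j \neq i$ combined with pair independence $(X_i, Y_i) \perp (X_j, Y_j)$ imply that for $i \in \Lambda$, each $X_i$ remains unconditionally $K$-subgaussian and centered after conditioning on $(\vy_{\Lambda^c}, \vgamma_{\Lambda^c})$ --- crucially avoiding any conditioning on $\gamma_i$ itself, since $\gamma_i$ does not appear in $S_\delta$ when $i \in \Lambda$. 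A coordinatewise subgaussian MGF bound then yields $\EE_{\vx_\Lambda}\exp(4\lambda S_\delta) \le \exp(C_1 \lambda^2 K^2 \|\vb\|_2^2)$ with $b_i = \sum_{j \in \Lambda^c} m_{ij}\gamma_j Y_j$.

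The remaining step is to bound the MGF $\EE\exp(C_1\lambda^2 K^2\|\vb\|_2^2)$ of the PSD quadratic form $\vz^\top \tilde\mM^\top \tilde\mM \vz$ in the independent sparse subgaussians $Z_j = \gamma_j Y_j$, which satisfy $\|Z_j\|_{\psi_2} \le K$ (because $\vgamma \perp \vy$) and $\mathrm{Var}(Z_j) \le C\pi K^2$. My plan is to linearize using an auxiliary Gaussian $\vg \sim N(0, I_{|\Lambda|})$ via the identity $\exp(r\|\vb\|^2) = \EE_\vg \exp(\sqrt{2r}\,\vg^\top \tilde\mM \vz)$, then swap expectations and take $\EE_\vz$ first, using the factorization $\EE[\exp(u Z_j)] = (1-\pi) + \pi \EE[\exp(u Y_j)] \le \exp(C_2 \pi u^2 K^2)$ coordinatewise to extract the all-important $\pi$ factor. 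The surviving Gaussian MGF $\EE_\vg \exp(s\,\vg^\top \tilde\mM \tilde\mM^\top \vg) = \det(I - 2s\tilde\mM\tilde\mM^\top)^{-1/2}$ is then bounded using $-\tfrac{1}{2}\log\det(I-s\mT) \le s\,\tr(\mT)$, giving $\exp(C_3 \pi \lambda^2 K^4 \|\tilde\mM\|_F^2)$ in the regime $|\lambda| \le c/(K^2\|\mM\|_2)$. Averaging over $\delta$ only loosens constants since $\|\tilde\mM(\delta)\|_F^2 \le \sum_{i\neq j} m_{ij}^2$ deterministically, after which Chernoff's inequality (optimizing $\lambda$) delivers the claimed Bernstein-type tail bound; the lower tail follows by replacing $\mM$ with $-\mM$.

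The main obstacle is the substep extracting the $\pi$ factor via $(1-\pi) + \pi e^x \le \exp(C\pi x)$: the elementary bound $e^x - 1 \le 2x$ only applies when $x \lesssim 1$, whereas $u_j = \sqrt{2C_1}\lambda K(\tilde\mM^\top\vg)_j$ has Gaussian tails through $\vg$ and cannot be controlled pointwise. I expect to resolve this by working strictly inside the constraint $|\lambda| \le c/(K^2\|\mM\|_2)$: the determinantal identity $\det(I - 2s\tilde\mM\tilde\mM^\top)^{-1/2}$ converges only when $2s\|\tilde\mM\|_2^2 < 1$, and choosing the Gaussian scale to saturate this automatically confines the bulk of the $\vg$ mass to the regime where the Taylor bound on $(1-\pi) + \pi e^x$ is valid. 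This balancing is essentially the one used in the standard Rudelson--Vershynin Hanson--Wright proof, and combined with the sparsity-exploiting factorization it produces exactly the $\pi$-improved concentration radius.
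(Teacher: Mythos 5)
Your decoupling step and the observation that $X_i$ (for $i\in\Lambda$) remains $K$-subgaussian after conditioning on $(\vy_{\Lambda^c},\vgamma_{\Lambda^c})$ are both correct, and the plan to extract $\pi$ via $\EE\exp(uZ_j) = (1-\pi) + \pi\,\EE\exp(uY_j)$ is the right intuition. However, the central coordinatewise inequality you rely on is false for the $u$ you need. For $\pi<1$ and a $K$-subgaussian $Y_j$, one has $(1-\pi)+\pi\exp(cK^2u^2) \not\le \exp(C\pi K^2 u^2)$ for large $|u|$: as $|u|\to\infty$ the left side grows like $\pi\exp(cK^2u^2)$, strictly faster than the right side for any fixed $C$. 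The bound only holds in the small-argument regime $|u|\lesssim 1/K$, where $e^x-1\le 2x$ is valid. But in your scheme, after linearizing $\exp(r\|\vb\|^2)$ with a Gaussian $\vg$ and swapping expectations, the argument is $u_j = \sqrt{2r}\,(\tilde\mM^\top\vg)_j$, which is a Gaussian random variable with unbounded support. You cannot apply a pointwise-in-$\vg$ bound that fails on the Gaussian tails, and the determinant convergence condition $2s\|\tilde\mM\|_2^2<1$ controls the size of $\lambda$, not the support of $\vg$ --- it does not ``confine the bulk of the $\vg$ mass'' in any way that rescues the argument.

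The paper avoids exactly this trap by postponing the Bernoulli integration to the very end. Its order is: integrate out $\vy_{\Lambda^c}$ (conditionally subgaussian), replace it by a Gaussian $\vg$, then integrate out $\vx_\Lambda$ (again conditionally subgaussian, giving a PSD quadratic form in $\vg$ that depends on $\vgamma_{\Lambda^c}$), then integrate out $\vg$ via the $\chi^2$ MGF bound, and only \emph{then} integrate out $\vgamma_{\Lambda^c}$. After all randomness except $\vgamma$ is gone, the quantity inside the MGF is the \emph{deterministic} expression $s\sum_{i\in\Lambda}\sum_{j\in\Lambda^c}\gamma_j m_{ij}^2$, and the pointwise control $\sum_i m_{ij}^2 \le \norm{\mM}_2^2$ together with the constraint $|\lambda| \lesssim 1/(K^2\norm{\mM}_2)$ guarantees the Taylor argument $s\sum_i m_{ij}^2 < \tfrac{1}{2}$ holds with probability one. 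That deterministic bound is precisely what makes $e^x\le 1+2x$ legal where it is used. To repair your argument you would need to re-order your integrations so that $\vgamma$ is last (equivalently, first linearize \emph{both} $\vx$ and $\vy$ to Gaussians, do the $\chi^2$/determinant step, and only then touch the mask), which essentially reproduces the paper's proof rather than offering an alternative.
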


\vspace{3ex}
As before, we will achieve this concentration inequality by explicitly bounding the MGF. The high level idea for this is summarized in Section 2. Following \citet{rudelson2013hanson}, we decouple the terms in the sum. More precisely, the terms in $S_\offdiag$ involving indices $i$ and $j$ are precisely $m_{ij}\gamma_jX_iY_j + m_{ji}\gamma_iY_iX_j$. The issue is that $\gamma_i$ and $Y_i$ can be correlated with $X_i$, which complicates the behavior of this random variable. Decoupling ensures that for any $j \in [n]$ we will have exactly one term which involves either $X_j$ or $Y_j$, so in particular we will regain independence of the terms. We can then reduce the problem to standard normals and integrate out both the normal variables and the Bernoulli mask $\vgamma$.
\begin{proof}
We begin with the same steps as in the diagonal case. Recall 
\begin{equation}
S_\offdiag \triangleq \sum_{i \neq j} m_{ij}\gamma_jX_iY_j
\end{equation}Here $\lambda$ is a parameter to be optimized later; we consider the MGF
\[
\EE \exp(\lambda S_\offdiag).
\]
First, we decouple the terms.
\paragraph{Decoupling the terms:} Let $\qty{\delta_i}_{i \in [n]}$ denote iid Bernoulli's with parameter $1/2$, which are independent of all other random variables. 

Let 
\[
S_\delta \triangleq \sum_{i \neq j} m_{ij}\delta_i(1-\delta_j)\gamma_jX_iY_j.
\]
Since $\EE[\delta_i(1-\delta_j)] = \frac{1}{4}$, we have 
\[
S_{\offdiag} = 4\EE_{\delta}[S_\delta],
\]

Hence, Jensen's inequality yields 
\[
\EE_{\vx, \vy, \vgamma} \exp(\lambda S_\offdiag) \le \EE_{\vx, \vy, \vgamma, \delta} \exp(4\lambda S_\delta),
\]
where we have used the independence of $\delta$ and all other random variables. It follows that it suffices to upper bound the MGF of $S_\delta$.

Define the random set $\Lambda_\delta = \qty{i \in [n]: \delta_i = 1}$ to denote the indices selected by $\delta$. For a vector $\vu \in \RR^n$ we also introduce the shorthand $\vu_{\Lambda_\delta}$ to denote the subvector of $\vu$ where $\delta_i = 1$ and $\vu_{\Lambda_\delta^c}$ to denote the subvector of $\vu$ where $\delta_i = 0$. 

We can rewrite 
\[
S_\delta = \sum_{i \in \Lambda_\delta, j \in \Lambda_\delta^c} m_{ij}\gamma_j X_iY_j = \sum_{j \in \Lambda_\delta^c} Y_j \qty(\sum_{i \in \Lambda_\delta} \gamma_j m_{ij}X_i).
\]
Note that $S_\delta$ only depends on $\delta$, $\vx_{\lamdel}$,$\vgamma_{\lamdelc}$, and $\vy_{\lamdelc}$. Hence we have 
\begin{equation}
\EE_{\vx, \vy, \vgamma, \delta} \exp(4\lambda S_\delta) = \EE_{\delta} \EE_{\vgamma_{\lamdelc}}\qty[\EE_{\vx_{\lamdel}}\qty[\EE_{\vy_{\lamdelc}}[\exp(4\lambda S_\delta) | \vx_{\lamdel}, \vgamma_{\lamdelc}, \delta]]] \label{eq:S_delta_exp}
\end{equation}
Recall that by assumption, $\gamma_j$ is independent of $Y_j$. Furthermore, since $i \neq j$ we have that $Y_j$ is independent of $X_i$. Also, $Y_j$ is a mean zero subgaussian random variable with subgaussian norm $K$. So if we condition on $\vgamma_{\lamdelc}$, $\vx_{\lamdel}$, and $\delta$, we find that $S_\delta = f(\vgamma_{\lamdelc}, \vx_{\lamdel}, \delta)$ is (conditionally) a mean zero subgaussian random variable.

Let 
\begin{equation}
    \sigma_{\delta, \vgamma}^2 \triangleq \sum_{j \in \Lambda_\delta^c} \qty(\sum_{i \in \Lambda_\delta} \gamma_j m_{ij}X_i)^2. \label{eq:sigma}
\end{equation}

Note that $\sigma_{\delta, \vgamma}^2$ is a random variable which depends on $\vgamma_{\lamdelc}$, $\vx_{\lamdel}$, and $\delta$, which are all independent of $\vy$.
Then Proposition 2.6.1 in \citet{vershynin2018high} implies that $S_\delta$ conditionally has subgaussian norm which satisfies
\begin{align}
    \norm{S_\delta}_{\psi_2}^2 &\le \Const{c_S_delta_subgaussian}\sum_{j \in \lamdelc} \norm{Y_j \sum_{i \in \lamdelc} \gamma_j m_{ij} X_i}_{\psi_2}^2 \\
    &\le  \Const{c_S_delta_subgaussian} K^2 \sum_{j \in \lamdelc} \qty(\sum_{i \in \lamdelc} \gamma_j m_{ij} X_i)^2 \\
    &\le \Const{c_S_delta_subgaussian} K^2 \sigma_{\delta, \vgamma}^2, \label{eq:S_delta_subgaussian}
\end{align}
    
with $\newC\label{c_S_delta_subgaussian}$ being a positive constant. 

Now, since $S_\delta$ is a mean zero subgaussian random variable, Proposition 2.5.2(e) of \citet{vershynin2018high} yields that 
\begin{equation}
    \EE_{\vy_{\lamdelc}}\qty[\exp(4\lambda S_\delta) | \vgamma_{\lamdelc}, \vx_{\lamdel}, \delta] \le \exp(\Const{c_S_delta_subgaussian}K^2\lambda^2 \sigma_{\delta, \vgamma}^2).\label{eq:conditional-mgf}
\end{equation}
Note that the RHS of \cref{eq:conditional-mgf} is a random variable.
Now define 
\begin{equation}
E_{\delta} \triangleq \EE_{\vgamma_{\lamdelc}}\qty[\EE_{\vx_{\lamdel}}\qty[\exp(\Const{c_S_delta_subgaussian}K^2\lambda^2 \sigma_{\delta, \vgamma}^2)|\delta]],
\end{equation}
which is well-defined because given $\delta$, $\vx_{\lamdel}$ is independent of $\vgamma_{\lamdelc}$. Hence taking expectations with respect to $\vgamma_{\lamdelc}$ and $\vx_{\lamdel}$ in \cref{eq:conditional-mgf}, we have 
\begin{equation} \EE_{\vgamma_{\lamdelc}}\qty[\EE_{\vx_{\lamdel}}\qty[\EE_{\vy_{\lamdelc}}\qty[\exp(4\lambda S_\delta) | \vgamma_{\lamdelc}, \vx_{\lamdel}, \delta]]] \le E_\delta \label{eq:E_delta_ineq}.
\end{equation}
This exact reduction is used in \citet{rudelson2013hanson}, and we turn to bounding $E_\delta$.
\paragraph{Reducing to normal random variables:}
The RHS of \cref{eq:conditional-mgf} is strikingly similar to the MGF of a gaussian. We can use this to our advantage to reduce to the normal case. In particular, let $\vg = (G_1, \ldots, G_n) \in \RR^n$ with $G_i \sim N(0,1)$ and independent of all other random variables. Then define the random variable
\[
Z = \sum_{j \in \lamdelc} G_j\qty(\sum_{i \in \lamdel} \gamma_j m_{ij}X_i),
\]
so conditioned on $\vgamma_{\lamdelc}, \vx_{\lamdel}, \delta$, we have $Z \sim N(0, \sigma_{\delta, \vgamma}^2)$.
Hence, the conditional MGF
\begin{equation}
\EE_{\vg}[ \exp(sZ) | \vgamma_{\lamdelc}, \vx_{\lamdel}, \delta] = \exp(s^2 \sigma_{\delta, \vgamma}^2).\label{eq:gaussian-mgf}
\end{equation}
If we select $s$ such that $s^2 = \Const{c_S_delta_subgaussian}K^2\lambda^2$, then we can match the RHS of \cref{eq:conditional-mgf}, which replaces the $Y_j$'s with $G_j$'s.

Hence, taking expectations over $\vgamma_{\lamdelc}$ and $\vx_{\lamdel}$ in \cref{eq:gaussian-mgf} gives $E_\delta$ on the RHS. Plugging this into \cref{eq:E_delta_ineq} yields
\[
\EE_{\vgamma_{\lamdelc}, \vx_{\lamdel}}\qty[\EE_{\vy_{\lamdelc}}[ \exp(4\lambda S_\delta) | \vgamma_{\lamdelc}, \vx_{\lamdel}, \delta]] \le \EE_{\vgamma_{\lamdelc}, \vx_{\lamdel}}\qty[\EE_{\vg}[ \exp(sZ) | \vgamma_{\lamdelc}, \vx_{\lamdel}, \delta]]
\]

Now, since $\vg$ is independent of all other random variables, we have 
\begin{equation}
    E_{\delta} = \EE_{\vgamma_{\lamdelc}}\qty[\EE_{\vx_{\lamdel}}\qty[\EE_{\vg}[\exp(sZ) | \vgamma_{\lamdelc}, \vx_{\lamdel}, \delta]]] = \EE_{\vgamma_{\lamdelc}}\qty[\EE_{\vg}\qty[\EE_{x_{\lamdel}}[\exp(sZ) | \vg, \vgamma_{\lamdelc}, \delta]]] \label{eq:reduction}
\end{equation}

Now, let us obtain an upper bound on the innermost term on the RHS of \cref{eq:reduction}.
Swapping the order of summation in $Z$ yields
\[
Z = \sum_{i \in \lamdel} X_i \qty(\sum_{j \in \lamdelc} G_j \gamma_j m_{ij}).
\]
Since we don't condition on $\vgamma_{\lamdel}$, and $\vg$ and $\delta$ are independent of $\vx$, we have by assumption that each $X_i$ in the sum is still a centered subgaussian variable of subgaussian norm $K$. 
We can thus repeat the same argument we used for $Y_j$ that culminated in \cref{eq:conditional-mgf}, except this time we condition on $\vg$, $\vgamma_{\lamdelc}$, and $\delta$. In particular, there exists a positive constant \newC\label{c_Z_subgaussian} such that 
\[
\EE_{\vx_{\lamdel}}[\exp(sZ)|\vg, \vgamma_{\lamdelc}, \delta] \le \exp(\Const{c_Z_subgaussian}K^4 \lambda^2 \sum_{i \in \lamdel} \qty(\sum_{j \in \lamdelc} G_j \gamma_j m_{ij})^2).
\]

Plugging this into \cref{eq:reduction}, we get 
\[
E_{\delta} \le \EE_{\vgamma_{\lamdelc}}\qty[\EE_{\vg}\qty[\exp(\Const{c_Z_subgaussian}K^4 \lambda^2 \sum_{i \in \lamdel} \qty(\sum_{j \in \lamdelc} G_j \gamma_j m_{ij})^2)]]
\]

\paragraph{Integrating out the normal variables: }
Define the projection matrices $\mP_{\delta} \in \RR^{n \times n}$ to denote coordinate projection onto the indices in $\lamdel$ and $\mGamma = \diag(\vgamma) \in \RR^{n \times n}$ to denote the masking action of $\vgamma$. Define 
\begin{equation}
    \mB_{\delta, \vgamma} \triangleq \mP_{\delta}\mM(\mI_n - \mP_{\delta})\mGamma. \label{eq:B_delta_gamma}
\end{equation}
When $\mB_{\delta, \vgamma}$ acts on $\vg$, it first removes indices $j$ where $\gamma_j = 0$ or $j \not\in \lamdelc$. Then $\mM$ acts on the remaining vector, and finally it kills those indices $i$ such that $i \not\in \lamdel$. Hence, 
\begin{equation}
    \sum_{i \in \lamdel} \qty(\sum_{j \in \lamdelc} G_j \gamma_j m_{ij})^2 = \norm{\mB_{\delta, \vgamma} \vg}_2^2
\end{equation}
Let $\sigma_i$, $i\in [n]$ denote the singular values of $\mB_{\delta, \vgamma} $. 

Since the spectral norm of projection matrices are at most 1, we can use the definition of $\mB_{\delta, \vgamma}$ in \cref{eq:B_delta_gamma} to conclude that 
\begin{equation}
    \max_{i \in [n]} \sigma_i = \norm{\mB_{\delta, \vgamma}}_2 \le \norm{\mM}_2.\label{eq:spectral-norm}
\end{equation}
On the other hand, since $\gamma_j^2 = \gamma_j$ we have
\begin{equation}
    \sum_{i=1}^n \sigma_i^2 = \Tr(\mB_{\delta, \vgamma}^2) = \sum_{i \in \lamdel} \sum_{j \in \lamdelc} \gamma_j m_{ij}^2\label{eq:trace}
\end{equation}

By the rotation invariance of $\vg \sim N(0, \mI_n)$, we have that $\norm{\mB_{\delta, \vgamma} \vg}_2^2$ is distributed identically to $\sum_{i=1}^n \sigma_i^2 g_i^2$. Hence 
\begin{align}
    &\EE_{\vg}\qty[\exp(\Const{c_Z_subgaussian}K^4\lambda^2 \sum_{i \in \lamdel} \qty(\sum_{j \in \lamdelc} G_j \gamma_j m_{ij})^2) | \vgamma_{\delta^c}, \delta] \\
    &\quad\quad = \EE_{\vg}\qty[\exp(\Const{c_Z_subgaussian}K^4\lambda^2 \sum_{i=1}^n \sigma_i^2 g_i^2) | \vgamma_{\delta^c}, \delta] \\
    &\quad\quad = \prod_{i=1}^n \EE_{\vg}\qty[\exp(\Const{c_Z_subgaussian}K^4\lambda^2  \sigma_i^2 g_i^2) | \vgamma_{\delta^c}, \delta].
\end{align}
Recall the folklore fact that $g_i^2$ is a $\chi^2$ distribution with 1 degree of freedom whose MGF satisfies 
\begin{equation}
    \EE \exp(sg_i^2) \le (1-2s)^{-1/2} \le \exp(2s) \quad \text{ for }s < \frac{1}{4};\label{eq:chi_squared_mgf}
\end{equation}
this is used in \citet{rudelson2013hanson}. For $\lambda^2 < \frac{1}{4\Const{c_Z_subgaussian}K^4\norm{\mM}_2^2}$, we have 
\begin{equation}
    \Const{c_Z_subgaussian}K^4 \lambda^2 \sigma_i^2 < \frac{\Const{c_Z_subgaussian}K^4 \sigma_i^2}{4\Const{c_Z_subgaussian}K^4 \norm{\mM}_2^2} \le \frac{1}{4},
\end{equation} due to \cref{eq:spectral-norm}. 
Hence, we can use the $\chi^2$ MGF bound to obtain
\begin{align}
E_{\delta} &\le \prod_{i=1}^n \EE_{\gamma_{\lamdelc}}\qty[ \EE_{\vg}\qty[2\exp(\Const{c_Z_subgaussian}K^4\lambda^2 \sigma_i^2)| \vgamma_{\lamdelc}, \delta]] \\
&= \EE_{\gamma_{\lamdelc}}\qty[ \exp(2\Const{c_Z_subgaussian}K^4\lambda^2 \sum_{i \in \lamdel} \sum_{j \in \lamdelc} \gamma_j m_{ij}^2) | \delta], \label{eq:bernoulli-mgf}
\end{align}
where in the last line we used \cref{eq:trace}. 

\paragraph{Integrating out the Bernoulli mask: }
Now, we can use Lemma 5 in \citet{park2022sparse}.
\begin{lemma}[MGF bound for Bernoulli random variables]\label{lemma:bernoulli-mgf-bound}
    For $0 < s < \frac{1}{2\norm{\mM}_2^2}$, we have 
    \[
    \EE_{\vgamma_{\lamdelc}}\qty[\exp(s\sum_{i \in \lamdel} \sum_{j \in \lamdelc} \gamma_jm_{ij}^2) | \delta] \le \exp(2sp\sum_{i\neq j}m_{ij}^2).
    \]
\end{lemma}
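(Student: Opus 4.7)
The plan is a direct two-step MGF computation: rearrange the double sum so that the only randomness is over $\vgamma_{\lamdelc}$, factorize using independence of the Bernoulli coordinates, and then pointwise bound each factor by an exponential. Throughout, $\pi$ denotes the Bernoulli bias of the $\gamma_j$'s (the $p$ appearing on the right-hand side of the lemma is a typo for this bias).

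First I would swap the order of summation and set
\[
\sum_{i \in \lamdel} \sum_{j \in \lamdelc} \gamma_j m_{ij}^2 \;=\; \sum_{j \in \lamdelc} \gamma_j \, c_j, \qquad c_j \triangleq \sum_{i \in \lamdel} m_{ij}^2.
\]
Because $\delta$ is fixed by the conditioning, $c_j$ is deterministic, and because the coordinates $\{\gamma_j\}_{j \in \lamdelc}$ are i.i.d.\ $\Ber(\pi)$, the conditional MGF factorizes:
\[
\EE_{\vgamma_{\lamdelc}}\!\qty[\exp\!\qty(s\sum_{j \in \lamdelc} \gamma_j c_j) \,\Big|\, \delta] \;=\; \prod_{j \in \lamdelc} \bigl((1-\pi) + \pi\, e^{sc_j}\bigr).
\]

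Next I would control each factor. The key observation is that $c_j$ is small relative to $\norm{\mM}_2^2$: viewing the sum as a truncated column norm,
\[
c_j \;=\; \sum_{i \in \lamdel} m_{ij}^2 \;\le\; \sum_{i=1}^n m_{ij}^2 \;=\; \norm{\mM e_j}_2^2 \;\le\; \norm{\mM}_2^2,
\]
so the assumption $s < \tfrac{1}{2\norm{\mM}_2^2}$ forces $sc_j < \tfrac{1}{2}$. On the interval $0 \le y \le \tfrac{1}{2}$ one has the elementary inequality $e^y - 1 \le 2y$ (since $e^{1/2} - 1 < 1$ and the LHS is convex, dominated by the chord). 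Applying this with $y = sc_j$ and then the standard $1+x \le e^x$ gives
\[
(1-\pi) + \pi e^{sc_j} \;=\; 1 + \pi(e^{sc_j}-1) \;\le\; 1 + 2\pi s c_j \;\le\; \exp(2\pi s c_j).
\]

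Finally I would multiply these bounds across $j \in \lamdelc$ and observe that since $\lamdel$ and $\lamdelc$ are disjoint, no diagonal term $m_{jj}^2$ appears in $\sum_{j \in \lamdelc} c_j$, yielding
\[
\prod_{j \in \lamdelc} \exp(2\pi s c_j) \;=\; \exp\!\qty(2\pi s \sum_{j \in \lamdelc}\sum_{i \in \lamdel} m_{ij}^2) \;\le\; \exp\!\qty(2\pi s \sum_{i \ne j} m_{ij}^2),
\]
which is the stated bound. There is no real obstacle: the only place any care is needed is the verification that $sc_j \le 1/2$ so that the scalar inequality $e^y - 1 \le 2y$ applies, and this follows immediately from bounding $c_j$ by the squared operator norm of $\mM$.
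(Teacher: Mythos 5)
Your proof is correct and follows essentially the same route the paper takes: factorize the conditional MGF over $j\in\lamdelc$ by independence, bound $\sum_{i\in\lamdel}m_{ij}^2\le\norm{\mM}_2^2$ to ensure $sc_j<1/2$, apply the elementary inequality $e^{y}-1\le 2y$ on $[0,1/2]$ together with $1+x\le e^x$, and re-sum. You are also right that the $p$ appearing in the statement of the lemma is a typo for the Bernoulli bias $\pi$, as confirmed by how the bound is used in \cref{eq:bernoulli-mgf-bound}.
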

The proof of this lemma uses similar proof techniques, so we defer its proof to the end of the section. The key step is to use the inequality $e^x \le 1+2x$ for $x \le \frac{1}{2}$ and $1+x \le e^x$ for all $x$. This allows us to pass the expectation into the exponential function and replace $\gamma_j$ with $p$.

For $\lambda^2 < \frac{1}{4\Const{c_Z_subgaussian}K^4\norm{\mM}_2^2}$, we can take $s = 2\Const{c_Z_subgaussian}K^4 \lambda^2$ and verify that $s < \frac{1}{2\norm{\mM}_2^2}$. Hence the lemma yields that
\begin{equation}
    E_{\delta} \le \EE_{\vgamma_{\lamdelc}}\qty[\exp(2\Const{c_Z_subgaussian}K^4 \lambda^2\sum_{i \in \lamdel} \sum_{j \in \lamdelc} \gamma_jm_{ij}^2) | \delta] \le \exp(4\Const{c_Z_subgaussian}K^4 \lambda^2\pi\sum_{i\neq j}m_{ij}^2)\label{eq:bernoulli-mgf-bound}
\end{equation}

\paragraph{Finishing off the proof:}
Let us collect our requirements on $\lambda$. To use the $\chi^2$ bound to obtain \cref{eq:bernoulli-mgf} and then bound it using \cref{lemma:bernoulli-mgf-bound}, we required $\abs{\lambda} \le \frac{1}{2\sqrt{\Const{c_Z_subgaussian}}K^2\norm{\mM}_2}$.

Hence, for $\abs{\lambda} \le \frac{1}{2\sqrt{\Const{c_Z_subgaussian}}K^2\norm{\mM}_2}$, we have 
\begin{align}
    \EE \exp(\lambda S_\offdiag) &\le \EE_{\delta}\EE_{\vgamma_{\lamdelc}} \EE_{\vx_{\lamdel}} \EE_{\vy_{\lamdelc}}\qty[\exp(4\lambda S_{\delta})|\vgamma_{\lamdelc}, \vx_{\lamdel}, \delta] \\
    &\le \EE_\delta E_\delta\\ 
    &\le \exp(4C_2\lambda^2\pi\sum_{i \neq j} m_{ij}^2).
\end{align}
where in the first line we have used \cref{eq:S_delta_exp}, in the second line we have used \cref{eq:E_delta_ineq}, and in the last line we have used \cref{eq:bernoulli-mgf-bound}.

Applying Markov as in the diagonal case yields the desired result.
\end{proof}

We now prove \cref{lemma:bernoulli-mgf-bound}.
\begin{proof}[Proof of \cref{lemma:bernoulli-mgf-bound}]
Since all random variables are independent across $j$, we have 
\begin{align}
\EE_{\vgamma_{\lamdelc}}\qty[\exp(s\sum_{i \in \lamdel} \sum_{j \in \lamdelc} \gamma_jm_{ij}^2) | \delta] &= \prod_{j \in \lamdelc} \EE_{\vgamma_{\lamdelc}}\qty[\exp(s \gamma_j\sum_{i \in \lamdel} m_{ij}^2) | \delta] \\
&= \prod_{j \in \lamdelc} \qty[(1-\pi) + \pi\exp(s \sum_{i \in \lamdel} m_{ij}^2)].
\end{align}
We use the approximation $e^x \le 1+2x$, which holds for $0 \le x \le \frac{1}{2}$ and the inequality $1+x \le e^x$ which holds for all $x$. In order for us to apply these inequalities, we need to check that $s \sum_{i \in \lamdel} m_{ij}^2 < \frac{1}{2}$. Indeed, note that 
\begin{equation}
    \sum_{i \in \lamdel} m_{ij}^2 \le \max_j \sum_{i = 1}^n m_{ij}^2 \le \norm{\mM}_2^2,
\end{equation}
so for $s < \frac{1}{2\norm{\mM}_2^2}$ we have 
\begin{equation}
    s \sum_{i \in \lamdel} m_{ij}^2 < \frac{\norm{\mM}_2^2}{2\norm{\mM}_2^2} \le \frac{1}{2}.
\end{equation}
Hence, we have 
\begin{align}
    \prod_{j \in \lamdelc} \qty[(1-\pi) + \pi\exp(s \sum_{i \in \lamdel} m_{ij}^2)] &\le \prod_{j \in \lamdelc} \qty(1 + 2s p\sum_{i \in \lamdel} m_{ij}^2) \\
    &\le \prod_{j \in \lamdelc} \exp(2s\pi\sum_{i \in \lamdel} m_{ij}^2) \\
    &\le \exp(2s \pi\sum_{j \in \lamdelc} \sum_{i \in \lamdel} m_{ij}^2) \\
    &\le \exp(2s \pi\sum_{i \neq j} m_{ij}^2).
\end{align}

This proves that 
\begin{equation}
\EE_{\vgamma_{\lamdelc}}\qty[\exp(s\sum_{i \in \lamdel} \sum_{j \in \lamdelc} \gamma_jm_{ij}^2) | \delta] \le \exp(2s\pi \sum_{i \neq j} m_{ij}^2),
\end{equation}
irrespective of $\delta$, as desired.
\end{proof}

\section{Alternative framing of bi-level ensemble}\label{sec:exposition}
In this section we provide an alternative  exposition for the bi-level ensemble, which may be more intuitive for some readers. 

The high level conceptual picture of the setup is as follows. The learner simply observes high dimensional jointly Gaussian zero-mean features with some unknown covariance $\Sigma \in \RR^{d \times d}$. It then performs min-norm interpolation of (essentially) one-hot-encoded labels to learn the score functions for the $k$ different classes. These score functions are used at test-time to do multiclass classification.

Note that the learning algorithm has no knowledge of $\Sigma$, nor does it even know that the features are jointly Gaussian --- all it has is the training data. For analysis purposes, we parameterize $\Sigma$. In the spirit of spiked covariance models, where a low-dimensional subspace has higher variance, we study the case that the eigenvalues of $\Sigma$ follow the simplified bi-level model parameterized by $(p, q, r)$. 

The bi-level model stipulates that (i) the number of features is $d = n^p$ and (ii) there are two discrete variance levels for the Gaussian features. The higher variance component resides in a low-dimensional subspace of dimension $s = n^r$. For this bi-level model, we are able to prove sharp phase-transition style results telling us when successful generalization will happen. Here, the number of classes also matters, and that is where the parameter $t$ enters.

To simplify notation for analysis in the paper, \citet{subramanian2022generalization} just assume that $\Sigma$ is diagonal to begin with, instead of explicitly rotating coordinates to the eigenbasis of a general $\Sigma$. Because Gaussianity is preserved under rotations and min-norm interpolation only cares about norms, this transformation is without loss of generality. We reiterate that the learner is agnostic to all of these choices and does not know $\Sigma$ at any point.


\end{document}